\documentclass{article} 
\usepackage{iclr2027_conference,times}

\usepackage{amsmath,amsfonts,bm}

\def\eqref#1{equation~\ref{#1}}

\def\1{\bm{1}}

\DeclareMathAlphabet{\mathsfit}{\encodingdefault}{\sfdefault}{m}{sl}
\SetMathAlphabet{\mathsfit}{bold}{\encodingdefault}{\sfdefault}{bx}{n}

\usepackage{upgreek}
\usepackage{hyperref}
\usepackage{url}
\usepackage{algorithm}
\usepackage{wrapfig}
\usepackage{algorithmic}
\usepackage{xcolor}
\usepackage{amsmath,amssymb}
\usepackage{newfloat}
\usepackage{listings}
\usepackage{multirow}
\usepackage{booktabs} 
\usepackage{graphicx}
\usepackage{subcaption}
\usepackage{amsthm}
\usepackage{hyperref}
\usepackage{enumitem}
\usepackage{thm-restate}
\usepackage{hyperref} 
\usepackage{xspace}
\usepackage{tabularx}
\usepackage{longtable}
\usepackage{array}
\usepackage{booktabs}
\usepackage{ragged2e}
\newcommand{\HARMONIA}{\nolinkurl{HARMONIA}\xspace}

\usepackage[most]{tcolorbox}

\newtcolorbox{insightbox}{
    colback=gray!8,
    colframe=black!75,
    boxrule=0.8pt,
    arc=4pt,
    left=7pt,
    right=7pt,
    top=5pt,
    bottom=5pt,
    before skip=6pt,
    after skip=6pt,
    breakable
}

\newtheorem{lemma}{Lemma}
\newtheorem{theorem}{Theorem}

\newtheorem{proposition}{Proposition}
\newtheorem{corollary}{Corollary}

\title{HARMONIA: Interpretable Graph Learning through Mixtures of Neural Bases}

\author{
Quan D.~Bui\textsuperscript{1,}\thanks{Equal contribution.}
\qquad
Nguyen Do\textsuperscript{2,}\footnotemark[1]
\qquad
An Nguyen Dang\textsuperscript{3,}\footnotemark[1]
\\[0.3em]
\textbf{Huyen Nguyen}\textsuperscript{3}
\qquad
\textbf{Nhu Duc Minh Nguyen}\textsuperscript{4}
\qquad
\textbf{My T.~Thai}\textsuperscript{2,}\thanks{Correspondence to: \texttt{mythai@cise.ufl.edu}.}
\\[0.6em]
\textsuperscript{1}Center for AI Research, VinUniversity, Hanoi, Vietnam
\\
\textsuperscript{2}University of Florida, Gainesville, USA
\\
\textsuperscript{3}Posts and Telecommunications Institute of Technology, Hanoi, Vietnam
\\
\textsuperscript{4}University of Illinois Chicago, Chicago, USA
}
\iclrfinalcopy 
\begin{document}

\maketitle

\vspace{-15pt}

\begin{abstract}
Existing interpretable graph additive models still face limitations in either computational scalability or modeling flexibility. In terms of structural modeling, previous approaches either face quadratic scaling costs or sacrifice explicit source-to-target contribution decomposition. In terms of feature components, they rely either on per-feature neural networks or on single shared bases with limited feature specialization. We address both problems by introducing \HARMONIA: Interpretable Graph Learning through Mixtures of Neural Bases, an interpretable-by-design framework. For feature modeling, \HARMONIA introduces a Mixture of Neural Bases (MoNB), which routes features to specialized basis experts, enabling parameter sharing without sacrificing feature-specific specialization. For structural modeling, \HARMONIA uses Relative Random Walk Probabilities (RRWP) to capture multi-hop and multi-path relationships, and proposes Sparse RRWP Aggregation (SRA) to compute these interactions through sparse graph propagation without quadratic pairwise complexity. \HARMONIA retains a simple additive form in which predictions decompose into feature responses modulated by structural influence. Empirically, \HARMONIA achieves stronger explanation recovery than existing interpretable graph baselines while maintaining competitive predictive performance and scaling to graphs with millions of nodes. These results show that interpretable graph learning can remain both faithful and scalable without sacrificing predictive effectiveness.
\end{abstract}

\section{Introduction}

Graph neural networks (GNNs) have become common for learning relational data, but their predictions are difficult to interpret because
node attributes and graph structure are entangled within latent
representations during message passing. Post-hoc explanation methods seek to
explain a trained GNN by fitting or optimizing an auxiliary explanation
mechanism, such as a surrogate model, feature mask, or subgraph selector
\citep{ying2019gnnexplainer,luo2020parameterized,huang2022graphlime}. Because
these explanations are constructed after the predictor has been learned,
they approximate or summarize the behavior of the original model rather than determine the mechanism that actually produced its prediction. Their
faithfulness depends on how the auxiliary explanation captures the
behavior of the underlying GNN. This limitation has motivated
interpretable-by-design graph models, in which feature and structural effects
are explicit components of the predictive function itself.

Graph additive models (GrAMs)~\citep{bechler2024intelligible,reddy2025interpretable} provide a route toward this goal by retaining
feature-wise response functions while aggregating information from the graph. Making these models both expressive and scalable introduces two
difficulties. First, standard neural additive models use a separate neural network for each feature, so parameter and computational to grow with feature dimension. Neural Basis Models (NBMs) reduce this cost by representing feature
responses through a shared set of basis functions
\citep{radenovic2022neural}. However, this shared representation places all feature-response functions within a common global basis space. When features follow heterogeneous functional patterns, a small shared space may underfit them, whereas expanding the global basis increases capacity for all features, reducing efficiency.
 
The second difficulty lies in structural modeling, where interpretable GrAMs often describe node-to-node influence using simple structural quantities such as shortest-path distance \citep{bechler2024intelligible}. Such summaries do not capture the transition probabilities between node pairs across different walk lengths and paths, even when these differences induce distinct structural effects.
Relative Random Walk Probabilities (RRWP) provide a richer description by
recording transition probabilities across multiple walk lengths
\citep{ma2023GraphInductiveBiases}. However, storing RRWP for all node pairs requires quadratic memory, making structural representations impractical for large graphs.

These observations lead to the question:
\textit{\textbf{Can an interpretable GrAM remain functionally flexible and
computationally practical at the same time, even as both the feature dimension
and graph size increase?}}
To address this question, we introduce \HARMONIA, a GrAM designed to
scale along both dimensions while preserving an explicit additive prediction
structure. On the feature side, \HARMONIA introduces a \emph{Mixture of Neural
Basis (MoNB)}, which replaces a single globally shared basis space with
multiple neural-basis experts and sparsely routes each feature to specialized
functional spaces. This allows related features to share basis functions
without forcing heterogeneous response functions into the same representation
or requiring a separate neural network for every feature. On the structural side, \HARMONIA uses RRWP to capture multi-hop and multi-path
relationships and introduces \emph{Sparse RRWP Aggregation (SRA)}, which
computes linear RRWP aggregation through sparse graph propagation without constructing dense pairwise representations to reduce complexity of RRWP. Thus, \HARMONIA remains interpretable by design while improving functional
flexibility and structural scalability without sacrificing its additive structure.

We further develop a theoretical analysis of both components of \HARMONIA. On
the feature side, we characterize the limitation of a single globally shared
basis space and introduce a \emph{specialization gap} that quantifies the
approximation benefit achieved by multiple specialized basis spaces. We
then analyze how routing errors reduce this gain and identify when
specialization remains beneficial despite imperfect expert assignment. On the
structural side, we study when RRWP hop coefficients admit unique
interpretations and show how the spectral properties of the random walk govern
the number of structural scales that remain distinguishable. This analysis
also yields an effective structural horizon beyond which additional walk
lengths provide increasingly redundant information. Our contributions are summarized as follows:

\begin{itemize}
    \item We introduce \HARMONIA, an interpretable and scalable GrAM
    that combines MoN for feature
    specialization with SRA, which computes
    multi-hop structural effects exactly without materializing dense
    node-pair operators.

    \item We develop theory for each component. For MoNB, we characterize
    when specialization outweighs routing error. For RRWP, we derive
    conditions for identifiable hop effects and an effective structural
    horizon that determines how many propagation depths remain distinguishable.
    We further prove that SRA significantly reduces computation from \(O(Tn^2d)\) to
    \(O(T|E|d)\); in the reversible setting, the structural horizon
    yields an even tighter graph-dependent propagation budget.

    \item We evaluate \HARMONIA on graph- and node-level benchmarks, showing
    competitive predictive performance, faithful feature and multi-hop
    structural explanations, and improved scalability on large sparse graphs.
\end{itemize}

\section{Preliminaries and Problem Setup}
\label{sec:preliminaries}

\paragraph{Neural Basis Models and Functional Sharing.}
Neural Basis Models (NBMs) represent feature-specific nonlinear effects using
a shared set of learnable basis functions. With \(B\) neural bases and
feature-specific coefficients, the response of feature \(k\) is written as
\begin{equation}
\boldsymbol{\psi}(x)
=
[\psi_1(x),\ldots,\psi_B(x)]^\top,
\qquad
f_k(x)
=
\mathbf a_k^\top\boldsymbol{\psi}(x)
=
\sum_{b=1}^{B}a_{k,b}\psi_b(x).
\label{eq:nbm}
\end{equation}
Here, the basis vector \(\boldsymbol{\psi}(x)\) is shared across features,
whereas each feature \(k\) learns its own coefficient vector
\(\mathbf a_k=(a_{k,1},\ldots,a_{k,B})^\top\). Thus, different features can
form different response functions by combining the same bases with different
weights. However, because every response is constructed from the same basis
functions, all \(f_k\) lie in the common functional space
\(\mathcal S_B=\operatorname{span}\{\psi_1,\ldots,\psi_B\}\), \(dim(\mathcal S_B) \leq B\). This sharing reduces the number of nonlinear functions that
must be learned, but can become restrictive when different features require
substantially different functional structures. Increasing \(B\) enlarges the
global basis space, but also increases the representation shared by every
feature. This motivates replacing a single global space with multiple
specialized basis spaces while retaining parameter sharing across features
with similar responses.

\paragraph{Relative Random Walk Probabilities.}
Relative Random Walk Probabilities (RRWP) describe the structural relation
between two nodes across multiple walk lengths. Given adjacency matrix
\(\mathbf A\) and degree matrix \(\mathbf D\), let
\(\mathbf M=\mathbf D^{-1}\mathbf A\) denote the random-walk transition
matrix. Using \(T\) walk orders, the RRWP descriptor between nodes
\(v_i\) and \(v_j\) is
\begin{equation}
\mathbf p_{i,j}
=
\left[
(\mathbf M^{0})_{i,j},
(\mathbf M^{1})_{i,j},
\ldots,
(\mathbf M^{T-1})_{i,j}
\right]^{\top}
\in\mathbb R^{T}.
\label{eq:rrwp}
\end{equation}
Here, \((\mathbf M^{t})_{i,j}\) is the probability that a random walker
starting from \(v_i\) reaches \(v_j\) after exactly \(t\) steps, allowing
\(\mathbf p_{i,j}\) to describe their relation across multiple structural
scales. A direct pairwise implementation materializes an
\(n\times n\times T\) RRWP tensor, requiring
\(\mathcal O(Tn^2)\) memory, while applying all walk channels across
\(d\) feature dimensions can require
\(\mathcal O(Tn^2d)\) computation.

\paragraph{Problem Formulation.}
Let \(\mathcal G=(\mathcal V,\mathcal E,\mathbf X)\) be a graph with
\(n=|\mathcal V|\) nodes and
\(\mathbf X=[x_{i,k}]\in\mathbb R^{n\times d}\).
We seek a graph predictor that incorporates structural information while
preserving an explicit decomposition over the original features. For each
feature \(k\), let \(f_k:\mathbb R\rightarrow\mathbb R\) denote its response
function and let
\(\omega_k^{\sharp}:\mathbb R^T\rightarrow\mathbb R\) be a feature-specific topology
function acting on \(\mathbf p_{i,j}\). The presentation of \(k\)-th feature of node \(i\) is 
\begin{equation}
[h_i]_k
=
\sum_{j\in\mathcal V}
\omega_k^{\sharp}(\mathbf p_{i,j})f_k(x_{j,k}),
\qquad
k=1,\ldots,d.
\label{eq:problem_representation}
\end{equation}
The term \(f_k(x_{j,k})\) describes what feature \(k\) contributes at source
node \(v_j\), while \(\omega_k^{\sharp}(\mathbf p_{i,j})\) determines how structure
modulates its influence on target node \(v_i\). For class \(c\), the logit is
\begin{equation}
\ell_{i,c}
=
\beta_c
+
\sum_{k=1}^{d}
\sum_{j\in\mathcal V}
w_{k,c}\omega_k^{\sharp}(\mathbf p_{i,j})f_k(x_{j,k}),
\label{eq:problem_logit}
\end{equation}
which yields an exact feature--node decomposition at the logit level. The
remaining challenge is to make \(f_k\) flexible without learning an independent
high-capacity model for every feature, while modeling multi-hop structural
influence without dense pairwise computation. \HARMONIA addresses these two
requirements through specialized neural bases and sparse RRWP aggregation.

\begin{figure*}[t]
\centering
\includegraphics[width=0.95\textwidth]{images/test.png}
\caption{\HARMONIA combines feature-dependent mixtures of neural bases
with normalized random-walk aggregation. MoNB shares specialized
functional spaces across features; SRA evaluates their structural
contributions through sparse propagation. The resulting logits retain
an exact feature--node--walk decomposition.}
\label{fig:famework}
\end{figure*}

\section{HARMONIA}
\label{sec:harmonia}

The formulation in Section~\ref{sec:preliminaries} raises two challenges: modeling heterogeneous feature responses efficiently and capturing multi-scale structural influence without dense pairwise computation. \HARMONIA uses a \emph{Mixture of Neural Bases} (MoNB) to route features to specialized basis experts. It uses normalized RRWP and \emph{Sparse RRWP Aggregation} (SRA) to evaluate multi-hop structural effects through sparse graph propagation. Both components preserve the additive form of the predictor, so each logit remains exactly decomposable across features, source nodes, and walk lengths.

\subsection{Feature Specialization through Mixtures of Neural Bases}
\label{subsec:monb}

A standard NBM represents every feature response in the same shared basis
space \(\mathcal S_B\). This is efficient when feature responses have similar
functional structure, but can be restrictive when heterogeneous features
require different functional directions. Increasing \(B\) enlarges this global
space for every feature, but also expands the representation used by
every feature. \HARMONIA instead introduces multiple shared basis
experts, allowing different features to access specialized functional spaces
while retaining parameter sharing. Specifically, \HARMONIA uses \(C\) experts, each containing \(B\) neural basis
functions,
\begin{equation}
\boldsymbol{\psi}_{\varphi_c}(x)
=
\left[
\psi^{(c)}_1(x),\ldots,\psi^{(c)}_B(x)
\right]^\top
\in\mathbb R^B,
\end{equation}
where \(\varphi_c\) denotes the parameters of expert \(c\). Each feature \(k\)
has a learnable embedding \(\mathbf e_k\in\mathbb R^{d_e}\), from which a sparse
router produces expert scores
\(\mathbf Q_k=[Q_{k,1},\ldots,Q_{k,C}]^\top\).
Let \(\mathcal I_k\) denote the top-\(m\) selected experts. The feature
response is
\begin{equation}
f_k(x)
=
\mathbf a_k^\top
\left(
\sum_{c\in\mathcal I_k}
\tilde{\pi}_{k,c}\boldsymbol{\psi}_{\varphi_c}(x)
\right),
\qquad
\tilde{\pi}_{k,c}
=
\mathbf 1\{c\in\mathcal I_k\}\sigma(Q_{k,c}),
\label{eq:monb}
\end{equation}
where
\(\mathbf a_k=(a_{k,1},\ldots,a_{k,B})^\top\in\mathbb R^B\), 
contains the feature-specific coefficients and $\sigma(\cdot)$ denotes the sigmoid function. The noisy score parameterization
used during training is given in
Appendix~\ref{subsec:app_monb_routing}. For fixed routing weights, the mixture induces \(B\) feature-dependent
effective bases,
\begin{equation}
\widetilde{\psi}_{k,b}(x)
=
\sum_{c\in\mathcal I_k}
\tilde{\pi}_{k,c}\psi_b^{(c)}(x),
\qquad
f_k(x)
=
\sum_{b=1}^{B}
a_{k,b}\widetilde{\psi}_{k,b}(x).
\label{eq:effective_monb_basis}
\end{equation}
Thus, routing changes the functional directions available to each feature
without increasing its \(B\)-dimensional coefficient vector. Features with
similar responses can reuse expert bases, whereas heterogeneous features can
form their responses in different effective basis spaces. Because routing
depends only on feature identity through \(\mathbf e_k\), each \(f_k\)
remains a globally defined univariate response function that can be inspected
directly.

\paragraph{When does specialization help?}
Appendix~\ref{subsec:app_global_basis} shows that constraining all feature
responses to one \(B\)-dimensional space incurs an irreducible approximation
error determined by the global functional spectrum. Allowing multiple
specialized spaces can reduce this error. We measure the best possible gain
from specialization by
\begin{equation}
\Gamma_{B,C}
:=
E_{\mathrm{global}}(B)
-
E_{\mathrm{mix}}^\star(B,C)
\ge 0,
\label{eq:specialization_gap}
\end{equation}
where \(E_{\mathrm{global}}(B)\) is the minimum approximation error achievable
by one shared \(B\)-dimensional space, and
\(E_{\mathrm{mix}}^\star(B,C)\) is the oracle error using \(C\) specialized
spaces. The quantity \(\Gamma_{B,C}\) describes the gain under ideal expert
assignment. In practice, routing may assign a feature to a less suitable
functional space. To isolate this effect, consider the hard-routing case and
the oracle expert spaces
\(\mathcal S_1^\star,\ldots,\mathcal S_C^\star\)
that attain \(E_{\mathrm{mix}}^\star(B,C)\). For feature \(k\), define its
error under expert \(c\) as
\(e_k(c)=
\|f_k^\star-\Pi_{\mathcal S_c^\star}f_k^\star\|_{\mathcal H}^2\), where $\Pi_{\mathcal S}f_k^\star$ is the closest approximation of \(f_k^\star\) in $\mathcal S$,
and let \(z_k^\star\) denote an oracle assignment minimizing this error.
For a routing assignment \(\widehat z\), the average excess error introduced
by routing is
\begin{equation}
    R_{\mathrm{route}}(\widehat z)
=
\frac{1}{d}
\sum_{k=1}^{d}
\left[
e_k(\widehat z_k)-e_k(z_k^\star)
\right]
\ge 0.
\end{equation}

\begin{restatable}[Specialization--Routing Trade-off]
    {theorem}{specializationrouting}
\label{thm:app_specialization_routing_tradeoff}
Let \(E_{\mathrm{route}}(\widehat z)\) denote the approximation error obtained
using the oracle expert spaces but routing assignment \(\widehat z\). The $E_{\mathrm{global}}(B)
-
E_{\mathrm{route}}(\widehat z)
=
\Gamma_{B,C}
-
R_{\mathrm{route}}(\widehat z).$ Hence, specialization improves over the best globally shared
\(B\)-dimensional space if and only if
\(R_{\mathrm{route}}(\widehat z)<\Gamma_{B,C}\).
\end{restatable}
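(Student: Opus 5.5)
The identity follows from bookkeeping once the three error quantities are written in the same normalized form, so the plan is to fix those definitions first. I would define every approximation error as an average over the $d$ features of squared $\mathcal H$-distances to the target responses $f_k^\star$. For the single shared space,
\[
E_{\mathrm{global}}(B)=\min_{\dim\mathcal S\le B}\frac1d\sum_{k=1}^d\|f_k^\star-\Pi_{\mathcal S}f_k^\star\|_{\mathcal H}^2,
\]
following Appendix~\ref{subsec:app_global_basis}. For $C$ specialized spaces under hard routing,
\[
E_{\mathrm{mix}}^\star(B,C)=\min_{\mathcal S_1,\dots,\mathcal S_C}\frac1d\sum_k\min_c\|f_k^\star-\Pi_{\mathcal S_c}f_k^\star\|_{\mathcal H}^2.
\]
The oracle spaces $\mathcal S_1^\star,\dots,\mathcal S_C^\star$ attain this minimum. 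With those spaces fixed, $e_k(c)$ is the per-feature error, and
\[
E_{\mathrm{route}}(\widehat z)=\frac1d\sum_k e_k(\widehat z_k).
\]
It matters that the same $1/d$ normalization appears in $R_{\mathrm{route}}$; otherwise the terms will not cancel.

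The key step is to show $E_{\mathrm{mix}}^\star(B,C)=\frac1d\sum_k e_k(z_k^\star)$. The inner minimum over $c$ in the definition of $E_{\mathrm{mix}}^\star$, evaluated at the optimal spaces, is exactly $\min_c e_k(c)=e_k(z_k^\star)$, because $z_k^\star$ is defined as a minimizer of $e_k(\cdot)$ over the oracle spaces. This is the one point needing care: the oracle assignment has to be taken with respect to the same spaces that attain $E_{\mathrm{mix}}^\star$, which the statement does. I would also remark that the minima exist, or else work with near-optimal spaces and pass to a limit. Existence holds by compactness of the Grassmannian of subspaces of dimension at most $B$ in the finite-dimensional span of the $f_k^\star$, since only projections onto that span matter.

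Then I would compute
\[
E_{\mathrm{route}}(\widehat z)-E_{\mathrm{mix}}^\star=\frac1d\sum_k\bigl[e_k(\widehat z_k)-e_k(z_k^\star)\bigr]=R_{\mathrm{route}}(\widehat z).
\]
Subtracting from $E_{\mathrm{global}}(B)$ and using the definition of $\Gamma_{B,C}$ in \eqref{eq:specialization_gap} gives
\[
E_{\mathrm{global}}(B)-E_{\mathrm{route}}(\widehat z)=\Gamma_{B,C}-R_{\mathrm{route}}(\widehat z).
\]
The ``if and only if'' follows immediately, since the left side is positive exactly when $R_{\mathrm{route}}(\widehat z)<\Gamma_{B,C}$.

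For completeness I would note why both quantities are nonnegative, although the equivalence does not need it. $R_{\mathrm{route}}\ge0$ holds termwise by minimality of $z_k^\star$. $\Gamma_{B,C}\ge0$ holds because setting all $C$ spaces equal to the optimal global space is a feasible choice for the mixture problem.
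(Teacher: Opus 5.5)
Your proposal is correct and follows the paper's proof essentially step for step: write $E_{\mathrm{route}}(\widehat z)=\frac1d\sum_k e_k(\widehat z_k)$, split each term as $e_k(z_k^\star)+\Delta_k(\widehat z_k)$ to get $E_{\mathrm{route}}(\widehat z)=E^\star_{\mathrm{mix}}(B,C)+R_{\mathrm{route}}(\widehat z)$, then substitute $E^\star_{\mathrm{mix}}(B,C)=E_{\mathrm{global}}(B)-\Gamma_{B,C}$. The only difference is that you argue the infimum is attained via compactness, whereas the paper falls back on an $\varepsilon$-optimal solution.
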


\begin{insightbox}
\textbf{Insight.}
The benefit of MoNB depends on a trade-off between specialization
gain and routing cost. Larger \(\Gamma_{B,C}\) allows more routing error before specialization loses its advantage over global sharing.
Appendix~\ref{par:app_topm} provides details and extension to sparse top-\(m\) routing.
\end{insightbox}

\subsection{Normalized Random-Walk Structural Modeling}
\label{subsec:normalized_rrwp}
After learning the feature response \(f_k\), \HARMONIA determines how that response propagates through the graph. Features can depend on different structural scales. One may rely mainly on local information, while another needs information propagated over several steps. Existing interpretable graph additive models~\citep{bechler2024intelligible,reddy2025interpretable} often represent structural relations with a single distance or neighborhood statistic, which can hide differences across propagation lengths and alternative paths. \HARMONIA instead retains the multi-scale structural information captured by RRWP. For feature \(k\), we define
\begin{equation}
\omega_k^{\sharp}(\mathbf p_{i,j})
=
\sum_{t=0}^{T-1}
\theta_{t,k}(\mathbf M^t)_{i,j},
\qquad
[\mathbf P_k]_{i,j}
=
\omega_k^{\sharp}(\mathbf p_{i,j}),
\label{eq:linear_rrwp}
\end{equation}
where \(\theta_{t,k}\) controls the structural weight assigned to walk length
\(t\), and \(\mathbf P_k\in\mathbb R^{n\times n}\) is the resulting feature-specific propagation operator. Different features can therefore emphasize different propagation depths while sharing the same random-walk operators.
\paragraph{Why normalize structural influence?}
Without normalization, the decomposition between feature response and structural influence has a multiplicative scale ambiguity. For any
\(a>0\), replacing \(f_k\) by \(af_k\) and \(\omega_k^{\sharp}\) by \(\omega_k^{\sharp}/a\)
leaves \(\omega_k^{\sharp}(\mathbf p_{i,j})f_k(x_{j,k})\) unchanged. \HARMONIA resolves this ambiguity by learning unconstrained parameters
\(\alpha_{t,k}\in\mathbb R\) and converting them into normalized hop weights,
\begin{equation}
\theta_{t,k}
=
\frac{\alpha_{t,k}^2}
{\sum_{s=0}^{T-1}\alpha_{s,k}^2},
\qquad
\theta_{t,k}\ge0,
\qquad
\sum_{t=0}^{T-1}\theta_{t,k}=1.
\label{eq:rrwp_normalization}
\end{equation}
The model uses \(\alpha_{t,k}\) for optimization and exposes \(\theta_{t,k}\) as the structural quantities. Since \(\mathbf M\) is row-stochastic, every power \(\mathbf M^t\) is also row-stochastic. Therefore, \(\mathbf P_k=\sum_{t=0}^{T-1}\theta_{t,k}\mathbf M^t\) is a convex random-walk operator. Normalization fixes the overall scale of structural influence and places the hop coefficients on a common reference scale, but individual hop weights may still be non-unique. Appendix~\ref{subsec:app_hop_identifiability} shows that the mapping
\(
\boldsymbol{\theta}\mapsto
\sum_{t=0}^{T-1}\theta_t\mathbf M^t
\)
is injective on the probability simplex if and only if \(\mathbf I,\mathbf M,\ldots,\mathbf M^{T-1}\) are linearly independent. A normalized coefficient therefore has a unique hop-level interpretation only when the corresponding random-walk operators contain independent structural information. Moreover, because \(\mathbf P_k\) is a convex random-walk operator, it is non-expansive in \(\ell_\infty\), for any two node
signals \(\mathbf z,\mathbf z'\in\mathbb R^n\), 
\(
\|\mathbf P_k\mathbf z-\mathbf P_k\mathbf z'\|_\infty
\le
\|\mathbf z-\mathbf z'\|_\infty
\),
so normalized structural propagation cannot amplify the largest node-wise perturbation; see Appendix~\ref{subsec:app_rrwp_tradeoff}.

Even when hop coefficients are uniquely identifiable, successive propagation depths may become harder to distinguish. As the random walk mixes, operators associated with later walk lengths become increasingly similar~\citep{3504035.3504468,oono2020graph}, making additional hops structurally redundant. To study this effect, we consider an ergodic reversible random walk, i.e., an irreducible and aperiodic reversible Markov chain, with stationary distribution \(\boldsymbol{\pi}\). This includes random walks on connected undirected graphs, with laziness added when needed to remove periodicity~\citep{levin2017markov}. These assumptions apply only to the following spectral analysis and are not required by the \HARMONIA architecture. Under reversibility, \(\mathbf M\) is self-adjoint with respect to the weighted inner product \(\langle\mathbf u,\mathbf v\rangle_\pi=\sum_i\pi_i u_i v_i\), with induced operator norm \(\|\cdot\|_{2,\pi}\). Writing the eigenvalues of \(\mathbf M\) as \(1=\lambda_1,\lambda_2,\ldots,\lambda_n\), we use \(\rho=\max_{r\ge2}|\lambda_r|<1\) to capture the slowest-decaying nonstationary mode. We measure the distinction between consecutive propagation operators by \(D_t=\|\mathbf M^{t+1}-\mathbf M^t\|_{2,\pi}\). For a resolution \(\varepsilon>0\), the corresponding \emph{effective structural horizon} \(H_\varepsilon=\min\{t\ge0:D_s\le\varepsilon\ \text{for all }s\ge t\}\) is the earliest depth beyond which each additional propagation step changes the random-walk operator by at most \(\varepsilon\). We have the following theorem:

\begin{restatable}[Spectral Decay and Effective Structural Horizon]
    {theorem}{spectraldecay}
\label{thm:structural_horizon}
Under the assumptions above, for every \(t\ge0\), $D_t
=
\max_{r\ge2}
|\lambda_r|^t |1-\lambda_r|
\le
(1+\rho)\rho^t.$ Furthermore, for any resolution
\(0<\varepsilon<1+\rho\), we have  $H_\varepsilon
\le
\left\lceil
\frac{\log((1+\rho)/\varepsilon)}
{-\log\rho}
\right\rceil.$
\end{restatable}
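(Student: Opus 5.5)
The plan is to diagonalize \(\mathbf M\) in the \(\pi\)-weighted inner product, read off the spectrum of \(\mathbf M^{t+1}-\mathbf M^t\), and then convert the resulting geometric decay into a bound on \(H_\varepsilon\).

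\textbf{Spectral setup.} By reversibility, \(\pi_i M_{ij}=\pi_j M_{ji}\). Equivalently, with \(\boldsymbol\Pi=\mathrm{diag}(\boldsymbol\pi)\), the matrix \(\mathbf S=\boldsymbol\Pi^{1/2}\mathbf M\boldsymbol\Pi^{-1/2}\) is symmetric. Hence \(\mathbf M\) is self-adjoint on \((\mathbb R^n,\langle\cdot,\cdot\rangle_\pi)\), and there is a \(\pi\)-orthonormal eigenbasis \(\boldsymbol\phi_1=\mathbf 1,\boldsymbol\phi_2,\ldots,\boldsymbol\phi_n\) with real eigenvalues \(1=\lambda_1,\lambda_2,\ldots,\lambda_n\). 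Ergodicity gives \(\rho<1\) by Perron--Frobenius.

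\textbf{Identity for \(D_t\).} The operator \(\mathbf M^{t+1}-\mathbf M^t=\mathbf M^t(\mathbf M-\mathbf I)\) is a polynomial in \(\mathbf M\). It is therefore also \(\pi\)-self-adjoint, with the same eigenvectors and eigenvalues \(\lambda_r^t(\lambda_r-1)\). For \(r=1\) this eigenvalue vanishes since \(\lambda_1=1\), and we use the convention \(\mathbf M^0=\mathbf I\), i.e.\ \(\lambda^0=1\). For a self-adjoint operator, the induced operator norm equals the spectral radius. Expanding \(\mathbf u=\sum_r c_r\boldsymbol\phi_r\) gives \(\|\mathbf u\|_\pi^2=\sum_r c_r^2\), and maximizing over unit vectors yields
\[
D_t=\max_{r\ge2}|\lambda_r|^t|1-\lambda_r|.
\]

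\textbf{Upper bound on \(D_t\).} For \(r\ge2\) we have \(|\lambda_r|^t\le\rho^t\) and \(|1-\lambda_r|\le 1+|\lambda_r|\le1+\rho\), so \(D_t\le(1+\rho)\rho^t\).

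\textbf{Horizon bound.} Set
\[
t^\star=\left\lceil\frac{\log((1+\rho)/\varepsilon)}{-\log\rho}\right\rceil .
\]
The condition \(0<\varepsilon<1+\rho\) makes the numerator positive, so \(t^\star\ge0\) is well defined. For \(0<\rho<1\), any \(s\ge t^\star\) satisfies \(s(-\log\rho)\ge\log((1+\rho)/\varepsilon)\), i.e.\ \(\rho^s\le\varepsilon/(1+\rho)\). Combined with the previous step, this gives \(D_s\le(1+\rho)\rho^s\le\varepsilon\) for every \(s\ge t^\star\). Because the bound \((1+\rho)\rho^s\) is nonincreasing in \(s\), the condition holds for all \(s\ge t^\star\) simultaneously, not just at \(s=t^\star\). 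By the definition of \(H_\varepsilon\) as a minimum, \(H_\varepsilon\le t^\star\).

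\textbf{Main obstacle: the degenerate case \(\rho=0\).} The only delicate point is how to read the formula when \(\rho=0\), i.e.\ \(\mathbf M=\mathbf 1\boldsymbol\pi^\top\). Then \(D_0=\max_{r\ge2}|1-\lambda_r|=1\) (assuming \(n\ge2\)), while \(D_t=0\) for all \(t\ge1\). For \(\varepsilon<1\) this gives \(H_\varepsilon=1\), whereas the formula read with \(-\log 0=+\infty\) gives \(0\). So as written the bound fails there. I would handle this either by assuming \(\rho>0\) or by replacing \(\rho\) with any \(\rho'\in(0,1)\) satisfying \(\rho'\ge\rho\), for which the same argument goes through verbatim. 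I would state this restriction explicitly rather than leave the convention implicit.

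Beyond this case, the only thing to check carefully is that the operator norm uses the \(\pi\)-weighted norm, which is what makes the spectral-radius identity exact. In the unweighted Euclidean norm, \(\mathbf M\) is not symmetric and only an inequality up to a factor \(\sqrt{\pi_{\max}/\pi_{\min}}\) would hold.
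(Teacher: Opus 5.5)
Your proof is correct and is essentially the paper's argument: a $\pi$-orthonormal eigenbasis gives $D_t=\max_{r\ge2}|\lambda_r|^t|1-\lambda_r|\le(1+\rho)\rho^t$, and inverting this geometric bound, which is nonincreasing in $s$, yields the horizon estimate. Your $\rho=0$ caveat is also valid and sharper than the paper, whose proof tacitly assumes $0<\rho<1$ (it divides by $\log\rho$). For example, the lazy walk on two nodes, $\mathbf M=\tfrac12\mathbf 1\mathbf 1^\top$ (the paper's own two-node example), has $D_0=1$ and $D_t=0$ for $t\ge1$, so $H_\varepsilon=1$ for every $\varepsilon<1$; the stated bound therefore needs $\rho>0$, or $\rho$ replaced by some $\rho'\in(0,1)$ with $\rho'\ge\rho$.
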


\begin{insightbox}
\textbf{Insight.}
The effective structural horizon can be estimated from the graph topology before training. The nonstationary spectral radius \(\rho\) depends only on the transition operator \(\mathbf M\) and can be approximated with a sparse eigensolver. For a chosen structural resolution \(\varepsilon\), Theorem~\ref{thm:structural_horizon} gives a graph-dependent walk-length budget for \(T\). This criterion identifies when additional hops become structurally redundant, but it does not determine the optimal walk length.
\end{insightbox}
Appendices~\ref{sec:app_identifiability}
and~\ref{sec:app_structural_horizon} provide the full identifiability analysis, decay proof, structural-horizon derivation, and conditioning results.

\subsection{Sparse RRWP Aggregation}
\label{subsec:sra_explanations}

The normalized RRWP formulation captures structural influence across multiple
walk lengths, but a direct realization of Eq.~\ref{eq:linear_rrwp} requires
\((\mathbf M^t)_{i,j}\) for every source--target pair \((i,j)\) and every
walk length \(t\). This corresponds to \(T\) pairwise operators of size
\(n\times n\), leading to quadratic growth in the number of nodes. Moreover,
even when the original transition matrix \(\mathbf M\) is sparse, its powers
\(\mathbf M^t\) can become increasingly dense because multi-step paths connect
nodes that are not directly adjacent. \HARMONIA avoids materializing these
dense pairwise operators and instead evaluates their action through repeated
applications of the original sparse transition matrix. To see this, consider a single feature \(k\). Before structural propagation,
its learned response is evaluated independently at every node and collected
into $\mathbf z_k^{(0)}
=
\left[
f_k(x_{1,k}),
f_k(x_{2,k}),
\ldots,
f_k(x_{n,k})
\right]^\top
\in\mathbb R^n.$ These evaluations can be batched across both nodes and features before any
node-to-node interaction is performed. A single multiplication by the
transition matrix then performs one random-walk propagation step:
\begin{equation}
\mathbf M\mathbf z_k^{(0)}
=
\begin{bmatrix}
M_{1,1} & M_{1,2} & \cdots & M_{1,n} \\
M_{2,1} & M_{2,2} & \cdots & M_{2,n} \\
\vdots  & \vdots  & \ddots & \vdots  \\
M_{n,1} & M_{n,2} & \cdots & M_{n,n}
\end{bmatrix}
\begin{bmatrix}
f_k(x_{1,k}) \\
f_k(x_{2,k}) \\
\vdots \\
f_k(x_{n,k})
\end{bmatrix}
=
\begin{bmatrix}
\sum_j M_{1,j}f_k(x_{j,k}) \\
\sum_j M_{2,j}f_k(x_{j,k}) \\
\vdots \\
\sum_j M_{n,j}f_k(x_{j,k})
\end{bmatrix}.
\label{eq:one_step_propagation}
\end{equation}
The \(i\)-th entry is therefore the response received at node \(v_i\) after
one propagation step. Crucially, a sparse representation \textit{stores only the
nonzero entries of \(\mathbf M\) in the memory}, and sparse multiplication \textit{evaluates only
those entries}. Hence, one propagation step processes the observed graph
transitions rather than all \(n^2\) possible source--target pairs. Longer propagation depths are obtained by repeatedly applying the same sparse
operator:
\begin{equation}
\mathbf z_k^{(1)}
=
\mathbf M\mathbf z_k^{(0)},
\qquad
\mathbf z_k^{(2)}
=
\mathbf M\mathbf z_k^{(1)}
=
\mathbf M^2\mathbf z_k^{(0)},
\qquad
\mathbf z_k^{(t)}
=
\mathbf M^t\mathbf z_k^{(0)}.
\label{eq:recursive_propagation}
\end{equation}
Thus, \([\mathbf z_k^{(t)}]_i\) contains the response of feature \(k\)
that reaches node \(v_i\) after \(t\) random-walk steps. Although
\(\mathbf M^t\) itself may become dense, SRA never constructs it. Instead,
\(\mathbf z_k^{(t-1)}\) already contains information accumulated along
shorter paths, and multiplying once more by the sparse matrix \(\mathbf M\)
propagates that information through the next set of observed transitions.
Multi-hop interactions therefore emerge implicitly through repeated sparse
propagation rather than through explicit all-pairs computation. Since all
features share \(\mathbf M\), their channels can be propagated together using
the same sparse matrix multiplication. This recursive view allows the pairwise RRWP computation to be evaluated
without changing its result. Recall that the response of feature
\(k\) at node \(v_i\) is
\([h_i]_k=\sum_{j\in\mathcal V}
\omega_k^{\sharp}(\mathbf p_{i,j})f_k(x_{j,k})\).
Substituting Eq.~\ref{eq:linear_rrwp} and exchanging the order of summation
gives
\begin{equation}
\begin{aligned}
[h_i]_k
&=
\sum_{j\in\mathcal V}
\left(
\sum_{t=0}^{T-1}
\theta_{t,k}(\mathbf M^t)_{i,j}
\right)
f_k(x_{j,k}) =
\sum_{t=0}^{T-1}
\theta_{t,k}
\left(
\sum_{j\in\mathcal V}
(\mathbf M^t)_{i,j}
f_k(x_{j,k})
\right) \\
&=
\sum_{t=0}^{T-1}
\theta_{t,k}
[\mathbf M^t\mathbf z_k^{(0)}]_i
=
\sum_{t=0}^{T-1}
\theta_{t,k}
[\mathbf z_k^{(t)}]_i .
\end{aligned}
\label{eq:sra_reordering}
\end{equation}
The inner source-node aggregation at walk length \(t\) is exactly the
\(i\)-th entry of \(\mathbf M^t\mathbf z_k^{(0)}\). By
Eq.~\ref{eq:recursive_propagation}, this quantity can be obtained recursively
using only the original sparse transition matrix. Hence, neither
\(\mathbf M^t\) nor the corresponding dense source--target tensor needs to be
materialized. We refer to this exact computational reordering as
\emph{Sparse RRWP Aggregation} (SRA). For the following result, let \(|E|\) denote the number of nonzero
graph transitions and \(d\) the number of feature channels.
\begin{restatable}[Complexity of Sparse RRWP Aggregation]
    {proposition}{sraexact}
\label{prop:sra_exactness}
Explicit pairwise materialization requires
\(O(Tn^2)\) memory and \(O(Tn^2d)\) computation, whereas SRA evaluates the
same representation in \(O(T|E|d)\) computation.
\end{restatable}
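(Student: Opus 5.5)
The proof has three parts: exactness, the cost of explicit materialization, and the cost of SRA. Exactness is already contained in Eq.~\ref{eq:sra_reordering}. Exchanging the finite sums over $j$ and $t$ and identifying $\sum_j(\mathbf M^t)_{i,j}f_k(x_{j,k})=[\mathbf M^t\mathbf z_k^{(0)}]_i$ shows that SRA returns exactly $[h_i]_k$. By induction on $t$, Eq.~\ref{eq:recursive_propagation} gives $\mathbf z_k^{(t)}=\mathbf M\mathbf z_k^{(t-1)}=\mathbf M^t\mathbf z_k^{(0)}$. I will state this first, so that the complexity comparison is between two procedures computing the same quantity.

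\textbf{Explicit materialization.} This baseline stores the tensor $[(\mathbf M^t)_{i,j}]_{t,i,j}$, which has $Tn^2$ entries and hence needs $O(Tn^2)$ memory. For each feature $k$, forming $[h_i]_k=\sum_j\sum_t\theta_{t,k}(\mathbf M^t)_{i,j}f_k(x_{j,k})$ touches every triple $(i,j,t)$ once. That costs $O(Tn^2)$ per feature and $O(Tn^2d)$ over all $d$ features. Building $\mathbf P_k$ first does not help, since it also costs $O(Tn^2)$ per feature. I take the costs of computing the powers $\mathbf M^t$ and of evaluating $f_k$ as either precomputed or dominated. The statement only asserts these costs for the pairwise implementation described in the preliminaries, so an upper-bound count of that procedure suffices.

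\textbf{Cost of SRA.} Store $\mathbf M$ in a sparse format (CSR/COO) with $|E|$ nonzeros. Then one sparse matrix--vector product $\mathbf M\mathbf z$ costs $O(|E|+n)$: one multiply-add per stored entry, plus one write per row. Propagating all $d$ channels at once as $\mathbf M\mathbf Z$, with $\mathbf Z\in\mathbb R^{n\times d}$, costs $O((|E|+n)d)$ per step. The $T-1$ steps therefore cost $O(T(|E|+n)d)$. Accumulating $\sum_t\theta_{t,k}\mathbf z_k^{(t)}$ adds $O(Tnd)$, and computing the $\theta_{t,k}$ from the $\alpha_{t,k}$ adds $O(Td)$. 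The initial evaluation of the responses $f_k(x_{j,k})$ is shared by both methods and independent of $T$, so I treat it as outside the comparison. Memory is $O(|E|+nd)$, or $O(|E|+Tnd)$ if all intermediate $\mathbf z^{(t)}$ are kept for explanations.

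\textbf{Main obstacle.} The only delicate step is absorbing the $O(Tnd)$ terms into $O(T|E|d)$. I will handle it by adopting the convention that $|E|$ counts the nonzero transitions of $\mathbf M$ and that every node has at least one outgoing transition. This is already needed for $\mathbf M=\mathbf D^{-1}\mathbf A$ to be defined; self-loops or laziness supply it where necessary. The convention gives $n\le|E|$, so $O(T(|E|+n)d)=O(T|E|d)$. Alternatively, if isolated nodes are allowed, I would state the bound as $O(T(|E|+n)d)$ and note that it reduces to $O(T|E|d)$ under this convention.
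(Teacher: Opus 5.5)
Your proposal is correct and follows the paper's proof of the full version (Proposition~\ref{prop:app_sra_complexity}) essentially step for step. Both use exactness by induction on \(\mathbf Z^{(t)}=\mathbf M\mathbf Z^{(t-1)}\), an \(O(n^2d)\) cost per walk order for applying dense powers (excluding their construction), \(O(\operatorname{nnz}(\mathbf M)\,d)\) per sparse step plus \(O(Tnd)\) accumulation, and absorption of the \(n\) term via \(\operatorname{nnz}(\mathbf M)\ge n\) for a row-stochastic matrix with no zero rows. The only cosmetic difference is that the paper keeps \(E_M=\operatorname{nnz}(\mathbf M)\) distinct from \(|\mathcal E|\) and passes to \(O(T|\mathcal E|d)\) under \(E_M=\Theta(|\mathcal E|)\), whereas you identify the two directly, as the main-text statement allows.
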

\begin{insightbox}
\textbf{Insight.}
SRA computes the exact RRWP aggregation in
\(O(T|E|d)\) time by replacing dense pairwise operators with sparse
propagation. For ergodic reversible random walks,
Theorem~\ref{thm:structural_horizon} provides the graph-dependent choice
\(T=H_\varepsilon+1\), yielding
\(O\!\left(
|E|d
\left[
1+\frac{\log((1+\rho)/\varepsilon)}{-\log\rho}
\right]
\right)\)
computation at resolution \(\varepsilon\). For general directed graphs, this
spectral bound need not hold, while SRA remains exact with complexity
\(O(T|E|d)\).
\end{insightbox}

The full equivalence proof and the forward- and training-memory analyses are
provided in Appendix~\ref{sec:app_sra}. The linear RRWP formulation also preserves the exact additive interpretability
of \HARMONIA. In particular, each prediction can be decomposed into
feature-, source-node-, and walk-length-specific contributions without an
auxiliary explanation model. We provide the complete decomposition and its
aggregation across explanatory dimensions in
Appendix~\ref{subsec:app_exact_explanations}.

\section{Experiments}
\label{sec:experiments}

We evaluate \HARMONIA on real-world node-classification benchmarks to assess predictive performance and scalability. Controlled synthetic experiments test whether it recovers the feature and structural mechanisms used to generate the targets. We also examine feature removal, variation across random seeds, and how expert routing organizes features. Appendix~\ref{sec:app_model_exp_details} provides additional graph-classification experiments and implementation details, including pseudo code.

\subsection{Real-World Benchmarks Accuracy and Scalability}
\label{sec:real_world}

\begin{table*}[htp]
\centering
\small
\setlength{\tabcolsep}{5pt}
\resizebox{\textwidth}{!}{%
\begin{tabular}{llrrrrrr}
\toprule
Type & Method & Cora & CiteSeer & Tolokers & ogbn-arxiv & IGB-Small & ogbn-products \\
\midrule
\multirow{4}{*}{Black-box}
& GCN & $\mathbf{81.23 \pm 1.1}$ & $71.20 \pm 1.7$ & $83.47 \pm 0.7$
    & $71.74 \pm 0.3$ & $70.46 \pm 1.17$ & $75.64 \pm 0.3$ \\
& GAT & $80.32 \pm 2.3$ & $70.26 \pm 2.3$ & $83.76 \pm 1.0$
    & $71.95 \pm 0.6$ & $70.93 \pm 2.15$ & $\mathbf{79.45 \pm 0.5}$ \\
& GraphSAGE & $79.94 \pm 3.4$ & $75.90 \pm 5.0$ & $82.58 \pm 0.5$
    & $71.49 \pm 0.2$ & $\mathbf{75.49 \pm 1.08}$ & $75.63 \pm 0.3$ \\
& Graph Transformer & $80.70 \pm 0.5$ & $\mathbf{76.00 \pm 0.9}$ & $83.27 \pm 0.9$
    & $70.13 \pm 0.5$ & $70.73 \pm 1.26$ & $74.74 \pm 0.5$ \\
\midrule
\multirow{4}{*}{Interpretable}
& NAM & $51.35 \pm 2.3$ & $55.40 \pm 1.9$ & $62.48 \pm 2.1$
    & $56.12 \pm 3.4$ & OOM & OOM \\
& GP-NAM & $59.96 \pm 3.2$ & $60.30 \pm 3.9$ & $83.12 \pm 0.4$
    & $62.35 \pm 4.2$ & $55.92 \pm 3.32$ & $60.13 \pm 3.9$ \\
& GNAN & $72.89 \pm 5.1$ & $65.23 \pm 3.7$ & $\underline{\mathbf{84.47 \pm 0.8}}$
    & $69.56 \pm 0.9$ & OOM & OOM \\
& G-NAMRFF & $77.84 \pm 1.7$ & $69.45 \pm 2.5$ & $82.88 \pm 0.3$
    & $70.02 \pm 3.9$ & $63.10 \pm 1.26$ & $72.13 \pm 0.4$ \\
\midrule
Interpretable
& \textbf{\HARMONIA} & $\underline{79.60 \pm 1.3}$ & $\underline{72.80 \pm 1.1}$
    & $84.18 \pm 0.6$ & $\underline{\mathbf{80.14 \pm 2.3}}$
    & $\underline{68.93 \pm 1.08}$ & $\underline{78.14 \pm 0.7}$ \\
\bottomrule
\end{tabular}}
\caption{Node-classification accuracy (\%).
Methods are categorized as black-box or interpretable-by-design.
Bold denotes the highest mean among all compared methods, underlining
denotes the highest mean among interpretable methods.
OOM(Out Of Memory)}
\label{tab:node-classification}
\end{table*}

\paragraph{Datasets and baselines.}
We use six node-classification benchmarks: Cora and CiteSeer~\citep{sen2008collective}, Tolokers~\citep{platonov2023critical}, ogbn-arxiv and ogbn-products~\citep{hu2020open}, and IGB-Small~\citep{khatua2023igb}. They ranging from a few thousand to nearly 2.5 million nodes. We compare \HARMONIA with black-box GNNs, including GCN~\citep{kipf2016semi}, GAT~\citep{velickovic2018graph}, GraphSAGE~\citep{hamilton2017inductive}, and Graph Transformer~\citep{ma2023GraphInductiveBiases}, as well as interpretable-by-design baselines NAM~\citep{agarwal2021neural}, GP-NAM~\citep{zhang2024gaussian}, GNAN~\citep{bechler2024intelligible}, and G-NAMRFF~\citep{reddy2025interpretable}.

\begin{wrapfigure}{r}{0.35\textwidth}
    \centering
    \vspace{-10pt}
    \includegraphics[width=\linewidth]{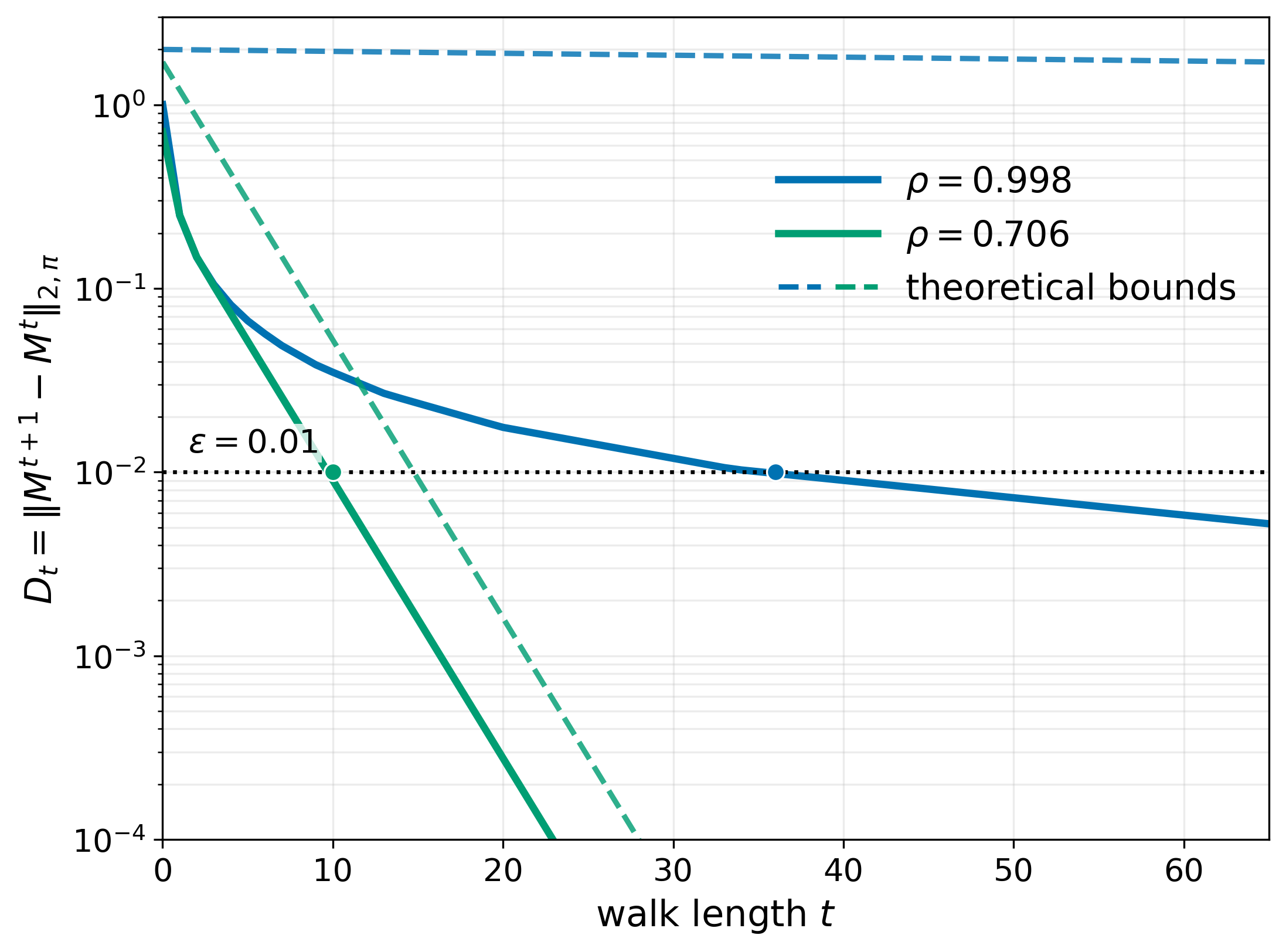}
    \vspace{-18pt}
    \caption{Hop distinguishability decays with increasing walk length.}
    \label{fig:effective_structural_horizon_analysis}
    \vspace{-10pt}
\end{wrapfigure}

\paragraph{Predictive performance and scalability.}
Table~\ref{tab:node-classification} shows that \HARMONIA outperforms existing
interpretable baselines on five of six datasets and scales to large graphs
where some baselines run out of memory. Against black-box models, it achieves
the best overall accuracy on Tolokers and ogbn-arxiv while remaining competitive
on Cora and ogbn-products. We also examine how hop distinguishability decays to guide the propagation budget. 
Figure~\ref{fig:effective_structural_horizon_analysis} supports choosing
$T=H_\varepsilon+1$, beyond this depth, consecutive propagation operators differ by at most $\varepsilon$. This avoids prescribing a large $T$
with cost $O(T|\mathcal E|d)$. Under the assumptions of
Theorem~\ref{thm:structural_horizon}, the resulting SRA computation
is bounded by
$O\!\left(
|\mathcal E|d
\left[1+\frac{\log((1+\rho)/\varepsilon)}{-\log\rho}\right]
\right).$
Together, the graph-dependent horizon and sparse propagation support
efficient structural modeling on large graphs.

\begin{wrapfigure}{r}{0.45\textwidth}
    \centering
    \vspace{-10pt}

    \captionsetup[subfigure]{skip=0pt}

    \begin{subfigure}[t]{0.48\linewidth}
        \centering
        \includegraphics[width=\linewidth]{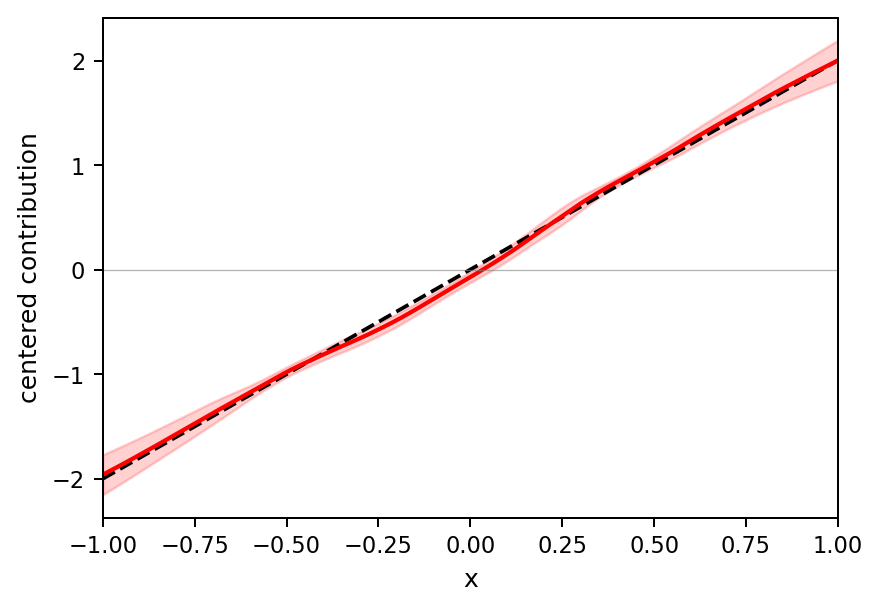}
        \caption{$f_k^{\star}=2x$}
    \end{subfigure}
    \hfill
    \begin{subfigure}[t]{0.48\linewidth}
        \centering
        \includegraphics[width=\linewidth]{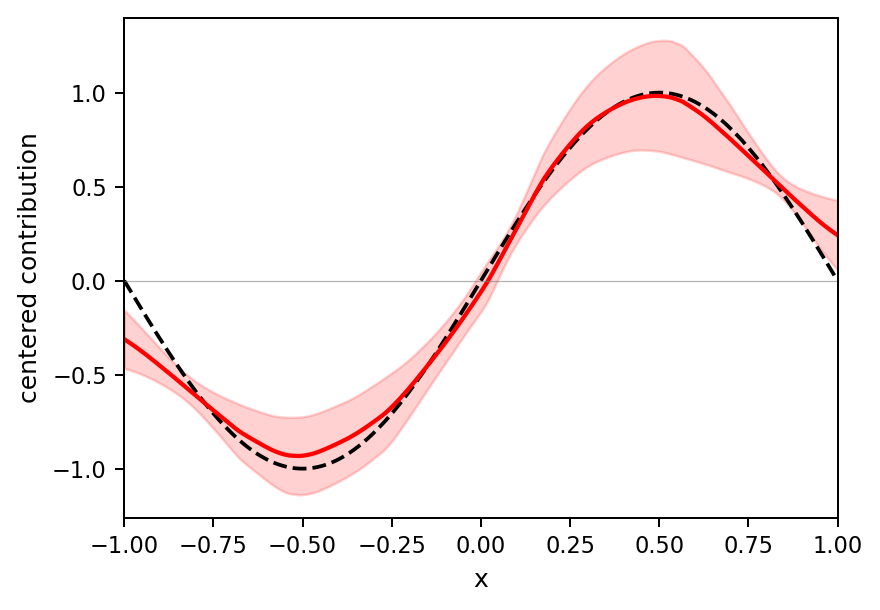}
        \caption{$f_k^{\star}=\sin(\pi x)$}
    \end{subfigure}

    \vspace{0mm}

    \begin{subfigure}[t]{0.48\linewidth}
        \centering
        \includegraphics[width=\linewidth]{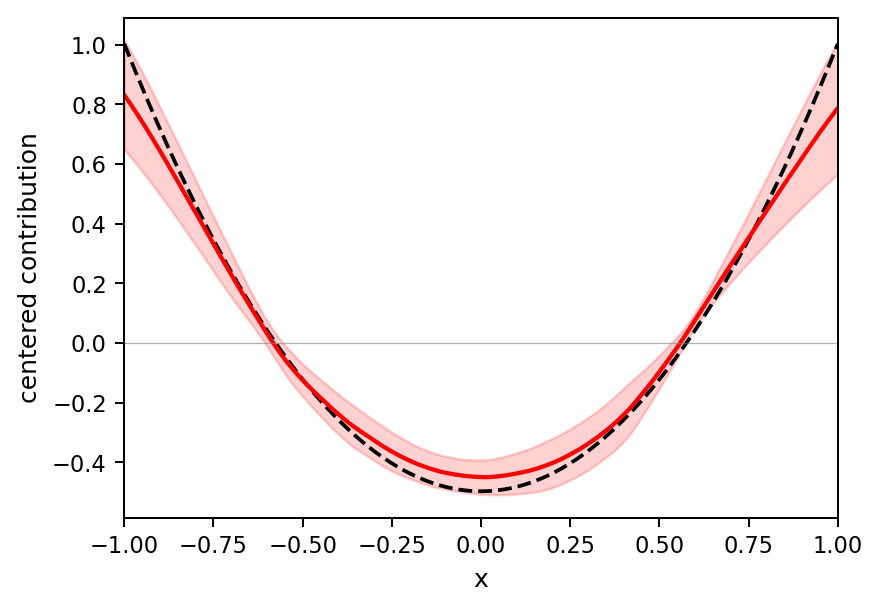}
        \caption{$f_k^{\star}=1.5(x^2-1)$}
    \end{subfigure}
    \hfill
    \begin{subfigure}[t]{0.48\linewidth}
        \centering
        \includegraphics[width=\linewidth]{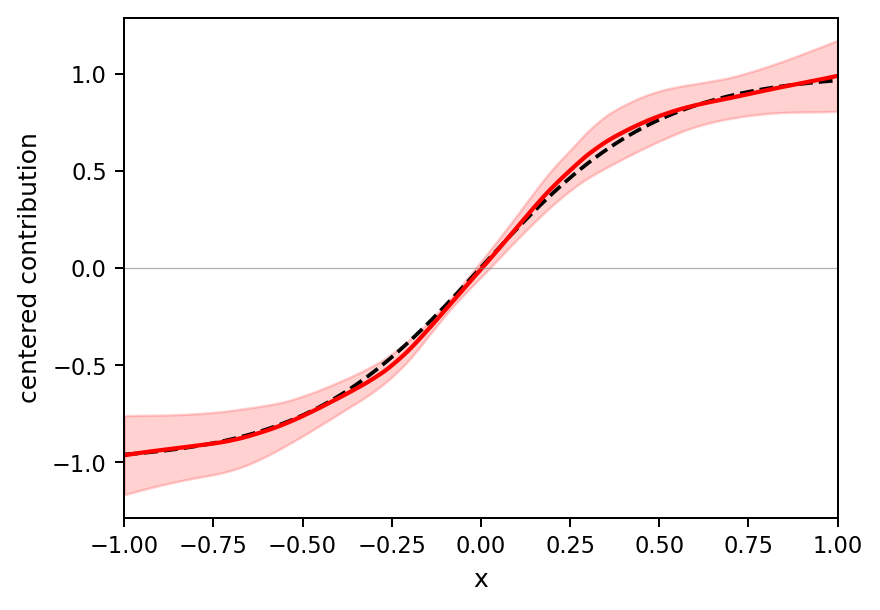}
        \caption{$f_k^{\star}=\tanh(2x)$}
    \end{subfigure}

    \vspace{-5pt}

    \caption{HARMONIA univariate response recovery for 4 ground-truth
functions.}

    \label{fig:synthetic_function_recovery}

    \vspace{-10pt}
\end{wrapfigure}

\subsection{Explanation Faithfulness and Completeness}
\label{sec:synthetic}

We use synthetic graphs with 20 continuous node features sampled in
$[-1,1]$. Four signal features generate labels through linear, quadratic,
sinusoidal, and saturating responses with feature-specific hop profiles.
The remaining 16 features are sampled within the same range but do not
contribute to label generation. Construction and evaluation details are
provided in Appendices~\ref{app:synthetic_setup}--\ref{app:synthetic_edges}.

We compare additive responses, model-intervention scores, and post-hoc
explanation scores against known feature effects and oracle logit changes
under edge deletion or training-mean replacement of a source node's features. In addition to interpretable-by-design graph additive baselines, we include two post-hoc GNN explainers, PGExplainer~\citep{luo2020parameterized} and GOAt~\citep{lu2024goat}. Precision measures top-set overlap and NDCG evaluates relevance-weighted
ranking, using four features/nodes or the top $20\%$ of edges, rounded up;
Effective and Signed NRMSE measure errors in response curves and signed
node effects, respectively.
Table~\ref{tab:synthetic_recovery} shows that \HARMONIA matches GNAN's
perfect feature selection while reducing response error ($0.193$ versus
$0.435$), achieves the highest edge NDCG and precision ($0.9425$ and
$0.8162$), and leads node recovery in both ranking and signed-effect
accuracy ($0.9658$ NDCG@4 and $0.2992$ Signed NRMSE).
Different combinations of feature responses and structural effects can yield similar outputs, so we test whether the learned responses recover each underlying feature signal. Figure~\ref{fig:synthetic_function_recovery} shows that \HARMONIA's centered responses closely track the four ground-truth functions, including their linear, quadratic, sinusoidal, and saturating shapes. The close match supports faithful component recovery and empirical identifiability of the model.
\paragraph{Expert specialization.}
Figure~\ref{fig:expert_identifiability} shows UMAP clusters of Cora features for different numbers of MoNB experts and helps guide the choice of expert count. An underspecified expert count limits specialization by forcing heterogeneous features into few expert spaces. An overspecified count fragments groups, weakening separation and identifiability. Appendix~\ref{subsec:app_routing_tradeoff} examines this trade-off.
\begin{table*}[htp]
\centering
\small
\setlength{\tabcolsep}{3pt}
\resizebox{\textwidth}{!}{%
\begin{tabular}{lcccccccc}
\toprule
& \multicolumn{3}{c}{Feature and response recovery}
& \multicolumn{2}{c}{Edge recovery}
& \multicolumn{3}{c}{Node recovery}\\
\cmidrule(lr){2-4}\cmidrule(lr){5-6}\cmidrule(lr){7-9}
Method
& \shortstack{Precision\\@4 $\uparrow$} 
& \shortstack{NDCG\\@4 $\uparrow$}
& \shortstack{Effective\\NRMSE $\downarrow$}
& \shortstack{NDCG\\@20\% $\uparrow$} 
& \shortstack{Precision\\@20\% $\uparrow$}
& \shortstack{Precision\\@4 $\uparrow$} 
& \shortstack{NDCG\\@4 $\uparrow$}
& \shortstack{Signed\\NRMSE $\downarrow$}\\
\midrule
\HARMONIA
& $\mathbf{1.000}$ & $\mathbf{1.000}$ & $\mathbf{0.193}$
& $\mathbf{0.9425}$ & $\mathbf{0.8162}$
& $\mathbf{0.8487}$ & $\mathbf{0.9658}$ & $\mathbf{0.2992}$\\
GNAN
& $\mathbf{1.000}$ & $0.996$ & $0.435$
& $0.6305$ & $0.5005$
& $0.7057$ & $0.8861$ & $0.5853$\\
G-NAMRFF
& $0.783$ & $0.897$ & $0.735$
& $0.7589$ & $0.6430$
& $0.6413$ & $0.7975$ & $0.8433$\\
GCN + GNNExplainer
& $0.850$ & $0.934$ & N/A
& $0.6259$ & $0.5708$
& $0.5312$ & $0.6914$ & N/A\\
GCN + GOAt
& $0.950$ & $0.967$ & N/A
& $0.7790$ & $0.6597$
& $0.6732$ & $0.8294$ & N/A\\
\bottomrule
\end{tabular}}
\caption{Synthetic recovery means. Bold marks the best
mean, including ties. Full results and available standard deviations are
in Tables~\ref{tab:synthetic_feature_full},
\ref{tab:synthetic_edge_full}, and~\ref{tab:synthetic_node_full}.}
\label{tab:synthetic_recovery}
\end{table*}

\begin{figure}[htp]
    \centering
    \begin{minipage}[t]{0.24\textwidth}
        \centering
        \includegraphics[width=\linewidth]{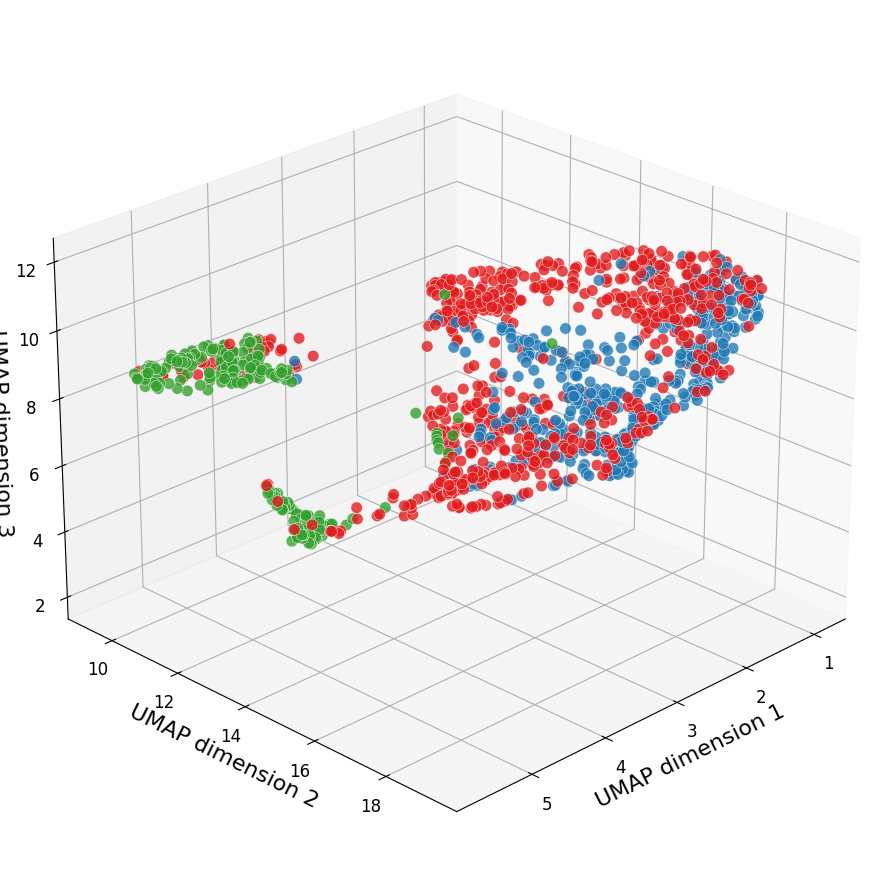}
        \vspace{-5pt}
        \subcaption{3 Experts}
        \label{fig:fig1}
    \end{minipage}
    \hfill
    \begin{minipage}[t]{0.24\textwidth}
        \centering
        \includegraphics[width=\linewidth]{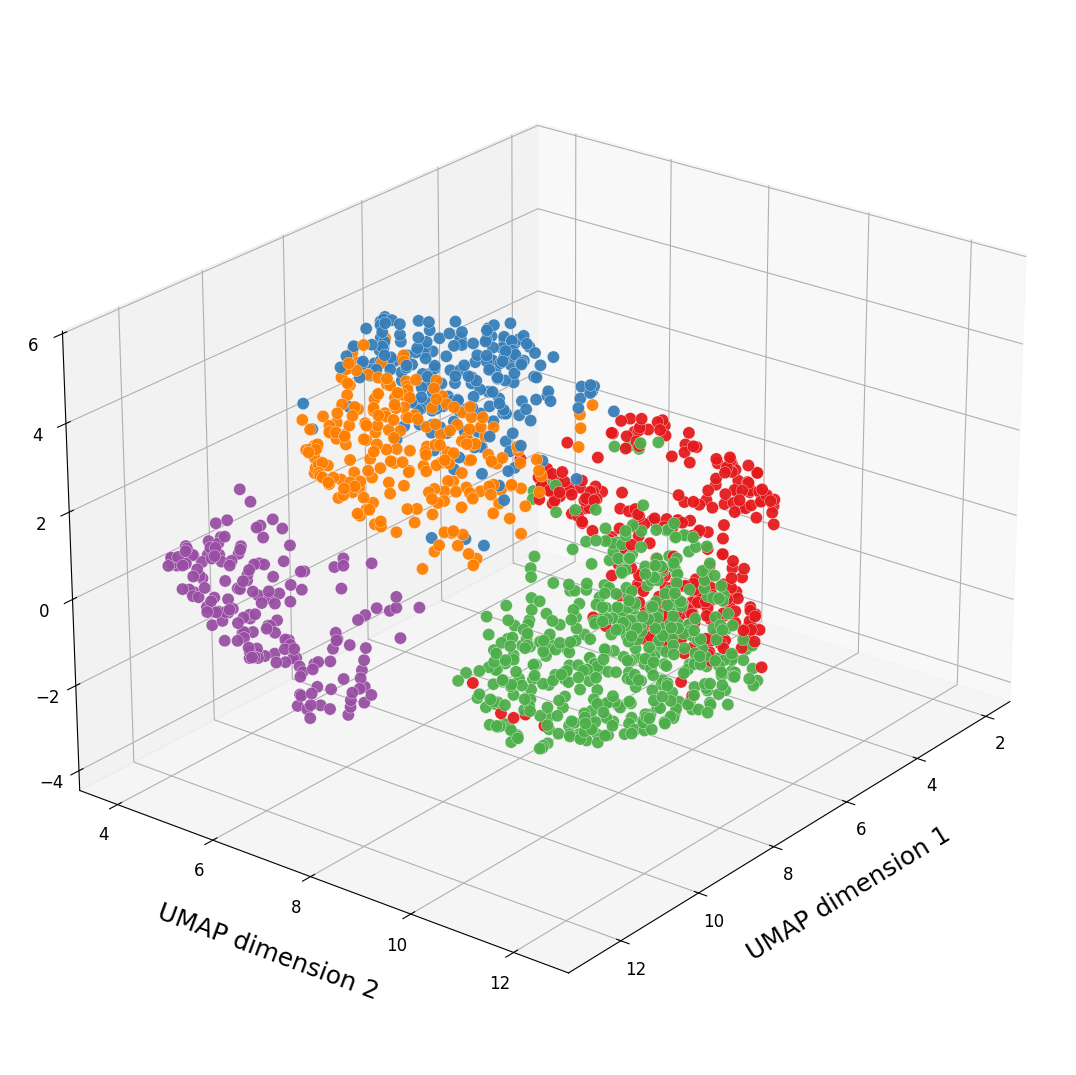}
        \vspace{-5pt}
        \subcaption{5 Experts}
        \label{fig:fig2}
    \end{minipage}
    \hfill
    \begin{minipage}[t]{0.24\textwidth}
        \centering
        \includegraphics[width=\linewidth]{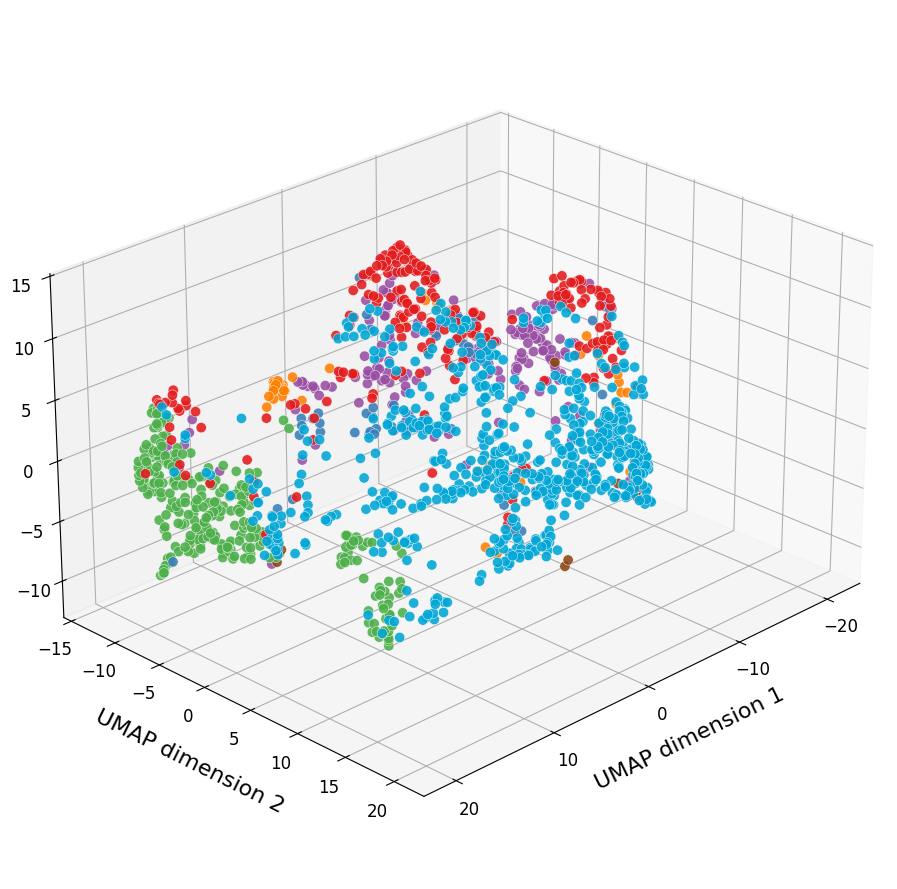}
        \vspace{-5pt}
        \subcaption{7 Experts}
        \label{fig:fig3}
    \end{minipage}
    \hfill
    \begin{minipage}[t]{0.24\textwidth}
        \centering
        \includegraphics[width=\linewidth]{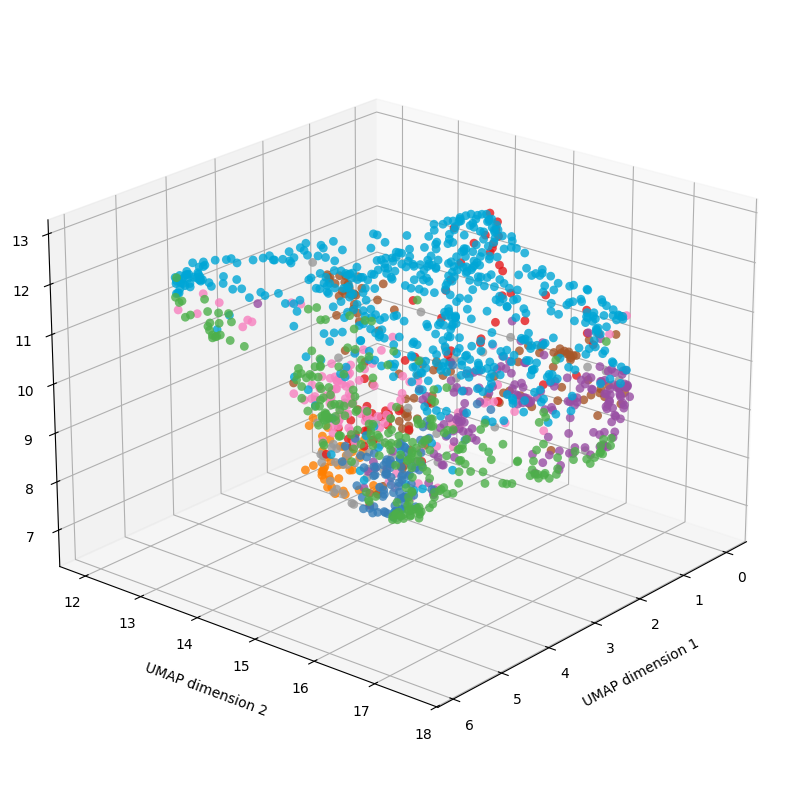}
        \vspace{-5pt}
        \subcaption{9 Experts}
        \label{fig:fig4}
    \end{minipage}
    \par
    \vspace{-25pt}

    \begin{minipage}{0.9\textwidth}
        \centering
        \includegraphics[width=\linewidth]{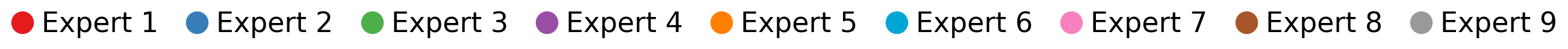}
    \end{minipage}

    \vspace{7pt}

    \caption{UMAP visualizations of Cora features with different numbers of MoNB experts. Each point represents a feature, and its color indicates the expert assignment.}
    \label{fig:expert_identifiability}
\end{figure}

\section{Related Work}

\paragraph{Interpretable additive models.}
Generalized Additive Models (GAMs)~\citep{hastie2017generalized} express predictions as sums of feature-wise shape functions. NAMs~\citep{agarwal2021neural} learn these functions with neural networks, and NBMs~\citep{radenovic2022neural} reduce parameter costs by representing feature responses through shared neural bases. GrAMs extend the additive structure to relational data. GNAN~\citep{bechler2024intelligible} models structural influence using shortest-path distance; G-NAMRFF~\citep{reddy2025interpretable} uses random Fourier features for parameter-efficient feature modeling. GMAN~\citep{bechler2025gman} and SUPERMAN~\citep{zerio2026superman} extend GNAN to sets of sparse temporal or heterogeneous graphs. They also allow feature or graph grouping, trading fine-grained interpretability for greater expressivity. \HARMONIA combines specialized shared neural bases with multi-scale structural modeling through RRWP.

\paragraph{Post-hoc GNN explainers.}
Many widely used GNN explanation methods operate post hoc on a trained
predictor. GNNExplainer \citep{ying2019gnnexplainer} and PGExplainer
\citep{luo2020parameterized} identify influential subgraphs and features,
GraphLIME \citep{huang2022graphlime} provides local feature attributions, and
GRAPHTRAIL \citep{armgaan2024graphtrail} translates learned representations
into human-readable rules. Surveys are provided in
\citep{yuan2022explainability,kakkad2023survey}. Unlike these auxiliary
explanation mechanisms, interpretable-by-design models such as \HARMONIA expose
feature and structural contributions directly through the predictive
function.

\section{Conclusion and Future Work}
\HARMONIA is an interpretable graph additive framework that scales with feature dimension and graph size. MoNB models feature responses with specialized bases, while SRA computes multi-hop propagation without dense RRWP tensors and preserves fine-grained explanations. Future work will extend \HARMONIA to hypergraphs, improve the identifiability and stability of its feature and structural components, and develop principled mechanisms for selecting the expert count to avoid overspecified expert configurations.

\section*{AI Use Statement}
Generative AI tools were used solely to assist with code debugging and to improve the grammatical clarity of the manuscript. The experimental design, analysis, results, and conclusions were developed independently by the authors without generative AI assistance. All scientific content and interpretations presented in this work remain the responsibility of the authors.
\bibliography{iclr2027_conference}
\bibliographystyle{iclr2027_conference}

\appendix
\newpage

\begin{center}
    {\LARGE\bfseries Supplementary Material for\\
    Interpretable and Efficient Graph Additive Model with Mixture of Neural Basis}
\end{center}

\noindent\rule{\textwidth}{1.5pt}

\vspace{0.75em}

\textbf{This supplementary material provides complete theoretical derivations, mathematical analysis, implementation details, full experimental
settings, theory-driven evaluations, scalability studies, and additional
interpretability results for \HARMONIA. It is organized as follows:}

\vspace{0.5em}

\begin{itemize}[leftmargin=*, label={}]

\item \phantomsection
\textbf{\ref{sec:notation}. \nameref{sec:notation}}
\dotfill \pageref{sec:notation}


\item \phantomsection
\textbf{\ref{sec:app_monb_theory}. \nameref{sec:app_monb_theory}}
\dotfill \pageref{sec:app_monb_theory}

\begin{itemize}[leftmargin=2em, label={}]

    \item \ref{subsec:app_notation}. \nameref{subsec:app_notation}
    \dotfill \pageref{subsec:app_notation}

    \item \ref{subsec:app_global_basis}. \nameref{subsec:app_global_basis}
    \dotfill \pageref{subsec:app_global_basis}

    \item \ref{subsec:app_specialization_gap}. \nameref{subsec:app_specialization_gap}
    \dotfill \pageref{subsec:app_specialization_gap}

    \item \ref{subsec:app_routing_tradeoff}. \nameref{subsec:app_routing_tradeoff}
    \dotfill \pageref{subsec:app_routing_tradeoff}

    \item \ref{subsec:app_statistical_efficiency}. \nameref{subsec:app_statistical_efficiency}
    \dotfill \pageref{subsec:app_statistical_efficiency}

\end{itemize}


\item \phantomsection
\textbf{\ref{sec:app_identifiability}. \nameref{sec:app_identifiability}}
\dotfill \pageref{sec:app_identifiability}

\begin{itemize}[leftmargin=2em, label={}]

    \item \ref{subsec:app_normalization_identifiability}. 
    \nameref{subsec:app_normalization_identifiability}
    \dotfill \pageref{subsec:app_normalization_identifiability}

    \item \ref{subsec:app_hop_identifiability}. 
    \nameref{subsec:app_hop_identifiability}
    \dotfill \pageref{subsec:app_hop_identifiability}

    \item \ref{subsec:app_approx_identifiability}. 
    \nameref{subsec:app_approx_identifiability}
    \dotfill \pageref{subsec:app_approx_identifiability}

\end{itemize}


\item \phantomsection
\textbf{\ref{sec:app_structural_horizon}. \nameref{sec:app_structural_horizon}}
\dotfill \pageref{sec:app_structural_horizon}

\begin{itemize}[leftmargin=2em, label={}]

    \item \ref{subsec:app_hop_decay}. 
    \nameref{subsec:app_hop_decay}
    \dotfill \pageref{subsec:app_hop_decay}
    
    \item \ref{subsec:app_structural_horizon}. 
    \nameref{subsec:app_structural_horizon}
    \dotfill \pageref{subsec:app_structural_horizon}

\end{itemize}


\item \phantomsection
\textbf{\ref{sec:app_rrwp_expressivity}. \nameref{sec:app_rrwp_expressivity}}
\dotfill \pageref{sec:app_rrwp_expressivity}

\begin{itemize}[leftmargin=2em, label={}]

    \item \ref{subsec:app_rrwp_spectral}. 
    \nameref{subsec:app_rrwp_spectral}
    \dotfill \pageref{subsec:app_rrwp_spectral}

    \item \ref{subsec:app_rrwp_tradeoff}. 
    \nameref{subsec:app_rrwp_tradeoff}
    \dotfill \pageref{subsec:app_rrwp_tradeoff}

    \item \ref{subsec:app_exact_explanations}. 
    \nameref{subsec:app_exact_explanations}
    \dotfill \pageref{subsec:app_exact_explanations}

\end{itemize}


\item \phantomsection
\textbf{\ref{sec:app_sra}. \nameref{sec:app_sra}}
\dotfill \pageref{sec:app_sra}

\begin{itemize}[leftmargin=2em, label={}]

    \item \ref{subsec:app_sra_equivalence}. 
    \nameref{subsec:app_sra_equivalence}
    \dotfill \pageref{subsec:app_sra_equivalence}

    \item \ref{subsec:app_sra_complexity}. 
    \nameref{subsec:app_sra_complexity}
    \dotfill \pageref{subsec:app_sra_complexity}

\end{itemize}


\item \phantomsection
\textbf{\ref{sec:app_model_exp_details}. \nameref{sec:app_model_exp_details}}
\dotfill \pageref{sec:app_model_exp_details}

\begin{itemize}[leftmargin=2em, label={}]

    \item \ref{subsec:hardware_setting}. 
    \nameref{subsec:hardware_setting}
    \dotfill \pageref{subsec:hardware_setting}

    \item \ref{subsec:app_datasets_splits}. 
    \nameref{subsec:app_datasets_splits}
    \dotfill \pageref{subsec:app_datasets_splits}

    \item \ref{subsec:app_baselines}. 
    \nameref{subsec:app_baselines}
    \dotfill \pageref{subsec:app_baselines}

    \item \ref{subsec:app_hyperparameters}. 
    \nameref{subsec:app_hyperparameters}
    \dotfill \pageref{subsec:app_hyperparameters}

    \item \ref{subsec:app_monb_routing}. 
    \nameref{subsec:app_monb_routing}
    \dotfill \pageref{subsec:app_monb_routing}

    \item \ref{subsec:app_prediction_readout}. 
    \nameref{subsec:app_prediction_readout}
    \dotfill \pageref{subsec:app_prediction_readout}

    \item \ref{subsec:app_algorithm_training}. 
    \nameref{subsec:app_algorithm_training}
    \dotfill \pageref{subsec:app_algorithm_training}

    \item \ref{subsec:app_compute_reproducibility}. 
    \nameref{subsec:app_compute_reproducibility}
    \dotfill \pageref{subsec:app_compute_reproducibility}

    \item \ref{subsec:graph_classification}. 
    \nameref{subsec:graph_classification}
    \dotfill \pageref{subsec:graph_classification}

    \item \ref{app:synthetic_setup}. 
    \nameref{app:synthetic_setup}
    \dotfill \pageref{app:synthetic_setup}

    \item \ref{app:recovery_metrics}. 
    \nameref{app:recovery_metrics}
    \dotfill \pageref{app:recovery_metrics}

    \item \ref{app:synthetic_edges}. 
    \nameref{app:synthetic_edges}
    \dotfill \pageref{app:synthetic_edges}

    \item \ref{app:synthetic_nodes}. 
    \nameref{app:synthetic_nodes}
    \dotfill \pageref{app:synthetic_nodes}

    \item \ref{app:synthetic_function_recovery}. 
    \nameref{app:synthetic_function_recovery}
    \dotfill \pageref{app:synthetic_function_recovery}

\end{itemize}


\item \phantomsection
\textbf{\ref{sec:app_additional_analysis}. \nameref{sec:app_additional_analysis}}
\dotfill \pageref{sec:app_additional_analysis}

\begin{itemize}[leftmargin=2em, label={}]

    \item \ref{subsec:app_exp_specialization}. 
    \nameref{subsec:app_exp_specialization}
    \dotfill \pageref{subsec:app_exp_specialization}
    
    \item \ref{subsec:app_exp_num_experts_bases}. 
    \nameref{subsec:app_exp_num_experts_bases}
    \dotfill \pageref{subsec:app_exp_num_experts_bases}

    \item \ref{subsec:app_exp_routing}. 
    \nameref{subsec:app_exp_routing}
    \dotfill \pageref{subsec:app_exp_routing}

    \item \ref{subsec:app_exp_hop_identifiability}. 
    \nameref{subsec:app_exp_hop_identifiability}
    \dotfill \pageref{subsec:app_exp_hop_identifiability}

    \item \ref{subsec:app_exp_horizon}. 
    \nameref{subsec:app_exp_horizon}
    \dotfill \pageref{subsec:app_exp_horizon}

    \item \ref{subsec:app_exp_rrwp_ablation}. 
    \nameref{subsec:app_exp_rrwp_ablation}
    \dotfill \pageref{subsec:app_exp_rrwp_ablation}
    
    \item \ref{subsec:app_explanation_stability}. 
    \nameref{subsec:app_explanation_stability}
    \dotfill \pageref{subsec:app_explanation_stability}

    \item \ref{subsec:app_parameter_complexity}. 
    \nameref{subsec:app_parameter_complexity}
    \dotfill \pageref{subsec:app_parameter_complexity}


    \item \ref{subsec:app_feature_shapes}. 
    \nameref{subsec:app_feature_shapes}
    \dotfill \pageref{subsec:app_feature_shapes}



\end{itemize}

\end{itemize}

\newpage

\newcommand{\firstgrouprow}[1]{%
    \addlinespace[2pt]
    \multicolumn{2}{l}{\textbf{#1}}\\[-2pt]
    \addlinespace[1pt]
}

\newcommand{\grouprow}[1]{%
    \midrule
    \addlinespace[3pt]
    \multicolumn{2}{l}{\textbf{#1}}\\[-2pt]
    \addlinespace[1pt]
}

\section{TABLE OF NOTATION}
\label{sec:notation}

Table~\ref{tab:notation} summarizes the main notation used throughout the paper.

{
\small
\setlength{\tabcolsep}{6pt}
\renewcommand{\arraystretch}{1.12}
\setlength{\LTleft}{0pt}
\setlength{\LTright}{0pt}

\begin{longtable}{
    >{\raggedright\arraybackslash}p{0.25\textwidth}
    >{\raggedright\arraybackslash}p{0.69\textwidth}
}

\caption{Summary of the main notation used in \HARMONIA.}
\label{tab:notation}
\\

\toprule
\textbf{Notation} & \textbf{Meaning} \\
\midrule
\endfirsthead

\multicolumn{2}{l}{\small\textit{Table~\ref{tab:notation} continued.}}\\
\toprule
\textbf{Notation} & \textbf{Meaning} \\
\midrule
\endhead

\midrule
\multicolumn{2}{r}{\small\textit{Continued on next page}}\\
\endfoot

\bottomrule
\endlastfoot


\firstgrouprow{Graph and prediction}

$\mathcal G=(\mathcal V,\mathcal E,\mathbf X),\ n,\ d$
& Input graph, number of nodes, and number of features. \\

$v_i,\ x_{i,k}$
& Node $i$ and its $k$-th feature. \\

$f_k,\ \mathbf h_i$
& Feature-$k$ response function and propagated representation of node $i$. \\

$\ell_{i,c},\ \beta_c,\ w_{k,c}$
& Class-$c$ logit, bias, and output coefficient for feature $k$. \\


\grouprow{Mixture of Neural Bases (MoNB)}

$B,\ C,\ m$
& Number of bases per expert, number of experts, and number of selected experts per feature. \\

$\boldsymbol{\psi}_{\varphi_c},\ \psi_b^{(c)},\ \varphi_c$
& Basis-output vector of expert $c$, its $b$-th basis function, and expert parameters. \\

$\mathbf e_k,\ \mathbf Q_k,\ \mathcal I_k$
& Feature embedding, router scores, and top-$m$ selected expert set. \\

$\widetilde{\pi}_{k,c},\ \mathbf a_k,\ \widetilde{\psi}_{k,b}$
& Routing weight, feature-specific coefficients, and effective routed basis. \\

$\mathcal S_B,\ \mathcal S_c,\ \mathcal S_{\mathrm{all}}$
& Global NBM space, expert-$c$ space, and joint span of all expert spaces. \\


\grouprow{Functional specialization and routing}

$\mathcal H=L_2(\mathcal X,\nu),\ f_k^\star$
& Functional Hilbert space and target response of feature $k$. \\

$\Pi_{\mathcal S}$
& Orthogonal projection onto functional subspace $\mathcal S$. \\

$\mathcal C,\ (\lambda_j,\phi_j)$
& Global feature-function second-moment operator and its eigenpairs. \\

$E_{\mathrm{global}}(B),\ E_{\mathrm{mix}}^\star(B,C)$
& Optimal global and oracle specialized approximation errors. \\

$\Gamma_{B,C}$
& Specialization gap,
$E_{\mathrm{global}}(B)-E_{\mathrm{mix}}^\star(B,C)$. \\

$z_k^\star,\ \widehat z_k,\ e_k(c),\ \Delta_k(c)$
& Oracle and learned assignments, expert-specific error, and routing penalty. \\

$\mathcal R_{\mathrm{route}}(\widehat{\mathbf z})$
& Average excess approximation error induced by learned routing. \\


\grouprow{RRWP structural modeling}

$\mathbf A,\ \mathbf D,\ \mathbf M=\mathbf D^{-1}\mathbf A,\ T$
& Adjacency, degree, random-walk transition matrix, and number of walk orders. \\

$\mathbf p_{i,j},\ \omega_k^\sharp$
& RRWP descriptor and feature-specific topology function. \\

$\alpha_{t,k},\ \theta_{t,k}$
& Unconstrained structural parameter and normalized hop weight. \\

$\mathbf P_k$
& Feature-specific propagation operator
$\sum_{t=0}^{T-1}\theta_{t,k}\mathbf M^t$. \\


\grouprow{Identifiability and spectral analysis}

$\boldsymbol{\theta},\ \Delta^{T-1},\ \mathbf P_{\boldsymbol{\theta}}$
& Normalized hop coefficients, their probability simplex, and induced RRWP operator. \\

$\mu_{\mathbf M},\ q_M$
& Minimal polynomial of $\mathbf M$ and its degree. \\

$\pi,\ \rho$
& Stationary distribution and largest nonstationary eigenvalue magnitude of $\mathbf M$. \\

$D_t,\ H_\varepsilon$
& Consecutive-hop distinguishability and effective structural horizon at resolution $\varepsilon$. \\

$\gamma_T$
& Restricted conditioning constant governing the stability of hop-coefficient recovery. \\


\grouprow{Sparse RRWP Aggregation (SRA)}

$\mathbf z_k^{(t)},\ \mathbf Z^{(t)}$
& Propagated feature-$k$ signal and matrix of all feature channels after $t$ steps. \\

$E_M=\operatorname{nnz}(\mathbf M)$
& Number of nonzero transitions used in the sparse-complexity analysis. \\


\grouprow{Statistical analysis}

$R=\dim(\mathcal S_{\mathrm{all}}),\ r_k$
& Model-wide joint dimension and active specialized dimension for feature $k$. \\

$N,\ \sigma^2,\ \Phi_q$
& Number of observations, noise variance, and design matrix in the coefficient-estimation analysis. \\

\end{longtable}
}
\newpage

\section{FOUNDATIONS OF MIXTURE OF NEURAL BASIS}
\label{sec:app_monb_theory}

\subsection{Notation and Technical Setup}
\label{subsec:app_notation}
We first focus on feature modeling and temporarily ignore graph propagation. We ask a simple question: \emph{what happens when all feature-response functions must share the same basis space?} A standard Neural Basis Model (NBM) uses one shared set of basis functions for all features, whereas MoNB allows different features to use different basis experts. Let \(d\) denote the number of input features, and let \(f_k^\star\), for \(k\in[d]\), denote the target response function of feature \(k\). Since these functions can have very different shapes, a small shared basis may not represent all of them well.

To compare NBM and MoNB, we place all feature-response functions in a common Hilbert space
$\mathcal H=L_2(\mathcal X,\nu)$, where $\mathcal X$ is the feature domain and $\nu$ determines
how different parts of this domain are weighted. The distance
$\|f-g\|_{\mathcal H}^2$ measures how different two response functions are over the feature domain.
A standard NBM learns $B$ shared basis functions $\psi_1,\ldots,\psi_B$, where $B$ is the number
of basis functions used to represent each feature response. These bases define the shared function
space $\mathcal S=\operatorname{span}\{\psi_1,\ldots,\psi_B\}$, whose dimension is at most $B$.
If a target response $f_k^\star$ does not lie in $\mathcal S$, it cannot be represented exactly.
We denote its closest approximation in $\mathcal S$ by $\Pi_{\mathcal S}f_k^\star$, giving the
approximation error $\|f_k^\star-\Pi_{\mathcal S}f_k^\star\|_{\mathcal H}^2$. Since NBM uses the
same space $\mathcal S$ for all $d$ features, its average approximation error is
\begin{equation}
\ E(\mathcal S)
=
\frac{1}{d}
\sum_{k=1}^{d}
\left\|
f_k^\star-\Pi_{\mathcal S}f_k^\star
\right\|_{\mathcal H}^{2}.
\label{eq:app_global_space_error}
\end{equation}
Eq.~\ref{eq:app_global_space_error} measures the approximation error for a particular shared space \(\mathcal S\). To characterize the best possible global basis representation, we minimize this error over all spaces of dimension at most \(B\):
\begin{equation}
\ E_{\mathrm{global}}(B)
:=
\inf_{\substack{\mathcal S\subseteq\mathcal H\\ \dim(\mathcal S)\le B}}
\ E(\mathcal S).
\label{eq:global_error}
\end{equation}
This quantity is the smallest approximation error achievable by any shared \(B\)-dimensional space, and therefore isolates the limitation of global basis sharing from errors caused by a particular basis choice or poor training.

MoNB relaxes the shared-space constraint by learning \(C\) basis
experts, where expert \(c\) spans \(\mathcal
S_c=\operatorname{span}\{\psi_1^{(c)},\ldots,\psi_B^{(c)}\}\).
Different features can therefore use different low-dimensional spaces.
Consider the case in which each feature is assigned to one expert, and
let \(z_k\in[C]\) denote the expert selected for feature \(k\). The
model approximates feature \(k\) within \(\mathcal S_{z_k}\). For a
fixed assignment \(z=(z_1,\ldots,z_d)\), the approximation error is the
average distance between each target function and its assigned expert
space. Optimizing both the expert spaces and the assignments gives the
mixture error:
\begin{equation}
\ E_{\mathrm{mix}}^\star(B,C)
:=
\inf_{\substack{
z\in[C]^d,\;
\mathcal S_1,\ldots,\mathcal S_C\subseteq\mathcal H\\
\dim(\mathcal S_c)\leq B
}}
\frac{1}{d}
\sum_{k=1}^{d}
\left\|
f_k^\star-\Pi_{\mathcal S_{z_k}}f_k^\star
\right\|_{\mathcal H}^{2}.
\label{eq:app_oracle_mixture_error}
\end{equation}
The gap between \(\ E_{\mathrm{global}}(B)\) and \(\ E_{\mathrm{mix}}^\star(B,C)\) measures the
potential benefit of specialization. When this gap is small, one shared basis space is sufficient and the mixture
offers little additional expressive power. A substantially lower mixture error indicates that the feature-response
functions have different functional structures that cannot be represented efficiently in a common low-dimensional
space.
The hard-routing formulation isolates this effect. \HARMONIA uses sparse top-\(m\) routing, which allows each feature
to combine several basis experts. We consider this more general setting in the extension to sparse top-$m$ routing in
Appendix~\ref{subsec:app_routing_tradeoff}. The next
subsection examines the smallest approximation error attainable by a single shared basis space.

\subsection{Irreducible Approximation Error by a Global Basis Space}
\label{subsec:app_global_basis}

Because a global NBM represents every target feature-response function \(f_k^\star\) in the same \(B\)-dimensional functional space, its best-case
  performance depends on whether a single such space can approximate all feature-response functions with small error. This property is independent of
  the particular bases learned and is determined by how the functions \(f_1^\star,\ldots,f_d^\star\) are distributed across functional directions. If the feature-response functions can be well described using only a few common functional directions, then a small shared basis is sufficient. In contrast, if they require many different directions, a shared space with only \(B\) dimensions cannot represent all of them exactly and must incur approximation error.

We describe this geometry through the \emph{feature-function second-moment operator}
$\mathcal C:\mathcal H\rightarrow\mathcal H$. For any functional direction
$g\in\mathcal H$, we define
\begin{equation}
\mathcal Cg
:=
\frac{1}{d}
\sum_{k=1}^{d}
\langle f_k^\star,g\rangle_{\mathcal H} f_k^\star.
\label{eq:app_function_second_moment}
\end{equation}
For each target function $f_k^\star$, the inner product
$\langle f_k^\star, g\rangle_{\mathcal H}$ measures how strongly
$f_k^\star$ aligns with the direction $g$. The operator $\mathcal C$
uses these alignment scores to combine the target functions, so
$\mathcal C g$ summarizes how the collection of feature-response functions
is distributed relative to $g$. Since $\mathcal C$ is constructed from
only $d$ target functions, it has finite rank and is positive semidefinite.

We are particularly interested in directions that are preserved by
$\mathcal C$. Let
$(\lambda_1,\phi_1),(\lambda_2,\phi_2),\ldots$
denote the eigenvalue--eigenfunction pairs of $\mathcal C$, satisfying
\[
\mathcal C\phi_j=\lambda_j\phi_j,
\qquad
\lambda_1\geq\lambda_2\geq\cdots\geq 0.
\]
Each $\phi_j$ is a principal functional direction, while the corresponding
eigenvalue $\lambda_j$ measures how much feature-function energy lies along
that direction. Thus, $(\lambda_1,\phi_1)$ represents the strongest
functional direction, $(\lambda_2,\phi_2)$ the second strongest, and so on.
A larger eigenvalue indicates that the corresponding direction is more
important for describing the collection of target functions $f_k^\star$.
\begin{theorem}[Irreducible Global Basis Approximation]
\label{thm:app_optimal_global_basis}
Let \(\ E_{\mathrm{global}}(B)\) be defined as in
Eq.~\ref{eq:global_error}. An optimal
\(B\)-dimensional shared space is
$\mathcal S_B^\star
=
\operatorname{span}\{\phi_1,\ldots,\phi_B\},$
and the minimum approximation error is
\begin{equation}
\ E_{\mathrm{global}}(B)
=
\sum_{j>B}\lambda_j.
\end{equation}
\end{theorem}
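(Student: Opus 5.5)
The proof follows the classical Eckart--Young / PCA argument, carried out in the Hilbert space \(\mathcal H\).

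\emph{Step 1: rewrite the error as a trace.} For any closed subspace \(\mathcal S\) with \(\dim(\mathcal S)\le B\), let \(\Pi_{\mathcal S}\) be the orthogonal projection. By the Pythagorean identity, \(\|f_k^\star-\Pi_{\mathcal S}f_k^\star\|_{\mathcal H}^2=\|f_k^\star\|_{\mathcal H}^2-\|\Pi_{\mathcal S}f_k^\star\|_{\mathcal H}^2\). Averaging over \(k\) gives
\[
E(\mathcal S)=\operatorname{tr}(\mathcal C)-\operatorname{tr}(\Pi_{\mathcal S}\mathcal C\Pi_{\mathcal S}).
\]
This holds because \(\frac1d\sum_k\|f_k^\star\|^2=\operatorname{tr}(\mathcal C)=\sum_j\lambda_j\). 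Likewise, if \(u_1,\dots,u_r\) is an orthonormal basis of \(\mathcal S\), then \(\frac1d\sum_k\|\Pi_{\mathcal S}f_k^\star\|^2=\sum_{i=1}^r\langle \mathcal C u_i,u_i\rangle\). Both identities follow directly from the definition \eqref{eq:app_function_second_moment}. Since \(\mathcal C\) is finite rank, self-adjoint and positive semidefinite, the spectral theorem supplies an orthonormal eigenbasis \(\{\phi_j\}\) of its range, which we complete to an orthonormal basis with zero eigenvalues. All traces are therefore finite.

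\emph{Step 2: maximize the captured energy.} Minimizing \(E(\mathcal S)\) is equivalent to maximizing \(\sum_{i=1}^r\langle\mathcal C u_i,u_i\rangle\) over orthonormal families with \(r\le B\). This is the Ky Fan maximum principle, and it is the main step. To prove it, I would expand \(\langle\mathcal C u_i,u_i\rangle=\sum_j\lambda_j|\langle u_i,\phi_j\rangle|^2\) and set \(c_j=\sum_{i}|\langle u_i,\phi_j\rangle|^2\). Bessel's inequality gives \(0\le c_j\le 1\), since \(c_j=\|\Pi_{\mathcal S}\phi_j\|^2\). Parseval gives \(\sum_j c_j=r\le B\). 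Maximizing \(\sum_j\lambda_j c_j\) under these constraints, with \(\lambda_j\) sorted in decreasing order, is a fractional-knapsack problem. Its value is at most \(\sum_{j\le B}\lambda_j\): exchange any weight placed on \(j>B\) onto unsaturated indices \(j\le B\), which does not decrease the objective because of the ordering. Nonnegativity of the \(\lambda_j\) handles the case \(r<B\).

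\emph{Step 3: attainment.} Taking \(\mathcal S_B^\star=\operatorname{span}\{\phi_1,\dots,\phi_B\}\) gives \(c_j=1\) for \(j\le B\) and \(c_j=0\) otherwise. This achieves the bound, so
\[
E_{\mathrm{global}}(B)=\sum_j\lambda_j-\sum_{j\le B}\lambda_j=\sum_{j>B}\lambda_j.
\]
If \(\operatorname{rank}\mathcal C<B\), pad with arbitrary orthonormal null directions; in that case the error is \(0\).

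Two technicalities need attention. First, the infimum in \eqref{eq:global_error} runs over subspaces that are finite-dimensional and hence closed, so the projections are well defined. Second, ties in the eigenvalues make the optimizer non-unique, which is why the statement says "an optimal" space. The only real obstacle is justifying the Ky Fan step cleanly in infinite dimensions, and the doubly-substochastic weight argument above handles it.
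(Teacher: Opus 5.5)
Your proposal is correct and follows essentially the same route as the paper's proof: the Pythagorean identity writes the error as total energy minus $\sum_b\langle u_b,\mathcal C u_b\rangle_{\mathcal H}$, the weights $w_j=\|\Pi_{\mathcal S}\phi_j\|_{\mathcal H}^2\in[0,1]$ sum to the dimension via Parseval, an exchange argument pushes this capacity onto the top $B$ eigen-directions, and the identity $\operatorname{Tr}(\mathcal C)=\sum_j\lambda_j$ finishes the computation. The differences are cosmetic: you handle $\dim\mathcal S<B$ through nonnegativity of the $\lambda_j$ rather than first enlarging $\mathcal S$ to dimension $B$, and you are slightly more explicit about completing the eigenbasis to all of $\mathcal H$ and about the rank-deficient case.
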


\begin{proof}
Let \(\mathcal S\subseteq\mathcal H\) satisfy
\(\dim(\mathcal S)\leq B\). Since enlarging a subspace cannot increase
the projection error, it suffices to consider
\(\dim(\mathcal S)=B\). Choose an orthonormal basis
\(u_1,\ldots,u_B\) of \(\mathcal S\). For each target feature-response
function \(f_k^\star\), let \(\Pi_{\mathcal S}f_k^\star\) denote its
orthogonal projection onto \(\mathcal S\), i.e., the closest function
in \(\mathcal S\) to \(f_k^\star\) under the Hilbert-space norm.
Because \(u_1,\ldots,u_B\) form an orthonormal basis of
\(\mathcal S\), this projection is obtained by summing the component
of \(f_k^\star\) along each basis direction:
\begin{equation}
\Pi_{\mathcal S}f_k^\star
=
\sum_{b=1}^{B}
\langle f_k^\star,u_b\rangle_{\mathcal H}u_b.
\label{eq:app_projection_expansion}
\end{equation}
Because \(\Pi_{\mathcal S}f_k^\star\) and
\(f_k^\star-\Pi_{\mathcal S}f_k^\star\) are orthogonal, the
Pythagorean identity yields
\begin{equation}
\begin{aligned}
\|f_k^\star\|_{\mathcal H}^{2}
&=
\|\Pi_{\mathcal S}f_k^\star\|_{\mathcal H}^{2}
+
\|f_k^\star-\Pi_{\mathcal S}f_k^\star\|_{\mathcal H}^{2}.
\end{aligned}
\label{eq:app_pythagorean_projection}
\end{equation}
And since \(u_1,\ldots,u_B\) are orthonormal, the squared norm of the projection is

\begin{equation}
\begin{aligned}
\|\Pi_{\mathcal S}f_k^\star\|_{\mathcal H}^{2}
&=
\left\|
\sum_{b=1}^{B}
\langle f_k^\star,u_b\rangle_{\mathcal H}u_b
\right\|_{\mathcal H}^{2}
\\
&=
\sum_{b=1}^{B}
\left|
\langle f_k^\star,u_b\rangle_{\mathcal H}
\right|^{2}.
\end{aligned}
\label{eq:app_projection_energy}
\end{equation}

Substituting Eq.~\ref{eq:app_projection_energy} into
Eq.~\ref{eq:app_pythagorean_projection} yields

\begin{equation}
\begin{aligned}
\left\|
f_k^\star-\Pi_{\mathcal S}f_k^\star
\right\|_{\mathcal H}^{2}
&=
\|f_k^\star\|_{\mathcal H}^{2}
-
\sum_{b=1}^{B}
\left|
\langle f_k^\star,u_b\rangle_{\mathcal H}
\right|^{2}.
\end{aligned}
\label{eq:app_single_function_error}
\end{equation}

Substituting Eq.~\ref{eq:app_single_function_error} into the global
shared-space error in Eq.~\ref{eq:app_global_space_error} shows that
the approximation error is precisely the functional energy not captured
by \(\mathcal S\). Averaging over all \(d\) feature-response functions gives

\begin{equation}
\begin{aligned}
\ E(\mathcal S)
&=
\frac{1}{d}
\sum_{k=1}^{d}
\left\|
f_k^\star-\Pi_{\mathcal S}f_k^\star
\right\|_{\mathcal H}^{2}
\\
&=
\frac{1}{d}
\sum_{k=1}^{d}
\|f_k^\star\|_{\mathcal H}^{2}
-
\frac{1}{d}
\sum_{k=1}^{d}
\sum_{b=1}^{B}
\left|
\langle f_k^\star,u_b\rangle_{\mathcal H}
\right|^{2}
\\
&=
\frac{1}{d}
\sum_{k=1}^{d}
\|f_k^\star\|_{\mathcal H}^{2}
-
\sum_{b=1}^{B}
\frac{1}{d}
\sum_{k=1}^{d}
\left|
\langle f_k^\star,u_b\rangle_{\mathcal H}
\right|^{2}.
\end{aligned}
\label{eq:app_average_projection_error}
\end{equation}

We next express the second term through the feature-function second-moment
operator \(\mathcal C\). From Eq.~\ref{eq:app_function_second_moment},
setting \(g=u_b\) for a shared basis direction \(u_b\) gives

\begin{equation}
\begin{aligned}
\langle u_b,\mathcal C u_b\rangle_{\mathcal H}
&=
\left\langle
u_b,
\frac{1}{d}
\sum_{k=1}^{d}
\langle f_k^\star,u_b\rangle_{\mathcal H}f_k^\star
\right\rangle_{\mathcal H}
\\
&=
\frac{1}{d}
\sum_{k=1}^{d}
\langle f_k^\star,u_b\rangle_{\mathcal H}
\langle u_b,f_k^\star\rangle_{\mathcal H}
\\
&=
\frac{1}{d}
\sum_{k=1}^{d}
\left|
\langle f_k^\star,u_b\rangle_{\mathcal H}
\right|^{2}.
\end{aligned}
\label{eq:app_direction_energy_operator}
\end{equation}

Thus, \(\langle u_b,\mathcal C u_b\rangle_{\mathcal H}\) is precisely the average feature-function energy captured along direction \(u_b\). Using Eq.~\ref{eq:app_direction_energy_operator}, Eq.~\ref{eq:app_average_projection_error} becomes

\begin{equation}
\ E(\mathcal S)
=
\frac{1}{d}
\sum_{k=1}^{d}
\|f_k^\star\|_{\mathcal H}^{2}
-
\sum_{b=1}^{B}
\langle u_b,\mathcal C u_b\rangle_{\mathcal H}.
\label{eq:app_global_error_energy}
\end{equation}

 The first term depends only on the target functions and is therefore
independent of the shared space. Hence, minimizing
\(\ E(\mathcal S)\) is equivalent to maximizing
$\sum_{b=1}^{B}
\langle u_b,\mathcal C u_b\rangle_{\mathcal H}.$
Since \(\mathcal C\) is self-adjoint, positive semidefinite, and finite
rank, choose an orthonormal eigenbasis
\(\phi_1,\phi_2,\ldots\) satisfying
\(\mathcal C\phi_j=\lambda_j\phi_j\), with
$\lambda_1\geq\lambda_2\geq\cdots\geq0.$
Expanding each \(u_b\) in this basis gives
\begin{equation}
u_b
=
\sum_j
\langle u_b,\phi_j\rangle_{\mathcal H}\phi_j.
\label{eq:app_ub_eigen_expansion}
\end{equation}
Applying \(\mathcal C\) to
Eq.~\ref{eq:app_ub_eigen_expansion} gives
\begin{equation}
\begin{aligned}
\mathcal C u_b
&=
\mathcal C
\left(
\sum_j
\langle u_b,\phi_j\rangle_{\mathcal H}\phi_j
\right)
\\
&=
\sum_j
\langle u_b,\phi_j\rangle_{\mathcal H}
\mathcal C\phi_j
\\
&=
\sum_j
\lambda_j
\langle u_b,\phi_j\rangle_{\mathcal H}\phi_j.
\end{aligned}
\label{eq:app_operator_on_ub}
\end{equation}

The first equality substitutes the expansion of \(u_b\), the second uses the linearity of \(\mathcal C\), and the third uses the eigenvalue relation \(\mathcal C\phi_j=\lambda_j\phi_j\). Taking the inner product with \(u_b\), we obtain

\begin{equation}
\begin{aligned}
\langle u_b,\mathcal C u_b\rangle_{\mathcal H}
&=
\left\langle
u_b,
\sum_j
\lambda_j
\langle u_b,\phi_j\rangle_{\mathcal H}\phi_j
\right\rangle_{\mathcal H}
\\
&=
\sum_j
\lambda_j
\langle u_b,\phi_j\rangle_{\mathcal H}
\langle u_b,\phi_j\rangle_{\mathcal H}
\\
&=
\sum_j
\lambda_j
\left|
\langle u_b,\phi_j\rangle_{\mathcal H}
\right|^{2}.
\end{aligned}
\label{eq:app_direction_captured_energy}
\end{equation}

Thus, the energy captured by \(u_b\) is a weighted sum of the
eigenvalues, where
\(\lvert\langle u_b,\phi_j\rangle_{\mathcal H}\rvert^2\) is the squared
component of \(u_b\) along \(\phi_j\), and \(\lambda_j\) quantifies the
feature-function variation along that direction. Summing
Eq.~\ref{eq:app_direction_captured_energy} over
\(b=1,\ldots,B\) gives

\begin{equation}
\begin{aligned}
\sum_{b=1}^{B}
\langle u_b,\mathcal C u_b\rangle_{\mathcal H}
&=
\sum_{b=1}^{B}
\sum_j
\lambda_j
\left|
\langle u_b,\phi_j\rangle_{\mathcal H}
\right|^{2}
\\
&=
\sum_j
\lambda_j
\sum_{b=1}^{B}
\left|
\langle u_b,\phi_j\rangle_{\mathcal H}
\right|^{2}
\\
&=
\sum_j
\lambda_j w_j,
\end{aligned}
\label{eq:app_global_captured_energy}
\end{equation}

where we define $w_j := \sum_{b=1}^{B} \left| \langle u_b,\phi_j\rangle_{\mathcal H} \right|^{2}.$ The quantity \(w_j\) has a direct geometric interpretation. Since

\begin{equation}
\begin{aligned}
\Pi_{\mathcal S}\phi_j
&=
\sum_{b=1}^{B}
\langle \phi_j,u_b\rangle_{\mathcal H}u_b,
\\
\|\Pi_{\mathcal S}\phi_j\|_{\mathcal H}^{2}
&=
\sum_{b=1}^{B}
\left|
\langle \phi_j,u_b\rangle_{\mathcal H}
\right|^{2}
\\
&=
w_j,
\end{aligned}
\label{eq:app_wj_projection}
\end{equation}

\(w_j\) measures how much of the eigen-direction \(\phi_j\) is contained inside the shared space \(\mathcal S\). Because orthogonal projection cannot increase norm,

\begin{equation}
\begin{aligned}
w_j
&=
\|\Pi_{\mathcal S}\phi_j\|_{\mathcal H}^{2}
\\
&\leq
\|\phi_j\|_{\mathcal H}^{2}
\\
&=
1.
\end{aligned}
\label{eq:app_wj_upper_bound}
\end{equation}

Since \(w_j\) is the squared norm of the projection of the normalized
eigenfunction \(\phi_j\) onto \(\mathcal S\),
\(0\leq w_j\leq1\). The value is \(1\) when
\(\phi_j\in\mathcal S\) and \(0\) when
\(\phi_j\perp\mathcal S\); intermediate values represent partial
overlap. The weights also satisfy a sum constraint. Substituting their
definition and exchanging the order of summation gives
\begin{equation}
\begin{aligned}
\sum_j w_j
&=
\sum_j
\sum_{b=1}^{B}
\left|
\langle u_b,\phi_j\rangle_{\mathcal H}
\right|^{2}
\\
&=
\sum_{b=1}^{B}
\sum_j
\left|
\langle u_b,\phi_j\rangle_{\mathcal H}
\right|^{2}.
\end{aligned}
\label{eq:app_wj_budget_step1}
\end{equation}
Because the eigenfunctions form an orthonormal basis, Parseval's identity gives

\begin{equation}
\begin{aligned}
\sum_j
\left|
\langle u_b,\phi_j\rangle_{\mathcal H}
\right|^{2}
&=
\|u_b\|_{\mathcal H}^{2} = 1.
\end{aligned}
\label{eq:app_parseval_ub}
\end{equation}

Substituting Eq.~\ref{eq:app_parseval_ub} into
Eq.~\ref{eq:app_wj_budget_step1} yields
$\sum_j w_j = \sum_{b=1}^{B} 1 = B.$
Thus, the \(B\)-dimensional shared space has a total capacity budget of
\(B\). Each \(w_j\in[0,1]\) specifies how much of this capacity is
allocated to the functional direction \(\phi_j\), and the resulting
captured energy is \(\sum_j\lambda_jw_j\).
We next show that this capacity is optimally allocated to the directions
with the largest eigenvalues. Suppose that there exist indices \(i<j\)
such that \(\lambda_i\geq\lambda_j\), but \(w_i<1\) and \(w_j>0\).
Transfer an amount
\(\delta := \min\{1-w_i,\;w_j\}\) of capacity from direction \(j\) to
direction \(i\). This transfer preserves the constraints
\(0\leq w_j\leq1\) and \(\sum_jw_j=B\). The corresponding change in
captured energy is

\begin{equation}
\begin{aligned}
\Delta
&=
\lambda_i(w_i+\delta)
+\lambda_j(w_j-\delta)
-\lambda_i w_i
-\lambda_j w_j
\\
&=
\delta(\lambda_i-\lambda_j)
\geq 0.
\end{aligned}
\label{eq:app_energy_transfer}
\end{equation}

Therefore, moving capacity from a direction with a smaller eigenvalue to
one with a larger eigenvalue can never decrease the captured energy.
Repeating this exchange shows that an optimum is obtained by assigning
$w_j=1 \quad\text{for }j\leq B,
\qquad
w_j=0 \quad\text{for }j>B.$
Hence, one optimal shared space is
$\mathcal S_B^\star
=
\operatorname{span}\{\phi_1,\ldots,\phi_B\}.$
If eigenvalues are tied at the \(B\)-th position, the optimal space need
not be unique, but any optimal choice spans \(B\) directions associated
with the largest eigenvalues.
It remains to determine how much feature-function energy lies outside
this optimal space. From the definition of \(\mathcal C\), its trace
equals the average total energy of the target functions. Expressing this
trace in terms of its eigenvalues,

\begin{equation}
\begin{aligned}
\frac{1}{d}
\sum_{k=1}^{d}
\|f_k^\star\|_{\mathcal H}^{2}
&=
\operatorname{Tr}(\mathcal C)
\\
&=
\sum_j\lambda_j.
\end{aligned}
\label{eq:app_total_function_energy}
\end{equation}

The optimal shared space captures exactly the first \(B\) eigen-directions, so its captured energy is \(\sum_{j=1}^{B}\lambda_j\). Substituting this into Eq.~\ref{eq:app_global_error_energy} gives

\begin{equation}
\begin{aligned}
\ E_{\mathrm{global}}(B)
&=
\sum_j\lambda_j
-
\sum_{j=1}^{B}\lambda_j
\\
&=
\sum_{j>B}\lambda_j.
\end{aligned}
\label{eq:app_global_error_tail_derivation}
\end{equation}

Therefore, no globally shared \(B\)-dimensional basis can represent the
feature-function variation lying beyond the first \(B\) eigenfunctions.
The remaining spectral tail \(\sum_{j>B}\lambda_j\) is the
irreducible approximation error caused by forcing all feature-response
functions to lie in the same \(B\)-dimensional functional space.
\end{proof}

Theorem~\ref{thm:app_optimal_global_basis} shows when a single shared basis
becomes limiting. A \(B\)-dimensional NBM can preserve only the \(B\)
most important functional directions, while the remaining directions
contribute to the approximation error
\(\sum_{j>B}\lambda_j\). If the eigenvalues decrease quickly, most target
feature-response functions can be well represented in the same
low-dimensional space, and a shared basis is sufficient. In contrast,
if a noticeable amount of variation remains beyond the first \(B\)
directions, then even the best \(B\)-dimensional shared space cannot
represent all target functions accurately. This error is caused by the
limited dimension of the shared function space, rather than by
optimization or insufficient training.

\paragraph{Interpretation Example for Theorem \ref{thm:app_optimal_global_basis}}
Consider four normalized feature-response functions spanning four mutually orthogonal functional directions, with the model restricted to \(B=2\) shared bases. No \(2\)-dimensional global space can represent all four functions. Even with the optimal basis choice, the residual corresponds to two orthogonal directions and produces irreducible approximation error. If all four functions instead lie in the same \(2\)-dimensional space, the spectral tail after \(B=2\) is zero, so global sharing introduces no approximation loss.

\subsection{Functional Specialization and the Specialization Gap}
\label{subsec:app_specialization_gap}
The previous subsection showed that even the best globally shared
\(B\)-dimensional basis space incurs approximation error
\(\ E_{\mathrm{global}}(B)=\sum_{j>B}\lambda_j\).
This error arises because all feature-response functions are restricted
to the same \(B\)-dimensional space, so functional structure outside the
first \(B\) directions cannot be represented. We now ask whether this
error can be reduced by allowing different groups of features to use
different basis spaces. This is useful when the full collection of
feature-response functions is difficult to represent accurately in one
low-dimensional space, while smaller groups have simpler structures.
For example, one group of features may have approximately polynomial
responses, whereas another may have periodic responses. A single shared
space must represent both types of structure at once, while specialized
experts can model them separately.

Let \(\mathcal P=\{\mathcal K_1,\ldots,\mathcal K_C\}\) be a partition of the \(d\) features into \(C\) non-overlapping groups, and let \(d_c:=|\mathcal K_c|\). For each group \(c\), we define a local second-moment operator \(\mathcal C_c:\mathcal H\rightarrow\mathcal H\) by

\begin{equation}
\mathcal C_c g
:=
\frac{1}{d_c}
\sum_{k\in\mathcal K_c}
\langle f_k^\star,g\rangle_{\mathcal H}f_k^\star.
\label{eq:app_local_second_moment}
\end{equation}

This operator has the same role as the global operator \(\mathcal C\), except that it describes variation only among the feature-response functions assigned to group \(c\). Let \(\lambda_{c,1}\geq\lambda_{c,2}\geq\cdots\geq0\) denote the eigenvalues of \(\mathcal C_c\). A rapidly decaying local spectrum means that the functions in that group can be represented accurately inside a small functional space, even if the spectrum of the full collection decays much more slowly.

For a fixed partition \(\mathcal P\), suppose that group \(c\) is represented by its own space \(\mathcal S_c\) with \(\dim(\mathcal S_c)\leq B\). The best approximation error attainable under this partition is

\begin{equation}
\ E_{\mathrm{mix}}(B;\mathcal P)
:=
\inf_{\substack{
\mathcal S_1,\ldots,\mathcal S_C\subseteq\mathcal H\\
\dim(\mathcal S_c)\leq B}}
\frac{1}{d}
\sum_{c=1}^{C}
\sum_{k\in\mathcal K_c}
\left\|
f_k^\star-\Pi_{\mathcal S_c}f_k^\star
\right\|_{\mathcal H}^{2}.
\label{eq:app_fixed_partition_mixture_error}
\end{equation}
The next result shows that specialization replaces the single global spectral tail by a collection of local spectral tails.

\begin{theorem}[Optimal Specialized Approximation]
\label{thm:app_optimal_specialized_approximation}
For a fixed partition
\(\mathcal P=\{\mathcal K_1,\ldots,\mathcal K_C\}\), the minimum approximation
error is
\begin{equation}
E_{\mathrm{mix}}(B;\mathcal P)
=
\sum_{c=1}^{C}
\frac{d_c}{d}
\sum_{j>B}\lambda_{c,j}.
\label{eq:app_local_spectral_tails}
\end{equation}
An optimal space for group \(c\) is spanned by the first \(B\) eigenfunctions
of its local operator \(\mathcal C_c\).
\end{theorem}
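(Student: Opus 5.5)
The plan is to reduce the statement to $C$ independent copies of Theorem~\ref{thm:app_optimal_global_basis}. For a fixed partition $\mathcal P$, the objective in Eq.~\ref{eq:app_fixed_partition_mixture_error} is a sum of terms, and the $c$-th term depends only on $\mathcal S_c$. The constraints $\dim(\mathcal S_c)\le B$ are also separate for each $c$. Hence the infimum over $(\mathcal S_1,\ldots,\mathcal S_C)$ splits into a sum of $C$ separate infima:
\[
E_{\mathrm{mix}}(B;\mathcal P)=\sum_{c=1}^{C}\frac{d_c}{d}\,\inf_{\dim(\mathcal S_c)\le B}\frac{1}{d_c}\sum_{k\in\mathcal K_c}\|f_k^\star-\Pi_{\mathcal S_c}f_k^\star\|_{\mathcal H}^2 .
\]
To get this form, I multiply and divide the $c$-th block by $d_c$. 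I assume every group is nonempty; an empty group contributes zero and can be dropped.

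Each inner infimum is exactly the quantity $E_{\mathrm{global}}(B)$ of Eq.~\ref{eq:global_error}, but computed for the smaller family $\{f_k^\star\}_{k\in\mathcal K_c}$ of $d_c$ functions. The proof of Theorem~\ref{thm:app_optimal_global_basis} used only the definition of the second-moment operator as an average of rank-one terms over the family. So I apply it verbatim with $d$ replaced by $d_c$ and $\mathcal C$ replaced by the local operator $\mathcal C_c$ of Eq.~\ref{eq:app_local_second_moment}. The operator $\mathcal C_c$ is again self-adjoint, positive semidefinite and of finite rank, so its orthonormal eigenbasis exists. Its trace is $\frac{1}{d_c}\sum_{k\in\mathcal K_c}\|f_k^\star\|^2$, and the capacity-allocation (exchange) argument carries over unchanged. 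This gives the inner infimum as $\sum_{j>B}\lambda_{c,j}$, attained by the span of the first $B$ eigenfunctions of $\mathcal C_c$.

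Substituting these values back yields Eq.~\ref{eq:app_local_spectral_tails}. The tuple of optimal local spaces is jointly optimal because the problem decouples. The main point to check is the interchange of infimum and sum. Since the terms involve disjoint variables with a product constraint set, the infimum of the sum equals the sum of the infima, and attainment in each block transfers to the tuple. One minor issue is the eigenbasis of $\mathcal C_c$ in a possibly infinite-dimensional $\mathcal H$: I complete the finitely many eigenfunctions with nonzero eigenvalue to an orthonormal basis using zero-eigenvalue directions, as was done implicitly in Theorem~\ref{thm:app_optimal_global_basis}. If $\operatorname{rank}\mathcal C_c<B$, the tail is zero and any $B$-dimensional space containing the range of $\mathcal C_c$ is optimal.
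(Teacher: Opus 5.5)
Your proposal is correct and follows the paper's proof essentially verbatim: you split the infimum over $(\mathcal S_1,\ldots,\mathcal S_C)$ into per-group infima weighted by $d_c/d$, then apply Theorem~\ref{thm:app_optimal_global_basis} to each group with $\mathcal C$ replaced by $\mathcal C_c$. Your extra remarks (dropping empty groups, completing the eigenbasis with kernel directions, and the case $\operatorname{rank}\mathcal C_c<B$) are sound refinements of points the paper leaves implicit.
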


\begin{proof}
For group \(c\), define \( \ E_c(\mathcal S_c) := d_c^{-1}\sum_{k\in\mathcal K_c} \|f_k^\star-\Pi_{\mathcal S_c}f_k^\star\|_{\mathcal H}^{2} \). The total approximation error can then be written as a weighted sum of these group-wise errors. Since each expert space \(\mathcal S_c\) is optimized independently,

\begin{equation}
\begin{aligned}
E_{\mathrm{mix}}(B;\mathcal P)
&=
\inf_{\mathcal S_1,\ldots,\mathcal S_C}
\sum_{c=1}^{C}
\frac{d_c}{d}
E_c(\mathcal S_c)
\\
&=
\sum_{c=1}^{C}
\frac{d_c}{d}
\inf_{\dim(\mathcal S_c)\leq B}
E_c(\mathcal S_c).
\end{aligned}
\label{eq:app_groupwise_optimization}
\end{equation}

Each term on the final line is exactly the approximation problem studied in Theorem~\ref{thm:app_optimal_global_basis}, applied only to the functions in group \(c\). Replacing the global second-moment operator by \(\mathcal C_c\) therefore gives \(\inf_{\dim(\mathcal S_c)\leq B}E_c(\mathcal S_c) =\sum_{j>B}\lambda_{c,j}\). Substituting this identity into Eq.~\ref{eq:app_groupwise_optimization} proves Eq.~\ref{eq:app_local_spectral_tails}.
\end{proof}

Theorem~\ref{thm:app_optimal_specialized_approximation} makes the difference between
global sharing and specialization clear. In a global NBM, all feature-response
functions contribute to a single spectrum, and the model must discard the
spectral tail beyond the first \(B\) directions. In contrast, a specialized
model divides the features into groups and only discards the spectral tail of
the local spectrum within each group. Specialization is therefore most useful
when the full collection of feature-response functions is complex when modeled
together, while each individual group has a simpler structure. In contrast, if
all feature-response functions already lie close to the same low-dimensional
space, then the global and local spectral tails will be similar, and
specialization provides little additional representational benefit.

To measure this benefit independently of a particular partition, let \(\mathfrak P_C\) denote the set of partitions into at most \(C\) nonempty groups, and define the oracle mixture error as \(E_{\mathrm{mix}}^\star(B,C) :=\min_{\mathcal P\in\mathfrak P_C} E_{\mathrm{mix}}(B;\mathcal P)\). The specialized model can always reproduce the global solution by assigning the same optimal global space to every group, so \(E_{\mathrm{mix}}^\star(B,C) \leq E_{\mathrm{global}}(B)\). This observation motivates the \emph{specialization gap}, defined as \(\Gamma_{B,C} :=E_{\mathrm{global}}(B) -E_{\mathrm{mix}}^\star(B,C)\geq0\).

The quantity \(\Gamma_{B,C}\) measures the approximation error attributable specifically to global sharing. When \(\Gamma_{B,C}\) is close to zero, one shared \(B\)-dimensional space already captures nearly all of the useful functional structure, so introducing multiple experts has little representational value. A large specialization gap instead indicates that important directions are lost only because unrelated functional structures are forced into the same space. Using the spectral characterizations above, the same quantity can be written as

\begin{equation}
\Gamma_{B,C}
=
\sum_{j>B}\lambda_j
-
\min_{\mathcal P\in\mathfrak P_C}
\sum_{c=1}^{|\mathcal P|}
\frac{d_c}{d}
\sum_{j>B}\lambda_{c,j}.
\label{eq:app_specialization_gap_spectral}
\end{equation}

This expression gives a precise meaning to functional heterogeneity in the context of MoNB. Heterogeneity is not simply the visual observation that two feature-response curves look different. It refers to a mismatch between global and local functional complexity: the collection requires many directions when modeled jointly, but substantially fewer directions after the functions are organized into appropriate groups.

The following limiting case makes this distinction particularly clear.

\begin{corollary}[Strict Benefit of Functional Specialization]
\label{cor:app_strict_specialization}
Suppose the features can be partitioned into \(C\) groups such that, for each
group \(c\), all target functions belong to some subspace
\(\mathcal S_c^\star\) with
\(\dim(\mathcal S_c^\star)\leq B\). If the full collection of target functions
spans a space of dimension strictly larger than \(B\), then
\(E_{\mathrm{mix}}^\star(B,C)=0\),
\(E_{\mathrm{global}}(B)>0\), and consequently
\(\Gamma_{B,C}>0\).
\end{corollary}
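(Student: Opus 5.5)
Three facts give the result: $E_{\mathrm{mix}}^\star(B,C)=0$ by an explicit choice of partition and spaces, $E_{\mathrm{global}}(B)>0$ from the spectral formula of Theorem~\ref{thm:app_optimal_global_basis}, and $\Gamma_{B,C}>0$ as their difference.

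\textbf{The mixture error vanishes.} Take the partition $\mathcal P=\{\mathcal K_1,\ldots,\mathcal K_C\}$ given by hypothesis. If some groups are empty, drop them, so that $\mathcal P\in\mathfrak P_C$. In the definition of $E_{\mathrm{mix}}(B;\mathcal P)$, choose $\mathcal S_c=\mathcal S_c^\star$. For every $k\in\mathcal K_c$ we have $f_k^\star\in\mathcal S_c^\star$, so $\Pi_{\mathcal S_c^\star}f_k^\star=f_k^\star$ and every residual is zero. Hence $E_{\mathrm{mix}}(B;\mathcal P)=0$. Since errors are nonnegative, $E_{\mathrm{mix}}^\star(B,C)=0$.

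An equivalent spectral check uses Theorem~\ref{thm:app_optimal_specialized_approximation}. The range of $\mathcal C_c$ lies in $\operatorname{span}\{f_k^\star:k\in\mathcal K_c\}\subseteq\mathcal S_c^\star$, so $\mathcal C_c$ has rank at most $B$. Therefore $\lambda_{c,j}=0$ for $j>B$, and every local tail is zero.

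\textbf{The global error is positive.} Let $V=\operatorname{span}\{f_1^\star,\ldots,f_d^\star\}$ with $\dim V>B$. The key step is to show that $\operatorname{rank}(\mathcal C)=\dim V$.

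The range of $\mathcal C$ lies in $V$ by definition. Conversely, $\mathcal C$ is self-adjoint, so its range is the orthogonal complement of its kernel. Suppose $\mathcal C g=0$. Then
\[
0=\langle g,\mathcal C g\rangle_{\mathcal H}=\frac1d\sum_{k=1}^d\lvert\langle f_k^\star,g\rangle_{\mathcal H}\rvert^2 ,
\]
so $g\perp V$. Thus $\ker\mathcal C=V^\perp$, and since $V$ is finite-dimensional and hence closed, the range of $\mathcal C$ is $(V^\perp)^\perp=V$.

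It follows that $\mathcal C$ has exactly $\dim V\ge B+1$ strictly positive eigenvalues. In particular $\lambda_{B+1}>0$. Theorem~\ref{thm:app_optimal_global_basis} then gives $E_{\mathrm{global}}(B)=\sum_{j>B}\lambda_j\ge\lambda_{B+1}>0$.

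A direct argument also works. Any $\mathcal S$ with $\dim\mathcal S\le B$ cannot contain all $f_k^\star$, since otherwise $V\subseteq\mathcal S$. So some residual is nonzero and $E(\mathcal S)>0$. This pointwise positivity does not by itself exclude an infimum of zero, which is why the spectral formula, which shows the infimum is attained, is the cleaner route.

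\textbf{Conclusion.} Combining the two parts, $\Gamma_{B,C}=E_{\mathrm{global}}(B)-0>0$. The main obstacle is the rank identity $\operatorname{rank}(\mathcal C)=\dim V$ in the second part. It guarantees $\lambda_{B+1}>0$ rather than merely $\dim V>B$.
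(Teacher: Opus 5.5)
Your proof is correct and follows the paper's argument. The oracle mixture error is zero by taking \(\mathcal S_c=\mathcal S_c^\star\) for each group, and \(E_{\mathrm{global}}(B)\ge\lambda_{B+1}>0\) follows from Theorem~\ref{thm:app_optimal_global_basis} because \(\mathcal C\) has rank exceeding \(B\); the one addition is that you prove \(\operatorname{rank}(\mathcal C)=\dim\operatorname{span}\{f_1^\star,\ldots,f_d^\star\}\) via \(\ker\mathcal C=V^\perp\), which the paper asserts without justification.
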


\begin{proof}
Choosing \(\mathcal S_c^\star\) as the expert space for group \(c\) represents
every function in that group exactly, so the oracle mixture error is zero.
However, if the full collection spans more than \(B\) independent functional
directions, then the global second-moment operator has rank greater than
\(B\), which implies \(\lambda_{B+1}>0\). By
Theorem~\ref{thm:app_optimal_global_basis},
\begin{equation}
E_{\mathrm{global}}(B)
=
\sum_{j>B}\lambda_j
\geq
\lambda_{B+1}
>
0.
\end{equation}
Therefore,
\(\Gamma_{B,C}
=
E_{\mathrm{global}}(B)
-E_{\mathrm{mix}}^\star(B,C)>0\).
\end{proof}

As a simple example, suppose \(B=2\), with one group of feature-response functions lying in \(\operatorname{span}\{1,x\}\) and another lying in \(\operatorname{span}\{\sin x,\cos x\}\). Each group can be represented exactly by a two-dimensional expert. If these four functions are linearly independent, however, their union spans a four-dimensional space, so no single two-dimensional global basis can represent all of them exactly. This is the regime captured by a positive specialization gap: each functional family is simple by itself, but forcing the families to share the same low-dimensional representation creates unnecessary approximation error.

The analysis above isolates the representational benefit of specialization under an optimal assignment of features to experts. \HARMONIA, however, must learn this assignment through its router, and routing errors can reduce the oracle gain measured by \(\Gamma_{B,C}\). We study this additional cost in Section~\ref{subsec:app_routing_tradeoff}.

\paragraph{Connection to the MoNB parameterization.} The analysis above was stated in terms of abstract specialized functional spaces. We how that, under hard routing, this geometry is exactly the representation structure induced by MoNB. This connection is important because it establishes that the specialization error analyzed above is not an auxiliary function-space construction. It is the approximation error obtained by the actual MoNB parameterization when each feature selects one expert.

Recall that expert \(c\) produces \(B\) basis functions \(\psi_1^{(c)},\ldots,\psi_B^{(c)}\), and let \(\mathcal S_c:=\operatorname{span} \{\psi_1^{(c)},\ldots,\psi_B^{(c)}\}\) denote the functional space generated by that expert. In the hard-routing case, feature \(k\) selects one expert \(z_k\in[C]\). The MoNB shape function then reduces to

\begin{equation}
f_k(x)
=
\sum_{b=1}^{B}
a_{k,b}\psi_b^{(z_k)}(x).
\label{eq:app_hard_routed_monb}
\end{equation}

For a fixed routing assignment \(z_k\), varying the coefficient vector \(\mathbf a_k=(a_{k,1},\ldots,a_{k,B})\) therefore generates exactly the functions contained in \(\mathcal S_{z_k}\). The following result makes the connection to the approximation analysis precise.

\begin{theorem}[Representation Equivalence of Hard-Routed MoNB]
\label{thm:app_monb_representation_equivalence}
Fix expert spaces
\(\mathcal S_1,\ldots,\mathcal S_C\), where
\(\mathcal S_c=\operatorname{span}
\{\psi_1^{(c)},\ldots,\psi_B^{(c)}\}\), and fix a hard-routing assignment
\(z=(z_1,\ldots,z_d)\). Then the smallest average approximation error achievable
by the MoNB coefficients is
\begin{equation}
\inf_{\{\mathbf a_k\}_{k=1}^{d}}
\frac{1}{d}
\sum_{k=1}^{d}
\left\|
f_k^\star
-
\sum_{b=1}^{B}
a_{k,b}\psi_b^{(z_k)}
\right\|_{\mathcal H}^{2}
=
\frac{1}{d}
\sum_{k=1}^{d}
\left\|
f_k^\star-\Pi_{\mathcal S_{z_k}}f_k^\star
\right\|_{\mathcal H}^{2}.
\label{eq:app_monb_projection_equivalence}
\end{equation}
Consequently, hard-routed MoNB realizes exactly the specialized-subspace
approximation problem studied in
Section~\ref{subsec:app_specialization_gap}.
\end{theorem}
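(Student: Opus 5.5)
The plan is to reduce the infimum over all coefficient vectors to $d$ independent single-feature problems, and then identify each of these as an orthogonal projection onto a finite-dimensional subspace.

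\emph{Decoupling.} The objective is a sum over features, and the coefficient vector $\mathbf a_k$ enters only the $k$-th term, since the routing $z$ and the expert bases are fixed. The infimum of the sum therefore equals the sum of the per-feature infima:
\[
\inf_{\{\mathbf a_k\}}\frac1d\sum_{k}\Big\|f_k^\star-\sum_b a_{k,b}\psi_b^{(z_k)}\Big\|_{\mathcal H}^2=\frac1d\sum_k\inf_{\mathbf a_k\in\mathbb R^B}\Big\|f_k^\star-\sum_b a_{k,b}\psi_b^{(z_k)}\Big\|_{\mathcal H}^2 .
\]
This step is immediate, because the admissible set is the product $\prod_k\mathbb R^B$ and each term is nonnegative.

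\emph{Per-feature identification.} Fix $k$ and write $c=z_k$. I will show two inclusions.
\begin{itemize}
\item The linear map $\mathbf a\mapsto\sum_b a_b\psi_b^{(c)}$ from $\mathbb R^B$ to $\mathcal H$ has image exactly $\mathcal S_c=\operatorname{span}\{\psi_b^{(c)}\}$. This holds by definition of span, even when the $\psi_b^{(c)}$ are linearly dependent and the map is not injective.
\item Consequently, the per-feature infimum over $\mathbf a$ equals $\inf_{g\in\mathcal S_c}\|f_k^\star-g\|_{\mathcal H}^2$.
\end{itemize}

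Since $\mathcal S_c$ has dimension at most $B$, it is a closed subspace of the Hilbert space $\mathcal H$. The projection theorem then gives a unique minimizer $\Pi_{\mathcal S_c}f_k^\star$, characterized by the residual $f_k^\star-\Pi_{\mathcal S_c}f_k^\star$ being orthogonal to $\mathcal S_c$. For completeness, I will justify the minimality via Pythagoras: for any $g\in\mathcal S_c$,
\[
\|f_k^\star-g\|^2=\|f_k^\star-\Pi_{\mathcal S_c}f_k^\star\|^2+\|\Pi_{\mathcal S_c}f_k^\star-g\|^2 ,
\]
because $\Pi_{\mathcal S_c}f_k^\star-g$ lies in $\mathcal S_c$. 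I also need the infimum to be attained within the parameterization. This follows because $\Pi_{\mathcal S_c}f_k^\star\in\mathcal S_c$, so it can be written as $\sum_b a_{k,b}^\star\psi_b^{(c)}$. Concretely, $\mathbf a_k^\star$ solves the normal equations $G\mathbf a=(\langle f_k^\star,\psi_b^{(c)}\rangle)_b$ with Gram matrix $G$; when $G$ is singular, any solution (for example the one given by the pseudo-inverse) works.

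\emph{Conclusion.} Summing the per-feature identities and dividing by $d$ yields Eq.~\ref{eq:app_monb_projection_equivalence}. The right-hand side is exactly the fixed-assignment objective inside Eq.~\ref{eq:app_oracle_mixture_error}. Taking the infimum over expert spaces with $\dim\le B$, each of which is realizable as the span of $B$ neural bases, and over assignments $z$ recovers $E^\star_{\mathrm{mix}}(B,C)$. This gives the ``consequently'' clause.

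The main obstacle is minor. It is the possibly degenerate case where the expert bases are linearly dependent, so that $\dim\mathcal S_c<B$. I handle it by working with the span and the pseudo-inverse rather than assuming $G$ is invertible. I will also note the implicit assumption that the expert networks are expressive enough to realize any $B$-dimensional subspace; this is needed only for the ``consequently'' clause.
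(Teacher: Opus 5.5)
Your proposal is correct and follows essentially the same route as the paper: you decouple the infimum across features because each $\mathbf a_k$ enters only its own term, identify the coefficient-generated class with the span $\mathcal S_{z_k}$, and apply the projection theorem to this closed finite-dimensional subspace. Two points you make explicit are left implicit in the paper: attainment of the infimum via the normal equations (using the pseudo-inverse when the Gram matrix is singular), and the expressivity assumption on the expert networks needed for the ``consequently'' clause.
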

\begin{proof}
Consider one feature \(k\) and suppose it is routed to expert \(z_k=c\).
According to Eq.~\ref{eq:app_hard_routed_monb}, the set of functions that
can be produced by varying its coefficient vector is
\begin{equation}
\begin{aligned}
\mathcal F_{k,c}
&=
\left\{
\sum_{b=1}^{B}
a_{k,b}\psi_b^{(c)}
:
(a_{k,1},\ldots,a_{k,B})\in\mathbb R^B
\right\}
\\
&=
\operatorname{span}
\{\psi_1^{(c)},\ldots,\psi_B^{(c)}\}
\\
&=
\mathcal S_c.
\end{aligned}
\label{eq:app_monb_expert_function_class}
\end{equation}
The first line is the set of functions directly parameterized by the
feature-specific coefficients. The second follows from the definition of a
linear span, and the final equality is the definition of the expert space
\(\mathcal S_c\). Thus, once the routing decision is fixed, optimizing the
MoNB coefficients is equivalent to choosing the best function inside the
selected expert space.

The approximation problem for feature \(k\) can therefore be rewritten as
\begin{equation}
\begin{aligned}
\inf_{\mathbf a_k\in\mathbb R^B}
\left\|
f_k^\star
-
\sum_{b=1}^{B}
a_{k,b}\psi_b^{(c)}
\right\|_{\mathcal H}^{2}
&=
\inf_{f\in\mathcal S_c}
\|f_k^\star-f\|_{\mathcal H}^{2}.
\end{aligned}
\label{eq:app_monb_coefficient_to_subspace}
\end{equation}
Because \(\mathcal S_c\) is finite-dimensional, it is a closed subspace of
\(\mathcal H\). The closest element of a closed subspace to
\(f_k^\star\) is its orthogonal projection
\(\Pi_{\mathcal S_c}f_k^\star\). Hence
\begin{equation}
\begin{aligned}
\inf_{f\in\mathcal S_c}
\|f_k^\star-f\|_{\mathcal H}^{2}
&=
\left\|
f_k^\star-\Pi_{\mathcal S_c}f_k^\star
\right\|_{\mathcal H}^{2}.
\end{aligned}
\label{eq:app_monb_projection_solution}
\end{equation}

Substituting \(c=z_k\) and applying the same argument independently to every
feature gives
\begin{equation}
\begin{aligned}
&\inf_{\{\mathbf a_k\}_{k=1}^{d}}
\frac{1}{d}
\sum_{k=1}^{d}
\left\|
f_k^\star
-
\sum_{b=1}^{B}
a_{k,b}\psi_b^{(z_k)}
\right\|_{\mathcal H}^{2}
\\
&\qquad=
\frac{1}{d}
\sum_{k=1}^{d}
\inf_{\mathbf a_k\in\mathbb R^B}
\left\|
f_k^\star
-
\sum_{b=1}^{B}
a_{k,b}\psi_b^{(z_k)}
\right\|_{\mathcal H}^{2}
\\
&\qquad=
\frac{1}{d}
\sum_{k=1}^{d}
\left\|
f_k^\star-\Pi_{\mathcal S_{z_k}}f_k^\star
\right\|_{\mathcal H}^{2}.
\end{aligned}
\label{eq:app_monb_full_projection_equivalence}
\end{equation}

The first equality separates the coefficient optimization because each feature
has its own coefficient vector, while the second applies
Eq.~\ref{eq:app_monb_projection_solution} to the expert selected for each
feature. This proves Eq.~\ref{eq:app_monb_projection_equivalence}.
\end{proof}
Theorem~\ref{thm:app_monb_representation_equivalence} shows that the abstract geometry used in the specialization analysis coincides with the actual hard-routed MoNB architecture. Each expert defines one low-dimensional functional space, the router determines which space is available to a feature, and the feature-specific coefficient vector determines the particular function inside that space. Thus the role of the router is more precise than simply ``selecting an expert'': it selects the functional coordinate system in which the feature-response function will be represented.

The same result also explains why MoNB can maintain greater functional diversity than a single \(B\)-basis NBM without increasing the active basis dimension of each feature. Across the full model, the expert spaces are contained in their linear sum \(\mathcal S_{\mathrm{all}} :=\mathcal S_1+\cdots+\mathcal S_C\), whose dimension satisfies \(\dim(\mathcal S_{\mathrm{all}})\leq CB\). When the expert spaces contain distinct functional directions, this model-wide space may therefore be much larger than a single \(B\)-dimensional NBM space. Nevertheless, under hard routing each feature accesses only one expert space and hence uses at most \(B\) active functional directions. The model can therefore maintain globally diverse functional structures while preserving a small local representation for each feature.

This distinction should not be confused with an absolute expressivity advantage over an arbitrarily wide NBM. A global basis model with enough basis functions could span \(\mathcal S_{\mathrm{all}}\) and represent the same collection of target functions. In the worst case, doing so may require up to \(CB\) global basis directions. The relevant difference is therefore not whether the functions are representable with unlimited capacity, but how that capacity is organized: a wide global model exposes every feature to the full basis space, whereas MoNB maintains several specialized spaces and activates only the space selected for that feature. The statistical consequence of this difference is studied in Section~\ref{subsec:app_statistical_efficiency}.

A useful corollary follows immediately from the representation equivalence. Suppose the target feature functions admit a partition \(\mathcal K_1,\ldots,\mathcal K_C\) such that every \(f_k^\star\) in group \(c\) lies in a target subspace \(\mathcal S_c^\star\) of dimension at most \(B\). If expert \(c\) learns a basis whose span contains \(\mathcal S_c^\star\), then routing every \(k\in\mathcal K_c\) to that expert allows MoNB to represent all target functions in the group exactly. Indeed, \(f_k^\star\in\mathcal S_c\) implies \(\Pi_{\mathcal S_c}f_k^\star=f_k^\star\), so its approximation error in Eq.~\ref{eq:app_monb_projection_equivalence} is zero. This is precisely the architectural condition under which the zero local spectral tails considered in Corollary~\ref{cor:app_strict_specialization} can be attained by MoNB.

The result above concerns representation capacity under a correct hard-routing assignment. It does not assume that the learned router always identifies the appropriate expert. If a feature is assigned to a space that does not contain the relevant functional directions, its approximation error increases from the distance to the intended space to the distance to the selected one. The next part quantifies this loss and determines when the approximation benefit created by specialization remains larger than the error introduced by imperfect routing.

\subsection{Routing Error and the Specialization--Routing Trade-off}
\label{subsec:app_routing_tradeoff}

The previous analysis considered an oracle assignment of features to
specialized functional spaces. Under this assignment, MoNB can reduce the
approximation error by the specialization gap
\(\Gamma_{B,C}\). In practice, however, the appropriate expert is not known in
advance and must be inferred by the router. If a feature is routed to an unsuitable expert, its approximation error can increase because it is represented using a less appropriate functional space. This routing error can reduce, or even eliminate, the benefit gained from specialization. We therefore quantify this additional error and determine when specialization still provides a net advantage despite imperfect routing.

To isolate the routing effect from errors in learning the expert bases
themselves, consider an oracle solution consisting of expert spaces
\(\mathcal S_1^\star,\ldots,\mathcal S_C^\star\) and an associated assignment
\(z^\star=(z_1^\star,\ldots,z_d^\star)\) that attain
\(E_{\mathrm{mix}}^\star(B,C)\). If the infimum is not attained
exactly, the same argument can be applied to an arbitrarily close
\(\varepsilon\)-optimal solution with an additional \(\varepsilon\) term.
For feature \(k\), recall that \(e_k(c)\) denotes the approximation error
obtained when \(f_k^\star\) is represented using expert \(c\). Since
\(z^\star\) is an optimal assignment for the ideal expert spaces, we may choose
\(z_k^\star\) such that
\[
e_k(z_k^\star)=\min_{c\in[C]} e_k(c).
\]
Let \(\widehat z_k\) denote the expert selected by the learned hard router.
We define the routing penalty for assigning feature \(k\) to expert \(c\)
instead of an optimal expert as
\[
\Delta_k(c):=e_k(c)-e_k(z_k^\star).
\]
By construction,
\(\Delta_k(c)\geq0\), and the penalty is zero whenever expert \(c\) provides
an equally good approximation to the target function as the oracle expert.
This definition is therefore more informative than simply checking whether
\(c=z_k^\star\): two experts may have different indices while spanning
functionally equivalent spaces, in which case switching between them should
not be counted as a harmful routing error. For a routing assignment
\(\widehat z=(\widehat z_1,\ldots,\widehat z_d)\), define the average routing
penalty as

\begin{equation}
\mathcal R_{\mathrm{route}}(\widehat z)
:=
\frac{1}{d}
\sum_{k=1}^{d}
\Delta_k(\widehat z_k).
\label{eq:app_routing_penalty}
\end{equation}

This quantity measures the additional approximation error caused solely by
using the selected expert instead of the best expert available under the
oracle specialized representation. The following result shows that this
penalty competes directly with the specialization gap.

\specializationrouting*

\begin{proof}
By definition, the approximation error under the learned routing assignment is
\begin{equation}
E_{\mathrm{route}}(\widehat z)
=
\frac{1}{d}
\sum_{k=1}^{d}
e_k(\widehat z_k).
\label{eq:app_routed_approximation_error}
\end{equation}
For each feature \(k\), the definition of
\(\Delta_k(\widehat z_k)\) gives
\(e_k(\widehat z_k)
=
e_k(z_k^\star)+\Delta_k(\widehat z_k)\).
Substituting this identity into
Eq.~\ref{eq:app_routed_approximation_error} yields
\begin{equation}
\begin{aligned}
E_{\mathrm{route}}(\widehat z)
&=
\frac{1}{d}
\sum_{k=1}^{d}
\left[
e_k(z_k^\star)
+
\Delta_k(\widehat z_k)
\right]
\\
&=
\frac{1}{d}
\sum_{k=1}^{d}
e_k(z_k^\star)
+
\frac{1}{d}
\sum_{k=1}^{d}
\Delta_k(\widehat z_k)
\\
&=
E_{\mathrm{mix}}^\star(B,C)
+
\mathcal R_{\mathrm{route}}(\widehat z).
\end{aligned}
\label{eq:app_routed_error_decomposition}
\end{equation}

The first term is the oracle approximation error obtained under the optimal
specialized spaces and routing assignment. The second term contains exactly
the additional error introduced by changing the routing decisions while
keeping those spaces fixed. Recall that the specialization gap is defined by
\(\Gamma_{B,C}
=
E_{\mathrm{global}}(B)
-
E_{\mathrm{mix}}^\star(B,C)\).
Rearranging this definition gives
\(E_{\mathrm{mix}}^\star(B,C)
=
E_{\mathrm{global}}(B)-\Gamma_{B,C}\).
Substituting into
Eq.~\ref{eq:app_routed_error_decomposition} gives
\begin{equation}
\begin{aligned}
E_{\mathrm{route}}(\widehat z)
&=
E_{\mathrm{global}}(B)
-
\Gamma_{B,C}
+
\mathcal R_{\mathrm{route}}(\widehat z),
\end{aligned}
\end{equation}

and therefore $E_{\mathrm{route}}(\widehat z)
-
E_{\mathrm{global}}(B)
=
\mathcal R_{\mathrm{route}}(\widehat z)
-
\Gamma_{B,C}.$ It follows immediately that
\(E_{\mathrm{route}}(\widehat z)
<
E_{\mathrm{global}}(B)\)
exactly when
\(\mathcal R_{\mathrm{route}}(\widehat z)<\Gamma_{B,C}\).
\end{proof}

Theorem~\ref{thm:app_specialization_routing_tradeoff} gives a direct
interpretation of the specialization gap. It is not only the improvement
obtained under perfect routing; it also determines how much routing error the
specialized representation can tolerate before losing its advantage over
global sharing. When \(\Gamma_{B,C}\) is large, the target feature functions
contain substantial heterogeneous structure, and specialization can remain
useful even when routing is imperfect. When \(\Gamma_{B,C}\) is small, the
global basis already represents the feature functions well, so even a modest
routing penalty may eliminate the benefit of introducing multiple experts.

The identity also reveals three regimes. If
\(\mathcal R_{\mathrm{route}}(\widehat z)<\Gamma_{B,C}\), specialization
retains a net approximation advantage. If the two quantities are equal, the
routing error exactly consumes the gain from specialization and the two
representations have the same approximation error. If
\(\mathcal R_{\mathrm{route}}(\widehat z)>\Gamma_{B,C}\), the cost of routing
into inappropriate functional spaces exceeds the structure recovered by
specialization, and the best globally shared \(B\)-dimensional representation
becomes preferable at the approximation level.

\paragraph{Why routing accuracy alone is insufficient.}
The routing penalty depends not only on how often a feature is assigned away
from an oracle expert, but also on how different the selected functional space
is from the best available one. To make this distinction explicit, define the
set of features that incur a positive routing penalty as
\(\mathcal M(\widehat z)
:=
\{k:\Delta_k(\widehat z_k)>0\}\), and let
\(p_{\mathrm{route}}
:=|\mathcal M(\widehat z)|/d\).
When \(\mathcal M(\widehat z)\) is nonempty, define the average cost of a
harmful routing decision as
\[
\overline{\Delta}_{\mathrm{route}}
:=
\frac{1}{|\mathcal M(\widehat z)|}
\sum_{k\in\mathcal M(\widehat z)}
\Delta_k(\widehat z_k).
\]
The routing penalty then satisfies the exact identity
\(\mathcal R_{\mathrm{route}}
=
p_{\mathrm{route}}\overline{\Delta}_{\mathrm{route}}\).
The specialization condition from
Theorem~\ref{thm:app_specialization_routing_tradeoff} can therefore be written
as
\begin{equation}
p_{\mathrm{route}}
\overline{\Delta}_{\mathrm{route}}
<
\Gamma_{B,C}.
\label{eq:app_routing_frequency_cost_tradeoff}
\end{equation}

Equation~\ref{eq:app_routing_frequency_cost_tradeoff} separates two
different sources of routing failure. The quantity \(p_{\mathrm{route}}\)
measures how frequently the router makes a functionally harmful decision,
whereas \(\overline{\Delta}_{\mathrm{route}}\) measures the average
approximation cost of such a decision. A router can therefore have imperfect
assignment accuracy while still incurring little functional loss if the
confused experts span similar spaces. Conversely, a small number of mistakes
can be expensive when they route features between strongly different
functional families. For this reason, expert-assignment accuracy alone is not
a sufficient measure of routing quality for MoNB; the geometry of the expert
spaces determines the consequence of each error.

This geometry can be summarized through a feature-specific routing margin.
For feature \(k\), define $m_k
:=
\min_{c:\Delta_k(c)>0}
\Delta_k(c),$ whenever at least one expert has positive excess error. The quantity \(m_k\)
is the smallest approximation loss incurred by moving feature \(k\) away from
the set of optimal expert spaces. A large margin means that the feature has a
strongly preferred functional family, whereas a small margin indicates that
multiple experts provide nearly equivalent representations. For every harmful
routing decision,
\(\Delta_k(\widehat z_k)\geq m_k\), and consequently

\begin{equation}
\mathcal R_{\mathrm{route}}(\widehat z)
\geq
\frac{1}{d}
\sum_{k\in\mathcal M(\widehat z)}
m_k.
\label{eq:app_routing_margin_lower_bound}
\end{equation}

This result highlights a trade-off that is easy to miss when routing is viewed
only as a classification problem. Strongly separated expert spaces make the
functional specialization more distinct, but they can also make incorrect
assignments more costly. Overlapping experts may be harder to distinguish by
their indices, yet confusing two nearly equivalent spaces can have little
effect on the represented feature function. The relevant object is therefore
not discrete routing accuracy by itself, but the approximation loss induced by
the chosen expert.

A complementary sufficient condition can be obtained by bounding the largest
possible routing penalty. Let
\(\Delta_{\max}:=\max_{k,c}\Delta_k(c)\). Since only features in
\(\mathcal M(\widehat z)\) contribute positive error,

\begin{equation}
\begin{aligned}
\mathcal R_{\mathrm{route}}(\widehat z)
&=
\frac{1}{d}
\sum_{k\in\mathcal M(\widehat z)}
\Delta_k(\widehat z_k)
\\
&\leq
\frac{|\mathcal M(\widehat z)|}{d}
\Delta_{\max}
\\
&=
p_{\mathrm{route}}\Delta_{\max}.
\end{aligned}
\label{eq:app_routing_penalty_upper_bound}
\end{equation}

Hence a sufficient condition for specialization to remain beneficial is
\(p_{\mathrm{route}}\Delta_{\max}<\Gamma_{B,C}\). Although this worst-case
bound may be conservative, it makes the dependence particularly transparent:
a larger specialization gap permits either more routing errors or more costly
individual errors before the advantage over global sharing disappears.

\paragraph{Why more experts need not always help.}
The oracle mixture error cannot increase as the number of available experts
grows, so the specialization gap
\(\Gamma_{B,C}\) is nondecreasing in \(C\). At the representation level,
additional experts can therefore only increase the opportunity for
specialization. The learned model, however, must also distinguish among a
larger collection of functional spaces. Writing
\(\mathcal R_{\mathrm{route}}(C)\) for the routing penalty obtained with
\(C\) experts, Theorem~\ref{thm:app_specialization_routing_tradeoff} gives

\begin{equation}
E_{\mathrm{route}}(C)
=
E_{\mathrm{global}}(B)
-
\Gamma_{B,C}
+
\mathcal R_{\mathrm{route}}(C).
\label{eq:app_number_experts_tradeoff}
\end{equation}

Increasing \(C\) is beneficial only when the additional reduction in oracle
approximation error is larger than the additional routing cost. Thus, even
though representational capacity improves monotonically with the number of
experts, the approximation error of a learned routed model need not. This
provides a theoretical explanation for why the number of experts should be
treated as a genuine model-selection parameter rather than increased
indefinitely.

The analysis in this subsection deliberately isolates routing from estimation
of the expert spaces. It assumes that the specialized spaces themselves are
available and asks what is lost when features are assigned to them imperfectly.
In a learned model, the basis functions must also be estimated from finite
data. The next subsection studies this complementary source of error and asks
whether specialization can provide a statistical advantage over representing
the same model-wide functional diversity with one widened global basis space.

\paragraph{Extension to Sparse Top-$m$ Routing.}
\label{par:app_topm}
\HARMONIA uses sparse top-\(m\) routing rather than assigning each feature to a
single expert. Let \(\tilde{\pi}_{k,c}\) denote the routing weight of feature \(k\) for
expert \(c\), with at most \(m\) nonzero entries. The corresponding feature
response is
\begin{equation}
f_k(x)
=
\sum_{c=1}^{C}\tilde{\pi}_{k,c}
\left(
\sum_{b=1}^{B}a_{k,b}\psi_b^{(c)}(x)
\right)
=
\sum_{b=1}^{B}
a_{k,b}\widetilde{\psi}_{k,b}(x),
\qquad
\widetilde{\psi}_{k,b}(x)
:=
\sum_{c=1}^{C}\tilde{\pi}_{k,c}\psi_b^{(c)}(x).
\label{eq:app_topm_effective_basis}
\end{equation}
Thus, top-\(m\) routing induces a feature-dependent effective basis space
\(\mathcal S(\boldsymbol{\pi}_k)
:=
\operatorname{span}\{\widetilde{\psi}_{k,1},\ldots,
\widetilde{\psi}_{k,B}\}\).
Because the coefficients \(a_{k,b}\) are shared across the active experts,
activating \(m\) experts does not expose the feature to \(mB\) independent
coefficients; for fixed routing weights, its effective space still has
dimension at most \(B\). Hard routing is recovered when
\(\boldsymbol{\pi}_k\) has a single nonzero entry, whereas top-\(m\) routing
allows interpolation among expert bases and can therefore represent functional
spaces that do not coincide with any individual expert.

The routing-error analysis extends by replacing the discrete expert space
\(\mathcal S_{z_k}\) with the effective space
\(\mathcal S(\boldsymbol{\pi}_k)\). Let
\(\boldsymbol{\pi}_k^\star\) denote the oracle sparse routing weights and
\(\widehat{\boldsymbol{\pi}}_k\) the learned weights. The excess routing cost
for feature \(k\) becomes
\[
\Delta_k^{(m)}
=
\left\|
f_k^\star-
\Pi_{\mathcal S(\widehat{\boldsymbol{\pi}}_k)}f_k^\star
\right\|_{\mathcal H}^{2}
-
\left\|
f_k^\star-
\Pi_{\mathcal S(\boldsymbol{\pi}_k^\star)}f_k^\star
\right\|_{\mathcal H}^{2}.
\]
Averaging \(\Delta_k^{(m)}\) over features gives the same decomposition between
oracle specialization gain and routing penalty as in the hard-routing case.
Top-\(m\) routing therefore enriches the family of functional spaces available
to each feature while preserving the central specialization--routing
trade-off.

\subsection{Statistical Efficiency of Specialized Basis Sharing}
\label{subsec:app_statistical_efficiency}

One may then ask whether specialization is still useful if a single global basis is made sufficiently wide. Let $\mathcal S_1,\ldots,\mathcal S_C$ denote the expert spaces, with $\dim(\mathcal S_c)\leq B$, and let $\mathcal S_{\mathrm{all}}=\mathcal S_1+\cdots+\mathcal S_C$ have dimension $R\leq CB$. A global basis spanning $\mathcal S_{\mathrm{all}}$ can represent every function available to the specialized experts, so specialization no longer offers an advantage in representational capacity once the global basis is sufficiently wide. What remains is a statistical difference. The global model represents each feature using all $R$ directions, whereas a correctly routed feature only needs the directions of its selected expert, whose dimension is at most $B$.

We isolate this effect by conditioning on fixed basis functions and a fixed
hard-routing assignment. Consider a feature \(k\) whose target response
\(f_k^\star\) belongs to an \(r_k\)-dimensional expert space
\(\mathcal S_{z_k}\), where \(r_k\leq B\). Suppose \(N\) noisy observations
\((x_s,y_s)_{s=1}^{N}\) are available, with
\(y_s=f_k^\star(x_s)+\varepsilon_s\), where the noise variables are independent,
have zero mean, and satisfy
\(\mathbb E[\varepsilon_s^2]=\sigma^2\). For a \(q\)-dimensional basis
\(\phi_1,\ldots,\phi_q\), let
\(\boldsymbol{\Phi}_q\in\mathbb R^{N\times q}\) denote the design matrix with
\([\boldsymbol{\Phi}_q]_{s,b}=\phi_b(x_s)\). Assuming the target is correctly
specified in this basis, we may write
\(\mathbf y=\boldsymbol{\Phi}_q\boldsymbol{\beta}^\star+\boldsymbol{\varepsilon}\).

\begin{theorem}[Dimension-Dependent Coefficient Estimation]
\label{thm:app_statistical_efficiency}
Assume that \(\boldsymbol{\Phi}_q\) has full column rank and estimate the
basis coefficients by least squares,
\[
\widehat{\boldsymbol{\beta}}
=
(\boldsymbol{\Phi}_q^\top\boldsymbol{\Phi}_q)^{-1}
\boldsymbol{\Phi}_q^\top\mathbf y.
\]
Conditioned on the design matrix, the expected empirical prediction error
introduced by coefficient estimation is
\begin{equation}
\mathbb E_{\boldsymbol{\varepsilon}}
\left[
\frac{1}{N}
\left\|
\boldsymbol{\Phi}_q
(\widehat{\boldsymbol{\beta}}-\boldsymbol{\beta}^\star)
\right\|_2^2
\;\middle|\;
\boldsymbol{\Phi}_q
\right]
=
\frac{\sigma^2 q}{N}.
\label{eq:app_dimension_estimation_error}
\end{equation}
Therefore, if feature \(k\) is represented in its \(r_k\)-dimensional
specialized space, its estimation error is
\(\sigma^2r_k/N\), whereas a dense global parameterization over
\(\mathcal S_{\mathrm{all}}\) incurs \(\sigma^2R/N\). When every expert has
dimension \(B\), the difference becomes
\(\sigma^2(R-B)/N\) per feature.
\end{theorem}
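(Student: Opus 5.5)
The argument is the standard least-squares variance computation for a fixed design, followed by substituting the relevant dimension $q$. Write the hat matrix $\mathbf H_q:=\boldsymbol{\Phi}_q(\boldsymbol{\Phi}_q^\top\boldsymbol{\Phi}_q)^{-1}\boldsymbol{\Phi}_q^\top$, which is well defined because $\boldsymbol{\Phi}_q$ has full column rank. Substituting $\mathbf y=\boldsymbol{\Phi}_q\boldsymbol{\beta}^\star+\boldsymbol{\varepsilon}$ into the estimator gives
\[
\widehat{\boldsymbol{\beta}}-\boldsymbol{\beta}^\star=(\boldsymbol{\Phi}_q^\top\boldsymbol{\Phi}_q)^{-1}\boldsymbol{\Phi}_q^\top\boldsymbol{\varepsilon},
\]
because the $\boldsymbol{\beta}^\star$ term cancels exactly under correct specification. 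It follows that $\boldsymbol{\Phi}_q(\widehat{\boldsymbol{\beta}}-\boldsymbol{\beta}^\star)=\mathbf H_q\boldsymbol{\varepsilon}$.

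Next, I would check that $\mathbf H_q$ is symmetric and idempotent, so it is the orthogonal projection onto the column space of $\boldsymbol{\Phi}_q$. Its rank is therefore $q$, and $\operatorname{Tr}(\mathbf H_q)=\operatorname{Tr}\big((\boldsymbol{\Phi}_q^\top\boldsymbol{\Phi}_q)^{-1}\boldsymbol{\Phi}_q^\top\boldsymbol{\Phi}_q\big)=\operatorname{Tr}(\mathbf I_q)=q$ by the cyclic property of the trace.

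Then I would compute the expectation, conditional on $\boldsymbol{\Phi}_q$:
\[
\mathbb E\|\mathbf H_q\boldsymbol{\varepsilon}\|_2^2=\mathbb E[\boldsymbol{\varepsilon}^\top\mathbf H_q\boldsymbol{\varepsilon}]=\operatorname{Tr}\big(\mathbf H_q\,\mathbb E[\boldsymbol{\varepsilon}\boldsymbol{\varepsilon}^\top]\big).
\]
The first equality uses $\mathbf H_q^\top\mathbf H_q=\mathbf H_q$. The noise assumptions give $\mathbb E[\boldsymbol{\varepsilon}\boldsymbol{\varepsilon}^\top]=\sigma^2\mathbf I_N$: independence and zero mean kill the cross terms, and the diagonal entries are $\sigma^2$. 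This step needs care because the noise is only assumed independent, not Gaussian, but only second moments enter, so independence and zero mean suffice. The expectation is therefore $\sigma^2 q$, and dividing by $N$ yields Eq.~\ref{eq:app_dimension_estimation_error}.

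Finally, I would instantiate $q$ in the two settings. In the specialized representation, $f_k^\star\in\mathcal S_{z_k}$, and a basis of dimension $r_k$ spans that space; assuming its design matrix has full rank, we take $q=r_k$ and the error is $\sigma^2 r_k/N$. In the global representation, $f_k^\star\in\mathcal S_{\mathrm{all}}$ is still correctly specified in a basis of $\mathcal S_{\mathrm{all}}$, so we take $q=R$ and the error is $\sigma^2R/N$. When $r_k=B$, subtracting gives $\sigma^2(R-B)/N$.

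The only real point needing attention is that correct specification holds in both bases, which follows because $\mathcal S_{z_k}\subseteq\mathcal S_{\mathrm{all}}$. Everything else is routine trace algebra.
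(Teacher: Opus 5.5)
Your proposal is correct and follows essentially the same route as the paper. Both arguments write the estimation error as the hat-matrix projection $\mathbf P_q\boldsymbol{\varepsilon}$, use symmetry and idempotence to get $\mathbb E[\boldsymbol{\varepsilon}^\top\mathbf P_q\boldsymbol{\varepsilon}]=\sigma^2\operatorname{Tr}(\mathbf P_q)=\sigma^2 q$, and then instantiate $q=r_k$ and $q=R$. Your two extra remarks are clarifications the paper leaves implicit: only second moments of the noise enter, and correct specification in the global basis follows from $\mathcal S_{z_k}\subseteq\mathcal S_{\mathrm{all}}$.
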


\begin{proof}
Since the model is correctly specified,
\(\mathbf y=\boldsymbol{\Phi}_q\boldsymbol{\beta}^\star+
\boldsymbol{\varepsilon}\). Substituting this expression into the least-squares
estimator gives
\begin{equation}
\begin{aligned}
\widehat{\boldsymbol{\beta}}
&=
(\boldsymbol{\Phi}_q^\top\boldsymbol{\Phi}_q)^{-1}
\boldsymbol{\Phi}_q^\top
\left(
\boldsymbol{\Phi}_q\boldsymbol{\beta}^\star
+
\boldsymbol{\varepsilon}
\right)
\\
&=
\boldsymbol{\beta}^\star
+
(\boldsymbol{\Phi}_q^\top\boldsymbol{\Phi}_q)^{-1}
\boldsymbol{\Phi}_q^\top\boldsymbol{\varepsilon},
\end{aligned}
\label{eq:app_ls_decomposition}
\end{equation}
where the second equality uses
\((\boldsymbol{\Phi}_q^\top\boldsymbol{\Phi}_q)^{-1}
\boldsymbol{\Phi}_q^\top\boldsymbol{\Phi}_q=\mathbf I_q\).
Hence the coefficient-estimation error is
\[
\widehat{\boldsymbol{\beta}}-\boldsymbol{\beta}^\star
=
(\boldsymbol{\Phi}_q^\top\boldsymbol{\Phi}_q)^{-1}
\boldsymbol{\Phi}_q^\top\boldsymbol{\varepsilon}.
\]

Multiplying by \(\boldsymbol{\Phi}_q\) converts coefficient error into error in
the fitted function values:
\begin{equation}
\boldsymbol{\Phi}_q
(\widehat{\boldsymbol{\beta}}-\boldsymbol{\beta}^\star)
=
\mathbf P_q\boldsymbol{\varepsilon},
\qquad
\mathbf P_q
:=
\boldsymbol{\Phi}_q
(\boldsymbol{\Phi}_q^\top\boldsymbol{\Phi}_q)^{-1}
\boldsymbol{\Phi}_q^\top.
\label{eq:app_hat_projection}
\end{equation}
The matrix \(\mathbf P_q\) is the orthogonal projector onto the column space of
\(\boldsymbol{\Phi}_q\). Because
\(\boldsymbol{\Phi}_q\) has \(q\) linearly independent columns, this column
space has dimension \(q\). Consequently,
\(\mathbf P_q^\top=\mathbf P_q\),
\(\mathbf P_q^2=\mathbf P_q\), and
\(\operatorname{Tr}(\mathbf P_q)=q\).

Using Eq.~\ref{eq:app_hat_projection}, the expected squared prediction error
is
\begin{equation}
\begin{aligned}
\mathbb E_{\boldsymbol{\varepsilon}}
\left[
\frac{1}{N}
\left\|
\boldsymbol{\Phi}_q
(\widehat{\boldsymbol{\beta}}-\boldsymbol{\beta}^\star)
\right\|_2^2
\;\middle|\;
\boldsymbol{\Phi}_q
\right]
&=
\frac{1}{N}
\mathbb E_{\boldsymbol{\varepsilon}}
\left[
\boldsymbol{\varepsilon}^{\top}
\mathbf P_q^{\top}\mathbf P_q
\boldsymbol{\varepsilon}
\right]
\\
&=
\frac{1}{N}
\mathbb E_{\boldsymbol{\varepsilon}}
\left[
\boldsymbol{\varepsilon}^{\top}
\mathbf P_q
\boldsymbol{\varepsilon}
\right]
\\
&=
\frac{1}{N}
\operatorname{Tr}
\left(
\mathbf P_q
\mathbb E[
\boldsymbol{\varepsilon}
\boldsymbol{\varepsilon}^{\top}]
\right)
\\
&=
\frac{\sigma^2}{N}
\operatorname{Tr}(\mathbf P_q)
\\
&=
\frac{\sigma^2q}{N}.
\end{aligned}
\label{eq:app_estimation_trace}
\end{equation}
The second equality follows from
\(\mathbf P_q^\top\mathbf P_q=\mathbf P_q\), while the fourth uses
\(\mathbb E[
\boldsymbol{\varepsilon}\boldsymbol{\varepsilon}^{\top}]
=\sigma^2\mathbf I_N\). This proves
Eq.~\ref{eq:app_dimension_estimation_error}. For the specialized representation of feature \(k\), the active space has
dimension \(q=r_k\), giving expected error \(\sigma^2r_k/N\). A dense global
basis spanning \(\mathcal S_{\mathrm{all}}\) uses \(q=R\), giving
\(\sigma^2R/N\). The comparison follows immediately.
\end{proof}

The theorem separates model-wide functional capacity from the dimension needed
by an individual feature. A global basis must contain the directions required
by all functional families, whereas a routed feature only needs the directions
of its selected expert. As a result, a dense global model may estimate many
coefficients that are irrelevant to a particular feature, increasing estimation
variance under finite data. 

For example,  suppose \HARMONIA contains \(C=3\) experts, each spanning two independent basis
directions,
\(\mathcal S_1=\operatorname{span}\{\psi_1^{(1)},\psi_2^{(1)}\}\),
\(\mathcal S_2=\operatorname{span}\{\psi_1^{(2)},\psi_2^{(2)}\}\), and
\(\mathcal S_3=\operatorname{span}\{\psi_1^{(3)},\psi_2^{(3)}\}\).
If these spaces contain distinct directions, their joint span has dimension
\(R=6\). A global NBM with six basis functions can therefore represent the
same collection of feature-response functions as the three specialized
experts, hence, the advantage of MoNB is not greater representational capacity in
this case. \textit{The difference is the number of coefficients that each feature must estimate}.
Under the global model, feature \(k\) is represented as
\(f_k(x)=\sum_{b=1}^{6}a_{k,b}\psi_b(x)\), and therefore estimates six
coefficients. Suppose instead that its target response belongs to
\(\mathcal S_2\). Under correct routing, MoNB represents the same function
using only
\(f_k(x)=a_{k,1}\psi_1^{(2)}(x)+a_{k,2}\psi_2^{(2)}(x)\), so only two
coefficients are estimated. In the least-squares setting of
Theorem~\ref{thm:app_statistical_efficiency}, these two parameterizations incur
expected estimation errors \(6\sigma^2/N\) and \(2\sigma^2/N\), respectively.
Thus, MoNB can preserve the same model-wide functional diversity while keeping
the active estimation dimension of each feature small. If the expert spaces
largely overlap, this advantage decreases; if they coincide completely, then
the global and specialized representations have the same effective dimension.

\section{IDENTIFIABILITY OF RANDOM-WALK STRUCTURAL EXPLANATIONS}
\label{sec:app_identifiability}

\HARMONIA uses the RRWP coefficient $\theta_{t,k}$ to represent how much
structural influence feature $k$ assigns to random-walk length $t$.
For these hop weights to provide a meaningful explanation, they should be
uniquely determined by the resulting structural operator. If two different
coefficient vectors produce the same operator, then they induce exactly the
same model behavior but give different hop-level explanations. In that case,
the hop-level explanation is not identifiable.

This issue is separate from predictive accuracy. A model may learn the correct
structural transformation while its individual hop coefficients are still not
uniquely determined. In other words, different hop-weight vectors may produce
the same structural behavior.

Even when the hop coefficients are theoretically unique, the explanation may
still be unstable. Different hop decompositions can induce very similar
structural operators, so small changes caused by finite data or optimization
perturbations may lead to large changes in the learned coefficients.

We therefore study structural identifiability at two levels. First, we ask when
the random-walk coefficients are uniquely determined. Second, we ask when the
mapping from hop coefficients to the structural operator is sufficiently
well-conditioned so that the coefficients remain stable under small
perturbations. Before addressing these questions, normalization fixes the
overall scale of the structural component, allowing hop-level uniqueness to be
defined meaningfully.

\subsection{Normalization and Scale Identifiability}
\label{subsec:app_normalization_identifiability}

For target node $i$, \HARMONIA constructs a propagated representation
$h_i \in \mathbb{R}^d$, where $[h_i]_k$ denotes the contribution associated
with feature $k$. This contribution is written as
\[
[h_i]_k
=
\sum_{j\in V}
\omega_k^{\sharp}(p_{i,j})\,f_k(x_{j,k}).
\]
Here, $f_k(x_{j,k})$ is the response of feature $k$ at source node $j$,
$p_{i,j}$ is the RRWP descriptor that captures the structural relation
between nodes $j$ and $i$ across multiple walk lengths, and
$\omega_k^{\sharp}(p_{i,j})$ determines how strongly the response from $j$ contributes
to target node $i$. Thus, $[h_i]_k$ is the total propagated contribution of
feature $k$ received by node $i$ from all source nodes.

To interpret the feature response and structural influence separately, their
scales must be fixed. Without normalization, only their product is determined.
For any nonzero scalar $a$, replacing $f_k$ by $a f_k$ and $\omega_k^{\sharp}$ by
$a^{-1}\omega_k^{\sharp}$ leaves
$\omega_k^{\sharp}(p_{i,j})f_k(x_{j,k})$ unchanged. Thus, the model prediction is the
same even though the magnitudes assigned to the feature and structural terms
can change arbitrarily. This creates a multiplicative scale ambiguity.

For feature $k$, \HARMONIA models the structural influence between nodes $j$
and $i$ as a weighted combination of random-walk lengths,
\[
\omega_k^{\sharp}(p_{i,j})
=
\sum_{t=0}^{T-1}
\theta_{t,k}(\mathbf M^t)_{i,j},
\]
where $(\mathbf M^t)_{i,j}$ represents the $t$-step random-walk relation
between the two nodes, and $\theta_{t,k}$ specifies how much relative
structural weight feature $k$ assigns to walk length $t$. Collecting these
pairwise structural weights over all node pairs gives the feature-specific
structural operator
\[
\mathbf P_k
:=
\sum_{t=0}^{T-1}
\theta_{t,k}\mathbf M^t.
\]

We constrain the hop weights to the probability simplex,
$\theta_{t,k}\geq 0$ and
$\sum_{t=0}^{T-1}\theta_{t,k}=1$, so they describe a normalized allocation
of structural influence across walk lengths. Since $\mathbf M$ is
row-stochastic, every power $\mathbf M^t$ is also row-stochastic. Therefore,
$\mathbf P_k$, being a convex combination of these random-walk operators, is
also row-stochastic and inherits the same normalization.

\begin{proposition}[Removal of Multiplicative Scale Ambiguity]
\label{prop:app_scale_identifiability}
Suppose \(\mathbf M\) is row-stochastic and
\(\boldsymbol{\theta}_k\) belongs to the probability simplex. Then
\(\mathbf P_k=\sum_{t=0}^{T-1}\theta_{t,k}\mathbf M^t\) is row-stochastic.
Consequently, if another normalized RRWP operator satisfies
\(\widetilde{\mathbf P}_k=a\mathbf P_k\) for some scalar \(a\), then
\(a=1\). Thus the global multiplicative scale ambiguity between the
feature-response function and the topology operator is removed.
\end{proposition}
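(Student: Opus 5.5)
The argument has two parts. First, row-stochasticity of \(\mathbf P_k\) follows from the row-stochasticity of the powers \(\mathbf M^t\). Second, the scale statement follows by applying \(\mathbf P_k\) and \(\widetilde{\mathbf P}_k\) to the all-ones vector \(\mathbf 1\).

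\textbf{Row-stochasticity.} Since \(\mathbf M\) is row-stochastic, it has nonnegative entries and satisfies \(\mathbf M\mathbf 1=\mathbf 1\). By induction on \(t\), each power \(\mathbf M^t\) has nonnegative entries, being a product of nonnegative matrices. It also satisfies
\[
\mathbf M^{t}\mathbf 1=\mathbf M^{t-1}(\mathbf M\mathbf 1)=\mathbf M^{t-1}\mathbf 1=\mathbf 1,
\]
with base case \(\mathbf M^0=\mathbf I\). Because \(\theta_{t,k}\ge 0\), the entries of \(\mathbf P_k=\sum_t\theta_{t,k}\mathbf M^t\) are nonnegative. Because \(\sum_t\theta_{t,k}=1\), linearity gives
\[
\mathbf P_k\mathbf 1=\sum_{t=0}^{T-1}\theta_{t,k}\,\mathbf M^t\mathbf 1=\Big(\sum_{t=0}^{T-1}\theta_{t,k}\Big)\mathbf 1=\mathbf 1 .
\]

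\textbf{Scale.} Any other normalized RRWP operator \(\widetilde{\mathbf P}_k=\sum_t\widetilde\theta_{t,k}\mathbf M^t\), with \(\widetilde{\boldsymbol\theta}_k\) in the simplex, is row-stochastic by the same argument, so \(\widetilde{\mathbf P}_k\mathbf 1=\mathbf 1\). Now suppose \(\widetilde{\mathbf P}_k=a\mathbf P_k\). Applying both sides to \(\mathbf 1\) gives
\[
\mathbf 1=\widetilde{\mathbf P}_k\mathbf 1=a\,\mathbf P_k\mathbf 1=a\,\mathbf 1 .
\]
Reading any single coordinate (the graph has \(n\ge1\) nodes) yields \(a=1\).

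\textbf{Interpretation.} The rescaling \((f_k,\omega_k^\sharp)\mapsto(af_k,\omega_k^\sharp/a)\) would replace \(\mathbf P_k\) by \(a^{-1}\mathbf P_k\). By the previous step, this stays inside the normalized class only when \(a=1\). The normalization therefore pins the scale of \(\omega_k^\sharp\), and with it the scale of \(f_k\).

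\textbf{Possible obstacle.} The step needing care is the implicit assumption that \(\mathbf M\) is well defined, that is, \(\mathbf D\) is invertible so there are no isolated nodes. Without it, rows of \(\mathbf M\) corresponding to zero-degree nodes would not sum to one. I will state that the hypothesis ``\(\mathbf M\) is row-stochastic'' already covers this, for example with the convention of a self-loop at isolated nodes. With that settled, the proof is only linearity together with the identity \(\mathbf M\mathbf 1=\mathbf 1\).
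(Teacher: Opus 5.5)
Your proposal is correct and follows the paper's proof: you show \(\mathbf P_k\mathbf 1=\mathbf 1\) from \(\mathbf M^t\mathbf 1=\mathbf 1\) and \(\sum_t\theta_{t,k}=1\), then read \(a=1\) off \(\mathbf 1=\widetilde{\mathbf P}_k\mathbf 1=a\mathbf P_k\mathbf 1=a\mathbf 1\). Your explicit check that the entries of \(\mathbf P_k\) are nonnegative, and your remark on isolated nodes, are small completions of details the paper leaves implicit.
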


\begin{proof}
Because \(\mathbf M\) is row-stochastic,
\(\mathbf M\mathbf 1=\mathbf 1\), where \(\mathbf 1\) is the all-ones vector.
Repeated application gives
\(\mathbf M^t\mathbf 1=\mathbf 1\) for every \(t\geq0\). Therefore,
\begin{equation}
\begin{aligned}
\mathbf P_k\mathbf 1
&=
\left(
\sum_{t=0}^{T-1}
\theta_{t,k}\mathbf M^t
\right)\mathbf 1
\\
&=
\sum_{t=0}^{T-1}
\theta_{t,k}
\mathbf M^t\mathbf 1
\\
&=
\sum_{t=0}^{T-1}
\theta_{t,k}\mathbf 1
\\
&=
\left(
\sum_{t=0}^{T-1}\theta_{t,k}
\right)\mathbf 1
=
\mathbf 1.
\end{aligned}
\label{eq:app_rrwp_row_normalization}
\end{equation}
Thus every row of \(\mathbf P_k\) sums to one. Now suppose
\(\widetilde{\mathbf P}_k=a\mathbf P_k\)
is also a normalized RRWP operator. Its rows must also sum to one, so
\[
\mathbf 1
=
\widetilde{\mathbf P}_k\mathbf 1
=
a\mathbf P_k\mathbf 1
=
a\mathbf 1.
\]
Hence \(a=1\). A nontrivial rescaling of the structural operator is therefore
incompatible with the normalization constraint.
\end{proof}

The effect of this constraint is easiest to see through a simple example.
Suppose, for one target node, the structural weights assigned to three source
nodes are
\(\omega_k^{\sharp}=(0.6,0.3,0.1)\), and the corresponding feature responses are
\((2,1,-1)\). Their aggregate contribution is
\(0.6(2)+0.3(1)+0.1(-1)=1.4\). Without normalization, the same contribution
could be written using feature responses
\((20,10,-10)\) and structural weights
\((0.06,0.03,0.01)\). The product is unchanged, but the two decompositions
assign entirely different numerical scales to the feature and structural
components. In particular, the second structural vector sums to \(0.1\)
rather than one, so its magnitude no longer admits the same probabilistic
interpretation. Under simplex-normalized RRWP, this rescaled decomposition is
not admissible, leaving the normalized structural weighting as the valid
representation.

This normalization places the topology coefficients on a common scale, so
$\theta_{t,k}$ can be interpreted as the fraction of structural influence that
feature $k$ assigns to walk length $t$. Without normalization, the magnitude
of the structural term is arbitrary because $f_k$ can be increased while
$\omega_k^{\sharp}$ is decreased by the same factor, or vice versa, without changing the
prediction. Normalization removes this freedom in scale. It also makes
$\mathbf P_k$ a convex combination of the walk operators
$\mathbf I,\mathbf M,\ldots,\mathbf M^{T-1}$, a property that will be useful
later in the stability analysis of RRWP aggregation.

However, fixing the overall scale does not guarantee that the individual hop
coefficients are uniquely determined. Two different coefficient vectors
$\boldsymbol{\theta}_k$ and $\boldsymbol{\theta}_k'$ may both lie on the
simplex and still produce the same structural operator when
$\mathbf I,\mathbf M,\ldots,\mathbf M^{T-1}$ are linearly dependent. The next
section studies exactly when this ambiguity occurs through the identifiability
of the hop coefficients.

\subsection{Identifiability of Random-Walk Hop Coefficients}
\label{subsec:app_hop_identifiability}

After fixing the overall scale of the topology function, the next question is
whether the individual random-walk coefficients are uniquely determined. For a
fixed feature $k$, consider the linear RRWP operator
\begin{equation}
\mathbf P_{\boldsymbol{\theta}}
:=
\sum_{t=0}^{T-1}
\theta_t \mathbf M^t,
\qquad
\boldsymbol{\theta}\in\Delta^{T-1},
\label{eq:app_rrwp_operator}
\end{equation}
where
\[
\Delta^{T-1}
=
\left\{
\boldsymbol{\theta}\in\mathbb R^T:
\theta_t\geq 0,\,
\sum_t \theta_t=1
\right\}
\]
is the probability simplex. Each coefficient $\theta_t$ represents the
relative contribution of the $t$-step random-walk operator. For this hop-level
interpretation to be meaningful, each structural operator should correspond to
a unique coefficient vector. If two different coefficient vectors produce the
same structural operator, then the contribution assigned to each hop is not
uniquely identifiable.

\begin{theorem}[Identifiability of Random-Walk Hop Coefficients]
\label{thm:app_hop_identifiability}
Assume that \(\mathbf M\) is row-stochastic. The mapping
\(\boldsymbol{\theta}\mapsto\mathbf P_{\boldsymbol{\theta}}\) is injective on
\(\Delta^{T-1}\) if and only if the matrices
\(\mathbf I,\mathbf M,\ldots,\mathbf M^{T-1}\) are linearly independent.
Equivalently,
$\mathbf P_{\boldsymbol{\theta}}
=
\mathbf P_{\boldsymbol{\theta}'} \text{ then }
\boldsymbol{\theta}
=
\boldsymbol{\theta}'$
for all
\(\boldsymbol{\theta},\boldsymbol{\theta}'\in\Delta^{T-1}\)
exactly when no nontrivial linear combination of the first \(T\) random-walk
powers vanishes.
\end{theorem}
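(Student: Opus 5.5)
The plan is to prove the two directions separately by reducing injectivity on \(\Delta^{T-1}\) to a statement about the kernel of the linear map
\[
L:\mathbb R^T\to\mathbb R^{n\times n},\qquad L(\mathbf c)=\sum_{t=0}^{T-1}c_t\mathbf M^t .
\]
We have \(\mathbf P_{\boldsymbol\theta}=L(\boldsymbol\theta)\), so \(\mathbf P_{\boldsymbol\theta}=\mathbf P_{\boldsymbol\theta'}\) holds exactly when \(\boldsymbol\theta-\boldsymbol\theta'\in\ker L\). Linear independence of \(\mathbf I,\mathbf M,\ldots,\mathbf M^{T-1}\) is, by definition, the statement \(\ker L=\{\mathbf 0\}\).

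\emph{Sufficiency.} Assume the powers are linearly independent. If \(\mathbf P_{\boldsymbol\theta}=\mathbf P_{\boldsymbol\theta'}\) for \(\boldsymbol\theta,\boldsymbol\theta'\in\Delta^{T-1}\), then \(L(\boldsymbol\theta-\boldsymbol\theta')=\mathbf 0\). Independence forces \(\boldsymbol\theta=\boldsymbol\theta'\). This direction does not use the simplex or stochasticity at all.

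\emph{Necessity.} This is the direction I expect to be the only real obstacle, because injectivity is assumed only on the simplex, not on all of \(\mathbb R^T\). A kernel vector \(\mathbf c\neq\mathbf 0\) must therefore be realized as a difference of two simplex points. Such a difference necessarily has coordinates summing to zero, so the first step is to show that every kernel vector has this property. Row-stochasticity supplies it: as in Proposition~\ref{prop:app_scale_identifiability}, \(\mathbf M^t\mathbf 1=\mathbf 1\) for all \(t\), so
\[
\mathbf 0=L(\mathbf c)\mathbf 1=\Bigl(\sum_{t}c_t\Bigr)\mathbf 1,
\]
and hence \(\sum_t c_t=0\).

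With that in hand, I would take the barycenter \(\boldsymbol\theta=\tfrac1T\mathbf 1\in\Delta^{T-1}\), which lies in the relative interior, and set
\[
\boldsymbol\theta'=\boldsymbol\theta+\epsilon\mathbf c,\qquad 0<\epsilon<\frac{1}{T\max_t|c_t|}.
\]
Then every entry of \(\boldsymbol\theta'\) is positive, and its entries sum to \(1+\epsilon\sum_t c_t=1\), so \(\boldsymbol\theta'\in\Delta^{T-1}\). Moreover \(\boldsymbol\theta'\neq\boldsymbol\theta\), yet \(\mathbf P_{\boldsymbol\theta'}=\mathbf P_{\boldsymbol\theta}+\epsilon L(\mathbf c)=\mathbf P_{\boldsymbol\theta}\). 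This contradicts injectivity, which completes the contrapositive.

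Finally, the ``equivalently'' clause in the statement is a restatement of the same equivalence: injectivity on \(\Delta^{T-1}\) holds if and only if no nontrivial combination \(\sum_t c_t\mathbf M^t\) vanishes. I would remark that the stochasticity hypothesis is used only in the necessity direction, to guarantee \(\sum_t c_t=0\).
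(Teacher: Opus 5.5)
Your proposal is correct and follows essentially the same route as the paper. Sufficiency comes from subtracting the two operators and invoking linear independence. Necessity uses $\mathbf M^t\mathbf 1=\mathbf 1$ to force $\sum_t c_t=0$ for any kernel vector, and then perturbs the barycenter $\tfrac1T\mathbf 1$ along that vector. Your explicit bound $\epsilon<1/(T\max_t|c_t|)$ just makes concrete the paper's ``sufficiently small $\varepsilon$.''
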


\begin{proof}
Suppose first that
\(\mathbf I,\mathbf M,\ldots,\mathbf M^{T-1}\)
are linearly independent. If two coefficient vectors satisfy
\(\mathbf P_{\boldsymbol{\theta}}
=
\mathbf P_{\boldsymbol{\theta}'}\), then subtracting the two operators gives

\begin{equation}
\sum_{t=0}^{T-1}
(\theta_t-\theta_t')
\mathbf M^t
=
\mathbf 0.
\label{eq:app_hop_difference}
\end{equation}

By linear independence, every coefficient in this linear combination must be
zero. Hence
\(\theta_t-\theta_t'=0\) for every \(t\), and therefore
\(\boldsymbol{\theta}=\boldsymbol{\theta}'\). For the converse, suppose that
\(\mathbf I,\mathbf M,\ldots,\mathbf M^{T-1}\)
are linearly dependent. Then there exists a nonzero vector
\(\mathbf a=(a_0,\ldots,a_{T-1})\) such that

\begin{equation}
\sum_{t=0}^{T-1}
a_t\mathbf M^t
=
\mathbf 0.
\label{eq:app_hop_linear_dependence}
\end{equation}

Because \(\mathbf M\) is row-stochastic,
\(\mathbf M^t\mathbf 1=\mathbf 1\) for every \(t\). Multiplying
Eq.~\ref{eq:app_hop_linear_dependence} by \(\mathbf 1\) therefore gives

\begin{equation}
\mathbf 0
=
\sum_{t=0}^{T-1}
a_t\mathbf M^t\mathbf 1
=
\left(
\sum_{t=0}^{T-1}a_t
\right)\mathbf 1,
\end{equation}

so necessarily \(\sum_t a_t=0\). This property is important because it means
that moving in the direction \(\mathbf a\) preserves the simplex
normalization. Choose any coefficient vector in the interior of the simplex, for example
\(\bar{\boldsymbol{\theta}}
=(1/T,\ldots,1/T)\). Since all of its entries are strictly positive, there
exists a sufficiently small \(\varepsilon>0\) such that both
\(\bar{\boldsymbol{\theta}}\) and
\(\bar{\boldsymbol{\theta}}+\varepsilon\mathbf a\)
remain in \(\Delta^{T-1}\). They are distinct because
\(\mathbf a\neq0\), but

\begin{equation}
\begin{aligned}
\mathbf P_{\bar{\boldsymbol{\theta}}
+\varepsilon\mathbf a}
&=
\sum_{t=0}^{T-1}
(\bar{\theta}_t+\varepsilon a_t)\mathbf M^t
\\
&=
\sum_{t=0}^{T-1}
\bar{\theta}_t\mathbf M^t
+
\varepsilon
\sum_{t=0}^{T-1}
a_t\mathbf M^t
\\
&=
\mathbf P_{\bar{\boldsymbol{\theta}}}.
\end{aligned}
\end{equation}

Thus two distinct simplex coefficient vectors produce the same structural
operator, so the hop coefficients are not identifiable.
\end{proof}

The theorem shows that normalization alone is insufficient for hop-level
interpretability. The simplex fixes the overall scale of the topology
function, but uniqueness of the individual coefficients depends on the graph
through the algebraic relationships among
\(\mathbf I,\mathbf M,\mathbf M^2,\ldots\). If two walk lengths induce the
same, or an exactly redundant, structural operator, no prediction based on
their weighted sum can determine how much importance should be assigned to
one hop rather than the other.

\paragraph{A concrete example.}
Consider the two-node transition matrix

\[
\mathbf M
=
\begin{bmatrix}
1/2 & 1/2\\
1/2 & 1/2
\end{bmatrix}.
\]

After one step, the walker is already uniformly distributed over the two
nodes, and applying another transition changes nothing. Indeed,
\(\mathbf M^2=\mathbf M\). For \(T=3\), the RRWP operator becomes

\[
\mathbf P_{\boldsymbol{\theta}}
=
\theta_0\mathbf I
+
\theta_1\mathbf M
+
\theta_2\mathbf M^2
=
\theta_0\mathbf I
+
(\theta_1+\theta_2)\mathbf M.
\]

Hence the coefficient vectors
\(\boldsymbol{\theta}=(0.2,0.3,0.5)\) and
\(\boldsymbol{\theta}'=(0.2,0.6,0.2)\) produce exactly the same operator,
because both allocate total weight \(0.8\) to
\(\mathbf M\). One explanation would assign weight \(0.5\) to the two-hop
effect, while the other assigns only \(0.2\), yet the prediction mechanism is
identical. The distinction between one-hop and two-hop influence is therefore
not identifiable on this graph.

Theorem~\ref{thm:app_hop_identifiability} shows that hop coefficients
are identifiable only while the walk operators
$\mathbf I,\mathbf M,\ldots,\mathbf M^{T-1}$ remain linearly
independent. To determine how long this independence can hold,
we use the minimal polynomial $\mu_{\mathbf M}$ of $\mathbf M$,
defined as the monic polynomial of smallest degree satisfying
$\mu_{\mathbf M}(\mathbf M)=\mathbf 0$. Let
$q_M=\deg(\mu_{\mathbf M})$. By minimality,
$\mathbf I,\mathbf M,\ldots,\mathbf M^{q_M-1}$ are linearly
independent, whereas $\mathbf M^{q_M}$ can be written as a linear
combination of the preceding powers. Thus, $q_M$ gives the algebraic
limit on the number of walk-power operators that can remain
independently distinguishable, and therefore the maximum number
of RRWP hop coefficients that can be identifiable.

\begin{corollary}[Maximum Number of Algebraically Identifiable Hop Coefficients]
\label{cor:app_minimal_polynomial}
Assume that $\mathbf M$ is row-stochastic, and let
$q_M=\deg(\mu_{\mathbf M})$ be the degree of its minimal polynomial.
Then the coefficients of a $T$-term linear RRWP operator are
identifiable for $T\leq q_M$. For $T>q_M$, the mapping from simplex
coefficients to structural operators is not injective.
\end{corollary}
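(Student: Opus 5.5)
The plan is to reduce the corollary to Theorem~\ref{thm:app_hop_identifiability}, which says that injectivity of \(\boldsymbol{\theta}\mapsto\mathbf P_{\boldsymbol{\theta}}\) on \(\Delta^{T-1}\) is equivalent to linear independence of \(\mathbf I,\mathbf M,\ldots,\mathbf M^{T-1}\). It therefore suffices to show two things. First, these \(T\) matrices are linearly independent whenever \(T\le q_M\). Second, they are linearly dependent whenever \(T>q_M\). Both follow from the defining property of the minimal polynomial.

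For \(T\le q_M\), I argue by contradiction. Suppose \(\sum_{t=0}^{T-1}a_t\mathbf M^t=\mathbf 0\) with \(\mathbf a\neq\mathbf 0\). Then the polynomial \(p(x)=\sum_{t=0}^{T-1}a_tx^t\) is nonzero, annihilates \(\mathbf M\), and has degree at most \(T-1\le q_M-1\). Dividing by its leading coefficient gives a monic annihilating polynomial of degree strictly less than \(q_M\). This contradicts the minimality of \(\mu_{\mathbf M}\). (If \(p\) were a nonzero constant, then \(a_0\mathbf I=\mathbf 0\), which is impossible.) Hence the family is independent, and Theorem~\ref{thm:app_hop_identifiability} gives injectivity.

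For \(T>q_M\), write \(\mu_{\mathbf M}(x)=x^{q_M}+\sum_{t<q_M}c_tx^t\). Since \(q_M\le T-1\), the powers \(\mathbf I,\ldots,\mathbf M^{q_M}\) all belong to the family \(\mathbf I,\ldots,\mathbf M^{T-1}\). The identity \(\mu_{\mathbf M}(\mathbf M)=\mathbf 0\) is then a nontrivial vanishing linear combination, nontrivial because the coefficient of \(\mathbf M^{q_M}\) is \(1\). Padding with zero coefficients for \(t>q_M\) gives linear dependence of the \(T\) operators, and the converse direction of Theorem~\ref{thm:app_hop_identifiability} yields non-injectivity on the simplex.

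No step is a real obstacle. The only point needing care is that Theorem~\ref{thm:app_hop_identifiability} already handles the simplex constraint: in its converse direction it shows that any dependence vector has zero coefficient sum, so an interior perturbation stays in \(\Delta^{T-1}\). For that reason, I do not need to verify separately that the minimal-polynomial relation is compatible with the normalization. One could check directly that \(\mu_{\mathbf M}(1)=0\), since \(1\) is an eigenvalue of the row-stochastic \(\mathbf M\), but this is not required.
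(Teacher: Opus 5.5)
Your proposal is correct and follows essentially the same route as the paper. In both, minimality of $\mu_{\mathbf M}$ gives linear independence of $\mathbf I,\ldots,\mathbf M^{T-1}$ for $T\le q_M$, the relation $\mu_{\mathbf M}(\mathbf M)=\mathbf 0$ gives a nontrivial dependence for $T>q_M$, and each direction is then closed by Theorem~\ref{thm:app_hop_identifiability}; your remark that $\mu_{\mathbf M}(1)=0$, i.e.\ that the dependence vector has zero coefficient sum, is a valid consistency check but, as you say, is already handled inside that theorem.
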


\begin{proof}
By the minimality of $\mu_{\mathbf M}$, no nonzero polynomial
of degree smaller than $q_M$ annihilates $\mathbf M$. Hence
$\mathbf I,\mathbf M,\ldots,\mathbf M^{q_M-1}$ are linearly
independent. If $T\leq q_M$, then
$\mathbf I,\ldots,\mathbf M^{T-1}$ form a subset of this independent
sequence, so Theorem~\ref{thm:app_hop_identifiability} gives
identifiability.

If $T>q_M$, the relation $\mu_{\mathbf M}(\mathbf M)=\mathbf 0$
gives a nontrivial linear dependence among
$\mathbf I,\mathbf M,\ldots,\mathbf M^{q_M}$, all of which are
included among $\mathbf I,\ldots,\mathbf M^{T-1}$.
The converse direction of
Theorem~\ref{thm:app_hop_identifiability} therefore gives
non-identifiability.
\end{proof}

Corollary~\ref{cor:app_minimal_polynomial} expresses the
identifiability limit through the degree of the minimal polynomial.
When $\mathbf M$ is diagonalizable, this algebraic limit can be
interpreted directly through its spectrum. Write
$\mathbf M=\mathbf V\boldsymbol{\Lambda}\mathbf V^{-1}$,
where the diagonal entries of $\boldsymbol{\Lambda}$ are the
eigenvalues of $\mathbf M$. Then
\[
\mathbf P_{\boldsymbol{\theta}}
=
\mathbf V
\left(
\sum_{t=0}^{T-1}
\theta_t\boldsymbol{\Lambda}^t
\right)
\mathbf V^{-1}.
\]
Thus, the hop coefficients define the polynomial
$g_{\boldsymbol{\theta}}(\lambda)
=\sum_{t=0}^{T-1}\theta_t\lambda^t$, and the structural operator
depends on this polynomial only through its values at the
eigenvalues of $\mathbf M$. Consequently, the number of distinct
eigenvalues determines how many walk-power directions can remain
independently distinguishable, leading to the following spectral
characterization.

\begin{corollary}[Spectral Characterization]
\label{cor:app_spectral_hop_identifiability}
Assume that $\mathbf M$ is row-stochastic and diagonalizable.
Then $q_M=\deg(\mu_{\mathbf M})$ equals the number of distinct
eigenvalues of $\mathbf M$, and the first $T$ random-walk powers
are linearly independent exactly when $T\leq q_M$.
Hence a linear RRWP parameterization with $T$ hop coefficients
is identifiable when $T\leq q_M$ and is non-identifiable when
$T>q_M$.
\end{corollary}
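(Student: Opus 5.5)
The plan is to prove the standard fact that, for a diagonalizable matrix, the minimal polynomial is \(\prod_{r}(\lambda-\mu_r)\) over the distinct eigenvalues \(\mu_1,\ldots,\mu_s\). Then we read off the identifiability statement from Corollary~\ref{cor:app_minimal_polynomial} and Theorem~\ref{thm:app_hop_identifiability}. We write \(\mathbf M=\mathbf V\boldsymbol\Lambda\mathbf V^{-1}\), so that for any polynomial \(g\) we have \(g(\mathbf M)=\mathbf V g(\boldsymbol\Lambda)\mathbf V^{-1}\). Here \(g(\boldsymbol\Lambda)\) is diagonal with entries \(g(\lambda_i)\). Hence \(g(\mathbf M)=\mathbf 0\) if and only if \(g\) vanishes at every eigenvalue of \(\mathbf M\), i.e. at each of \(\mu_1,\ldots,\mu_s\).

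From this equivalence we obtain both inequalities for \(q_M\).

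\textbf{Upper bound.} The monic polynomial \(p(\lambda)=\prod_{r=1}^{s}(\lambda-\mu_r)\) vanishes at all eigenvalues. Therefore \(p(\mathbf M)=\mathbf 0\) and \(q_M\le s\).

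\textbf{Lower bound.} Any nonzero polynomial annihilating \(\mathbf M\) must have the \(s\) distinct roots \(\mu_1,\ldots,\mu_s\). It therefore has degree at least \(s\), so \(q_M\ge s\).

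Together these give \(q_M=s\). This step also makes explicit the alternative viewpoint mentioned before the corollary: a linear relation \(\sum_{t<T}a_t\mathbf M^t=\mathbf 0\) is the same as the polynomial \(g_{\mathbf a}\) vanishing on the \(s\) distinct eigenvalues. For \(T\le s\), this forces \(\mathbf a=\mathbf 0\) by the invertibility of the \(s\times T\) Vandermonde matrix \([\mu_r^t]\), which has full column rank. This gives a second, direct proof of independence.

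Next we establish the linear-independence claim. For \(T\le q_M\), the powers \(\mathbf I,\ldots,\mathbf M^{T-1}\) are a sub-collection of the independent family \(\mathbf I,\ldots,\mathbf M^{q_M-1}\). For \(T>q_M\), the relation \(\mu_{\mathbf M}(\mathbf M)=\mathbf 0\) is a nontrivial dependence among \(\mathbf I,\ldots,\mathbf M^{q_M}\), and all of these lie within the first \(T\) powers. Both directions are exactly the argument of Corollary~\ref{cor:app_minimal_polynomial}, so we simply cite it. Identifiability for \(T\le q_M\) and non-identifiability for \(T>q_M\) then follow immediately from Theorem~\ref{thm:app_hop_identifiability}. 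Row-stochasticity is used only through that theorem: it guarantees that any dependence direction sums to zero and so preserves the simplex.

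There is no substantial obstacle here. The only point needing care is that diagonalizability is genuinely used: without it, the minimal polynomial can contain repeated factors and \(q_M\) can exceed the number of distinct eigenvalues. We would flag this explicitly, and note that the eigenvalues may be complex, so the Vandermonde and polynomial-root arguments are carried out over \(\mathbb C\). Linear independence over \(\mathbb C\) of the real matrices \(\mathbf M^t\) coincides with linear independence over \(\mathbb R\), so nothing is lost.
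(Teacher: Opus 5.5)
Your proposal is correct and takes the same route as the paper: identify $\mu_{\mathbf M}$ as the product of one linear factor per distinct eigenvalue, then invoke Corollary~\ref{cor:app_minimal_polynomial} and, through it, Theorem~\ref{thm:app_hop_identifiability}. The only difference is that you prove this minimal-polynomial fact via $g(\mathbf M)=\mathbf V g(\boldsymbol\Lambda)\mathbf V^{-1}$, whereas the paper simply asserts it as standard.
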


\begin{proof}
For a diagonalizable matrix, the minimal polynomial has one
linear factor for each distinct eigenvalue. Writing these
distinct eigenvalues as $\lambda_1,\ldots,\lambda_{q_M}$ gives
\[
\mu_{\mathbf M}(z)
=
\prod_{r=1}^{q_M}(z-\lambda_r),
\]
which has degree $q_M$. The result follows directly from
Corollary~\ref{cor:app_minimal_polynomial}.
\end{proof}

Corollary~\ref{cor:app_minimal_polynomial} expresses the identifiability limit through the degree of the
minimal polynomial. When $\mathbf M$ is diagonalizable, this algebraic limit
can be interpreted directly through its spectrum. Write
$\mathbf M=\mathbf V\boldsymbol{\Lambda}\mathbf V^{-1}$, where the diagonal
entries of $\boldsymbol{\Lambda}$ are the eigenvalues of $\mathbf M$. Then
\[
\mathbf P_{\boldsymbol{\theta}}
=
\mathbf V
\left(
\sum_{t=0}^{T-1}
\theta_t\boldsymbol{\Lambda}^t
\right)
\mathbf V^{-1}.
\]
Thus, the hop coefficients define the polynomial
$g_{\boldsymbol{\theta}}(\lambda)
=\sum_{t=0}^{T-1}\theta_t\lambda^t$, and the structural operator depends on
this polynomial only through its values at the eigenvalues of $\mathbf M$.
Consequently, the number of distinct eigenvalues determines how many
walk-power directions can remain independently distinguishable, leading to
the following spectral characterization.

\begin{corollary}[Spectral Characterization]
\label{cor:app_spectral_hop_identifiability}
If \(\mathbf M\) is diagonalizable and has \(q\) distinct eigenvalues, then
the first \(T\) random-walk powers are linearly independent exactly when
\(T\leq q\). Hence a linear RRWP parameterization with \(T\) hop coefficients
is identifiable when \(T\leq q\) and is non-identifiable when \(T>q\).
\end{corollary}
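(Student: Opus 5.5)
The plan is to reduce the statement to Corollary~\ref{cor:app_minimal_polynomial}. It then suffices to show that, for a diagonalizable \(\mathbf M\) with \(q\) distinct eigenvalues \(\lambda_1,\ldots,\lambda_q\), the minimal polynomial has degree exactly \(q\). Write \(\mathbf M=\mathbf V\boldsymbol{\Lambda}\mathbf V^{-1}\). For any polynomial \(g\), we have \(g(\mathbf M)=\mathbf V g(\boldsymbol{\Lambda})\mathbf V^{-1}\), and \(g(\boldsymbol{\Lambda})\) is diagonal with entries \(g(\lambda_r)\). Hence \(g(\mathbf M)=\mathbf 0\) if and only if \(g\) vanishes at every eigenvalue.

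\textbf{Step 1: \(\mu_{\mathbf M}(z)=\prod_{r=1}^{q}(z-\lambda_r)\).} This polynomial annihilates \(\mathbf M\) by the observation above. Conversely, any annihilating polynomial has all \(q\) distinct eigenvalues as roots, so its degree is at least \(q\). Therefore \(q_M=q\).

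\textbf{Step 2: linear independence exactly when \(T\le q\).} A nontrivial relation \(\sum_{t=0}^{T-1}a_t\mathbf M^t=\mathbf 0\) is the same as a nonzero polynomial \(g(z)=\sum_t a_t z^t\) of degree at most \(T-1\) with \(g(\mathbf M)=\mathbf 0\).
\begin{itemize}
\item If \(T\le q\), such a \(g\) would have \(q\) distinct roots while having degree at most \(q-1\), so it is zero. Equivalently, the Vandermonde matrix \([\lambda_r^t]_{r\le q,\,t<T}\) has full column rank \(T\).
\item If \(T>q\), the coefficients of \(\mu_{\mathbf M}\), padded with zeros, give a nontrivial dependence among \(\mathbf I,\ldots,\mathbf M^{T-1}\).
\end{itemize}

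\textbf{Step 3: identifiability.} The identifiability and non-identifiability statements then follow directly from Theorem~\ref{thm:app_hop_identifiability}, or equivalently from Corollary~\ref{cor:app_minimal_polynomial} with \(q_M=q\). The row-stochasticity assumed throughout is what that theorem needs to convert a linear dependence into two distinct simplex vectors giving the same operator.

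The only delicate point is the "only if" direction of Step 2. It relies on diagonalizability: for a non-diagonalizable \(\mathbf M\), repeated eigenvalues in Jordan blocks raise \(q_M\) above the number of distinct eigenvalues, so the count \(q\) would be wrong. I would therefore state explicitly that \(g(\mathbf M)=\mathbf 0\) reduces to vanishing on the spectrum precisely because \(\boldsymbol{\Lambda}\) is diagonal.
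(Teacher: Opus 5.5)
Your proposal is correct and follows essentially the same route as the paper, which likewise notes that for diagonalizable \(\mathbf M\) the minimal polynomial is \(\mu_{\mathbf M}(z)=\prod_{r=1}^{q}(z-\lambda_r)\) of degree \(q\) and then invokes Corollary~\ref{cor:app_minimal_polynomial}. You go further by actually proving that minimal-polynomial fact and by noting that row-stochasticity, which is implicit here but not restated, is what the non-identifiability direction needs.
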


\begin{proof}
For a diagonalizable matrix, the minimal polynomial has one linear factor for
each distinct eigenvalue,
\(\mu_{\mathbf M}(z)=\prod_{r=1}^{q}(z-\lambda_r)\), and therefore has degree
\(q\). The result follows directly from
Corollary~\ref{cor:app_minimal_polynomial}.
\end{proof}

This spectral view provides a direct interpretation of the identifiability
limit. When the graph has many distinct eigenvalues, the walk operators
$\mathbf I,\mathbf M,\mathbf M^2,\ldots$ can exhibit more distinct behaviors,
making the effects of different walk lengths easier to distinguish. In
contrast, repeated eigenvalues reduce the number of independent walk
operators. Increasing $T$ beyond this limit may therefore introduce additional
hop coefficients whose individual effects cannot be uniquely identified.

Exact identifiability, however, is only the first requirement for a reliable
explanation. Even when the walk operators remain linearly independent, they
can become very similar after many random-walk steps. In this case, two
substantially different hop-weight vectors may produce nearly identical
structural operators, so a small perturbation can lead to a large change in
the estimated hop weights. We next study this stability aspect of
identifiability.

\subsection{Approximate Identifiability and Spectral Conditioning}
\label{subsec:app_approx_identifiability}

Exact identifiability guarantees that two distinct hop-weight vectors cannot
produce exactly the same structural operator. For reliable interpretation,
however, exact uniqueness is not sufficient. Two different coefficient vectors
may induce operators that are extremely close, so a small perturbation in the
learned operator can produce a large change in the recovered hop weights. We
therefore study the conditioning of the map
\(\boldsymbol{\theta}\mapsto
\sum_{t=0}^{T-1}\theta_t\mathbf M^t\).

Because both
\(\boldsymbol{\theta}\) and \(\boldsymbol{\theta}'\) lie on the probability
simplex, their difference
\(\boldsymbol{\delta}
:=\boldsymbol{\theta}-\boldsymbol{\theta}'\)
satisfies \(\mathbf 1^\top\boldsymbol{\delta}=0\). Define the tangent space
\(\mathcal T
:=\{\boldsymbol{\delta}\in\mathbb R^T:
\mathbf 1^\top\boldsymbol{\delta}=0\}\)
and the restricted conditioning constant

\begin{equation}
\gamma_T
:=
\inf_{\substack{
\boldsymbol{\delta}\in\mathcal T\\
\|\boldsymbol{\delta}\|_2=1}}
\left\|
\sum_{t=0}^{T-1}
\delta_t\mathbf M^t
\right\|_F.
\label{eq:app_conditioning_constant}
\end{equation}

The quantity \(\gamma_T\) measures the smallest observable change in the
structural operator produced by a unit change in the normalized hop
coefficients. Exact identifiability corresponds to \(\gamma_T>0\), whereas a
small positive value indicates an ill-conditioned decomposition: the
coefficients remain mathematically unique, but substantially different hop
allocations can generate nearly indistinguishable structural operators.

\begin{theorem}[Stability of Hop-Coefficient Recovery]
\label{thm:app_hop_stability}
Let \(\boldsymbol{\theta}^\star\in\Delta^{T-1}\) denote the true hop
coefficients and suppose an estimated structural operator satisfies
\[
\widehat{\mathbf P}
=
\mathbf P_{\boldsymbol{\theta}^\star}
+
\mathbf E,
\qquad
\|\mathbf E\|_F\leq\varepsilon.
\]
Let
\(\widehat{\boldsymbol{\theta}}\in\Delta^{T-1}\)
minimize
\(\|\widehat{\mathbf P}
-\mathbf P_{\boldsymbol{\theta}}\|_F\)
over the simplex. If \(\gamma_T>0\), then

\begin{equation}
\left\|
\widehat{\boldsymbol{\theta}}
-
\boldsymbol{\theta}^\star
\right\|_2
\leq
\frac{2\varepsilon}{\gamma_T}.
\label{eq:app_coefficient_stability}
\end{equation}
\end{theorem}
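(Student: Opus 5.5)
The argument is the standard two-step bound for projection-type estimators: an optimality inequality controls the operator-level discrepancy, and the restricted conditioning constant \(\gamma_T\) converts that discrepancy back to coefficient space.

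Let \(\boldsymbol{\delta}:=\widehat{\boldsymbol{\theta}}-\boldsymbol{\theta}^\star\). Both vectors lie in \(\Delta^{T-1}\), so \(\mathbf 1^\top\boldsymbol{\delta}=0\) and \(\boldsymbol{\delta}\in\mathcal T\). By linearity of the map \(\boldsymbol{\theta}\mapsto\mathbf P_{\boldsymbol{\theta}}\),
\[
\mathbf P_{\widehat{\boldsymbol{\theta}}}-\mathbf P_{\boldsymbol{\theta}^\star}=\sum_{t=0}^{T-1}\delta_t\mathbf M^t .
\]
If \(\boldsymbol{\delta}=\mathbf 0\) there is nothing to prove. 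Otherwise, apply the definition of \(\gamma_T\) in Eq.~\ref{eq:app_conditioning_constant} to the unit vector \(\boldsymbol{\delta}/\|\boldsymbol{\delta}\|_2\in\mathcal T\), and use homogeneity of the Frobenius norm. This gives the lower bound
\[
\Bigl\|\sum_{t}\delta_t\mathbf M^t\Bigr\|_F\ \ge\ \gamma_T\|\boldsymbol{\delta}\|_2 .
\]

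Next, bound the same operator difference from above. By the triangle inequality,
\[
\|\mathbf P_{\widehat{\boldsymbol{\theta}}}-\mathbf P_{\boldsymbol{\theta}^\star}\|_F\le\|\mathbf P_{\widehat{\boldsymbol{\theta}}}-\widehat{\mathbf P}\|_F+\|\widehat{\mathbf P}-\mathbf P_{\boldsymbol{\theta}^\star}\|_F .
\]
The second term equals \(\|\mathbf E\|_F\le\varepsilon\). For the first term, \(\boldsymbol{\theta}^\star\) is a feasible point of the simplex-constrained problem, so minimality of \(\widehat{\boldsymbol{\theta}}\) gives
\[
\|\widehat{\mathbf P}-\mathbf P_{\widehat{\boldsymbol{\theta}}}\|_F\le\|\widehat{\mathbf P}-\mathbf P_{\boldsymbol{\theta}^\star}\|_F\le\varepsilon .
\]
Together, the operator difference is at most \(2\varepsilon\).

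Combining the two bounds gives \(\gamma_T\|\boldsymbol{\delta}\|_2\le 2\varepsilon\). Dividing by \(\gamma_T>0\) yields Eq.~\ref{eq:app_coefficient_stability}.

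The only step needing care is the use of \(\gamma_T\). The infimum is taken only over the tangent space \(\mathcal T\), not over all of \(\mathbb R^T\), so we must check that the error direction lies in \(\mathcal T\). This is where the simplex constraint on both \(\boldsymbol{\theta}^\star\) and \(\widehat{\boldsymbol{\theta}}\) is essential.

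A minimizer \(\widehat{\boldsymbol{\theta}}\) exists because the simplex is compact and the objective is continuous. The argument does not need it to be unique: it applies to any minimizer.
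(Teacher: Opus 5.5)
Your proof is correct and follows the paper's argument: feasibility of $\boldsymbol{\theta}^\star$ and the triangle inequality bound $\|\mathbf P_{\widehat{\boldsymbol{\theta}}}-\mathbf P_{\boldsymbol{\theta}^\star}\|_F$ by $2\varepsilon$. Then, because $\boldsymbol{\delta}\in\mathcal T$, the definition of $\gamma_T$ turns this into the coefficient bound. You also spell out three points the paper leaves implicit: rescaling $\boldsymbol{\delta}$ to unit norm, the trivial case $\boldsymbol{\delta}=\mathbf 0$, and existence of a minimizer by compactness of the simplex.
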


\begin{proof}
Since \(\boldsymbol{\theta}^\star\) is feasible for the optimization defining
\(\widehat{\boldsymbol{\theta}}\),

\[
\left\|
\widehat{\mathbf P}
-
\mathbf P_{\widehat{\boldsymbol{\theta}}}
\right\|_F
\leq
\left\|
\widehat{\mathbf P}
-
\mathbf P_{\boldsymbol{\theta}^\star}
\right\|_F
=
\|\mathbf E\|_F
\leq\varepsilon.
\]

Using the triangle inequality,
\begin{equation}
\begin{aligned}
\left\|
\mathbf P_{\widehat{\boldsymbol{\theta}}}
-
\mathbf P_{\boldsymbol{\theta}^\star}
\right\|_F
&\leq
\left\|
\mathbf P_{\widehat{\boldsymbol{\theta}}}
-
\widehat{\mathbf P}
\right\|_F
+
\left\|
\widehat{\mathbf P}
-
\mathbf P_{\boldsymbol{\theta}^\star}
\right\|_F
\\
&\leq
2\varepsilon.
\end{aligned}
\label{eq:app_operator_perturbation_bound}
\end{equation}

Now define
\(\boldsymbol{\delta}
=
\widehat{\boldsymbol{\theta}}
-
\boldsymbol{\theta}^\star\).
Since both coefficient vectors lie on the simplex,
\(\mathbf 1^\top\boldsymbol{\delta}=0\), so
\(\boldsymbol{\delta}\in\mathcal T\). Moreover,
\[
\mathbf P_{\widehat{\boldsymbol{\theta}}}
-
\mathbf P_{\boldsymbol{\theta}^\star}
=
\sum_{t=0}^{T-1}
\delta_t\mathbf M^t.
\]

By the definition of \(\gamma_T\), $\left\|
\mathbf P_{\widehat{\boldsymbol{\theta}}}
-
\mathbf P_{\boldsymbol{\theta}^\star}
\right\|_F
\geq
\gamma_T
\|\boldsymbol{\delta}\|_2.$ Combining this inequality with
Eq.~\ref{eq:app_operator_perturbation_bound} yields
\(\gamma_T
\|\widehat{\boldsymbol{\theta}}
-\boldsymbol{\theta}^\star\|_2
\leq2\varepsilon\), which proves
Eq.~\ref{eq:app_coefficient_stability}.
\end{proof}

The theorem gives \(\gamma_T\) a direct interpretability meaning. If
\(\gamma_T\) is large, a small perturbation of the structural operator can
produce only a small perturbation of the recovered hop weights. If
\(\gamma_T\) is close to zero, the same operator-level error may correspond to
a much larger change in the explanation. Thus, two trained models may implement
nearly identical structural transformations while assigning noticeably
different importance to individual walk lengths.

The conditioning constant can be computed from the geometry of the random-walk
operators. Define the Gram matrix
\(\mathbf G_T\in\mathbb R^{T\times T}\) by
\([\mathbf G_T]_{s,t}
=
\langle\mathbf M^s,\mathbf M^t\rangle_F\).
For any coefficient perturbation
\(\boldsymbol{\delta}\),

\begin{equation}
\left\|
\sum_{t=0}^{T-1}\delta_t\mathbf M^t
\right\|_F^2
=
\boldsymbol{\delta}^{\top}
\mathbf G_T
\boldsymbol{\delta}.
\label{eq:app_walk_gram}
\end{equation}

If \(\mathbf Q\in\mathbb R^{T\times(T-1)}\) has orthonormal columns spanning
\(\mathcal T\), then $\gamma_T^2
=
\lambda_{\min}
\left(
\mathbf Q^\top\mathbf G_T\mathbf Q
\right).$ Hence approximate identifiability is controlled by the smallest eigenvalue of
the walk-power Gram matrix after removing the irrelevant all-ones direction.
When this eigenvalue is small, some normalized redistribution of hop weights
changes the resulting structural operator only weakly.

A spectral interpretation is especially transparent when \(\mathbf M\) is
normal. Let
\(\mathbf M=\mathbf U\boldsymbol{\Lambda}\mathbf U^\ast\), with eigenvalues
\(\lambda_1,\ldots,\lambda_n\). Then

\begin{equation}
\left\|
\sum_{t=0}^{T-1}
\delta_t\mathbf M^t
\right\|_F^2
=
\sum_{r=1}^{n}
\left|
\sum_{t=0}^{T-1}
\delta_t\lambda_r^t
\right|^2.
\label{eq:app_spectral_conditioning}
\end{equation}

Thus, recovering the hop coefficients is equivalent to distinguishing
polynomials from their values on the graph spectrum. If different powers
\(\lambda_r^t\) produce sufficiently distinct spectral responses, the
coefficient decomposition is well conditioned. If the relevant spectral
responses become nearly redundant, the smallest singular direction becomes
small and the hop explanation becomes unstable.

\paragraph{A concrete example.}
Consider the symmetric two-state lazy random walk

\[
\mathbf M
=
\begin{bmatrix}
1-\eta & \eta\\
\eta & 1-\eta
\end{bmatrix},
\qquad
0<\eta<\frac{1}{2}.
\]

Its eigenvalues are \(1\) and
\(\rho=1-2\eta\), with \(0<\rho<1\). Consider two valid hop-weight vectors
that differ only by moving an amount \(\delta>0\) of weight from hop \(t\) to
hop \(t+1\). Their coefficient difference is proportional to
\(\mathbf e_t-\mathbf e_{t+1}\), while the corresponding difference in
structural operators is

\[
\delta
\left(
\mathbf M^t-\mathbf M^{t+1}
\right).
\]

Along the stationary eigenvector, both powers have eigenvalue \(1\), so their
difference is zero. Along the remaining eigenvector, the difference is
\(\rho^t-\rho^{t+1}
=(1-\rho)\rho^t\). Therefore,

\begin{equation}
\left\|
\mathbf M^t-\mathbf M^{t+1}
\right\|_F
=
(1-\rho)\rho^t.
\label{eq:app_adjacent_hop_decay_example}
\end{equation}

The coefficient vectors may differ by the same amount regardless of \(t\), but
their effect on the structural operator becomes exponentially smaller as
\(t\) increases. For example, if \(\rho=0.8\), then the difference between
hop \(1\) and hop \(2\) is proportional to \(0.16\), while the difference
between hop \(10\) and hop \(11\) is only about \(0.021\). A model can therefore
distinguish early walk lengths much more reliably than later ones even though
all coefficients remain algebraically identifiable.

This example exposes the practical limitation of exact identifiability.
Linear independence answers whether two hop decompositions can be exactly
equal, whereas \(\gamma_T\) measures how far apart their induced operators
must be. As random-walk powers become increasingly similar, \(\gamma_T\)
decreases and hop-level explanations become sensitive to optimization noise,
finite data, or small model perturbations. This phenomenon motivates a more
direct question: beyond what walk length do additional random-walk scales cease
to provide meaningfully distinguishable structural information? The next
section formalizes this limit through the effective structural horizon.

\section{EFFECTIVE STRUCTURAL HORIZON OF RANDOM-WALK EXPLANATIONS}
\label{sec:app_structural_horizon}

The previous analysis provides an algebraic limit on the number of hop
coefficients that can be uniquely identified. Within this identifiable range,
however, uniqueness does not guarantee that different walk lengths continue
to provide sufficiently distinct structural signals. As the random walk
progresses, it gradually loses information about its starting node and
approaches its stationary behavior, causing consecutive operators
$\mathbf M^t$ and $\mathbf M^{t+1}$ to become increasingly similar. This
raises a more practical question of how far hop-level distinctions remain
structurally meaningful within the identifiable regime. We address this
question by studying how the difference between successive random-walk powers
decays with $t$, and use this decay to define an effective structural horizon
beyond which separating individual hop effects becomes increasingly unstable
and less informative.

For completeness, we restate Theorem~\ref{thm:structural_horizon} below.
Its two claims are established separately in
Lemma~\ref{thm:app_hop_decay} and~\ref{thm:app_structural_horizon}.

\spectraldecay*

\subsection{Decay of Hop Distinguishability}
\label{subsec:app_hop_decay}

To quantify when two neighboring walk lengths become difficult to distinguish,
we study how much the corresponding random-walk operators change from one step
to the next. Let \(\boldsymbol{\pi}\) denote the stationary distribution of
the random walk and assume that \(\mathbf M\) is ergodic and reversible with
respect to \(\boldsymbol{\pi}\). We use the weighted inner product
\(\langle \mathbf u,\mathbf v\rangle_{\pi}
=\sum_i \pi_i u_i v_i\), with induced vector norm
\(\|\cdot\|_{2,\pi}\) and operator norm denoted by the same symbol. Under
reversibility, \(\mathbf M\) is self-adjoint in this space.

Let
\(\mathbf \Pi=\mathbf 1\boldsymbol{\pi}^{\top}\) denote the stationary
projection. For any node signal \(\mathbf z\),
\(\mathbf \Pi\mathbf z\) replaces the signal by its stationary weighted
average at every node. We define the distinguishability between two
consecutive walk lengths as

\begin{equation}
D_t
:=
\left\|
\mathbf M^{t+1}-\mathbf M^t
\right\|_{2,\pi}.
\label{eq:app_hop_distinguishability}
\end{equation}

A large \(D_t\) means that extending the walk from \(t\) to \(t+1\) steps can
substantially change the propagated structural signal. A small \(D_t\), in
contrast, means that the two hop lengths behave almost identically for every
normalized input signal.

Let the eigenvalues of \(\mathbf M\) be
\(1=\lambda_1,\lambda_2,\ldots,\lambda_n\), and define

\[
\rho
:=
\max_{r\geq2}|\lambda_r|.
\]

Ergodicity gives \(\rho<1\). The quantity \(\rho\) controls how quickly the
random walk loses information about its initial state.

\begin{lemma}[Decay of Hop Distinguishability]
\label{thm:app_hop_decay}
Suppose that \(\mathbf M\) is an ergodic reversible random-walk operator. Then

\begin{equation}
\left\|
\mathbf M^t-\mathbf\Pi
\right\|_{2,\pi}
=
\rho^t,
\label{eq:app_mixing_decay}
\end{equation}

and the difference between consecutive walk lengths satisfies

\begin{equation}
D_t
=
\left\|
\mathbf M^{t+1}-\mathbf M^t
\right\|_{2,\pi}
=
\max_{r\geq2}
|\lambda_r|^t|1-\lambda_r|
\leq
(1+\rho)\rho^t.
\label{eq:app_adjacent_hop_decay}
\end{equation}

Hence the maximum structural distinction between hop \(t\) and hop \(t+1\)
decays at least geometrically with the nontrivial spectral radius
\(\rho\).
\end{lemma}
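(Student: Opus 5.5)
The argument is a spectral decomposition in the weighted space \(\ell_2(\boldsymbol{\pi})\). By reversibility, \(\pi_iM_{ij}=\pi_jM_{ji}\), so \(\mathbf M\) is self-adjoint with respect to \(\langle\cdot,\cdot\rangle_\pi\). Equivalently, \(\mathbf S=\mathbf D_\pi^{1/2}\mathbf M\mathbf D_\pi^{-1/2}\) is symmetric, and the map \(\mathbf z\mapsto\mathbf D_\pi^{1/2}\mathbf z\) is an isometry from \(\ell_2(\boldsymbol{\pi})\) to Euclidean \(\ell_2\).

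\emph{Eigenbasis.} There is a \(\pi\)-orthonormal eigenbasis \(\mathbf u_1,\ldots,\mathbf u_n\) of \(\mathbf M\) with real eigenvalues \(\lambda_r\) and \(\mathbf u_1=\mathbf 1\). Since \(\|\mathbf 1\|_{2,\pi}^2=\sum_i\pi_i=1\), this normalization is consistent. Thus
\[
\mathbf M=\sum_r\lambda_r\,\mathbf u_r\langle\mathbf u_r,\cdot\rangle_\pi .
\]

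\emph{Identifying \(\mathbf\Pi\).} The stationary projection satisfies \(\mathbf\Pi\mathbf z=\mathbf 1\,(\boldsymbol{\pi}^\top\mathbf z)=\mathbf u_1\langle\mathbf u_1,\mathbf z\rangle_\pi\). So \(\mathbf\Pi\) is exactly the \(\pi\)-orthogonal projector onto the top eigenvector.

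\emph{First identity.} From the decomposition,
\[
\mathbf M^t-\mathbf\Pi=\sum_{r\ge2}\lambda_r^t\,\mathbf u_r\langle\mathbf u_r,\cdot\rangle_\pi .
\]
The norm of a self-adjoint operator diagonal in an orthonormal basis is the largest absolute eigenvalue. This gives \(\|\mathbf M^t-\mathbf\Pi\|_{2,\pi}=\max_{r\ge2}|\lambda_r|^t=\rho^t\) for \(t\ge1\). The case \(t=0\) needs a short remark: there \(\|\mathbf I-\mathbf\Pi\|_{2,\pi}=1=\rho^0\), since \(n\ge2\) gives a nonzero orthogonal complement.

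\emph{Second identity.} Similarly,
\[
\mathbf M^{t+1}-\mathbf M^t=\sum_r\lambda_r^t(\lambda_r-1)\,\mathbf u_r\langle\mathbf u_r,\cdot\rangle_\pi .
\]
The \(r=1\) term vanishes because \(\lambda_1=1\). Taking the largest absolute coefficient gives \(D_t=\max_{r\ge2}|\lambda_r|^t|1-\lambda_r|\). For the bound, use \(|1-\lambda_r|\le1+|\lambda_r|\le1+\rho\) together with \(|\lambda_r|^t\le\rho^t\), which yields \(D_t\le(1+\rho)\rho^t\).

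\emph{Main obstacle.} The only delicate point is justifying the spectral facts rather than the algebra. First, self-adjointness in \(\ell_2(\boldsymbol{\pi})\) holds because \(\langle\mathbf u,\mathbf M\mathbf v\rangle_\pi=\sum_{i,j}\pi_iM_{ij}u_iv_j\) is symmetric by detailed balance. Second, the stationary distribution satisfies \(\pi_i>0\) by irreducibility, so the weighted inner product is nondegenerate. Third, \(\lambda=1\) is simple and \(\rho<1\) under irreducibility plus aperiodicity, by Perron--Frobenius for ergodic chains. Finally, the operator norm of a self-adjoint operator equals its spectral radius; for symmetric matrices this is standard, and it transfers through the similarity with \(\mathbf S\).
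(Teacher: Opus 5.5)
Your proposal is correct and follows essentially the same route as the paper. Both arguments use reversibility to obtain a \(\pi\)-orthonormal eigenbasis with \(\mathbf 1\) as the stationary mode, identify \(\mathbf\Pi\) as the projector onto that mode, and read off both operator norms as the largest absolute spectral coefficient; the paper phrases this as the bound \(\sum_{r\ge2}|\lambda_r|^{2t}|\langle\mathbf z,\mathbf v_r\rangle_\pi|^2\le\rho^{2t}\|\mathbf z\|_{2,\pi}^2\), with equality at an eigenvector of modulus \(\rho\). Your additional remarks on the \(t=0\) case, the positivity of \(\boldsymbol{\pi}\), and the symmetrization \(\mathbf D_\pi^{1/2}\mathbf M\mathbf D_\pi^{-1/2}\) make explicit facts that the paper uses without comment.
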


\begin{proof}
Because \(\mathbf M\) is reversible, it admits an orthonormal eigenbasis
\(\{\mathbf v_r\}_{r=1}^n\) under
\(\langle\cdot,\cdot\rangle_{\pi}\). The stationary eigenvector is
\(\mathbf v_1=\mathbf 1\) with eigenvalue \(\lambda_1=1\), while all remaining
eigenvalues satisfy \(|\lambda_r|<1\).

Any signal \(\mathbf z\) can therefore be decomposed as

\begin{equation}
\mathbf z
=
\langle\mathbf z,\mathbf v_1\rangle_{\pi}\mathbf v_1
+
\sum_{r=2}^{n}
\langle\mathbf z,\mathbf v_r\rangle_{\pi}\mathbf v_r.
\label{eq:app_signal_spectral_decomposition}
\end{equation}

Applying \(t\) random-walk steps gives

\begin{equation}
\mathbf M^t\mathbf z
=
\langle\mathbf z,\mathbf v_1\rangle_{\pi}\mathbf v_1
+
\sum_{r=2}^{n}
\lambda_r^t
\langle\mathbf z,\mathbf v_r\rangle_{\pi}\mathbf v_r.
\label{eq:app_signal_after_t_steps}
\end{equation}

The stationary projector retains only the first component,

\[
\mathbf\Pi\mathbf z
=
\langle\mathbf z,\mathbf v_1\rangle_{\pi}\mathbf v_1.
\]

Subtracting the stationary component from
Eq.~\ref{eq:app_signal_after_t_steps} yields

\begin{equation}
(\mathbf M^t-\mathbf\Pi)\mathbf z
=
\sum_{r=2}^{n}
\lambda_r^t
\langle\mathbf z,\mathbf v_r\rangle_{\pi}\mathbf v_r.
\end{equation}

Since the eigenvectors are orthonormal in
\(L_2(\boldsymbol{\pi})\),

\begin{equation}
\begin{aligned}
\left\|
(\mathbf M^t-\mathbf\Pi)\mathbf z
\right\|_{2,\pi}^2
&=
\sum_{r=2}^{n}
|\lambda_r|^{2t}
\left|
\langle\mathbf z,\mathbf v_r\rangle_{\pi}
\right|^2
\\
&\leq
\rho^{2t}
\sum_{r=2}^{n}
\left|
\langle\mathbf z,\mathbf v_r\rangle_{\pi}
\right|^2
\\
&\leq
\rho^{2t}\|\mathbf z\|_{2,\pi}^2.
\end{aligned}
\end{equation}

Equality is attained by choosing \(\mathbf z\) to be an eigenvector whose
eigenvalue has magnitude \(\rho\). Therefore
\(\|\mathbf M^t-\mathbf\Pi\|_{2,\pi}=\rho^t\).

We next compare two consecutive walk lengths. Using the same eigenbasis,

\begin{equation}
\begin{aligned}
(\mathbf M^{t+1}-\mathbf M^t)\mathbf z
&=
\sum_{r=2}^{n}
\left(
\lambda_r^{t+1}-\lambda_r^t
\right)
\langle\mathbf z,\mathbf v_r\rangle_{\pi}\mathbf v_r
\\
&=
\sum_{r=2}^{n}
\lambda_r^t(\lambda_r-1)
\langle\mathbf z,\mathbf v_r\rangle_{\pi}\mathbf v_r.
\end{aligned}
\label{eq:app_adjacent_hop_spectral}
\end{equation}

The stationary component disappears because
\(1^{t+1}-1^t=0\). Taking the induced operator norm therefore gives

\[
D_t
=
\max_{r\geq2}
|\lambda_r|^t|1-\lambda_r|.
\]

Finally, since
\(|\lambda_r|\leq\rho\) and
\(|1-\lambda_r|\leq1+|\lambda_r|\leq1+\rho\),

\[
D_t
\leq
(1+\rho)\rho^t.
\]

Because \(\rho<1\), this upper bound converges geometrically to zero as
\(t\) increases.
\end{proof}

The theorem gives a direct interpretation of random-walk mixing in terms of
hop-level explanations. Early powers of \(\mathbf M\) can encode noticeably
different neighborhoods because the walk still retains information about its
starting node. As \(t\) increases, the nonstationary spectral components are
multiplied repeatedly by \(\lambda_r^t\) and gradually disappear. Once these
components become small, taking one additional walk step changes little, even
though \(\mathbf M^t\) and \(\mathbf M^{t+1}\) may remain algebraically
distinct.

\paragraph{A simple example.}
Consider a graph whose largest nontrivial eigenvalue magnitude is
\(\rho=0.8\). The theorem gives

\[
D_t
\leq
1.8(0.8)^t.
\]

At \(t=1\), this upper bound is \(1.44\), so one-hop and two-hop propagation
can still differ substantially. At \(t=10\), it decreases to approximately
\(0.193\), and at \(t=20\) to approximately \(0.021\). Thus the walk powers
remain mathematically different, but the amount of new structural information
introduced by one additional hop becomes progressively smaller.

The spectral rate also explains why the useful range of hop lengths varies
across graphs. If \(\rho\) is small, the random walk mixes rapidly and
neighboring long-range hops become similar after only a few steps. If
\(\rho\) is close to one, information about the starting region persists
longer, so structurally distinct walk lengths can remain visible at larger
\(t\). The appropriate range of interpretable hops is therefore not determined
by \(T\) alone; it depends on the mixing properties of the underlying graph.
This observation motivates defining a graph-dependent effective structural
horizon, which we formalize next.

\subsection{Effective Structural Horizon and Spectral Dependence}
\label{subsec:app_structural_horizon}

It remains to prove the second claim of
Theorem~\ref{thm:structural_horizon}.
The decay result above suggests that the number of useful walk lengths should
depend on how finely we wish to distinguish their structural effects. Rather
than treating every power of the random walk as equally informative, we define
a tolerance-dependent horizon beyond which one additional walk step changes the
structural operator by at most a prescribed amount.

For a tolerance \(\varepsilon>0\), define the effective structural horizon as

\begin{equation}
H_{\varepsilon}
:=
\min
\left\{
t\geq0:
\left\|
\mathbf M^{s+1}-\mathbf M^s
\right\|_{2,\pi}
\leq\varepsilon
\quad
\text{for all } s\geq t
\right\}.
\label{eq:app_structural_horizon}
\end{equation}

Thus, before \(H_{\varepsilon}\), at least some consecutive walk lengths differ
by more than the tolerance \(\varepsilon\). Beyond this point, taking one more
random-walk step changes the propagation operator by at most
\(\varepsilon\). The horizon should therefore be interpreted as a resolution
scale: it marks the point after which separating neighboring hop effects
becomes increasingly difficult at the chosen level of precision.

The spectral decay established in
Theorem~\ref{thm:app_hop_decay} gives a direct bound on this horizon.

\begin{lemma}[Spectral Bound on the Effective Structural Horizon]
\label{thm:app_structural_horizon}
Suppose that \(\mathbf M\) is an ergodic reversible random-walk operator and
let
\(\rho=\max_{r\geq2}|\lambda_r|<1\).
Then, for any \(\varepsilon>0\),

\begin{equation}
H_{\varepsilon}
\leq
\left\lceil
\frac{
\log\!\left((1+\rho)/\varepsilon\right)
}{
-\log\rho
}
\right\rceil
\label{eq:app_horizon_upper_bound}
\end{equation}

whenever \(\varepsilon<1+\rho\). Hence the effective structural horizon grows
as the nontrivial spectral radius approaches one.
\end{lemma}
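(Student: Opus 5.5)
The plan is to derive the horizon bound directly from the geometric decay established in Lemma~\ref{thm:app_hop_decay}. That lemma gives $D_s\le(1+\rho)\rho^s$ for every $s\ge0$. Because $0\le\rho<1$, the right-hand side is nonincreasing in $s$. So it suffices to find the smallest integer $t^\star$ with $(1+\rho)\rho^{t^\star}\le\varepsilon$. Then every $s\ge t^\star$ satisfies $D_s\le(1+\rho)\rho^s\le(1+\rho)\rho^{t^\star}\le\varepsilon$. Hence $t^\star$ belongs to the set in Eq.~\ref{eq:app_structural_horizon}, and by minimality $H_\varepsilon\le t^\star$.

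Next, we solve the scalar inequality. For $0<\rho<1$, take logarithms in $(1+\rho)\rho^{t}\le\varepsilon$ to get $t\log\rho\le\log(\varepsilon/(1+\rho))$. Dividing by $\log\rho<0$ flips the inequality, which gives
\[
t\ \ge\ \frac{\log((1+\rho)/\varepsilon)}{-\log\rho}.
\]
The assumption $\varepsilon<1+\rho$ makes the numerator positive, so the ceiling $t^\star=\lceil\log((1+\rho)/\varepsilon)/(-\log\rho)\rceil$ is a nonnegative integer. This $t^\star$ satisfies the inequality, and the displayed bound follows. The monotonicity remark, that the horizon grows as $\rho\to1$, follows because $-\log\rho\downarrow0$ while the numerator stays bounded below by $\log(1/\varepsilon)$ plus a positive term.

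The only delicate point is the degenerate case $\rho=0$, where $\log\rho$ is undefined. There $D_s=0$ for all $s\ge1$, and $D_0=\max_{r\ge2}|1-\lambda_r|=1$ when $n\ge2$. Interpreting the right-hand side as $\lceil\cdot\rceil$ of $+\infty\cdot 0$ is awkward. I would handle it separately by noting directly that $H_\varepsilon\le1$, and that the formula's limit as $\rho\downarrow0$ equals $0$ or $1$ depending on convention. Alternatively, I would simply restrict to $0<\rho<1$, as the statement implicitly does through $-\log\rho$. I expect no other obstacle: the content is entirely in the previous lemma, and this step is a monotone-inversion argument.
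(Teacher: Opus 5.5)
Your proposal is correct and follows the paper's proof essentially step for step. You invoke $D_s\le(1+\rho)\rho^s$ from Lemma~\ref{thm:app_hop_decay}, use that this bound is nonincreasing in $s$, and invert $(1+\rho)\rho^t\le\varepsilon$ with $\log\rho<0$ to obtain the ceiling. Your separate treatment of $\rho=0$ is a useful addition that the paper omits, since it silently assumes $0<\rho<1$. When $\rho=0$, $D_0=1$ and $D_s=0$ for $s\ge1$, so $H_\varepsilon=1$ for every admissible $\varepsilon<1$. This matches the limit of the ceiling as $\rho\downarrow0$, but not the naive value $\lceil 0\rceil=0$.
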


\begin{proof}
From Theorem~\ref{thm:app_hop_decay},

\[
\left\|
\mathbf M^{s+1}-\mathbf M^s
\right\|_{2,\pi}
\leq
(1+\rho)\rho^s.
\]

Because \(0<\rho<1\), the right-hand side decreases monotonically with \(s\).
It is therefore sufficient to choose \(t\) such that

\[
(1+\rho)\rho^t
\leq
\varepsilon.
\]

Dividing by \(1+\rho\) gives
\(\rho^t\leq\varepsilon/(1+\rho)\).
Taking logarithms and using
\(\log\rho<0\),

\begin{equation}
\begin{aligned}
t\log\rho
&\leq
\log\left(\frac{\varepsilon}{1+\rho}\right),
\\
t
&\geq
\frac{
\log\left(\varepsilon/(1+\rho)\right)
}{
\log\rho
}
=
\frac{
\log\left((1+\rho)/\varepsilon\right)
}{
-\log\rho
}.
\end{aligned}
\end{equation}

Taking the smallest integer satisfying this inequality yields
Eq.~\ref{eq:app_horizon_upper_bound}. Since the same bound holds for every
\(s\geq t\), this value is a valid upper bound on
\(H_{\varepsilon}\).
\end{proof}

The theorem makes the dependence on graph structure explicit. When
\(\rho\) is small, nonstationary information decays quickly and the random
walk reaches its long-range behavior after relatively few steps. In this
regime, later hop operators become similar early, producing a short structural
horizon. When \(\rho\) is close to one, the walk retains information about its
starting region for longer, and distinct walk lengths remain structurally
distinguishable over a larger range.

This dependence becomes especially clear for slowly mixing graphs. When
\(\rho\) is close to one,
\(-\log\rho\approx1-\rho\), so the bound behaves approximately as

\[
H_{\varepsilon}
\lesssim
\frac{
\log\left((1+\rho)/\varepsilon\right)
}{
1-\rho
}.
\]

Thus a small spectral gap \(1-\rho\) corresponds to a longer structural
horizon, while a large spectral gap leads to faster loss of hop-specific
information.

\paragraph{A concrete example.}
Consider two graphs evaluated at the same tolerance
\(\varepsilon=0.05\). Suppose the first graph has
\(\rho=0.5\). Then

\[
H_{0.05}
\leq
\left\lceil
\frac{\log(1.5/0.05)}{-\log 0.5}
\right\rceil
=
5.
\]

The bound therefore guarantees that after roughly five steps, the difference
between consecutive walk operators is below the chosen tolerance. Now consider
a more slowly mixing graph with \(\rho=0.9\). The corresponding bound becomes

\[
H_{0.05}
\leq
\left\lceil
\frac{\log(1.9/0.05)}{-\log 0.9}
\right\rceil
=
35.
\]

The second graph can therefore sustain distinguishable long-range walk
behavior for substantially more steps. The same RRWP truncation depth \(T\)
can consequently have very different interpretations across the two graphs:
\(T=10\) may already extend well beyond the useful structural range of the
first graph while remaining within the informative range of the second.

This also clarifies the role of the RRWP truncation parameter \(T\). Increasing
\(T\) always exposes the model to additional walk lengths, but those additional
terms need not provide equally distinct structural information. If
\(T\gg H_{\varepsilon}\), many of the included powers lie in a regime where
neighboring operators differ only weakly. Their coefficients may still be
algebraically identifiable, yet interpreting small differences among those
late-hop weights becomes increasingly sensitive to finite data and
optimization noise. In contrast, choosing \(T\) within the graph's effective
structural range concentrates the hop decomposition on walk lengths that remain
meaningfully distinguishable.

The effective horizon is therefore not a universal maximum number of hops, nor
does it imply that all structural information disappears beyond
\(H_{\varepsilon}\). It quantifies a more specific property: beyond this
scale, one additional random-walk step contributes at most
\(\varepsilon\) change to the propagation operator. The value of
\(H_{\varepsilon}\) depends jointly on the graph spectrum and the resolution
at which hop-level differences are considered meaningful.

\section{INTERPRETABILITY AND EXPRESSIVITY OF NORMALIZED RRWP}
\label{sec:app_rrwp_expressivity}

The previous analysis establishes when random-walk hop coefficients can be
interpreted uniquely and over what range different walk lengths remain
meaningfully distinguishable. A separate question concerns the effect of the
normalization itself. \HARMONIA constrains the RRWP coefficients to be
nonnegative and to sum to one, so each structural operator is a convex
combination of random-walk powers. This gives the hop weights a consistent
interpretation and prevents arbitrary rescaling of the structural component,
but it also restricts the class of graph filters that can be represented. In particular, normalized RRWP naturally describes diffusion-like structural
aggregation, where information from different walk lengths is combined with
nonnegative weights. It cannot directly represent signed combinations such as
\(\mathbf I-\mathbf M\), which emphasize differences between local and
propagated signals and are commonly associated with high-pass behavior. Thus,
the same constraint that makes the hop decomposition easier to interpret also
imposes a specific inductive bias on the structural operator. In this section,
we characterize that bias through the spectral response of normalized RRWP,
show how normalization provides a useful stability property, and clarify the
resulting trade-off between interpretability and structural expressivity.

\subsection{Spectral View of Simplex-Normalized RRWP}
\label{subsec:app_rrwp_spectral}

The hop coefficients of RRWP admit a complementary interpretation in the
spectral domain. For feature \(k\), recall the normalized structural operator
\[
\mathbf P_k
=
\sum_{t=0}^{T-1}
\theta_{t,k}\mathbf M^t,
\qquad
\theta_{t,k}\geq0,
\qquad
\sum_{t=0}^{T-1}\theta_{t,k}=1.
\]
Suppose that \(\mathbf M\) is reversible and let
\((\lambda_r,\mathbf v_r)\) be one of its eigenpairs, so that
\(\mathbf M\mathbf v_r=\lambda_r\mathbf v_r\). Applying
\(\mathbf P_k\) to this eigenvector gives

\begin{equation}
\begin{aligned}
\mathbf P_k\mathbf v_r
&=
\sum_{t=0}^{T-1}
\theta_{t,k}\mathbf M^t\mathbf v_r
\\
&=
\sum_{t=0}^{T-1}
\theta_{t,k}\lambda_r^t\mathbf v_r
\\
&=
g_k(\lambda_r)\mathbf v_r,
\end{aligned}
\label{eq:app_rrwp_spectral_response}
\end{equation}

where

\[
g_k(\lambda)
:=
\sum_{t=0}^{T-1}
\theta_{t,k}\lambda^t
\]

is the spectral response induced by the hop weights of feature \(k\).
Thus, the same coefficients that describe how much weight \HARMONIA assigns to
different walk lengths also determine how strongly each spectral mode of the
graph is preserved.

\begin{proposition}[Spectral Response of Normalized RRWP]
\label{prop:app_rrwp_spectral_response}
Let \(\mathbf P_k\) be a simplex-normalized RRWP operator. For every
\(\lambda\in[-1,1]\), $g_k(1)=1, 
|g_k(\lambda)|\leq1.$ If, in addition, the random walk is lazy so that
\(\lambda\in[0,1]\), then $0\leq g_k(\lambda)\leq1,$ and \(g_k(\lambda)\) is nondecreasing on \([0,1]\).
\end{proposition}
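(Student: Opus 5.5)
The plan is to work directly with the polynomial $g_k(\lambda)=\sum_{t=0}^{T-1}\theta_{t,k}\lambda^t$, using only the simplex constraints $\theta_{t,k}\ge0$ and $\sum_t\theta_{t,k}=1$. No graph-specific information is needed beyond the fact that the relevant eigenvalues lie in $[-1,1]$ (or in $[0,1]$ in the lazy case).

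First, evaluating at $\lambda=1$ gives $g_k(1)=\sum_t\theta_{t,k}\cdot 1^t=\sum_t\theta_{t,k}=1$. This is the spectral counterpart of the row-stochasticity $\mathbf P_k\mathbf 1=\mathbf 1$ established in Proposition~\ref{prop:app_scale_identifiability}.

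Second, for $\lambda\in[-1,1]$ I will apply the triangle inequality together with $|\lambda|^t\le1$ for every $t\ge0$. This gives
\[
|g_k(\lambda)|\le\sum_{t=0}^{T-1}\theta_{t,k}|\lambda|^t\le\sum_{t=0}^{T-1}\theta_{t,k}=1.
\]
In words, $g_k(\lambda)$ is a convex combination of the numbers $\lambda^t$, each of which lies in $[-1,1]$. Nonnegativity of the weights is exactly what allows the absolute value to pass inside the sum without any loss.

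Third, in the lazy case $\lambda\in[0,1]$, every term $\theta_{t,k}\lambda^t$ is nonnegative, so $g_k(\lambda)\ge0$; combined with the previous step this gives $0\le g_k(\lambda)\le1$. For monotonicity, I will differentiate: $g_k'(\lambda)=\sum_{t\ge1}t\,\theta_{t,k}\lambda^{t-1}\ge0$ on $[0,1]$, since every factor is nonnegative there. Alternatively, one can note that each map $\lambda\mapsto\lambda^t$ is nondecreasing on $[0,1]$ and that nonnegative combinations preserve this property, which avoids calculus altogether.

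No step is a genuine obstacle. The only point requiring care is to state explicitly where nonnegativity of $\theta_{t,k}$ is used: the triangle-inequality bound and the monotonicity argument both rely on it. This is also where the contrast with signed filters such as $\mathbf I-\mathbf M$ shows up. For that filter the response $1-\lambda$ equals $0$ at $\lambda=1$ and is decreasing on $[0,1]$, so it violates both $g(1)=1$ and monotonicity. I would include this remark to tie the proposition back to the expressivity trade-off discussed at the start of the section.
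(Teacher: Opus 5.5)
Your proposal is correct and follows the paper's proof essentially step for step: you evaluate $g_k(1)=\sum_t\theta_{t,k}=1$, bound $|g_k(\lambda)|$ on $[-1,1]$ with the triangle inequality using $\theta_{t,k}\ge0$ and $|\lambda|^t\le1$, and in the lazy case use termwise nonnegativity together with $g_k'(\lambda)=\sum_{t\ge1}t\,\theta_{t,k}\lambda^{t-1}\ge0$. Your calculus-free monotonicity argument and the $\mathbf I-\mathbf M$ contrast are harmless additions; the paper makes the latter point in the surrounding discussion rather than inside the proof.
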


\begin{proof}
The normalization of the hop weights immediately gives

\[
g_k(1)
=
\sum_{t=0}^{T-1}\theta_{t,k}
=
1.
\]

For any \(\lambda\in[-1,1]\), the triangle inequality yields

\begin{equation}
\begin{aligned}
|g_k(\lambda)|
&=
\left|
\sum_{t=0}^{T-1}
\theta_{t,k}\lambda^t
\right|
\\
&\leq
\sum_{t=0}^{T-1}
\theta_{t,k}|\lambda|^t
\\
&\leq
\sum_{t=0}^{T-1}
\theta_{t,k}
=
1.
\end{aligned}
\end{equation}

If the walk is lazy, then \(\lambda\in[0,1]\), so every term
\(\theta_{t,k}\lambda^t\) is nonnegative and therefore
\(g_k(\lambda)\geq0\). Moreover,

\[
g_k'(\lambda)
=
\sum_{t=1}^{T-1}
t\,\theta_{t,k}\lambda^{t-1}
\geq0,
\]

which shows that the response is nondecreasing on \([0,1]\).
\end{proof}

The proposition clarifies the inductive bias introduced by simplex
normalization. For a lazy random walk, eigenvalues close to one correspond to
slowly varying graph modes that persist under repeated diffusion, whereas
smaller eigenvalues decay more rapidly. Since \(g_k(\lambda)\) is
nondecreasing and satisfies \(g_k(1)=1\), normalized RRWP tends to preserve
these slowly varying modes while attenuating modes associated with smaller
eigenvalues. Its spectral behavior is therefore naturally diffusion-like.

\paragraph{A concrete example.}
Suppose a feature uses three walk lengths with
\(\boldsymbol{\theta}_k=(0.2,0.3,0.5)\). Its spectral response is

\[
g_k(\lambda)
=
0.2+0.3\lambda+0.5\lambda^2.
\]

For the stationary mode,
\(g_k(1)=1\). A mode with eigenvalue \(0.5\) is scaled by
\(g_k(0.5)=0.475\), while a mode with eigenvalue \(0\) is scaled by only
\(g_k(0)=0.2\). The same hop mixture therefore preserves a slowly varying
mode much more strongly than a rapidly decaying one. Changing the hop weights
changes this spectral preference: placing more mass on larger \(t\) makes the
response decay more strongly away from \(\lambda=1\), corresponding to a
greater emphasis on longer-range diffusion.

The simplex constraint also reveals what normalized RRWP cannot represent.
Because all polynomial coefficients are nonnegative, it cannot directly form
arbitrary signed combinations of walk operators. For example, the structural
difference operator

\[
\mathbf I-\mathbf M
\]

has spectral response \(1-\lambda\) and requires a negative coefficient on
\(\mathbf M\). This response suppresses the stationary mode and emphasizes
variation between a node signal and its propagated version, which is the
opposite of the diffusion-like behavior encouraged by normalized RRWP.
Consequently, simplex normalization provides an interpretable and controlled
family of structural filters, but not the full class of polynomial graph
filters. The next subsection examines the stability gained from this
restriction and the corresponding cost in structural expressivity.

\subsection{Stability and the Interpretability--Expressivity Trade-off}
\label{subsec:app_rrwp_tradeoff}

The simplex constraint does more than make the hop coefficients easier to
interpret. Because a normalized RRWP operator is a convex combination of
random-walk powers, it also inherits the non-expansive behavior of Markov
propagation. This prevents the structural component from arbitrarily
amplifying perturbations in the propagated feature signal.

Let
\[
\mathbf P_k
=
\sum_{t=0}^{T-1}
\theta_{t,k}\mathbf M^t,
\qquad
\theta_{t,k}\geq0,
\qquad
\sum_{t=0}^{T-1}\theta_{t,k}=1.
\]
Since \(\mathbf M\) is row-stochastic, every power \(\mathbf M^t\) is
row-stochastic, and therefore so is \(\mathbf P_k\).

\begin{theorem}[Non-Expansiveness of Normalized RRWP]
\label{thm:app_rrwp_stability}
Let \(\mathbf M\) be a row-stochastic transition matrix and let
\(\mathbf P_k\) be a simplex-normalized RRWP operator. Then, for any two node
signals \(\mathbf z,\mathbf z'\in\mathbb R^n\),

\begin{equation}
\left\|
\mathbf P_k\mathbf z
-
\mathbf P_k\mathbf z'
\right\|_{\infty}
\leq
\left\|
\mathbf z-\mathbf z'
\right\|_{\infty}.
\label{eq:app_rrwp_nonexpansive}
\end{equation}

Hence normalized RRWP cannot increase the maximum node-wise magnitude of an
input perturbation.
\end{theorem}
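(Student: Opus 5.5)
The plan is to exploit linearity together with the row-stochastic structure of \(\mathbf P_k\). By linearity, \(\mathbf P_k\mathbf z-\mathbf P_k\mathbf z'=\mathbf P_k\mathbf u\) with \(\mathbf u:=\mathbf z-\mathbf z'\). It therefore suffices to show \(\|\mathbf P_k\mathbf u\|_\infty\le\|\mathbf u\|_\infty\) for every \(\mathbf u\in\mathbb R^n\). In other words, we need the induced operator norm \(\|\mathbf P_k\|_{\infty\to\infty}\), which equals the maximum absolute row sum, to be at most one.

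\textbf{Step 1: entrywise nonnegativity.} We first establish that \(\mathbf P_k\) has nonnegative entries. The matrix \(\mathbf M=\mathbf D^{-1}\mathbf A\) is entrywise nonnegative, and products of nonnegative matrices are nonnegative, so every power \(\mathbf M^t\) (including \(\mathbf M^0=\mathbf I\)) is nonnegative. Since \(\theta_{t,k}\ge0\), the combination \(\mathbf P_k=\sum_t\theta_{t,k}\mathbf M^t\) is nonnegative as well.

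\textbf{Step 2: unit row sums.} Proposition~\ref{prop:app_scale_identifiability} already shows that \(\mathbf P_k\mathbf 1=\mathbf 1\), so every row of \(\mathbf P_k\) sums to one.

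\textbf{Step 3: the bound.} For each node \(i\), the triangle inequality together with Steps 1 and 2 gives
\[
|[\mathbf P_k\mathbf u]_i|
\le\sum_j[\mathbf P_k]_{i,j}|u_j|
\le\Bigl(\sum_j[\mathbf P_k]_{i,j}\Bigr)\|\mathbf u\|_\infty
=\|\mathbf u\|_\infty .
\]
Taking the maximum over \(i\) yields the claim. Equivalently, each output entry is a convex average of the input entries.

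\textbf{Main subtlety.} The argument has no real obstacle; the one point needing care is Step 1. Row sums equal to one alone would not suffice, since a matrix with negative entries can have unit row sums yet \(\infty\)-norm larger than one (e.g.\ \(\mathbf I-\mathbf M\) plus a multiple of \(\mathbf I\)). The bound therefore depends on the simplex constraint \(\theta_{t,k}\ge0\) combined with the nonnegativity of \(\mathbf M\), and the write-up should state this dependence explicitly.
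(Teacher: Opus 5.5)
Your proposal is correct and follows essentially the same argument as the paper: set \(\mathbf e=\mathbf z-\mathbf z'\), use that \(\mathbf P_k\) has nonnegative entries with unit row sums, bound each \(|[\mathbf P_k\mathbf e]_i|\) by the triangle inequality, and take the maximum over \(i\). The only difference is that you justify nonnegativity explicitly (from \(\theta_{t,k}\ge 0\) and nonnegativity of \(\mathbf M^t\)), whereas the paper absorbs it into the phrase ``\(\mathbf P_k\) is row-stochastic''; your remark that unit row sums alone would not suffice is a correct and useful clarification.
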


\begin{proof}
Let
\(\mathbf e=\mathbf z-\mathbf z'\). Since
\(\mathbf P_k\) is row-stochastic, its entries are nonnegative and every row
sums to one. For each node \(i\),

\begin{equation}
\begin{aligned}
\left|
[\mathbf P_k\mathbf e]_i
\right|
&=
\left|
\sum_{j=1}^{n}
[\mathbf P_k]_{ij}e_j
\right|
\\
&\leq
\sum_{j=1}^{n}
[\mathbf P_k]_{ij}|e_j|
\\
&\leq
\|\mathbf e\|_{\infty}
\sum_{j=1}^{n}
[\mathbf P_k]_{ij}
\\
&=
\|\mathbf e\|_{\infty}.
\end{aligned}
\end{equation}

Taking the maximum over \(i\) gives
\(\|\mathbf P_k\mathbf e\|_{\infty}
\leq\|\mathbf e\|_{\infty}\), and substituting
\(\mathbf e=\mathbf z-\mathbf z'\) proves
Eq.~\ref{eq:app_rrwp_nonexpansive}.
\end{proof}

The theorem has a simple interpretation. Each output value produced by
\(\mathbf P_k\) is a convex average of propagated input values. Consequently,
a perturbation of magnitude at most \(\delta\) at the input cannot become a
perturbation larger than \(\delta\) solely through the normalized structural
operator. This property is particularly useful for an interpretable additive
model, since the magnitude attributed to structural aggregation cannot be
artificially increased through unconstrained hop coefficients.

\paragraph{A concrete example.}
Suppose the propagated values of one feature at three nodes are
\((2,4,6)\), and one row of the learned structural operator is
\((0.2,0.5,0.3)\). The aggregated value is
\(0.2(2)+0.5(4)+0.3(6)=4.2\), which lies within the range of the input values.
If every input is perturbed by at most \(0.1\), then the aggregated value can
also change by at most \(0.1\). In contrast, an unconstrained structural
combination with coefficients such as \((2,-1,1)\) can amplify small
differences through cancellation and rescaling. The simplex constraint rules
out this behavior by forcing structural aggregation to remain a convex
combination.

This stability comes with a corresponding restriction on expressivity. To make
the distinction explicit, consider the unrestricted polynomial family

\[
\mathcal F_T^{\mathrm{poly}}
=
\left\{
\sum_{t=0}^{T-1}\beta_t\mathbf M^t:
\beta_t\in\mathbb R
\right\},
\]

and the simplex-normalized RRWP family

\[
\mathcal F_T^{\mathrm{RRWP}}
=
\left\{
\sum_{t=0}^{T-1}\theta_t\mathbf M^t:
\boldsymbol{\theta}\in\Delta^{T-1}
\right\}.
\]

By construction,
\(\mathcal F_T^{\mathrm{RRWP}}
\subseteq
\mathcal F_T^{\mathrm{poly}}\).
The inclusion is generally strict because the normalized family permits only
nonnegative coefficients that sum to one.

For example, consider the difference operator
\(\mathbf I-\mathbf M\). Its action on a node signal compares the original
signal with its one-step propagated version and therefore emphasizes local
variation rather than smoothing it. Its polynomial coefficients are
\((1,-1,0,\ldots,0)\), which violate both nonnegativity and simplex
normalization. If
\(\mathbf I,\mathbf M,\ldots,\mathbf M^{T-1}\) are linearly independent,
this coefficient representation is unique, and no simplex coefficient vector
can produce the same operator. Thus normalized RRWP cannot directly realize
this high-pass structural response.

The same distinction is visible spectrally. An unrestricted polynomial filter
may have a response such as
\(g(\lambda)=1-\lambda\), which suppresses the stationary mode
\(g(1)=0\) while emphasizing modes associated with smaller eigenvalues.
Simplex-normalized RRWP instead satisfies \(g_k(1)=1\); for a lazy random walk,
its response is nonnegative and nondecreasing on \([0,1]\). The normalization
therefore favors diffusion-like aggregation in which slowly varying structural
modes are preserved rather than explicitly contrasted or canceled.

This restriction is intentional rather than a claim of universal graph-filter
expressivity. \HARMONIA uses RRWP primarily as an interpretable structural
weighting mechanism: \(\theta_{t,k}\) should describe how structural influence
is distributed across walk lengths. Allowing arbitrary signed coefficients
would enlarge the class of representable filters, but a coefficient could no
longer be interpreted directly as a nonnegative share of structural influence,
and the resulting operator would lose the convex-averaging stability established
above.

The simplex constraint therefore induces a clear trade-off. It gives each hop
coefficient a common scale, preserves the Markov interpretation of structural
aggregation, and guarantees non-expansive propagation in the
\(\ell_\infty\) norm. In exchange, it excludes signed combinations of walk
operators and therefore represents a narrower family than unrestricted
polynomial graph filters. This trade-off reflects the design goal of \HARMONIA:
the structural component is intended to remain directly inspectable and stable,
rather than to reproduce the full expressivity of an unconstrained spectral
graph filter.

\paragraph{Why linear RRWP?}
The restriction to a linear RRWP topology function is also important for both
interpretability and computation. With
\(\omega_k^{\sharp}(\mathbf p_{ij})
=\sum_{t=0}^{T-1}\theta_{t,k}p_{ij}^{(t)}\), the structural contribution can
be written as
\[
[h_i]_k
=
\sum_{t=0}^{T-1}
\theta_{t,k}[\mathbf M^t\mathbf z_k]_i,
\]
where \([\mathbf z_k]_j=f_k(x_{j,k})\). Each walk length therefore contributes
an explicit additive term, so the hop-level explanation follows directly from
the prediction mechanism. A nonlinear topology function could capture interactions among different
random-walk statistics, but these interactions would no longer admit the same
unique hop-wise decomposition. More importantly, the linear form allows the
sum over walk lengths to be separated from the sum over source nodes, which is
precisely the algebraic property used by Sparse RRWP Aggregation. \HARMONIA
therefore adopts linear RRWP not for maximal structural expressivity, but to
preserve exact hop-level interpretation and enable sparse computation.

\subsection{Exact Additive Explanations}
\label{subsec:app_exact_explanations}

The additive structure of \HARMONIA makes it possible to explain a prediction
using the same quantities that produce it. Consider the logit for class \(c\)
at target node \(i\),

\begin{equation}
\ell_{i,c}
=
\beta_c
+
\sum_{k=1}^{d}
w_{k,c}
\sum_{j\in\mathcal V}
\omega_k^{\sharp}(\mathbf p_{ij})
f_k(x_{j,k}),
\label{eq:app_exact_logit}
\end{equation}

where \(f_k(x_{j,k})\) is the response of feature \(k\) at source node \(j\),
\(\omega_k^{\sharp}(\mathbf p_{ij})\) determines how that response is transmitted
structurally to node \(i\), and \(w_{k,c}\) maps the resulting feature effect
to class \(c\). We let the contribution of source node \(j\) and feature \(k\) as
\[
C_{i\leftarrow j,k,c}
:=
w_{k,c}\,
\omega_k^{\sharp}(\mathbf p_{ij})\,
f_k(x_{j,k}).
\]
\begin{proposition}[Exact Contribution Completeness]
\label{prop:app_exact_completeness}
The quantities \(C_{i\leftarrow j,k,c}\) form an exact additive decomposition
of the class logit:
\begin{equation}
\ell_{i,c}-\beta_c
=
\sum_{k=1}^{d}
\sum_{j\in\mathcal V}
C_{i\leftarrow j,k,c}.
\label{eq:app_exact_completeness}
\end{equation}
Consequently, summing over source nodes gives a feature-level decomposition,
while summing over features gives a source-node decomposition, without changing
the total explained logit.
\end{proposition}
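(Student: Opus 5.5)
The plan is to substitute the definition of \(C_{i\leftarrow j,k,c}\) into the right-hand side of Eq.~\ref{eq:app_exact_completeness} and compare it term by term with the logit in Eq.~\ref{eq:app_exact_logit}. Since \(w_{k,c}\) does not depend on the source node \(j\), it can be pulled out of the inner sum over \(j\) by distributivity. This gives
\[
\sum_{k=1}^{d}\sum_{j\in\mathcal V}C_{i\leftarrow j,k,c}
=\sum_{k=1}^{d}w_{k,c}\sum_{j\in\mathcal V}\omega_k^{\sharp}(\mathbf p_{ij})f_k(x_{j,k}),
\]
and this is exactly \(\ell_{i,c}-\beta_c\) by Eq.~\ref{eq:app_exact_logit}.

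For the two marginal decompositions I will use the fact that \(d\) and \(n=|\mathcal V|\) are finite. The double sum is therefore finite, and its order of summation can be exchanged freely. Define the feature-level attribution \(C_{i,k,c}:=\sum_{j}C_{i\leftarrow j,k,c}\) and the source-node attribution \(C_{i\leftarrow j,c}:=\sum_{k}C_{i\leftarrow j,k,c}\). Each of these families then sums to \(\ell_{i,c}-\beta_c\) by the identity above, read in either order.

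If it is useful, I will also note a refinement. With the linear RRWP form of Eq.~\ref{eq:linear_rrwp}, each term can be split further over walk lengths as \(C_{i\leftarrow j,k,c}=\sum_t w_{k,c}\theta_{t,k}(\mathbf M^t)_{i,j}f_k(x_{j,k})\). The same finite-sum argument then yields a feature--node--walk decomposition, and by the reordering in Eq.~\ref{eq:sra_reordering} this is consistent with the quantities SRA computes.

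I do not expect a genuine obstacle. The only point needing care is to state explicitly that \(w_{k,c}\) is independent of \(j\), and that all sums are finite so that interchanging them is unconditionally valid. With that in place, the decomposition is exact with no remainder term.
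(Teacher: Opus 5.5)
Your proposal is correct and takes the same approach as the paper: substitute the definition of $C_{i\leftarrow j,k,c}$, factor $w_{k,c}$ out of the sum over $j$ to recover $\ell_{i,c}-\beta_c$, and obtain the feature-level and source-node decompositions by regrouping the finite double sum. Your optional walk-length refinement also matches the feature--hop decomposition the paper gives immediately after the proposition.
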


\begin{proof}
Substituting the definition of
\(C_{i\leftarrow j,k,c}\) gives

\begin{equation}
\begin{aligned}
\sum_{k=1}^{d}
\sum_{j\in\mathcal V}
C_{i\leftarrow j,k,c}
&=
\sum_{k=1}^{d}
\sum_{j\in\mathcal V}
w_{k,c}
\omega_k^{\sharp}(\mathbf p_{ij})
f_k(x_{j,k})
\\
&=
\sum_{k=1}^{d}
w_{k,c}
\left(
\sum_{j\in\mathcal V}
\omega_k^{\sharp}(\mathbf p_{ij})
f_k(x_{j,k})
\right)
\\
&=
\ell_{i,c}-\beta_c,
\end{aligned}
\end{equation}

which proves the result. The feature-level and source-level decompositions
follow by regrouping the same finite sum.
\end{proof}

This completeness property means that the explanation is not reconstructed by
a separate attribution method. After removing the bias term, every reported
feature--node contribution is an actual term used in the computation of the
logit. The contributions are signed: depending on
\(w_{k,c}\) and \(f_k(x_{j,k})\), a term may support or oppose class \(c\),
even though the normalized structural weights themselves remain nonnegative.

\paragraph{Feature--hop structural contributions.}
The linear RRWP parameterization provides a finer decomposition across
random-walk lengths. Since $\omega_k^{\sharp}(\mathbf p_{ij})
=
\sum_{t=0}^{T-1}
\theta_{t,k}(\mathbf M^t)_{ij},$ let define
\begin{equation}
    C_{i\leftarrow j,k,t,c}
:=
w_{k,c}\,
\theta_{t,k}\,
(\mathbf M^t)_{ij}\,
f_k(x_{j,k}).
\end{equation}
Then
\begin{equation}
\begin{aligned}
C_{i\leftarrow j,k,c}
&=
w_{k,c}
\left(
\sum_{t=0}^{T-1}
\theta_{t,k}(\mathbf M^t)_{ij}
\right)
f_k(x_{j,k})
\\
&=
\sum_{t=0}^{T-1}
C_{i\leftarrow j,k,t,c},
\end{aligned}
\label{eq:app_feature_hop_contribution}
\end{equation}
and therefore
\begin{equation}
\ell_{i,c}-\beta_c
=
\sum_{k=1}^{d}
\sum_{j\in\mathcal V}
\sum_{t=0}^{T-1}
C_{i\leftarrow j,k,t,c}.
\label{eq:app_feature_node_hop_completeness}
\end{equation}
This decomposition separates two notions that are easy to conflate.
The coefficient \(\theta_{t,k}\) describes how feature \(k\) allocates its
structural weighting across walk lengths, but it is not itself the realized
contribution of hop \(t\) to class \(c\). The latter also depends on the
feature response, the graph-specific propagation through
\((\mathbf M^t)_{ij}\), and the class coefficient \(w_{k,c}\). Equivalently, if
\([\mathbf z_k]_j=f_k(x_{j,k})\), the contribution of feature \(k\) through
hop \(t\), aggregated over all source nodes, is $C_{i,k,t,c}
=
w_{k,c}\theta_{t,k}
[\mathbf M^t\mathbf z_k]_i.$ Thus \HARMONIA supports exact explanations at multiple resolutions: by original
feature, by source node, and by random-walk length. These quantities are all
obtained by regrouping the same additive terms used by the predictor, rather
than by introducing an auxiliary explanation model. The completeness statement applies to the pre-activation logit. If the logits
are subsequently transformed by a sigmoid or softmax, the resulting class
probabilities are nonlinear functions of these additive terms and therefore do
not admit the same exact additive decomposition.

\section{SPARSE RRWP AGGREGATION}
\label{sec:app_sra}

RRWP provides a rich description of multi-hop and multi-path relationships, but
its conventional pairwise form is expensive to materialize on large graphs.
For \(n\) nodes and \(T\) walk lengths, storing the full collection of
descriptors \(\mathbf p_{ij}\) requires \(O(Tn^2)\) memory, even when the
underlying graph is sparse. Applying these pairwise structural weights across
all feature dimensions further makes the computation grow with the number of
node pairs rather than the number of observed edges. The linear RRWP parameterization used by \HARMONIA allows this cost to be
avoided exactly. Instead of first constructing every pairwise random-walk
descriptor and then aggregating over source nodes, the same computation can be
reordered into repeated sparse propagation with the transition matrix
\(\mathbf M\). This leads to \emph{Sparse RRWP Aggregation} (SRA), which
computes the identical linear RRWP operator without materializing dense
node-pair representations. The purpose of this section is to establish this
equivalence formally and to characterize the resulting reduction in time and
memory complexity. 
\subsection{Exact Equivalence of Dense RRWP and Sparse Aggregation}
\label{subsec:app_sra_equivalence}

The pairwise formulation of RRWP suggests that the full collection of
random-walk probabilities must first be constructed before feature aggregation.
For the linear topology function used by \HARMONIA, however, this is
unnecessary. The aggregation can be computed by propagating the feature
responses directly through the graph. For feature \(k\), define the node-wise response vector
\(\mathbf z_k\in\mathbb R^n\) by
\([\mathbf z_k]_j=f_k(x_{j,k})\). The dense RRWP aggregation at node \(i\) is

\begin{equation}
[h_i]_k^{\mathrm{dense}}
=
\sum_{j\in\mathcal V}
\left(
\sum_{t=0}^{T-1}
\theta_{t,k}(\mathbf M^t)_{ij}
\right)
[\mathbf z_k]_j .
\label{eq:app_dense_rrwp}
\end{equation}

Instead of explicitly constructing the matrices
\(\mathbf M^0,\ldots,\mathbf M^{T-1}\), SRA recursively propagates the
feature-response vector. Let

\begin{equation}
\mathbf z_k^{(0)}=\mathbf z_k,
\qquad
\mathbf z_k^{(t)}
=
\mathbf M\mathbf z_k^{(t-1)},
\quad
t=1,\ldots,T-1.
\label{eq:app_sra_recursion}
\end{equation}

The final structural representation is then obtained by combining the
propagated signals using the same hop coefficients,

\begin{equation}
[h_i]_k^{\mathrm{SRA}}
=
\sum_{t=0}^{T-1}
\theta_{t,k}
[\mathbf z_k^{(t)}]_i.
\label{eq:app_sra_output}
\end{equation}

The following result shows that this sparse computation is exactly equivalent
to the dense pairwise formulation.

\begin{theorem}[Exact Equivalence of SRA]
\label{thm:app_sra_equivalence}
For any transition matrix \(\mathbf M\), feature-response vector
\(\mathbf z_k\), and hop coefficients
\(\{\theta_{t,k}\}_{t=0}^{T-1}\), the recursion in
Eq.~\eqref{eq:app_sra_recursion} satisfies $\mathbf z_k^{(t)}
=
\mathbf M^t\mathbf z_k$ for every $t=0,\ldots,T-1.$ Consequently, for every node \(i\) and feature \(k\), $[h_i]_k^{\mathrm{SRA}}
=
[h_i]_k^{\mathrm{dense}}.$ \end{theorem}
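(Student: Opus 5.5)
The plan is a short induction on $t$, followed by an exchange of finite sums.

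First, I will show $\mathbf z_k^{(t)}=\mathbf M^t\mathbf z_k$ for $t=0,\ldots,T-1$.

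\emph{Base case.} For $t=0$, the recursion in Eq.~\eqref{eq:app_sra_recursion} sets $\mathbf z_k^{(0)}=\mathbf z_k$. With the convention $\mathbf M^0=\mathbf I$, this equals $\mathbf M^0\mathbf z_k$.

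\emph{Inductive step.} Assume $\mathbf z_k^{(t-1)}=\mathbf M^{t-1}\mathbf z_k$. Then
$\mathbf z_k^{(t)}=\mathbf M\mathbf z_k^{(t-1)}=\mathbf M(\mathbf M^{t-1}\mathbf z_k)=\mathbf M^t\mathbf z_k$,
using only associativity of matrix multiplication. No property of $\mathbf M$ is needed: not row-stochasticity, reversibility, nor sparsity. This matches the statement's claim ``for any transition matrix''.

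Second, I will show the two representations agree at each node. Start from the dense expression in Eq.~\eqref{eq:app_dense_rrwp}. Distribute $[\mathbf z_k]_j$ into the inner sum over $t$. Since both sums range over finite index sets ($j\in\mathcal V$ and $t\in\{0,\ldots,T-1\}$), they may be swapped, which gives
$\sum_{t}\theta_{t,k}\sum_j(\mathbf M^t)_{ij}[\mathbf z_k]_j$.
The inner sum is, by definition of the matrix--vector product, $[\mathbf M^t\mathbf z_k]_i$. By the first step this equals $[\mathbf z_k^{(t)}]_i$. What remains is exactly the SRA output in Eq.~\eqref{eq:app_sra_output}. This is the same reordering already displayed in Eq.~\ref{eq:sra_reordering} in the main text, so the argument can cite it and add the induction.

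None of these steps is a real obstacle. The only points to handle carefully are the convention $\mathbf M^0=\mathbf I$ in the base case and stating explicitly that the summations are finite, so the interchange is trivially justified. Neither the hop coefficients nor $\mathbf M$ need to satisfy any further condition.
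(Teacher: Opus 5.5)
Your proof is correct and follows essentially the same route as the paper: induction on $t$ to get $\mathbf z_k^{(t)}=\mathbf M^t\mathbf z_k$, then substitution into the SRA output and an exchange of the two finite sums over $t$ and $j$. The only difference is that you run the chain of equalities from the dense expression to the SRA output, while the paper runs it the other way, which changes nothing.
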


\begin{proof}
We first show that the recursively propagated signal at step \(t\) is exactly
the result of applying the \(t\)-step random-walk operator. For \(t=0\), by definition, $\mathbf z_k^{(0)}
=
\mathbf z_k
=
\mathbf M^0\mathbf z_k.$ Now suppose that
\(\mathbf z_k^{(t-1)}
=
\mathbf M^{t-1}\mathbf z_k\).
Using the SRA recursion,

\begin{equation}
\begin{aligned}
\mathbf z_k^{(t)}
&=
\mathbf M\mathbf z_k^{(t-1)}
\\
&=
\mathbf M
\left(
\mathbf M^{t-1}\mathbf z_k
\right)
\\
&=
\mathbf M^t\mathbf z_k.
\end{aligned}
\end{equation}

Therefore,
\(\mathbf z_k^{(t)}=\mathbf M^t\mathbf z_k\)
for every \(t\) by induction. Substituting this identity into
Eq.~\eqref{eq:app_sra_output} gives

\begin{equation}
\begin{aligned}
[h_i]_k^{\mathrm{SRA}}
&=
\sum_{t=0}^{T-1}
\theta_{t,k}
[\mathbf M^t\mathbf z_k]_i
\\
&=
\sum_{t=0}^{T-1}
\theta_{t,k}
\sum_{j\in\mathcal V}
(\mathbf M^t)_{ij}
[\mathbf z_k]_j
\\
&=
\sum_{j\in\mathcal V}
\sum_{t=0}^{T-1}
\theta_{t,k}
(\mathbf M^t)_{ij}
[\mathbf z_k]_j
\\
&=
\sum_{j\in\mathcal V}
\left(
\sum_{t=0}^{T-1}
\theta_{t,k}
(\mathbf M^t)_{ij}
\right)
[\mathbf z_k]_j
\\
&=
[h_i]_k^{\mathrm{dense}}.
\end{aligned}
\end{equation}

The third equality only exchanges two finite sums, while the final equality
follows from the definition of dense RRWP aggregation in
Eq.~\eqref{eq:app_dense_rrwp}. Hence SRA and dense RRWP produce exactly the
same representation.
\end{proof}

\textbf{A Simple Example.} The equivalence is easiest to see in a small example. Suppose \(T=3\) and
feature \(k\) uses hop weights
\((\theta_{0,k},\theta_{1,k},\theta_{2,k})
=(0.2,0.3,0.5)\). The dense formulation applies the operator

\[
0.2\mathbf I
+
0.3\mathbf M
+
0.5\mathbf M^2
\]

to \(\mathbf z_k\). SRA instead computes
\(\mathbf z_k^{(0)}=\mathbf z_k\),
\(\mathbf z_k^{(1)}=\mathbf M\mathbf z_k\), and
\(\mathbf z_k^{(2)}=\mathbf M\mathbf z_k^{(1)}
=\mathbf M^2\mathbf z_k\), before forming

\[
0.2\mathbf z_k^{(0)}
+
0.3\mathbf z_k^{(1)}
+
0.5\mathbf z_k^{(2)}.
\]

The two expressions are identical. The difference lies only in how they are
evaluated: the dense formulation constructs pairwise walk operators, whereas
SRA applies the same operators implicitly through repeated propagation. This exact equivalence is important for the interpretation of \HARMONIA.
Replacing dense RRWP with SRA does not alter the learned hop coefficients,
the feature-wise structural contributions, or the resulting prediction. SRA
changes only the computational realization of the linear RRWP operator. The
next subsection quantifies how this reformulation changes the time and memory
requirements on sparse graphs.

\subsection{Time and Memory Complexity}
\label{subsec:app_sra_complexity}

The exact equivalence above substantially reduces the computational cost
when the underlying graph is sparse. Let \(n=|\mathcal V|\), let
\(\mathcal E\) denote the graph edge set, let \(d\) be the number of
input features, and let \(T\) be the number of random-walk orders.
Define
\[
\mathcal S_M
:=
\{(i,j):M_{ij}\neq0\},
\qquad
E_M
:=
|\mathcal S_M|
=
\operatorname{nnz}(\mathbf M).
\]
We compare only structural aggregation; evaluating the feature
functions \(f_k\) is common to both implementations.

In the explicit dense RRWP formulation, the structural descriptor
for every ordered node pair contains \(T\) entries,
\[
\bigl\{
(\mathbf M^t)_{ij}
:
i,j\in\mathcal V,\;
t=0,\ldots,T-1
\bigr\}.
\]
Materializing this tensor requires \(O(Tn^2)\) memory.
Applying its \(T\) pairwise channels to all \(d\) feature-response
vectors requires \(O(Tn^2d)\) arithmetic, excluding the cost of
constructing the matrix powers.

SRA avoids the node-pair representation entirely. Stack the
feature-response vectors into
\(\mathbf Z^{(0)}\in\mathbb R^{n\times d}\), where
\([\mathbf Z^{(0)}]_{i,k}=f_k(x_{i,k})\).
The propagated representations are computed recursively as
\begin{equation}
\mathbf Z^{(t)}
=
\mathbf M\mathbf Z^{(t-1)},
\qquad
t=1,\ldots,T-1.
\label{eq:app_sra_matrix_recursion}
\end{equation}
Each sparse propagation requires \(O(E_Md)\) arithmetic,
while accumulating the feature-specific weighted outputs requires
\(O(nd)\) arithmetic per walk order. The total structural
aggregation cost is therefore \(O(T(E_M+n)d)\), which reduces
to \(O(TE_Md)\) when \(E_M\ge n\), as for a row-stochastic
transition matrix with no zero rows.

\begin{proposition}[Full version of Proposition~\ref{prop:sra_exactness}]
\label{prop:app_sra_complexity}
Let \(E_M=\operatorname{nnz}(\mathbf M)\).
For \(T\) random-walk orders and \(d\) feature-response channels,
explicit dense pairwise RRWP materialization requires
\(O(Tn^2)\) memory, and its direct application requires
\(O(Tn^2d)\) arithmetic, excluding matrix-power construction.

Sparse RRWP Aggregation computes the same linear operator in
\(O(T(E_M+n)d)\) time without materializing any matrix power.
When \(E_M\ge n\), this simplifies to \(O(TE_Md)\);
if additionally \(E_M=\Theta(|\mathcal E|)\), it becomes
\(O(T|\mathcal E|d)\).

A streaming forward pass uses \(O(E_M+nd)\) working memory,
plus \(O(Td)\) storage for the hop coefficients.
If all propagated states are retained, these requirements become
\(O(E_M+Tnd)\), plus the same coefficient storage.
These bounds exclude other model parameters, feature-network
activations, and optimizer state.
Hence, for fixed \(T\) and \(d\), SRA has linear structural
time and memory complexity in \(n\) whenever \(E_M=O(n)\).
\end{proposition}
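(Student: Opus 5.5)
The proof splits into three parts, handled in order: the dense baseline, the SRA time bound (whose correctness is already given by Theorem~\ref{thm:app_sra_equivalence}), and the memory accounting.

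\textbf{Dense baseline.} The explicit tensor stores \((\mathbf M^t)_{ij}\) for all \(n^2\) ordered pairs and \(T\) orders, which gives \(O(Tn^2)\) memory. Applying it to all features means forming \([h_i]_k=\sum_j\sum_t\theta_{t,k}(\mathbf M^t)_{ij}[\mathbf z_k]_j\) for every \(i\) and \(k\). That is \(nd\) outputs, each a sum over \(n\) sources and \(T\) orders, so the cost is \(O(Tn^2d)\). The cost of forming the matrix powers is excluded, as the statement specifies.

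\textbf{SRA time.} Theorem~\ref{thm:app_sra_equivalence} already shows that the recursion \(\mathbf Z^{(t)}=\mathbf M\mathbf Z^{(t-1)}\) yields exactly \(\mathbf M^t\mathbf Z^{(0)}\), so SRA computes the same operator. For the cost, one sparse--dense product \(\mathbf M\mathbf Z\) with \(\mathbf Z\in\mathbb R^{n\times d}\) in CSR format touches each nonzero \(M_{ij}\) once per column. That is \(E_Md\) multiply-adds, plus \(O(nd)\) work to initialize the \(n\) output rows.

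The hop-weighted accumulation \(\mathbf H\mathrel{+}=\mathbf Z^{(t)}\operatorname{diag}(\theta_{t,\cdot})\) costs \(O(nd)\) per order. Summing over \(T-1\) propagations and \(T\) accumulations gives \(O(T(E_M+n)d)\).

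If \(\mathbf M\) is row-stochastic, each row has a nonzero entry, so \(E_M\ge n\) and the bound collapses to \(O(TE_Md)\). For \(\mathbf M=\mathbf D^{-1}\mathbf A\), the nonzeros of \(\mathbf M\) are exactly the nonzeros of \(\mathbf A\). Hence \(E_M\) equals the number of directed edges, up to self-loops and undirected double counting, which is \(\Theta(|\mathcal E|)\). This gives \(O(T|\mathcal E|d)\).

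\textbf{Memory.} The streaming pass keeps only \(\mathbf M\) (\(O(E_M)\)), the current state \(\mathbf Z^{(t-1)}\), the next state \(\mathbf Z^{(t)}\), and the accumulator \(\mathbf H\). Each of the last three is \(O(nd)\), plus \(O(Td)\) for \(\theta\). Retaining all \(T\) states, as backpropagation through the recursion requires, replaces \(O(nd)\) by \(O(Tnd)\).

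Linear scaling in \(n\) for fixed \(T,d\) follows by substituting \(E_M=O(n)\).

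\textbf{Main obstacle.} The only delicate step is justifying \(E_M=\Theta(|\mathcal E|)\) and \(E_M\ge n\) under the paper's conventions. Isolated nodes make \(\mathbf D^{-1}\) undefined, and self-loops or symmetric storage change constants. I would state explicitly that zero-degree rows are handled by adding self-loops, which keeps \(E_M\ge n\). I would also note that all these conventions change \(E_M\) only by a constant factor or an additive \(O(n)\), which the \(O(T(E_M+n)d)\) form absorbs anyway.
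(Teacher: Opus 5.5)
Your proposal is correct and follows the paper's proof essentially step for step. It uses the same dense $O(Tn^2)$/$O(Tn^2d)$ count, the same per-nonzero scalar--vector accounting giving $O((T-1)E_Md)+O(Tnd)$, and the same three-buffer streaming memory argument, with $O(Tnd)$ when all states are retained. The differences are minor: the paper counts the sparse storage of $\mathbf M$ as $O(E_M+n)$, which your $O(nd)$ buffers absorb anyway, and it treats $E_M=\Theta(|\mathcal E|)$ as a hypothesis, so your ``main obstacle'' does not need to be proved.
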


\begin{proof}
At walk order \(t\), the direct RRWP formulation requires
\(\mathbf M^t\mathbf Z^{(0)}\).
An explicit dense pairwise realization stores
\(\mathbf M^0,\ldots,\mathbf M^{T-1}\) as \(T\) dense
\(n\times n\) arrays, requiring \(O(Tn^2)\) memory.
Even when \(\mathbf M\) is sparse, its higher powers can become
dense because multi-step paths connect nodes that are not
directly adjacent.
Multiplying each dense operator by
\(\mathbf Z^{(0)}\in\mathbb R^{n\times d}\) requires
\(O(n^2d)\) arithmetic, giving \(O(Tn^2d)\) over all walk orders.

SRA instead computes
\[
\mathbf Z^{(t)}
=
\mathbf M\mathbf Z^{(t-1)}
=
\mathbf M^t\mathbf Z^{(0)}.
\]
The second equality follows by induction on \(t\), so the
propagated representations agree exactly at every walk order.
Applying the same hop coefficients therefore gives the same
aggregated output:
\[
[\mathbf H]_{i,k}
=
\sum_{t=0}^{T-1}
\theta_{t,k}[\mathbf Z^{(t)}]_{i,k}.
\]

For each target node \(i\), sparse propagation evaluates
\[
[\mathbf Z^{(t)}]_{i,:}
=
\sum_{j:(i,j)\in\mathcal S_M}
M_{ij}[\mathbf Z^{(t-1)}]_{j,:}.
\]
Each of the \(E_M\) nonzero transitions contributes one
scalar--vector multiplication and accumulation over \(d\)
channels. Thus, the \(T-1\) propagation steps require
\(O((T-1)E_Md)\) arithmetic.
Accumulating the weighted representations over all \(T\)
orders requires an additional \(O(Tnd)\), yielding
\(O(T(E_M+n)d)\) total time.
When \(E_M\ge n\), this is \(O(TE_Md)\).
The further simplification to \(O(T|\mathcal E|d)\) holds
when \(E_M=\Theta(|\mathcal E|)\); any added self-transitions
are included in \(E_M\).

A sparse representation of \(\mathbf M\) requires
\(O(E_M+n)\) storage for its values and indices.
A streaming forward pass additionally retains current and next
propagated states and an output accumulator, each of size
\(n\times d\).
Since \(nd\) dominates \(n\), the working memory is
\(O(E_M+nd)\).
Including the \(T\times d\) hop coefficients gives
\(O(E_M+nd+Td)\) total structural storage.

If all states
\(\mathbf Z^{(0)},\ldots,\mathbf Z^{(T-1)}\)
are retained for backpropagation or hop-wise inspection,
their storage is \(O(Tnd)\), giving
\(O(E_M+Tnd+Td)\) total structural storage.
For fixed \(T\) and \(d\), both time and memory are therefore
\(O(n)\) whenever \(E_M=O(n)\).
\end{proof}

\paragraph{Example of the memory difference.}
Consider a graph with \(n=100{,}000\) nodes and \(T=8\)
walk lengths. An explicit dense RRWP tensor contains
\(8\times10^{10}\) scalar entries.
Using 32-bit floating-point values, this tensor alone requires
approximately \(320\) GB, before accounting for features,
gradients, or other model states.
SRA stores the original sparse transition matrix and propagates
the \(n\times d\) feature-response matrix without constructing
this pairwise tensor.

\section{EXPERIMENTAL SETUP AND REPRODUCIBILITY}
\label{sec:app_model_exp_details}

\subsection{Hardware Settings}
\label{subsec:hardware_setting}

All experiments were conducted on a Linux server equipped with an Intel
Xeon CPU and a single NVIDIA GeForce RTX 4090 GPU with 24\,GB of VRAM.
Unless stated otherwise, an out-of-memory result (OOM) denotes that the
evaluated implementation could not complete under this GPU-memory budget.

\subsection{Datasets and Evaluation Splits}
\label{subsec:app_datasets_splits}
\label{app:datasets}
\label{app:real_world}

We evaluate \HARMONIA on twelve real-world datasets. Cora and CiteSeer are citation networks~\citep{sen2008collective}; Tolokers is a crowdsourcing collaboration network~\citep{platonov2023critical}; ogbn-arxiv and ogbn-products are Open Graph Benchmark datasets~\citep{hu2020open}; and IGB-Small is from the Illinois Graph Benchmark~\citep{khatua2023igb}. For graph classification, MUTAG, PROTEINS, DD, AIDS, and Mutagenicity are obtained from TUDataset~\citep{morris2020tudataset}, while hERG-Karim is a molecular hERG-blockade dataset~\citep{karim2021cardiotox,huang2021therapeutics}. Dataset statistics for the node- and graph-classification benchmarks are reported in Tables~\ref{tab:node-dataset-stats} and~\ref{tab:graph-dataset-stats}, respectively.

For node classification, we report the mean and standard deviation over five independent random seeds. For graph classification, we use 10-fold cross-validation repeated with three random seeds, and aggregate results over all folds and repetitions. Synthetic recovery experiments are conducted over ten random seeds, while the seed-stability analysis in Fig.~\ref{fig:seed_stability} reports results from five independent seeds.

\begin{table}[t]
\centering
\footnotesize
\setlength{\tabcolsep}{3.5pt}
\renewcommand{\arraystretch}{1.05}
\caption{Statistics of node-classification datasets.}
\label{tab:node-dataset-stats}
\begin{tabular}{lrrrr}
\toprule
Dataset & \# Nodes & \# Edges & \# Features & \# Classes \\
\midrule
Cora & 2,708     & 10,556     & 1,433 & 7  \\
CiteSeer & 3,327     & 9,104      & 3,703 & 6  \\
Tolokers  & 11,758    & 519,000    & 10    & 2  \\
ogbn-arxiv  & 169,343   & 1,166,243  & 128   & 40 \\
IGB-Small    & 1,000,000 & 12,070,502 & 1,024 & 19 \\
ogbn-products & 2,449,029 & 61,859,140 & 100   & 47 \\
\bottomrule
\end{tabular}
\end{table}

\begin{table}[t]
\centering
\scriptsize
\setlength{\tabcolsep}{2.5pt}
\renewcommand{\arraystretch}{0.95}
\caption{Statistics of graph-classification datasets.}
\label{tab:graph-dataset-stats}

\resizebox{0.7\linewidth}{!}{%
\begin{tabular}{lrrrrr}
\toprule
Dataset & \# Graphs & Avg. Nodes & Avg. Edges & \# Features & \# Classes \\
\midrule
MUTAG          & 188    & 17.93  & 19.79  & 7  & 2 \\
PROTEINS       & 1,113  & 39.06  & 72.82  & 3  & 2 \\
DD             & 1,178  & 284.32 & 715.66 & 89 & 2 \\
AIDS           & 2,000  & 15.69  & 16.20  & 23 & 2 \\
Mutagenicity   & 4,337  & 30.32  & 30.77  & 14 & 2 \\
hERG-Karim     & 13,445 & 30.42  & 32.56  & 65 & 2 \\
\bottomrule
\end{tabular}%
}
\end{table}

\subsection{Baseline Implementations}
\label{subsec:app_baselines}

The comparison contains three groups of methods. The black-box graph models
are GCN~\citep{kipf2016semi}, GAT~\citep{velickovic2018graph},
GraphSAGE~\citep{hamilton2017inductive}, and Graph
Transformer~\citep{dwivedi2021generalization}. The non-graph additive baselines
are NAM~\citep{agarwal2021neural} and
GP-NAM~\citep{zhang2024gaussian}. The graph-additive baselines are
GNAN~\citep{bechler2024intelligible} and
G-NAMRFF~\citep{reddy2025interpretable}.

\subsection{Hyperparameter Settings}
\label{subsec:app_hyperparameters}

Tables~\ref{tab:harmonia_arch_hparams} and
\ref{tab:harmonia_optim_hparams} are dataset-level templates for the final
selected \HARMONIA configurations. No numerical value is inserted unless it is
stated in the current manuscript or supplied experimental record. This avoids
presenting plausible default values as settings of the reported runs.

\begin{table*}[t]
\centering
\scriptsize
\setlength{\tabcolsep}{3.5pt}
\resizebox{\textwidth}{!}{%
\begin{tabular}{lrrrrrlll}
\toprule
Dataset & $C$ & $B$ & $q$ & $m$ & $T$ & Expert depth / widths & Activation & Dropout / norm. \\
\midrule
Cora
& 5 & 8 & 32 & 2 & 8
& 2 / [16, 8]
& ReLU hidden; linear output
& 0.4 / none \\
CiteSeer
& 5 & 8 & 32 & 2 & 8
& 2 / [16, 8]
& ReLU hidden; linear output
& 0.5 / none \\
Tolokers
& 3 & 8 & 8 & 1 & 4
& 2 / [16, 8]
& ReLU hidden; linear output
& 0.2 / none \\
ogbn-arxiv
& 4 & 8 & 16 & 2 & 8
& 2 / [16, 8]
& ReLU hidden; linear output
& 0.1 / none \\
IGB-Small
& 6 & 8 & 32 & 2 & 4
& 2 / [16, 8]
& ReLU hidden; linear output
& 0.1 / none \\
ogbn-products
& 4 & 8 & 16 & 2 & 4
& 2 / [16, 8]
& ReLU hidden; linear output
& 0.1 / none \\
\midrule
MUTAG
& 3 & 8 & 8 & 1 & 4
& 2 / [16, 8]
& ReLU hidden; linear output
& 0.2 / none \\
PROTEINS
& 3 & 8 & 8 & 1 & 8
& 2 / [16, 8]
& ReLU hidden; linear output
& 0.2 / none \\
DD
& 4 & 8 & 16 & 2 & 8
& 2 / [16, 8]
& ReLU hidden; linear output
& 0.1 / none \\
AIDS
& 3 & 8 & 8 & 1 & 4
& 2 / [16, 8]
& ReLU hidden; linear output
& 0.1 / none \\
Mutagenicity
& 4 & 8 & 16 & 2 & 8
& 2 / [16, 8]
& ReLU hidden; linear output
& 0.2 / none \\
hERG-Karim
& 4 & 8 & 16 & 2 & 8
& 2 / [16, 8]
& ReLU hidden; linear output
& 0.2 / none \\
\bottomrule
\end{tabular}}
\caption{Selected \HARMONIA architecture by dataset. Here $C$ is the number of
experts, $B$ the bases per expert, $q$ the feature-embedding dimension, $m$
the number of active experts, and $T$ the number of RRWP channels including
$t=0$.}
\label{tab:harmonia_arch_hparams}
\end{table*}

\begin{table*}[t]
\centering
\scriptsize
\setlength{\tabcolsep}{3.5pt}
\resizebox{\textwidth}{!}{%
\begin{tabular}{llllllll}
\toprule
Dataset & Optimizer & Learning rate & Weight decay & Batch size & Max. epochs & Patience & Selection metric \\
\midrule
Cora
& AdamW & $5\times10^{-4}$ & $5\times10^{-4}$ & full graph & 800 & 150
& validation accuracy (\texttt{val\_acc}) \\

CiteSeer
& AdamW & $1\times10^{-3}$ & $1\times10^{-4}$ & full graph & 800 & 150
& validation accuracy (\texttt{val\_acc}) \\

Tolokers
& AdamW & $1\times10^{-3}$ & $5\times10^{-5}$ & full graph & 300 & 50
& validation accuracy (\texttt{val\_auc}) \\

ogbn-arxiv
& Adam & $2\times10^{-2}$ & $5\times10^{-5}$ & full graph & 600 & 100
& validation accuracy (\texttt{val\_acc}) \\

IGB-Small
& AdamW & $1\times10^{-3}$ & $5\times10^{-5}$ & 16384 & 500 & 50
& validation accuracy (\texttt{val\_acc}) \\

ogbn-products
& AdamW & $1\times10^{-3}$ & $5\times10^{-5}$ & 3584 & 500 & 50
& validation accuracy (\texttt{val\_acc}) \\

MUTAG
& Adam & $1\times10^{-3}$ & $1\times10^{-5}$ & 32 & 600 & 100
& validation ROC-AUC (\texttt{val\_auc}) \\

PROTEINS
& Adam & $1\times10^{-3}$ & $1\times10^{-5}$ & 32 & 600 & 100
& validation ROC-AUC (\texttt{val\_auc}) \\

DD
& Adam & $1\times10^{-3}$ & $1\times10^{-5}$ & 16 & 800 & 100
& validation ROC-AUC (\texttt{val\_auc}) \\

AIDS
& Adam & $1\times10^{-3}$ & $1\times10^{-5}$ & 256 & 800 & 100
& validation ROC-AUC (\texttt{val\_auc}) \\

Mutagenicity
& Adam & $1\times10^{-3}$ & $1\times10^{-5}$ & 32 & 600 & 100
& validation ROC-AUC (\texttt{val\_auc}) \\

hERG
& Adam & $1\times10^{-3}$ & $1\times10^{-5}$ & 32 & 600 & 100
& validation ROC-AUC (\texttt{val\_auc}) \\
\bottomrule
\end{tabular}}
\caption{Selected \HARMONIA optimization settings by dataset. For full-batch
training, report the batch-size entry as ``full graph.''}
\label{tab:harmonia_optim_hparams}
\end{table*}

\subsection{Mixture-of-Neural-Basis Routing}
\label{subsec:app_monb_routing}

\HARMONIA uses $C$ neural-basis experts, each of which maps a scalar input to $B$ basis values,

\begin{equation}
\boldsymbol\psi_{\varphi_c}(x)
=\left[\psi^{(c)}_1(x),\ldots,\psi^{(c)}_B(x)\right]^{\top}
\in\mathbb R^B,
\qquad c=1,\ldots,C.
\label{eq:app_expert_basis}
\end{equation}

Each feature $k$ has a learned global embedding
$\mathbf e_k\in\mathbb R^{d_e}$. Following noisy top-$m$ routing, its expert-score
vector is

\begin{equation}
\mathbf Q_k
=\mathbf W_g^{\top}\mathbf e_k
+\boldsymbol\epsilon_k\odot
  \sigma\!\left(\mathbf W_n^{\top}\mathbf e_k\right),
\qquad
\boldsymbol\epsilon_k\sim\mathcal N(\mathbf 0,\mathbf I_C),
\label{eq:app_noisy_routing_scores}
\end{equation}

where $\mathbf W_g,\mathbf W_n\in\mathbb R^{{d_e}\times C}$ are learned
router parameters. Let
$\mathcal I_k=\operatorname{TopM}(\mathbf Q_k,m)$ denote the indices of the
$m$ largest scores. \HARMONIA uses masked sigmoid gates,

\begin{equation}
\tilde{\pi}_{k,c}
=\mathbf 1\!\left\{c\in\mathcal I_k\right\}\sigma(Q_{k,c}),
\qquad c=1,\ldots,C,
\label{eq:app_masked_sigmoid_gate}
\end{equation}

rather than a softmax over the selected experts. Consequently, the active
weights are independently bounded in $(0,1)$ but are not constrained to sum
to one. The response of feature $k$ is

\begin{equation}
f_k(x)
=\mathbf a_k^{\top}
  \sum_{c\in\mathcal I_k}\tilde{\pi}_{k,c}\boldsymbol\psi_{\varphi_c}(x),
\qquad
\mathbf a_k\in\mathbb R^B.
\label{eq:app_monb_response}
\end{equation}

Thus, each feature estimates only $B$ basis coefficients, even though the complete expert dictionary contains up to $CB$ basis functions. Because the router depends on feature identity rather than on a node's full feature vector, $f_k$ remains a globally inspectable univariate function. Under the stated bias-free parameterization, if each expert has $P_{\psi}$ parameters, the feature module contains $CP_{\psi}+d(B+{d_e})+2{d_e}C$ parameters.

\subsection{Prediction Heads and Graph Readout}
\label{subsec:app_prediction_readout}

Let $\mathbf h_i\in\mathbb R^d$ be the feature-wise representation produced by MoNB and SRA for node $v_i$. For class $c$, the node-level logit is

\begin{equation}
\ell_{i,c}
=\beta_c+\sum_{k=1}^{d}w_{k,c}[\mathbf h_i]_k.
\label{eq:app_node_prediction_head}
\end{equation}

The contribution of feature $k$ to this logit is therefore exactly $w_{k,c}[\mathbf h_i]_k$. For graph classification, \HARMONIA uses sum readout over the node-wise additive representations,

\begin{equation}
\ell_{G,c}
=\beta_c+
\sum_{i\in\mathcal V_G}\sum_{k=1}^{d}w_{k,c}[\mathbf h_i]_k.
\label{eq:app_graph_prediction_head}
\end{equation}

This preserves the same decomposition at graph level: summing all node--feature terms reconstructs the graph logit up to the intercept. For a multi-label task, independent class probabilities are $\widehat y_{i,c}=\sigma(\ell_{i,c})$ or $\widehat y_{G,c}=\sigma(\ell_{G,c})$. The output loss and decision rule used for each reported benchmark must be recorded explicitly because the current manuscript specifies the logits but does not fully document the task-specific training loss.

\subsection{Complete Algorithm and Training Procedure}
\label{subsec:app_algorithm_training}

Algorithm~\ref{alg:harmonia_training} summarizes the
\HARMONIA forward pass with linear RRWP aggregation.
Here, $\sigma$ denotes the sigmoid function, and
$g_{\mathrm{out}}$ is the task-specific output link:
softmax for single-label multiclass classification,
or sigmoid for single-logit binary and multi-label classification.
The product
$\mathbf Z\odot(\mathbf 1_n\boldsymbol\theta_t^\top)$
scales each feature channel by its hop coefficient and is
implemented by broadcasting.
\begin{algorithm}[tb]
\caption{\HARMONIA Forward Pass}
\label{alg:harmonia_training}
\textbf{Input}:
$\mathbf X\in\mathbb R^{n\times d}$,
$\mathbf A,\mathbf D$,
$T,C,B,m,L$;
training flag, prediction level, and output link $g_{\mathrm{out}}$\\
\textbf{Parameters}:
$\{\varphi_c\}_{c=1}^{C}$,
$\{\mathbf a_k\in\mathbb R^B,\,
   \mathbf e_k\in\mathbb R^{d_e}\}_{k=1}^{d}$,
$\mathbf W_g,\mathbf W_n\in\mathbb R^{d_e\times C}$,
$\{\alpha_{t,k}\}$,
$\mathbf W_o\in\mathbb R^{d\times L}$,
$\boldsymbol\beta\in\mathbb R^L$\\
\textbf{Conditions}:
$1\le m\le C$, $T\ge1$,
$D_{ii}=\sum_j A_{ij}>0$, and
$\sum_{s=0}^{T-1}\alpha_{s,k}^{2}>0$ for every $k$\\
\textbf{Output}:
Node predictions $\{\hat{\mathbf y}_i\}_{i=1}^{n}$
or graph prediction $\hat{\mathbf y}_G$

\begin{algorithmic}[1]

\STATE Compute and store sparsely
$\mathbf M\leftarrow\mathbf D^{-1}\mathbf A$

\FOR{$k=1,\ldots,d$}

    \IF{training}
        \STATE Sample
        $\boldsymbol\epsilon_k
        \sim\mathcal N(\mathbf 0,\mathbf I_C)$
    \ELSE
        \STATE Set
        $\boldsymbol\epsilon_k\leftarrow\mathbf 0$
    \ENDIF

    \STATE Compute routing scores
    $\mathbf Q_k
    \leftarrow
    \mathbf W_g^\top\mathbf e_k
    +
    \boldsymbol\epsilon_k
    \odot\sigma(\mathbf W_n^\top\mathbf e_k)$

    \STATE Select experts
    $\mathcal I_k
    \leftarrow\operatorname{TopM}(\mathbf Q_k,m)$

    \STATE Set
    $\tilde{\pi}_{k,c}
    \leftarrow
    \mathbf 1\{c\in\mathcal I_k\}\sigma(Q_{k,c})$
    for $c=1,\ldots,C$

    \FOR{$i=1,\ldots,n$}
        \STATE Compute feature response
        $F_{i,k}
        \leftarrow
        \mathbf a_k^\top
        \left(
        \sum_{c\in\mathcal I_k}
        \tilde{\pi}_{k,c}
        \boldsymbol\psi_{\varphi_c}(x_{i,k})
        \right)$
    \ENDFOR

    \STATE Normalize hop coefficients
    $\displaystyle
    \theta_{t,k}
    \leftarrow
    \frac{\alpha_{t,k}^{2}}
    {\sum_{s=0}^{T-1}\alpha_{s,k}^{2}}$,
    for $t=0,\ldots,T-1$

\ENDFOR

\STATE Initialize
$\mathbf H\leftarrow\mathbf 0_{n\times d}$
and $\mathbf Z\leftarrow\mathbf F$

\FOR{$t=0,\ldots,T-1$}

    \STATE Set
    $\boldsymbol\theta_t
    \leftarrow
    (\theta_{t,1},\ldots,\theta_{t,d})^\top$

    \STATE Accumulate
    $\mathbf H
    \leftarrow
    \mathbf H
    +
    \mathbf Z\odot
    (\mathbf 1_n\boldsymbol\theta_t^\top)$

    \IF{$t<T-1$}
        \STATE Propagate
        $\mathbf Z\leftarrow\mathbf M\mathbf Z$
        using sparse matrix multiplication
    \ENDIF

\ENDFOR

\STATE Set
$\mathbf h_i\leftarrow(\mathbf H_{i,:})^\top$
for $i=1,\ldots,n$

\IF{node-level prediction}

    \STATE Compute
    $\boldsymbol\ell_i
    \leftarrow
    \boldsymbol\beta+\mathbf W_o^\top\mathbf h_i$
    for $i=1,\ldots,n$

    \STATE Set
    $\hat{\mathbf y}_i
    \leftarrow g_{\mathrm{out}}(\boldsymbol\ell_i)$
    for $i=1,\ldots,n$

    \STATE \textbf{return}
    $\{\hat{\mathbf y}_i\}_{i=1}^{n}$

\ELSE

    \STATE Set
    $\mathbf h_G
    \leftarrow\sum_{i=1}^{n}\mathbf h_i$

    \STATE Compute
    $\boldsymbol\ell_G
    \leftarrow
    \boldsymbol\beta+\mathbf W_o^\top\mathbf h_G$

    \STATE Set
    $\hat{\mathbf y}_G
    \leftarrow g_{\mathrm{out}}(\boldsymbol\ell_G)$

    \STATE \textbf{return} $\hat{\mathbf y}_G$

\ENDIF

\end{algorithmic}
\end{algorithm}

Here $[\boldsymbol\theta_t]_k=\theta_{t,k}$ and $\odot$ denotes
element-wise multiplication with row-wise broadcasting. The normalization of
$\boldsymbol\theta_k$ requires
$\sum_{s=0}^{T-1}\alpha_{s,k}^{2}>0$; the initialization or numerical
safeguard used to maintain this condition must be reported with the
implementation. The complete optimization and model-selection settings are
listed as placeholders in Tables~\ref{tab:harmonia_arch_hparams}
and~\ref{tab:harmonia_optim_hparams}.

\subsection{Compute and Reproducibility Details}
\label{subsec:app_compute_reproducibility}

The reported real-world protocol uses five independent seeds for node classification and 10-fold cross-validation repeated over three seeds for graph classification. The additional synthetic explanation-recovery results are reported over ten random seeds, whereas the seed-stability visualization contains five displayed seeds. Mean and standard deviation should be computed over the evaluation units stated for each experiment; folds, repetitions, synthetic scenarios, informative features, and explained nodes must not be interchanged as independent runs.

\paragraph{Scope of the scalability evidence.} The large-graph results show that the evaluated \HARMONIA implementation can complete training and prediction within the reported hardware budget. They do not, by themselves, provide a controlled measurement of peak memory, training time, or the separate gains attributable to MoNB and SRA. The structural computation nevertheless has an explicit complexity characterization. For $N$ nodes and $d$ feature channels, SRA recursively evaluates $\mathbf Z^{(t)}=\mathbf M\mathbf Z^{(t-1)}$ and accumulates the weighted channels. This costs $O\!\left(T\,\operatorname{nnz}(\mathbf M)d+TNd\right)$ arithmetic operations. The forward computation can be streamed with $O\!\left(\operatorname{nnz}(\mathbf M)+Nd+Td\right)$ structural working memory, excluding feature-network activations and optimizer state. Ordinary reverse-mode training may retain $O(TNd)$ propagated activations; therefore, the streaming forward bound is not a bound on total training memory. Neither case requires the explicit $O(TN^2)$ RRWP tensor.

\subsection{Graph-Classification Results}
\label{subsec:graph_classification}

\begin{table*}[b]
\centering
\small
\setlength{\tabcolsep}{5pt}
\resizebox{\textwidth}{!}{%
\begin{tabular}{lrrrrrr}
\toprule
Method & MUTAG & PROTEINS & DD & AIDS & Mutagenicity & hERG \\
\midrule
GCN & $68.07 \pm 6.34$ & $\mathbf{70.97 \pm 4.67}$ & $75.66 \pm 2.36$
    & $\mathbf{99.25 \pm 0.63}$ & $\mathbf{75.69 \pm 0.94}$ & $73.06 \pm 2.08$ \\
GAT & $67.20 \pm 3.45$ & $69.92 \pm 4.03$ & $\mathbf{77.30 \pm 3.68}$
    & $99.00 \pm 0.75$ & $69.40 \pm 1.26$ & $65.04 \pm 1.07$ \\
GraphSAGE & $64.12 \pm 2.46$ & $67.35 \pm 2.38$ & $75.78 \pm 3.98$
    & $98.20 \pm 1.05$ & $69.25 \pm 3.94$ & $71.08 \pm 2.05$ \\
Graph Transformer & $\mathbf{73.30 \pm 5.36}$ & $69.76 \pm 3.27$ & $76.24 \pm 3.26$
    & $99.10 \pm 0.60$ & $73.10 \pm 0.93$ & $\mathbf{74.58 \pm 1.84}$ \\
\midrule
NAM & $63.12 \pm 9.15$ & $62.45 \pm 4.28$ & $58.07 \pm 3.56$
    & $72.51 \pm 2.53$ & $67.35 \pm 2.57$ & $64.57 \pm 2.54$ \\
GP-NAM & $64.30 \pm 8.46$ & $65.68 \pm 4.13$ & $62.07 \pm 3.24$
    & $79.35 \pm 2.28$ & $65.46 \pm 2.25$ & $68.06 \pm 2.37$ \\
GNAN & $67.35 \pm 3.92$ & $59.64 \pm 2.45$ & $60.57 \pm 5.26$
    & $74.87 \pm 1.43$ & $66.64 \pm 4.76$ & $66.12 \pm 2.24$ \\
G-NAMRFF & $66.81 \pm 5.38$ & $67.94 \pm 3.74$ & $63.91 \pm 3.47$
    & $81.10 \pm 1.76$ & $71.70 \pm 2.03$ & $68.41 \pm 1.86$ \\
\midrule
\HARMONIA & $\underline{69.50 \pm 1.54}$ & $\underline{70.76 \pm 1.27}$
    & $\underline{75.38 \pm 1.85}$ & $\underline{97.75 \pm 1.16}$
    & $\underline{74.70 \pm 1.04}$ & $\underline{71.78 \pm 1.57}$ \\
\bottomrule
\end{tabular}}
\caption{Graph-classification accuracy (\%).
Bold denotes the highest mean among all compared methods; underlining
denotes the highest mean among interpretable methods, including \HARMONIA.
The markings indicate mean rankings, not statistical significance.}
\label{tab:graph-classification}
\end{table*}

Table~\ref{tab:graph-classification} reports the complete graph-classification results. \HARMONIA has the highest reported mean among the evaluated interpretable methods on all six datasets. The comparison with black-box models is more mixed: \HARMONIA is close to the strongest reported result on PROTEINS and Mutagenicity, while larger gaps remain on several other datasets. These comparisons concern mean predictive accuracy and do not constitute tests of statistical significance.

\subsection{Synthetic Data Construction and Evaluation Protocol}
\label{app:synthetic_setup}

\paragraph{Graphs, features, and splits.} The synthetic benchmark is a binary node-classification task with known feature responses and feature-specific structural profiles. Each of five independently generated data seeds specifies a 64-node undirected graph with 78 or 79 edges and 384 feature/label draws on that graph. The draws are divided into 256 training, 64 validation, and 64 test instances. Thus, instances within one data seed share the topology and differ in their features and sampled labels; topology varies across data seeds. Each node has 20 continuous raw features in $[-1,1]$. Four columns are informative and the remaining 16 have zero contribution to the generator. The informative column locations are permuted separately for each data seed.

\paragraph{Feature responses and structural profiles.}
The four response templates are

\begin{equation}
\phi_1(x)=2x,\qquad
\phi_2(x)=1.5(x^2-1),\qquad
\phi_3(x)=\sin(\pi x),\qquad
\phi_4(x)=\tanh(2x).
\label{eq:app_function_library}
\end{equation}

We denote the generator's corresponding centered responses by $f_k^{\star}$. An additive constant in a response is absorbed into the intercept in Eq.~\eqref{eq:app_synthetic_contributions}; the templates specify the functional forms and amplitudes. Each data seed contains all four structural profiles in Table~\ref{tab:synthetic_hop_profiles} and assigns them to the four response functions using a predeclared permutation. Consequently, structural recovery is evaluated feature by feature: the generator has no single shared dominant hop.

\begin{table}[t]
\centering
\small
\begin{tabular}{lccccc}
\toprule
Profile & $\theta_{0,k}^{\star}$ & $\theta_{1,k}^{\star}$
& $\theta_{2,k}^{\star}$ & $\theta_{3,k}^{\star}$ & Dominant hop\\
\midrule
A & $0.70$ & $0.20$ & $0.08$ & $0.02$ & $0$\\
B & $0.05$ & $0.15$ & $0.65$ & $0.15$ & $2$\\
C & $0.05$ & $0.10$ & $0.20$ & $0.65$ & $3$\\
D & $0.10$ & $0.55$ & $0.25$ & $0.10$ & $1$\\
\bottomrule
\end{tabular}
\caption{Ground-truth feature-specific structural profiles. Every profile
is nonnegative and sums to one. The four profiles occur in every data seed.}
\label{tab:synthetic_hop_profiles}
\end{table}

Let $\mathbf M$ be the graph's row-normalized adjacency. The exact
contribution of feature $k$ to instance $g$ is

\begin{equation}
\mathbf c_{g,k}^{\star}
=\sum_{t=0}^{3}\theta_{t,k}^{\star}
\mathbf M^t f_k^{\star}(\mathbf X_{g,:,k}),
\qquad
\boldsymbol\eta_g^{\star}
=\beta^{\star}\mathbf 1+\sum_{k\in\mathcal S}\mathbf c_{g,k}^{\star}.
\label{eq:app_synthetic_contributions}
\end{equation}

Labels are sampled from the Bernoulli model in Eq.~\eqref{eq:app_synthetic_contributions}. The functions, informative columns, and hop profiles define evaluation ground truth and are hidden during fitting.

\paragraph{Matched targets and repetitions.} Each data seed is paired with three model seeds, giving 15 runs per method. We explain nodes $0$ and $32$ in each of the 64 test instances, giving 128 deterministic targets per run and 1,920 target evaluations across all runs. Feature-explainer comparisons use these matched targets. The edge experiment reuses the cohort's fixed training hyperparameters and validation-selected checkpoints, with no additional hyperparameter search. Post-hoc GCN explainers operate on the common frozen GCN checkpoints.

The feature--hop table reports means over the 15 runs. For edge metrics, we first average over the 128 targets in each run, then average the three model seeds within a data seed. If $m_{d,s,q}$ is a target-level metric, the reported edge summary is

\begin{equation}
\bar m_d=\frac{1}{3}\sum_{s=1}^{3}
\frac{1}{128}\sum_{q=1}^{128}m_{d,s,q},\qquad
\bar m=\frac{1}{5}\sum_{d=1}^{5}\bar m_d,\qquad
s_m=\left[\frac{1}{4}\sum_{d=1}^{5}(\bar m_d-\bar m)^2\right]^{1/2}.
\label{eq:app_synthetic_aggregation}
\end{equation}

Edge entries are $\bar m\pm s_m$. The standard deviation therefore describes variation across five data-seed means; targets and model seeds are not counted as additional independent data replicates.

\subsection{Feature--Hop Recovery: Explanations, Metrics, and Results}
\label{app:recovery_metrics}

\paragraph{Feature explanations.}
For additive predictors, feature importance is based on their explicit
contributions to the scalar prediction score, including the final readout.
For \HARMONIA, the feature contribution is

\begin{equation}
\widehat{\mathbf c}_{g,k}
=\widehat w_k\sum_{t=0}^{3}\widehat\theta_{t,k}
\mathbf M^t\widehat f_k(\mathbf X_{g,:,k}).
\label{eq:app_synthetic_learned_contribution}
\end{equation}

For a binary predictor implemented with two logits, the scalar score is their difference, and the corresponding feature contributions include the difference of the class-specific readout weights. GNAN and G-NAMRFF use their own additive decompositions. Ground-truth contribution importance is obtained from Eq.~\eqref{eq:app_synthetic_contributions}. GNNExplainer~\citep{ying2019gnnexplainer} supplies feature-mask scores averaged over the matched targets.

The additional feature baseline is GCN + UCExplainer~\cite{giorgi2026unified}. It uses the existing GCN checkpoints without retraining the classifier or performing a new hyperparameter search. For target $q$, its score for raw feature $k$ is

\begin{equation}
s_{q,k}^{\mathrm{UC}}
=\frac{1}{64}\sum_{v=1}^{64}
\left|X^{\mathrm{cf},q}_{v,k}-X^{q}_{v,k}\right|,
\qquad
\widehat I_k^{\mathrm{UC}}
=\frac{1}{128}\sum_{q=1}^{128}s_{q,k}^{\mathrm{UC}}.
\label{eq:app_uc_feature_score}
\end{equation}

The adapter freezes the edge parameters, optimizes continuous feature perturbations with Adam at learning rate $0.1$ for 500 epochs per target, and clips perturbed values to $[-1,1]$. Its implementation uses \texttt{beta=1} to enable the feature path and \texttt{scheduler.initial\_alpha=0} to freeze the edge path. This differs from the upstream \texttt{beta=0} setting, which disables feature perturbation and cannot yield the raw-feature ranking evaluated here.

\paragraph{Feature-ranking metrics.}
Let $\widehat I_k$ denote a method's run-level feature score and let
$I_k^{\star}$ denote the ground-truth contribution importance. With
$\mathcal S$ the four informative columns,

\begin{equation}
\operatorname{Precision@4}
=\frac{|\operatorname{Top}_4(\widehat{\mathbf I})\cap\mathcal S|}{4}.
\label{eq:app_synthetic_precision}
\end{equation}

NDCG@4 measures the discounted ranking quality of these four positions, using continuous ground-truth contribution importance as relevance and normalizing by the ideal ordering. Feature Spearman measures rank correlation between predicted and ground-truth importance over all 20 raw features. The metrics assess different aspects of recovery: Precision@4 measures support identification, NDCG@4 also assesses ordering among the selected features, and Spearman evaluates the full ranking. The 16 null features have tied ground-truth relevance; consequently, Feature Spearman need not equal one when the informative top four are recovered exactly.

\paragraph{Effective response recovery.}
Effective NRMSE summarizes the normalized error of an explicit learned effective response curve relative to its ground-truth counterpart. The effective response includes the predictor's output scaling. This metric evaluates response-curve recovery; node-wise propagated contribution errors and edge-intervention effects are different evaluation quantities. It is unavailable for the two post-hoc GCN feature explainers because their masks or perturbation scores do not define additive response functions.

\paragraph{Common structural probe.}
Hop L1, top-hop accuracy, and probe coverage are obtained from a common signed structural probe of the frozen predictive model. Hop L1 measures discrepancy from the feature-specific ground-truth hop profile, and top-hop accuracy measures agreement with its dominant order. Probe coverage records the proportion of probes yielding a usable structural estimate and accompanies the recovery scores to show the extent of the evaluation. These quantities assess the probe's recovered structural behavior; they are not obtained by comparing GNAN distance weights, G-NAMRFF Laplacian coefficients, and \HARMONIA random-walk coefficients as though they were interchangeable parameters. In particular, both GCN feature explainers have identical structural results because the probe acts on their shared GCN predictor independently of the attached feature explanation.

\begin{table*}[t]
\centering
\small
\setlength{\tabcolsep}{5pt}

\resizebox{\textwidth}{!}{%
\begin{tabular}{lrrrrrrr}
\toprule
Method
& \shortstack[c]{Precision\\@4 $\uparrow$}
& \shortstack[c]{NDCG\\@4 $\uparrow$}
& \shortstack[c]{Feature\\Spearman $\uparrow$}
& \shortstack[c]{Effective\\NRMSE $\downarrow$}
& \shortstack[c]{Hop\\L1 $\downarrow$}
& \shortstack[c]{Top-hop\\Acc. $\uparrow$}
& \shortstack[c]{Probe\\Coverage $\uparrow$} \\
\midrule

\HARMONIA
& $\mathbf{1.000}$
& $\mathbf{1.000}$
& $\mathbf{0.699}$
& $\mathbf{0.193}$
& $\mathbf{0.468}$
& $\mathbf{0.950}$
& $\mathbf{1.000}$ \\

GNAN
& $\mathbf{1.000}$
& $0.996$
& $0.698$
& $0.435$
& $0.822$
& $0.250$
& $0.983$ \\

G-NAMRFF
& $0.783$
& $0.897$
& $0.568$
& $0.735$
& $1.129$
& $0.289$
& $0.640$ \\

GCN + GNNExplainer
& $0.850$
& $0.934$
& $0.619$
& N/A
& $0.766$
& $0.493$
& $0.547$ \\

GCN + GOAt
& $0.950$
& $0.967$
& $0.669$
& N/A
& $0.766$
& $0.493$
& $0.547$ \\

GCN + UCExplainer
& $0.767$
& $0.912$
& $0.586$
& N/A
& $0.766$
& $0.493$
& $0.547$ \\

\bottomrule
\end{tabular}%
}

\caption{
Complete feature--hop recovery results, averaged over five data seeds and three model seeds per data seed. Feature explainers use the same 128 test targets per run. N/A indicates that the explainer does not define an explicit additive response curve. The GCN rows share structural-probe results by construction. Bold denotes the best reported mean, including ties; no dispersion estimates are available for this table.
}
\label{tab:synthetic_feature_full}
\end{table*}

\paragraph{Results.}
\HARMONIA and GNAN both recover the informative feature set, with Precision@4 of $1.000$. \HARMONIA additionally attains NDCG@4 of $1.000$ at the reported precision, compared with $0.996$ for GNAN. Their full-feature rank correlations are nearly identical ($0.699$ and $0.698$), whereas their Effective NRMSE differs substantially ($0.193$ and $0.435$). \HARMONIA therefore improves response recovery beyond the support identification already achieved by GNAN. Its top-hop accuracy is $0.950$ with coverage $1.000$, while the other predictors have both lower accuracy and lower coverage. Its nonzero Hop L1 of $0.468$ also shows that correct dominant-hop identification does not imply exact recovery of the full profile. The differing probe coverage limits interpretation of the aggregate hop scores as comparisons on identical sets of valid probes.

For the shared GCN checkpoints, GNNExplainer exceeds the feature-only UCExplainer adapter on all three feature-ranking metrics. UCExplainer changes the predicted label for 1,588 of 1,920 targets, a validity rate of $0.827$. Nevertheless, all 20 raw features change on average at a $10^{-6}$ threshold. Counterfactual validity and feature sparsity are therefore separate properties in this experiment; this adapter's results do not demonstrate sparse feature explanations.

\subsection{Instance-Level Structural Explanations}
\label{app:synthetic_edges}

\paragraph{Candidate edges and budget.}
For each target $q=(g,i)$, let $E_q$ contain the undirected
candidate edges in its three-hop neighborhood. Candidate sets contain
7--68 edges, depending on the data seed and target. Every method receives
the same candidates and selects
\begin{equation}
K_q=\left\lceil0.20|E_q|\right\rceil
\label{eq:app_synthetic_edge_budget}
\end{equation}
edges. The candidate set and budget are fixed before evaluating any
method's ranking.

\paragraph{Exact edge-intervention ground truth.}
For candidate edge $e$, its relevance is
\begin{equation}
I_{q,e}^{\star}
=\left|\eta_{g,i}^{\star}(G)
-\eta_{g,i}^{\star}(G\setminus e)\right|.
\label{eq:app_edge_truth}
\end{equation}
Deleting an undirected edge removes both directed adjacency entries. We
then recompute the row-normalized transition matrix and its powers through
order three and reevaluate Eq.~\eqref{eq:app_synthetic_contributions}. Features and
labels remain fixed. Thus, relevance includes changes in transition
normalization as well as the removal of walks through the edge. It is the
exact single-edge effect under the specified synthetic generator, rather
than a shortest-path label or a sum of edge masks. Effects of jointly
deleting several edges need not equal the sum of their individual effects.

\paragraph{Method-specific edge scores.}
For an additive predictor with scalar logit $\widehat\eta$, we use
\begin{equation}
\widehat I_{q,e}
=\left|\widehat\eta_{g,i}(G)
-\widehat\eta_{g,i}(G\setminus e)\right|,
\label{eq:app_edge_sensitivity}
\end{equation}
with all learned parameters frozen. Each model rebuilds its own structural
operator after deletion. GNAN recomputes shortest-path distances and
distance-shell normalization. G-NAMRFF recomputes powers of the symmetric
normalized Laplacian through order three while retaining its learned FIR
coefficients and random Fourier feature functions. \HARMONIA recomputes
the row-normalized random-walk powers while retaining its learned MoNB
responses and linear topology parameters. These are model edge-sensitivity
scores; they are distinct from the additive source-node and hop
contributions exposed by \HARMONIA.

GNNExplainer, PGExplainer~\citep{luo2020parameterized}, and IDEA score edges
of the same frozen GCN checkpoints. For each undirected edge, the scores
of its two directed copies are averaged before ranking. This protocol
compares realized edge effects or edge explanations against a common
ground truth without identifying the different models' structural
coefficient systems with one another.

\paragraph{Ground-truth recovery metrics.}
Edge Spearman is the rank correlation between predicted scores and
$I_{q,e}^{\star}$ over all candidate edges. Edge NDCG@20\% evaluates the
first $K_q$ positions using the continuous relevance in
Eq.~\eqref{eq:app_edge_truth}, normalized by the ideal ordering. Let
$\widehat S_q$ and $S_q^{\star}$ be the predicted and ground-truth
top-$K_q$ edge sets. Then
\begin{equation}
\operatorname{Precision@20\%}_q
=\frac{|\widehat S_q\cap S_q^{\star}|}{K_q}.
\label{eq:app_edge_precision}
\end{equation}
These three metrics evaluate agreement with the known generator, with
larger values indicating better recovery.

\begin{table*}[t]
\centering
\scriptsize
\setlength{\tabcolsep}{5pt}
\resizebox{\textwidth}{!}{%
\begin{tabular}{lrrr}
\toprule
Method 
& Edge Spearman $\uparrow$ 
& Edge NDCG@20\% $\uparrow$
& Precision@20\% $\uparrow$ \\
\midrule

\HARMONIA 
& $\mathbf{0.7822\pm0.0489}$ 
& $\mathbf{0.9425\pm0.0102}$
& $\mathbf{0.8162\pm0.0402}$ \\

GNAN
& $0.4546\pm0.0581$ 
& $0.6305\pm0.0805$
& $0.5005\pm0.0766$ \\

G-NAMRFF
& $0.6006\pm0.0282$ 
& $0.7589\pm0.0262$
& $0.6430\pm0.0478$ \\

GCN + GNNExplainer
& $0.5959\pm0.0187$ 
& $0.6259\pm0.0527$
& $0.5708\pm0.0530$ \\

GCN + GOAt
& $0.5871\pm0.0371$ 
& $0.7790\pm0.0424$
& $0.6597\pm0.0712$ \\

GCN + UCExplainer
& $0.0194\pm0.0559$ 
& $0.2970\pm0.0517$
& $0.2594\pm0.0294$ \\

\bottomrule
\end{tabular}}
\caption{
Complete instance-level structural recovery results. Entries are mean
$\pm$ sample standard deviation across five data-seed means; each mean
averages three model seeds, each evaluated on the same 128 targets.
All methods share candidate edges, exact edge-deletion ground truth,
and the rounded-up top-20\% budget. Bold denotes the best reported mean.
}
\label{tab:synthetic_edge_full}
\end{table*}

\paragraph{Results and interpretation.}
\HARMONIA achieves the highest mean on all three edge-recovery metrics:
Spearman $0.7822$, NDCG $0.9425$, and precision $0.8162$.
G-NAMRFF is the strongest baseline, with corresponding values of
$0.6006$, $0.7589$, and $0.6430$. GNAN has weaker recovery, while
PGExplainer and IDEA have near-zero rank correlations with the generator
and precision near $0.20$, indicating little alignment in this setting.
These comparisons describe the reported mean rankings, rather than
establishing statistical significance from aggregate summaries.

\paragraph{Scope of the evidence.}
The benchmark tests recovery under a known additive random-walk
generator, which matches \HARMONIA's structural representation. It
supports the reported feature-response and edge-intervention comparisons
within this setting. It does not isolate the contribution of MoNB from
that of structural basis alignment, establish coefficient identifiability
from the observed samples, or establish recovery under other graph
generators. Edge-sensitivity scores also require reevaluating a model
after interventions; this table is not a comparison of explanation
runtime. Predictive accuracy and calibration are separate from the
recovery and fidelity quantities reported here.

\subsection{Source-Node Contribution Recovery}
\label{app:synthetic_nodes}

\paragraph{Objective and matched cohort.}
This experiment evaluates whether an explanation identifies the source
nodes that most strongly influence a target prediction under the known
synthetic generator. A source-node effect is defined by replacing that
node's complete feature vector with the feature-wise training mean and
measuring the resulting change in the target logit. All 64 source nodes,
including the target itself, are candidates, and the ranking cutoff is
$K=4$. This intervention differs from the edge deletion evaluated in
Appendix~\ref{app:synthetic_edges}: the graph remains fixed, while one
node's features change.

The primary comparison uses data-seed indices $0,1,2$ and model-seed
indices $0,1,2$. Every run explains nodes $0$ and $32$ in each of 64 held-out
instances, giving 128 targets per run and 1,152 target evaluations per
method across nine runs. Both post-hoc explainers completed all nine runs.
The direct-intervention methods were originally evaluated on five data
seeds; we restrict their reported results to the same three data seeds as
the explainers. No predictor is retrained for this evaluation: every
method uses its existing checkpoint selected by validation log loss.

\paragraph{Generator and reproducible seed assignments.}
We use the four-response, four-hop construction in
Appendix~\ref{app:synthetic_setup}. Each data seed defines a connected
64-node topology and 384 independent feature/label instances, split into
256 training, 64 validation, and 64 test instances. All 20 raw features are
sampled independently from $\operatorname{Uniform}[-1,1]$; four are signals
and 16 are null. Each connected graph passes the predefined training-design
checks on rank, condition number, and cross-hop correlation. The
row-normalized random-walk powers $\mathbf M^0,\ldots,\mathbf M^3$ are
stored with the dataset.

For the response template $\phi_k$ assigned to informative column $k$,
centering uses only training covariates:
\begin{equation}
f_k^\star(x)=\phi_k(x)
-\frac{1}{N|\mathcal D_{\mathrm{train}}|}
\sum_{g\in\mathcal D_{\mathrm{train}}}\sum_{j=1}^{N}
\phi_k(X_{g,j,k}),\qquad N=64.
\label{eq:app_node_response_centering}
\end{equation}
The templates are $2x$, $1.5(x^2-1)$, $\sin(\pi x)$, and $\tanh(2x)$;
the four hop profiles are those in Table~\ref{tab:synthetic_hop_profiles}.
The generator's intercept is calibrated using training covariates so that
expected training prevalence is $0.5$, and labels follow
$y_{g,i}\sim\operatorname{Bernoulli}(\sigma(\eta_{g,i}^\star))$ with
$\eta_{g,i}^\star$ defined in Eq.~\eqref{eq:app_synthetic_contributions}.
Feature, label, and signal-column-permutation random-number streams are
separate. Dataset SHA-256 hashes are checked against the saved checkpoints
before evaluation.

\begin{table}[t]
\centering
\small
\setlength{\tabcolsep}{4pt}
\begin{tabular}{ccccl}
\toprule
Data index & Feature seed & Graph seed & Signal columns & Profile assignment\\
\midrule
$0$ & $20260916$ & $123$ & $19,17,5,2$ & A, B, C, D\\
$1$ & $20260917$ & $223$ & $18,3,7,1$ & B, C, D, A\\
$2$ & $20260918$ & $323$ & $1,14,18,0$ & C, D, A, B\\
\bottomrule
\end{tabular}
\caption{Source-node evaluation cohort. Signal-column indices are
zero-based. Profile assignments follow the response order
$(2x,\text{quadratic},\sin,\tanh)$. Each data seed is paired with model
seeds $0,1,2$.}
\label{tab:synthetic_node_seeds}
\end{table}

\paragraph{Predictors and original training protocol.}
\HARMONIA uses the direct-linear MoNB model with a basis setting of 64,
four experts, top-2 routing, hidden widths $(64,32)$, and four RRWP orders.
GNAN uses two layers with 16 hidden channels and learned distance weights.
G-NAMRFF uses the RFF setting $M=64$ and four filter powers. The GCN uses
three 32-channel layers and a direct linear path from the target's own
features. These architectures use the same training protocol: binary
cross-entropy with logits, AdamW with learning rate $5\times10^{-4}$ and
zero weight decay, a cosine learning-rate schedule ending at $10^{-6}$,
batch size 16, and gradient-norm clipping at 1. Training runs for at most
400 epochs, with early stopping after 100 validation epochs without
improvement and checkpoint selection by validation log loss.

\paragraph{Exact source-node ground truth.}
The replacement vector is computed from raw training features:
\begin{equation}
\bar x_{\mathrm{train},k}
=\frac{1}{N|\mathcal D_{\mathrm{train}}|}
\sum_{g\in\mathcal D_{\mathrm{train}}}\sum_{v=1}^{N}X_{g,v,k}.
\label{eq:app_node_baseline}
\end{equation}
Let $\mathbf X_g^{(j\leftarrow\bar{\mathbf x})}$ replace all 20 features
of source node $j$ with $\bar{\mathbf x}_{\mathrm{train}}$, leaving every
other node and the topology unchanged. The signed oracle effect is
\begin{align}
\Delta_{g,i\leftarrow j}^{\star}
&=\eta_{g,i}^{\star}(\mathbf X_g)
-\eta_{g,i}^{\star}(\mathbf X_g^{(j\leftarrow\bar{\mathbf x})})
\label{eq:app_node_oracle_delta}\\
&=\sum_{k\in\mathcal S}\sum_{t=0}^{3}
\theta_{t,k}^{\star}(\mathbf M^t)_{i,j}
\left[f_k^\star(X_{g,j,k})
-f_k^\star(\bar x_{\mathrm{train},k})\right].
\nonumber
\end{align}
The second equality follows from the additive generator and the fixed
topology. The effect retains both its direction and logit-scale magnitude.
It is a baseline-dependent intervention effect, rather than a Shapley value
or an arbitrary decomposition of a nonlinear prediction. In particular,
centering $f_k^\star$ over training observations does not generally make
$f_k^\star(\bar x_{\mathrm{train},k})$ zero. The oracle evaluation reuses
the stored walk powers, response functions, hop weights, and intercept;
the reported implementation check has zero residual against the generator.

\paragraph{Model and post-hoc explanation scores.}
For \HARMONIA, GNAN, G-NAMRFF, and GCN, the signed model effect is
\begin{equation}
\widehat\Delta_{g,i\leftarrow j}
=\widehat\eta_{g,i}(\mathbf X_g)
-\widehat\eta_{g,i}(\mathbf X_g^{(j\leftarrow\bar{\mathbf x})}),
\label{eq:app_node_model_delta}
\end{equation}
with model parameters frozen. Every target is evaluated under all 64
source-node interventions. The GCN counterfactual row denotes this direct
model response, separately from either post-hoc explainer.

GNNExplainer and UCExplainer use the same frozen GCN checkpoints.
GNNExplainer is run for 100 optimization epochs with learning rate $0.01$
in model-level binary-node explanation mode. It learns one nonnegative
\texttt{object} mask value per source node, with edge masking disabled.
UCExplainer uses the COMBINEX feature-only counterfactual adapter for 500
epochs with Adam learning rate $0.1$, \texttt{beta=1},
\texttt{initial\_alpha=0}, and \texttt{p\_lambda=0}. It targets the class
opposite to the GCN's original prediction and scores node $j$ by
\begin{equation}
s_{g,i,j}^{\mathrm{UC}}
=\frac{1}{20}\sum_{k=1}^{20}
\left|X_{g,j,k}^{\mathrm{CF}(i)}-X_{g,j,k}\right|.
\label{eq:app_node_uc_score}
\end{equation}
Both post-hoc explanations yield unsigned importance scores, not signed
effects in logit units.

\paragraph{Ranking and signed recovery metrics.}
For one target, write $\Delta_j^\star=\Delta_{g,i\leftarrow j}^\star$
and $a_j=|\Delta_j^\star|$. Direct-intervention methods use
$s_j=|\widehat\Delta_j|$ for ranking; post-hoc methods use their node
importance scores. Let $T_4(\mathbf v)$ denote the four highest-scoring
nodes under $\mathbf v$, and let $\pi_r$ denote the node at predicted
rank $r$. We report
\begin{align}
\operatorname{NodeSpearman}
&=\operatorname{Spearman}(\mathbf s,\mathbf a),
\label{eq:app_node_spearman}\\
\operatorname{Precision@4}
&=\frac{|T_4(\mathbf s)\cap T_4(\mathbf a)|}{4},
\label{eq:app_node_precision}\\
\operatorname{NDCG@4}
&=\frac{\sum_{r=1}^{4}a_{\pi_r}/\log_2(r+1)}
{\operatorname{IDCG@4}}.
\label{eq:app_node_ndcg}
\end{align}
Spearman evaluates the complete ranking of 64 source nodes. Precision@4
measures overlap with the oracle's top four, whereas NDCG@4 uses the
continuous oracle magnitudes as linear gains and normalizes by their
ideal ordering. A uniformly random selection of four out of 64 candidates
has expected Precision@4 of $4/64=0.0625$.

For methods with signed model effects, we additionally report
\begin{align}
\operatorname{NRMSE}_{\mathrm{signed}}
&=\frac{
\left[\frac{1}{64}\sum_{j=1}^{64}
(\widehat\Delta_j-\Delta_j^\star)^2\right]^{1/2}}
{\operatorname{std}_{j}(\Delta_j^\star)+10^{-12}},
\label{eq:app_node_signed_nrmse}\\
\operatorname{SignAgree@4}
&=\frac{1}{4}\sum_{j\in T_4(\mathbf a)}
\mathbf 1\!\left\{
\operatorname{sign}(\widehat\Delta_j)
=\operatorname{sign}(\Delta_j^\star)\right\}.
\label{eq:app_node_sign_agreement}
\end{align}
Here, $\operatorname{std}_j$ is the standard deviation across the 64
oracle source-node effects for that target. Signed NRMSE measures effect
magnitude and sign errors, while sign agreement is evaluated on the
\emph{oracle} top-four nodes. These metrics are unavailable for the
unsigned post-hoc scores. Signed source-node NRMSE evaluates the
intervention vector and is distinct from the effective response-curve
NRMSE in Table~\ref{tab:synthetic_feature_full}.

\paragraph{Aggregation.}
Each metric is computed per target and averaged across the 128 targets
within a run. The three model-seed means are then averaged within each
data seed. If $m_{d,s,q}$ is a target-level metric, the reported mean and
sample standard deviation are
\begin{equation}
\bar m_d=\frac{1}{3}\sum_{s=0}^{2}
\frac{1}{128}\sum_{q=1}^{128}m_{d,s,q},\qquad
\bar m=\frac{1}{3}\sum_{d=0}^{2}\bar m_d,\qquad
s_m=\left[\frac{1}{2}\sum_{d=0}^{2}(\bar m_d-\bar m)^2\right]^{1/2}.
\label{eq:app_node_aggregation}
\end{equation}
Thus, uncertainty is measured across three independently generated
datasets, after averaging model initializations within each dataset.
The 1,152 target evaluations are not treated as independent data replicates
\begin{table*}[t]
\centering
\small
\setlength{\tabcolsep}{3pt}
\resizebox{\textwidth}{!}{%
\begin{tabular}{lccccc}
\toprule
Method 
& Spearman $\uparrow$ 
& Precision@4 $\uparrow$ 
& NDCG@4 $\uparrow$
& \shortstack{Signed\\NRMSE $\downarrow$}
& SignAgree@4 $\uparrow$ \\
\midrule

\HARMONIA
& $\mathbf{0.9601\pm0.0464}$ 
& $\mathbf{0.8487\pm0.0254}$
& $\mathbf{0.9658\pm0.0122}$ 
& $\mathbf{0.2992\pm0.0626}$
& $\mathbf{0.9792\pm0.0066}$ \\

GNAN
& $0.5319\pm0.0730$ 
& $0.7057\pm0.0344$
& $0.8861\pm0.0409$ 
& $0.5853\pm0.1196$
& $0.8487\pm0.0478$ \\

G-NAMRFF
& $0.9309\pm0.0761$ 
& $0.6413\pm0.0412$
& $0.7975\pm0.0334$ 
& $0.8433\pm0.0888$
& $0.7157\pm0.0436$ \\

GCN + GNNExplainer
& $0.8893\pm0.1106$ 
& $0.5312\pm0.0584$
& $0.6914\pm0.0812$ 
& N/A 
& N/A \\

GCN + GOAt
& $0.9363\pm0.0669$ 
& $0.6732\pm0.0384$
& $0.8294\pm0.0507$ 
& N/A 
& N/A \\

GCN + UCExplainer
& $-0.0264\pm0.0285$ 
& $0.1120\pm0.0098$
& $0.2076\pm0.0100$ 
& N/A 
& N/A \\

\bottomrule
\end{tabular}}
\caption{
Source-node contribution recovery.
Entries are mean $\pm$ sample standard deviation across three data-seed
means, each averaging three model seeds and 128 targets per run.
All methods rank the same 64 candidates against the same oracle
intervention effects. N/A denotes unavailable signed logit-scale scores.
Bold marks the best mean in each recovery column.
}
\label{tab:synthetic_node_full}
\end{table*}

\begin{table}[t]
\centering
\small
\begin{tabular}{lc}
\toprule
Explainer & Nodes with nonzero score/change\\
\midrule
GCN + GNNExplainer & $25.83\pm8.89$\\
GCN + UCExplainer & $64.00\pm0.00$\\
\bottomrule
\end{tabular}
\caption{Auxiliary source-node explanation diagnostics on the same
cohort. UCExplainer changes at least one feature at every node under the
$10^{-6}$ threshold. These counts describe the full explanation before
selecting its top-four nodes.}
\label{tab:synthetic_node_diagnostics}
\end{table}

\paragraph{Results and interpretation.}
\HARMONIA has the best reported mean on all five recovery metrics, including
NDCG@4 of $0.9658$, signed NRMSE of $0.2992$, and top-four sign agreement
of $0.9792$. G-NAMRFF has a high full-ranking correlation ($0.9309$), but
lower top-four precision ($0.6413$) and substantially higher signed NRMSE
($0.8433$). Broad ranking agreement therefore does not ensure accurate
recovery of the strongest effects or their signed magnitudes. Conversely,
GNAN's NDCG@4 of $0.8861$ indicates useful recovery near the top of the
ranking despite its lower overall Spearman correlation ($0.5319$).

For the same frozen GCN, GNNExplainer improves Spearman relative to direct
counterfactual effects ($0.8893$ versus $0.8327$), but reduces Precision@4
($0.5312$ versus $0.6771$) and NDCG@4 ($0.6914$ versus $0.8280$).
UCExplainer achieves counterfactual validity of $0.8993$, yet its node
ranking has near-zero Spearman correlation ($-0.0264$), NDCG@4 of
$0.2076$, and changes at all 64 nodes. Successfully flipping the fitted
model's class is therefore insufficient evidence of correct or sparse
source-node attribution.

The conclusions concern training-mean replacement effects under this
controlled additive random-walk generator and its graph-selection
protocol. The reported intervention scores do not establish a unique
attribution under other baselines, and no statistical-significance claim
is made from three independently generated datasets. The model-intervention
rows also require repeated forward evaluations; these results do not
compare explanation runtime.

\subsection{Additional Univariate Response Recovery}
\label{app:synthetic_function_recovery}

\paragraph{Purpose.}
We additionally examine \HARMONIA's learned univariate responses for the
twelve ground-truth functions in Figure~\ref{fig:synthetic_recovery}.
This study complements the feature--hop and edge-recovery experiments in
Appendices~\ref{app:recovery_metrics} and~\ref{app:synthetic_edges} by
examining a broader collection of functional forms, including polynomial,
periodic, saturating, nonsmooth, and localized responses. It is a separate
function-recovery study from the four-informative-feature benchmark.

\paragraph{Curve recovery metric.}
For response $\ell$, let $g_\ell^\star$ be the ground-truth univariate effect and let $\widehat g_\ell=\widehat w_\ell\widehat f_\ell$ be the learned response including its output coefficient. Both are evaluated on the same prediction scale and common grid $\mathcal U=\{u_j\}_{j=1}^{J}$. We center each curve over this grid:

\begin{equation}
\widetilde{\widehat g}_\ell(u_j)
=\widehat g_\ell(u_j)-\frac{1}{J}\sum_{a=1}^{J}\widehat g_\ell(u_a),
\qquad
\widetilde g_\ell^\star(u_j)
=g_\ell^\star(u_j)-\frac{1}{J}\sum_{a=1}^{J}g_\ell^\star(u_a).
\label{eq:app_curve_centering}
\end{equation}

For a nonconstant ground-truth response, the curve error is

\begin{equation}
\operatorname{CurveNRMSE}_\ell
=\frac{
\left\{\sum_{j=1}^{J}
\left[\widetilde{\widehat g}_\ell(u_j)
-\widetilde g_\ell^\star(u_j)\right]^2\right\}^{1/2}}
{\left\{\sum_{j=1}^{J}
\left[\widetilde g_\ell^\star(u_j)\right]^2\right\}^{1/2}}.
\label{eq:app_curve_nrmse}
\end{equation}

Centering removes constant offsets while retaining amplitude and sign
differences. No fitted sign reversal or multiplicative alignment is
included in this definition. The normalization uses the ground-truth
curve on the evaluation grid. These curve-level summaries are reported
separately from the aggregate Effective NRMSE of the feature--hop
benchmark in Table~\ref{tab:synthetic_feature_full}.

\paragraph{Results.}
Figure~\ref{fig:synthetic_recovery} shows that recovery error varies
across response functions. The reported mean curve errors are small for
the linear ($0.023$), sigmoid ($0.033$), hyperbolic tangent ($0.034$),
and arctangent ($0.034$) responses. The sine response has a larger error
($0.364$), showing that accurate recovery of the simpler responses does
not extend uniformly to every functional form in this study.

\begin{figure*}[t]
\centering

\IfFileExists{images/synthetic_recovery/synthetic_1.png}{%
\begin{subfigure}[t]{0.235\textwidth}\centering
\includegraphics[width=\linewidth]{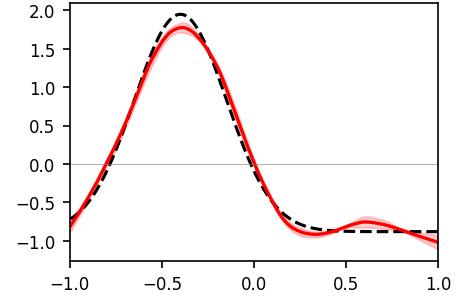}
\caption{Localized response: ${0.103}\pm{0.032}$}
\label{fig:syn_exp_local_neg}
\end{subfigure}\hfill
\begin{subfigure}[t]{0.235\textwidth}\centering
\includegraphics[width=\linewidth]{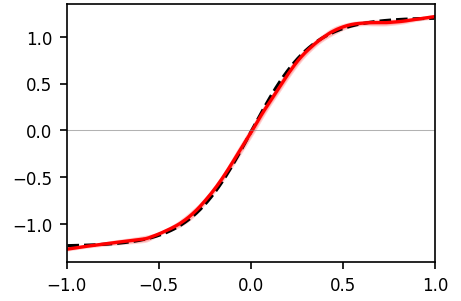}
\caption{Sigmoid: ${0.033}\pm{0.011}$}
\label{fig:syn_sigmoid_6x_center}
\end{subfigure}\hfill
\begin{subfigure}[t]{0.235\textwidth}\centering
\includegraphics[width=\linewidth]{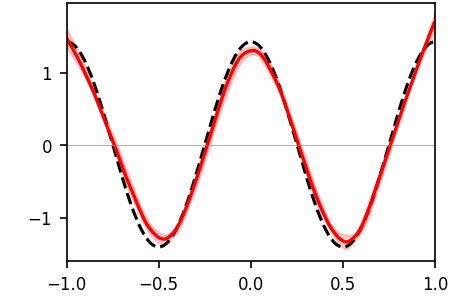}
\caption{Cosine: ${0.136}\pm{0.037}$}
\label{fig:syn_cos_2pi}
\end{subfigure}\hfill
\begin{subfigure}[t]{0.235\textwidth}\centering
\includegraphics[width=\linewidth]{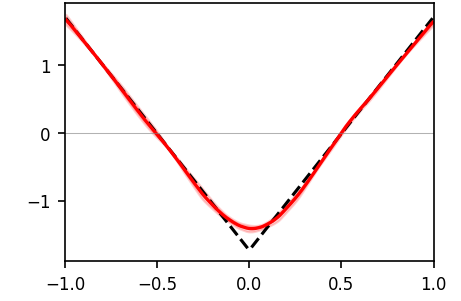}
\caption{Absolute value: ${0.084}\pm{0.013}$}
\label{fig:syn_abs}
\end{subfigure}
\par\medskip
\begin{subfigure}[t]{0.235\textwidth}\centering
\includegraphics[width=\linewidth]{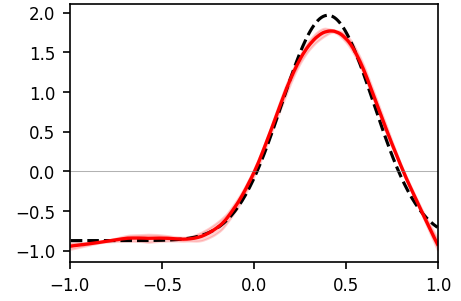}
\caption{Localized response: ${0.094}\pm{0.016}$}
\label{fig:syn_exp_local_pos}
\end{subfigure}\hfill
\begin{subfigure}[t]{0.235\textwidth}\centering
\includegraphics[width=\linewidth]{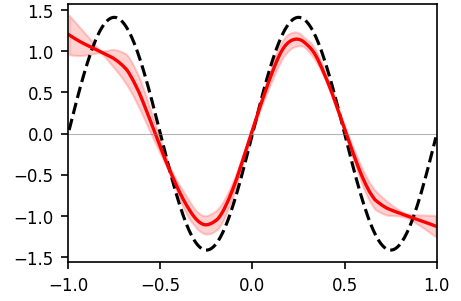}
\caption{Sine: ${0.364}\pm{0.069}$}
\label{fig:syn_sin_2pi}
\end{subfigure}\hfill
\begin{subfigure}[t]{0.235\textwidth}\centering
\includegraphics[width=\linewidth]{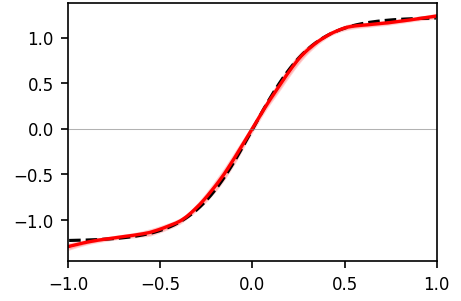}
\caption{Hyperbolic tangent: ${0.034}\pm{0.008}$}
\label{fig:syn_tanh_3x}
\end{subfigure}\hfill
\begin{subfigure}[t]{0.235\textwidth}\centering
\includegraphics[width=\linewidth]{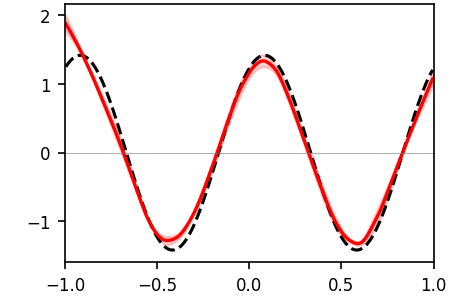}
\caption{Phase-shifted sine: ${0.157}\pm{0.030}$}
\label{fig:syn_sin_2pi_phase}
\end{subfigure}
\par\medskip
\begin{subfigure}[t]{0.235\textwidth}\centering
\includegraphics[width=\linewidth]{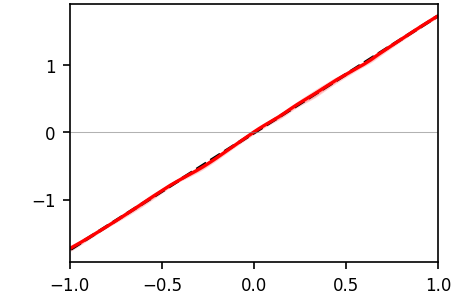}
\caption{Linear: ${0.023}\pm{0.008}$}
\label{fig:syn_x}
\end{subfigure}\hfill
\begin{subfigure}[t]{0.235\textwidth}\centering
\includegraphics[width=\linewidth]{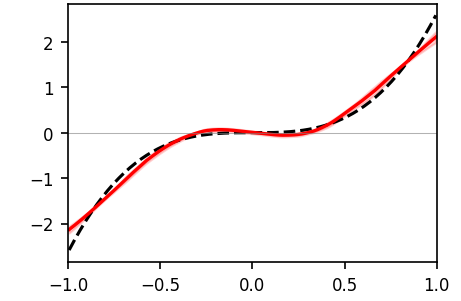}
\caption{Cubic: ${0.139}\pm{0.032}$}
\label{fig:syn_x_cube}
\end{subfigure}\hfill
\begin{subfigure}[t]{0.235\textwidth}\centering
\includegraphics[width=\linewidth]{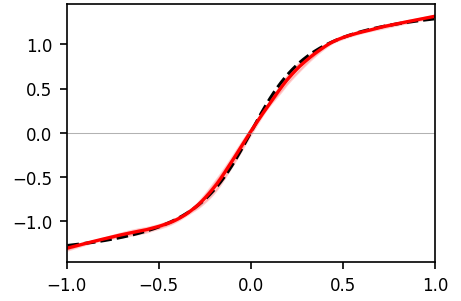}
\caption{Arctangent: ${0.034}\pm{0.016}$}
\label{fig:syn_arctan_4x}
\end{subfigure}\hfill
\begin{subfigure}[t]{0.235\textwidth}\centering
\includegraphics[width=\linewidth]{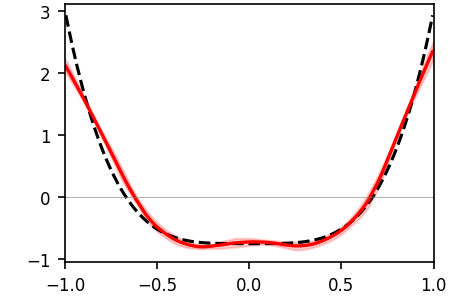}
\caption{Quartic: ${0.183}\pm{0.025}$}
\label{fig:syn_x_fourth}
\end{subfigure}
\par\medskip
}{%
\includegraphics[width=0.90\textwidth,trim=0 31 0 0,clip]{figures/synthetic_function_recovery.png}
}
\caption{\HARMONIA univariate response recovery for twelve ground-truth
functions. Panel annotations show the reported curve NRMSE summaries.
This additional function-recovery study is separate from the feature--hop
benchmark in Table~\ref{tab:synthetic_feature_full}; its curve errors are
defined in Eq.~\eqref{eq:app_curve_nrmse}.}
\label{fig:synthetic_recovery}
\end{figure*}
\section{ADDITIONAL EXPERIMENTAL RESULTS AND ABLATIONS}
\label{sec:app_additional_analysis}
This section empirically examines the main mechanisms underlying HARMONIA. We first study when functional specialization improves over global basis sharing, how this benefit depends on expert capacity and routing quality, and whether MoNB retains its parameter advantage as feature dimension grows. We then examine when RRWP hop coefficients provide reliable structural explanations by separating operator recovery from coefficient recovery and measuring the graph-dependent effective structural horizon. Finally, we evaluate the roles of linearity and simplex normalization in preserving sparse computation and faithful hop-level attribution. Additional robustness and qualitative analyses are presented at the end of the section.

\subsection{Functional Specialization and Heterogeneity}
\label{subsec:app_exp_specialization}

We examine whether the benefit of MoNB increases as feature-response functions
become more heterogeneous. Specifically, the synthetic generator uses four
functional families: polynomial, periodic, saturating, and localized. We
construct low-, medium-, and high-heterogeneity regimes by varying the number
of functional families present in the data, and compare Global NBM, MoNB, and
independent per-feature networks under the same training protocol.

\begin{figure*}[t]
\centering

\begin{subfigure}[t]{0.55\textwidth}
    \centering
    \includegraphics[width=\linewidth]{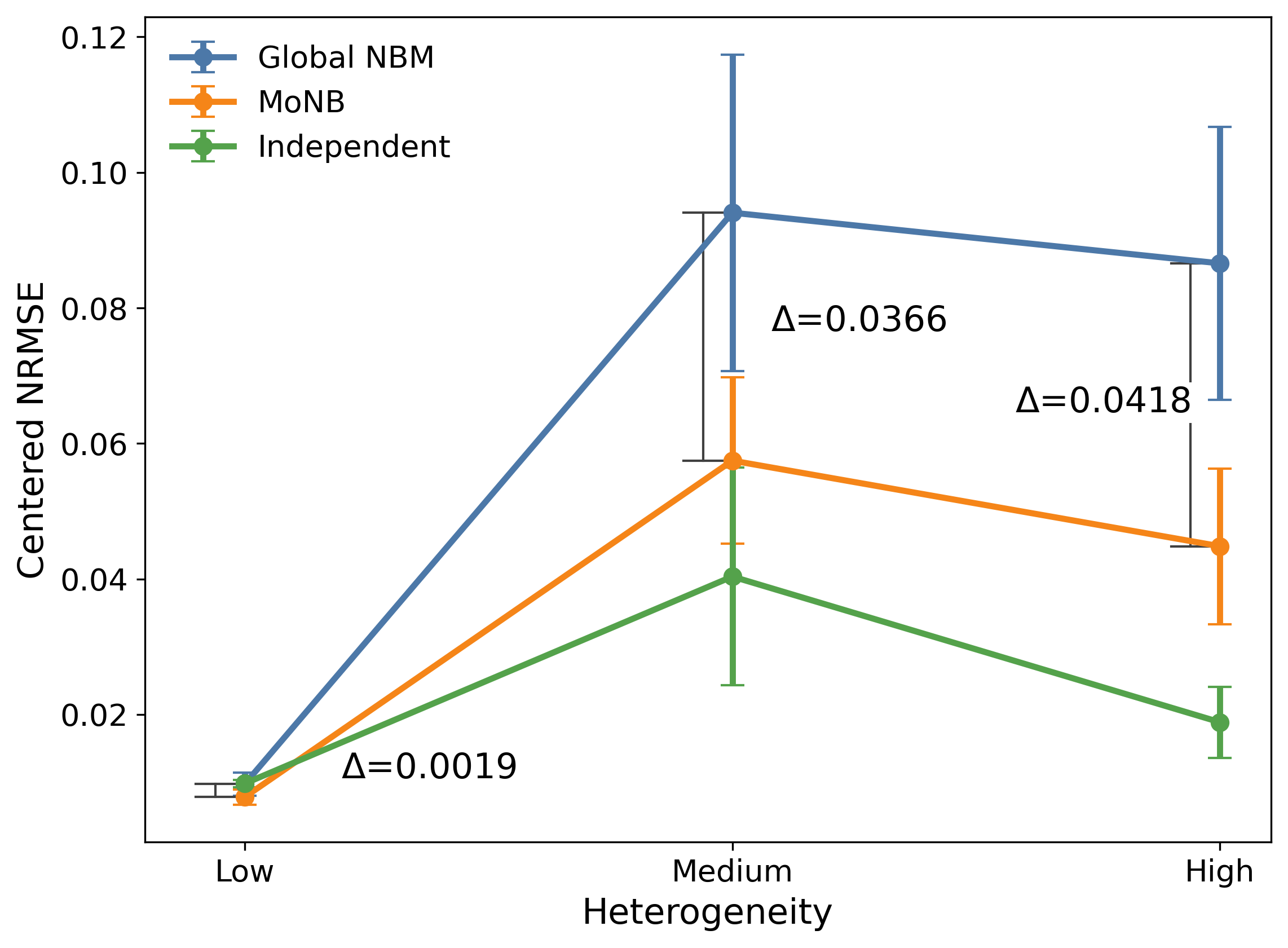}
    \caption{Approximation error under increasing heterogeneity.}
    \label{fig:h1_specialization_performance}
\end{subfigure}
\hfill
\begin{subfigure}[t]{0.41\textwidth}
    \centering
    \includegraphics[width=\linewidth]{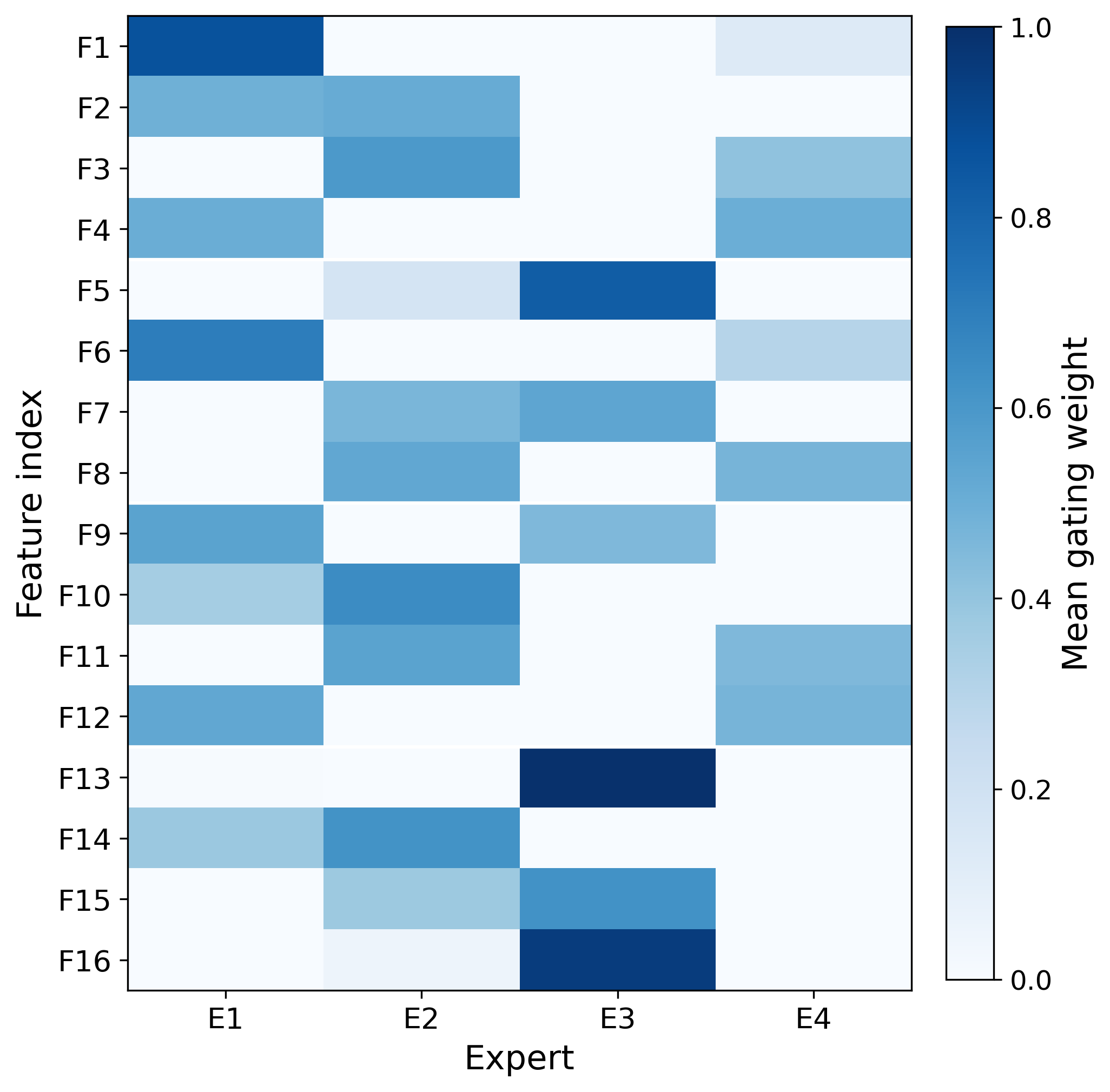}
    \caption{Feature-to-expert routing under high heterogeneity.}
    \label{fig:h1_specialization_routing}
\end{subfigure}

\caption{
Functional specialization under increasing heterogeneity.
\textbf{(a)} Centered NRMSE of Global NBM, MoNB, and independent
per-feature networks. The annotated values
$\Delta = E_{\mathrm{Global}} - E_{\mathrm{MoNB}}$ measure the approximation
gain of MoNB over global basis sharing.
\textbf{(b)} Mean feature-to-expert gating weights in the high-heterogeneity
regime.
}
\label{fig:h1_specialization}
\end{figure*}

Figure~\ref{fig:h1_specialization}(a) shows that the advantage of MoNB grows
with functional heterogeneity. Global NBM and MoNB are nearly indistinguishable
under low heterogeneity, with $\Delta=0.0019$, while the gap increases to
$0.0366$ and $0.0418$ under medium and high heterogeneity. A compact global
basis is therefore sufficient when feature responses share similar functional
structure, but becomes increasingly restrictive as the required functional
directions diversify. MoNB reduces this restriction by allowing different
features to access different expert spaces while retaining parameter sharing.

The independent per-feature model remains more flexible because it removes
parameter sharing entirely. The relevant advantage of MoNB is therefore not
unrestricted expressivity, but its ability to reduce the approximation
limitation of global sharing without assigning an independent nonlinear model
to every feature.

Figure~\ref{fig:h1_specialization}(b) further shows that different features
concentrate their gating weights on different experts rather than following a
single common routing pattern. The routing therefore remains feature-dependent
and does not collapse all features onto one shared expert in this setting.

\paragraph{Insight.}
Specialization provides little benefit when one shared basis already captures
the feature responses. Its advantage emerges as functional heterogeneity
increases, precisely when forcing all features into the same functional space
becomes restrictive.

\subsection{Expert Count and Basis Capacity}
\label{subsec:app_exp_num_experts_bases}

Having established when specialization becomes useful, we next examine how
much specialization capacity is needed to realize this benefit. We vary the
number of experts \(C\in\{1,3,5,7\}\) and the number of neural basis functions
per expert \(B\in\{4,8,16\}\), with sparse routing budget
\(m=\min(2,C)\). These parameters control complementary aspects of the model.
\(C\) determines how many expert spaces are available for specialization,
whereas \(B\) determines the representational capacity within each expert.
Importantly, increasing \(C\) expands the set of candidate functional spaces
without increasing the \(B\)-dimensional coefficient vector used by each
feature.

\begin{figure*}[h]
    \centering
    \begin{subfigure}[t]{0.32\textwidth}
        \centering
        \includegraphics[width=\linewidth]{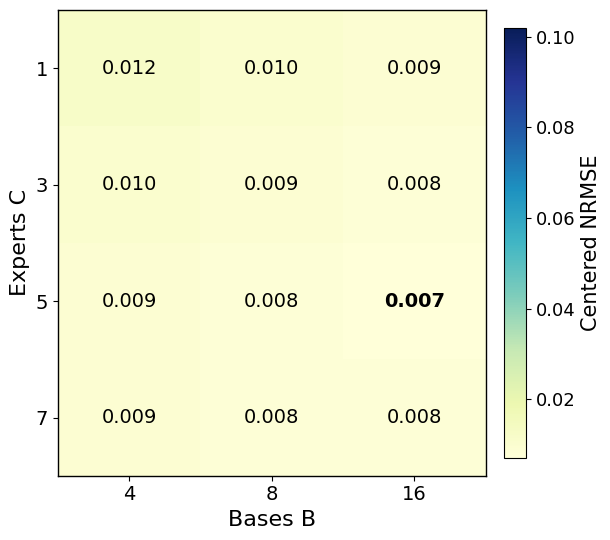}
        \caption{Low heterogeneity}
        \label{fig:h3_capacity_low}
    \end{subfigure}
    \hfill
    \begin{subfigure}[t]{0.32\textwidth}
        \centering
        \includegraphics[width=\linewidth]{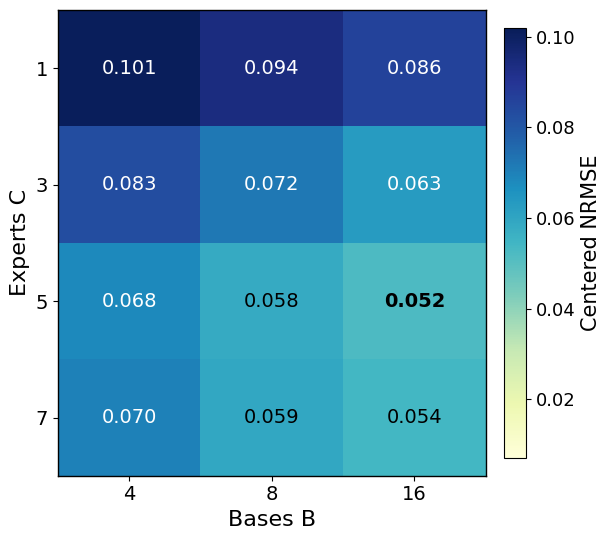}
        \caption{Medium heterogeneity}
        \label{fig:h3_capacity_medium}
    \end{subfigure}
    \hfill
    \begin{subfigure}[t]{0.32\textwidth}
        \centering
        \includegraphics[width=\linewidth]{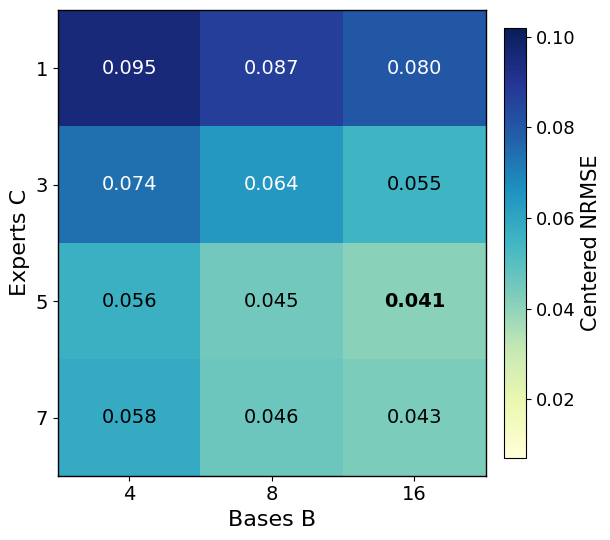}
        \caption{High heterogeneity}
        \label{fig:h3_capacity_high}
    \end{subfigure}

    \vspace{-4pt}
    \caption{
    Effect of expert count and basis size under low, medium, and high
    functional heterogeneity. Each cell reports centered NRMSE with
    \(m=\min(2,C)\), where lower is better. Additional experts provide larger
    gains in more heterogeneous regimes, while increasing \(C\) beyond \(5\)
    provides no further improvement in this sweep.
    }
    \label{fig:h3_capacity}
\end{figure*}

Figure~\ref{fig:h3_capacity} shows that increasing the number of experts is
more beneficial when feature responses are more heterogeneous. At \(B=16\),
increasing \(C\) from \(1\) to \(5\) reduces centered NRMSE only from \(0.009\)
to \(0.007\) under low heterogeneity, but from \(0.086\) to \(0.052\) under
medium heterogeneity and from \(0.080\) to \(0.041\) under high heterogeneity.
Thus, additional expert spaces become useful when a single shared functional
space is no longer sufficient to represent the diverse feature responses.

Basis size controls a different bottleneck. At \(C=5\) under high
heterogeneity, increasing \(B\) from \(4\) to \(8\) and \(16\) reduces the
error from \(0.056\) to \(0.045\) and \(0.041\), respectively. Hence, adding
more expert spaces cannot compensate for insufficient representational
capacity within each expert.

\paragraph{More experts do not guarantee better approximation.}
All three regimes achieve their lowest reported error at
\((C,B)=(5,16)\). Increasing \(C\) to \(7\) changes the errors from
\(0.007\), \(0.052\), and \(0.041\) to \(0.008\), \(0.054\), and
\(0.043\), respectively. The sweep therefore exhibits saturation rather than
a monotonic benefit from increasing the number of experts. This result alone
does not identify the cause, since learned performance can also depend on
routing, basis estimation, and optimization. It nevertheless shows that
additional expert capacity does not automatically translate into better
learned approximation.

\paragraph{Insight.}
\(C\) and \(B\) address different capacity bottlenecks. \(C\) controls the
number of functional spaces available for specialization, while \(B\)
controls how much each space can represent. Too few expert spaces restrict
specialization, whereas increasing their number beyond the useful range does
not necessarily provide additional approximation benefit.

\subsection{Routing Quality and Specialization Gain}
\label{subsec:app_exp_routing}

Having shown that increasing the number of experts does not always improve
approximation, we next examine when routing preserves the benefit of
specialization. To isolate routing from basis learning, we fix the expert
spaces and perturb only the feature-to-expert assignments. We evaluate
\(99\) controlled corruption settings across three heterogeneity regimes
using nearest-wrong, random-wrong, and farthest-wrong routing. These policies
assign a corrupted feature to the closest, a random, or the most distant
incorrect expert in functional space.

\begin{figure}[h]
    \centering

    \begin{subfigure}[t]{0.32\linewidth}
        \centering
        \includegraphics[width=\linewidth]
        {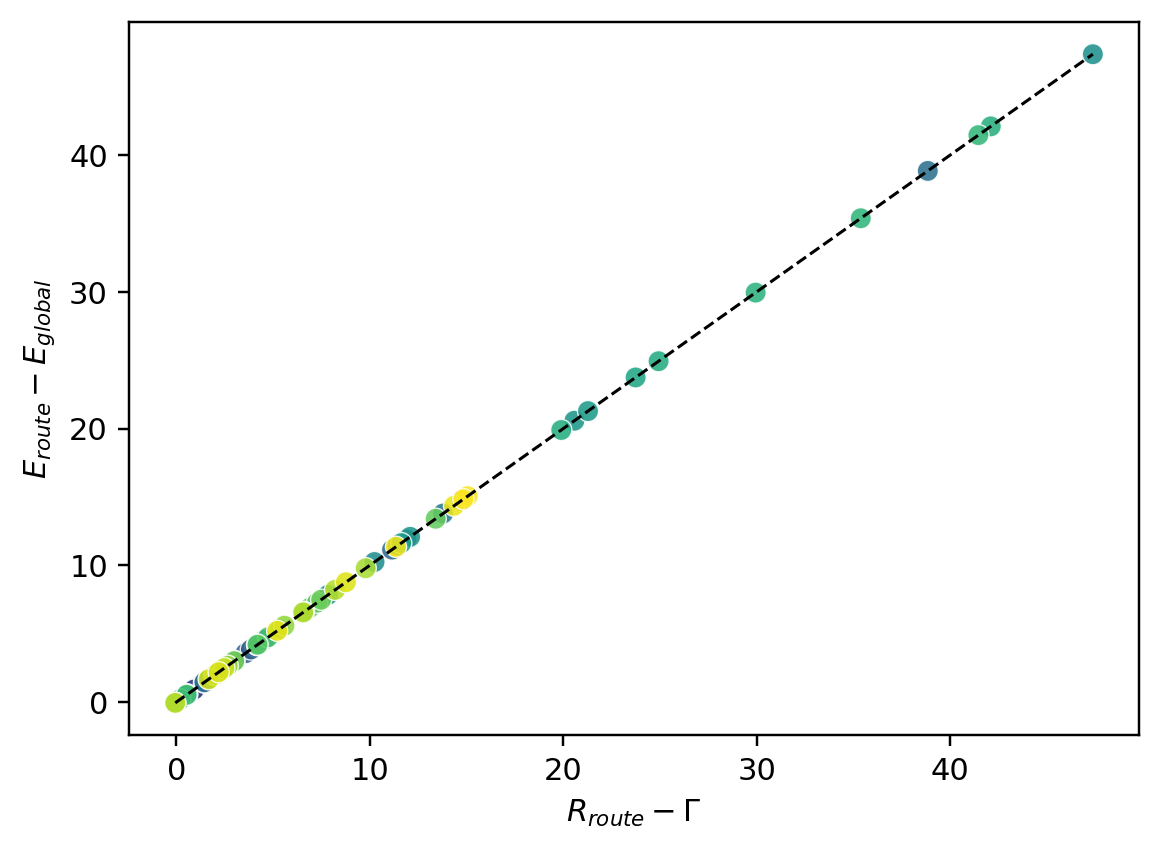}
        \caption{}
        \label{fig:h3_identity}
    \end{subfigure}
    \hfill
    \begin{subfigure}[t]{0.32\linewidth}
        \centering
        \includegraphics[width=\linewidth]
        {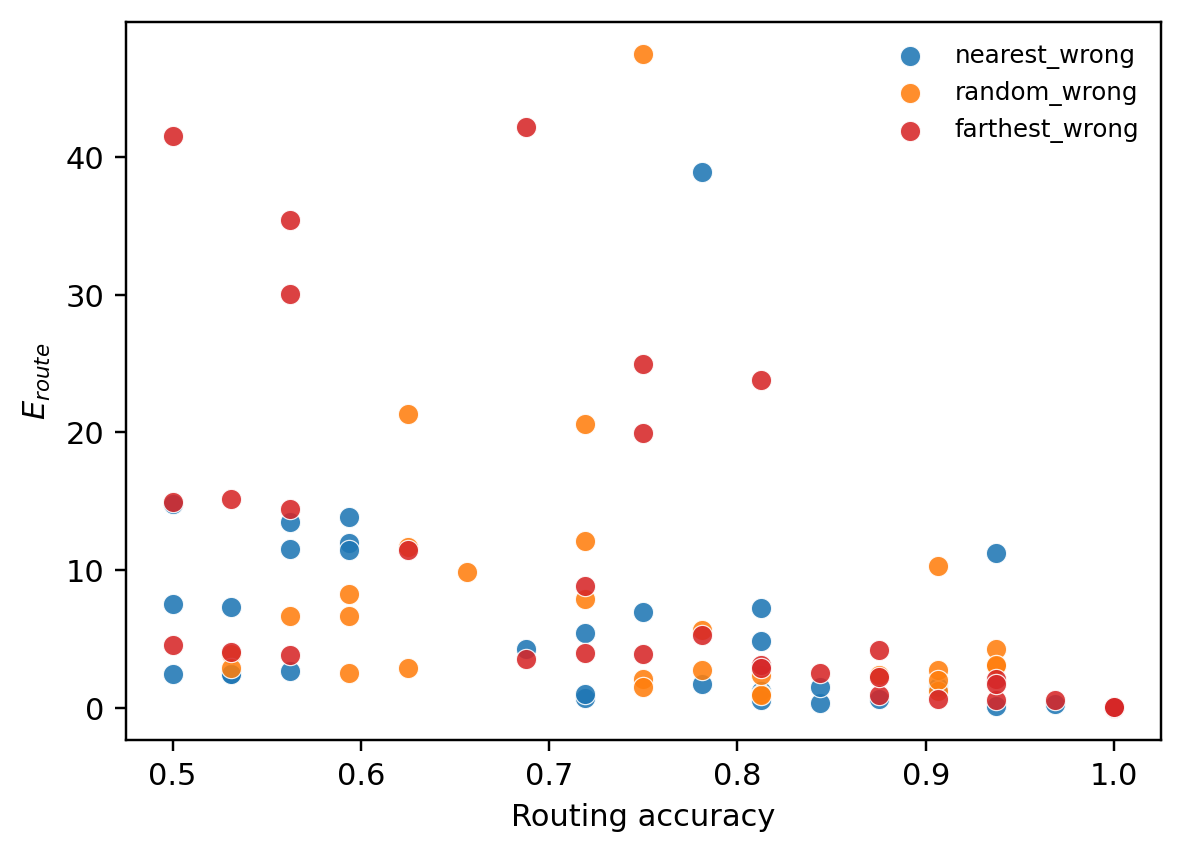}
        \caption{}
        \label{fig:h3_error_accuracy}
    \end{subfigure}
    \hfill
    \begin{subfigure}[t]{0.32\linewidth}
        \centering
        \includegraphics[width=\linewidth]
        {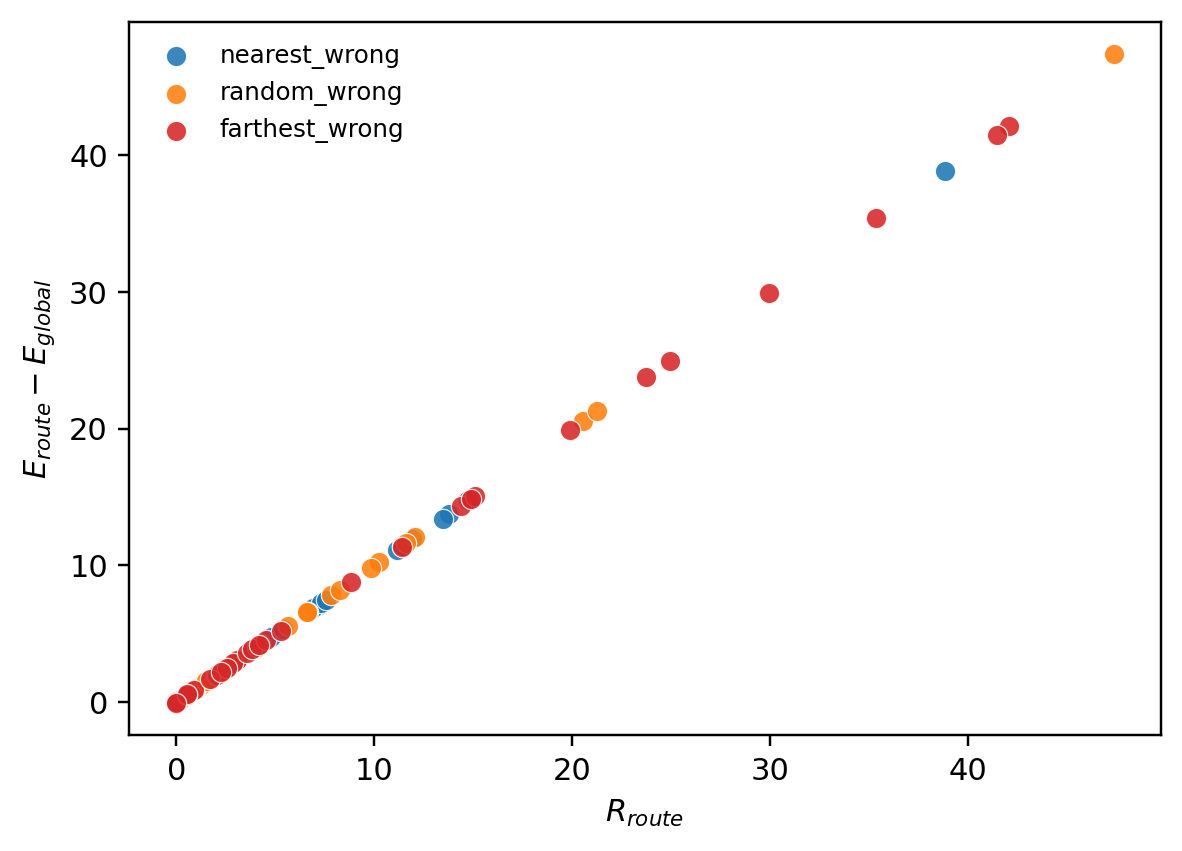}
        \caption{}
        \label{fig:h3_routing_cost}
    \end{subfigure}

    \caption{
    Routing quality and specialization gain.
    \textbf{(a)} Numerical validation of
    \(E_{\mathrm{route}}-E_{\mathrm{global}}
    =
    R_{\mathrm{route}}-\Gamma_{B,C}\).
    \textbf{(b)} Approximation error as a function of routing accuracy under
    nearest-wrong, random-wrong, and farthest-wrong corruption policies.
    Similar routing accuracies can produce substantially different errors.
    \textbf{(c)} Approximation damage as a function of the cost-weighted
    routing penalty \(R_{\mathrm{route}}\). Different corruption policies
    follow nearly the same relationship when routing errors are measured by
    their functional cost.
    }
    \label{fig:h3_routing}
\end{figure}

Figure~\ref{fig:h3_routing}(a) verifies the specialization--routing
decomposition in the controlled setting. Across all corruption settings,
\(E_{\mathrm{route}}-E_{\mathrm{global}}\) agrees with
\(R_{\mathrm{route}}-\Gamma_{B,C}\) up to numerical precision. The same
relation also identifies the point at which specialization loses its
advantage. When \(R_{\mathrm{route}}<\Gamma_{B,C}\), we have
\(E_{\mathrm{route}}<E_{\mathrm{global}}\); once the routing penalty reaches
the specialization gain, this advantage disappears.

More importantly, Figure~\ref{fig:h3_routing}(b) shows that routing accuracy
alone does not characterize routing quality. Settings with similar routing
accuracy can produce substantially different approximation errors. Routing a
feature to an expert close to its oracle expert is generally less damaging
than routing it to a functionally distant expert, even when the number of
incorrect assignments is comparable. Thus, what matters is not only how often
the router makes an incorrect assignment, but how much functional damage each
routing error introduces.

Figure~\ref{fig:h3_routing}(c) makes this distinction explicit through the
cost-weighted routing penalty
\[
R_{\mathrm{route}}
=
\frac{1}{d}
\sum_{k=1}^{d}
\left[
e_k(\widehat z_k)-e_k(z_k^\star)
\right],
\qquad
e_k(c)
=
\left\|
f_k^\star-\Pi_{\mathcal S_c}f_k^\star
\right\|_{\mathcal H}^{2}.
\]
When approximation damage is expressed in terms of \(R_{\mathrm{route}}\),
the different corruption policies follow nearly the same relationship. This
shows that \(R_{\mathrm{route}}\) captures the severity of routing errors more
directly than raw routing accuracy.

The specialization gain \(\Gamma_{B,C}\) can therefore be interpreted as a
routing-error budget:
\[
R_{\mathrm{route}} < \Gamma_{B,C}.
\]
Specialization creates an approximation gain, while routing errors consume
that gain. Once their functional cost reaches or exceeds
\(\Gamma_{B,C}\), specialization no longer improves over global sharing.

\paragraph{Insight.}
Not all routing errors are equally harmful. Routing quality depends on the
functional cost of the selected expert rather than only on the fraction of
correct assignments. The benefit of MoNB is therefore governed by the balance
between specialization gain and routing penalty.

\subsection{Hop-Level Explanation Reliability}
\label{subsec:app_exp_hop_identifiability}

Having examined when routing preserves functional specialization, we now turn
to the reliability of structural explanations. \HARMONIA exposes structural
influence through the RRWP hop coefficients $\theta$. For a node pair
$(i,j)$ with RRWP descriptor
$\mathbf p_{i,j}=[(M^0)_{i,j},\ldots,(M^{T-1})_{i,j}]^\top$, the structural
weight is $\theta^\top\mathbf p_{i,j}$. Across all node pairs, the same
coefficients induce the propagation operator
\[
P_\theta=\sum_{t=0}^{T-1}\theta_t M^t.
\]
Thus, $P_\theta$ describes the structural transformation implemented by the
model, whereas $\theta$ describes how that transformation is attributed across
walk lengths. Appendix~\ref{sec:app_identifiability} shows that a reliable
hop-level explanation requires not only uniqueness of this decomposition but
also sufficient conditioning.

To measure this stability, Appendix~\ref{sec:app_identifiability} defines the
restricted conditioning constant
\[
\gamma_T
=
\inf_{\substack{\delta^\top\mathbf 1=0\\ \|\delta\|_2=1}}
\left\|
\sum_{t=0}^{T-1}\delta_tM^t
\right\|_F.
\]
Here, $\delta$ represents a normalized redistribution of hop weights.
Therefore, $\gamma_T$ measures the smallest observable change in $P_\theta$
caused by a unit change in $\theta$. A large $\gamma_T$ means that substantially
different hop profiles must induce substantially different operators. A small
$\gamma_T$, in contrast, means that the hop coefficients can change
considerably while the resulting structural transformation changes very
little. Hence, even when the coefficients remain uniquely identifiable,
their hop-level interpretation can become unstable as $\gamma_T$ approaches
zero.

We test this phenomenon using a ground-truth operator
\[
P^\star=\sum_{t=0}^{T-1}\theta_t^\star M^t,
\qquad
\theta^\star\in\Delta^{T-1},
\]
and recover $\hat{\theta}$ from either $P^\star$ or a perturbed observation
$\widetilde P=P^\star+E$. Across $520$ controlled recovery settings, we vary
the graph family, walk horizon $T$, conditioning regime, and perturbation
level. We measure coefficient recovery by
\[
e_\theta=\|\hat{\theta}-\theta^\star\|_2
\]
and operator recovery by
\[
e_{\mathrm{op}}
=
\left\|
P_{\hat{\theta}}-P_{\theta^\star}
\right\|_F.
\]

\begin{figure}[t]
    \centering
    \includegraphics[width=0.5\linewidth]
    {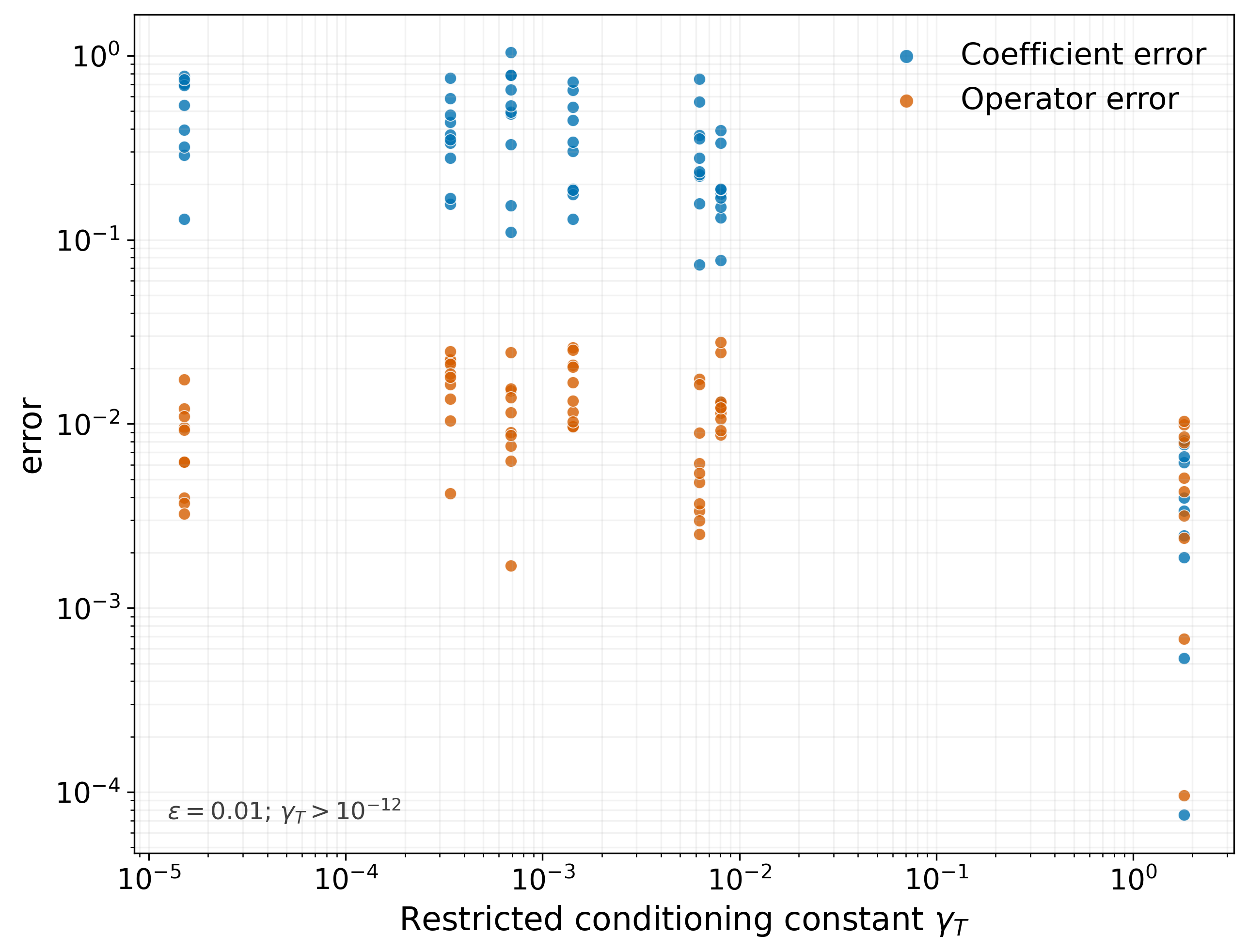}
    \caption{
    Hop recovery under varying structural conditioning at
    $\epsilon=0.01$. Coefficient error $e_\theta$ measures recovery of the
    hop allocation, while operator error $e_{\mathrm{op}}$ measures recovery
    of the induced structural transformation. Only numerically non-singular
    settings with $\gamma_T>10^{-12}$ are shown. As $\gamma_T$ decreases,
    operator recovery remains accurate while coefficient recovery becomes
    increasingly unstable.
    }
    \label{fig:hop_identifiability}
\end{figure}

Figure~\ref{fig:hop_identifiability} shows a clear separation between
structural correctness and explanation reliability. When $\gamma_T$ is large,
both $e_{\mathrm{op}}$ and $e_\theta$ are small, so recovering the propagation
operator also recovers its hop allocation. As $\gamma_T$ decreases, however,
the operator error remains low while the coefficient error increases by orders
of magnitude. Representative ill-conditioned settings under
$\epsilon=0.01$ produce coefficient errors of approximately
$0.26$--$0.54$, while the corresponding operator errors remain only
$0.007$--$0.018$. Thus, nearly identical structural transformations can
support substantially different allocations of importance across walk
lengths.

This behavior directly matches the stability result in
Appendix~\ref{sec:app_identifiability},
\[
\|\hat{\theta}-\theta^\star\|_2
\le
\frac{2\epsilon}{\gamma_T}.
\]
For the same operator-level perturbation $\epsilon$, smaller $\gamma_T$
permits a larger deviation in the recovered hop coefficients. Importantly,
Figure~\ref{fig:hop_identifiability} contains only non-singular settings, so
the observed instability arises before exact non-identifiability. Exact
uniqueness is therefore necessary but not sufficient for a reliable
hop-level explanation.

\paragraph{Insight.}
$P_\theta$ determines the structural behavior of the model, while $\theta$
determines how that behavior is explained across walk lengths. The conditioning
constant $\gamma_T$ controls the stability of this connection. Consequently,
accurate operator recovery does not by itself guarantee reliable hop recovery.

\subsection{Empirical Effective Structural Horizon}
\label{subsec:app_exp_horizon}

Increasing the RRWP truncation length $T$ makes more propagation depths
available to the model, but it does not guarantee that each additional walk
length provides a genuinely distinct structural signal. We therefore ask how
far the random walk can propagate before neighboring walk lengths become
effectively indistinguishable, and whether this range is determined by graph
mixing rather than by the chosen truncation length alone.

To expose this dependence, we consider connected undirected synthetic graphs
with deliberately different mixing regimes. A \emph{ring} provides a
slow-mixing reference, where information spreads only
\begin{figure}[t]
    \centering
    \begin{minipage}[t]{0.485\linewidth}
        \centering
        \includegraphics[width=\linewidth]
        {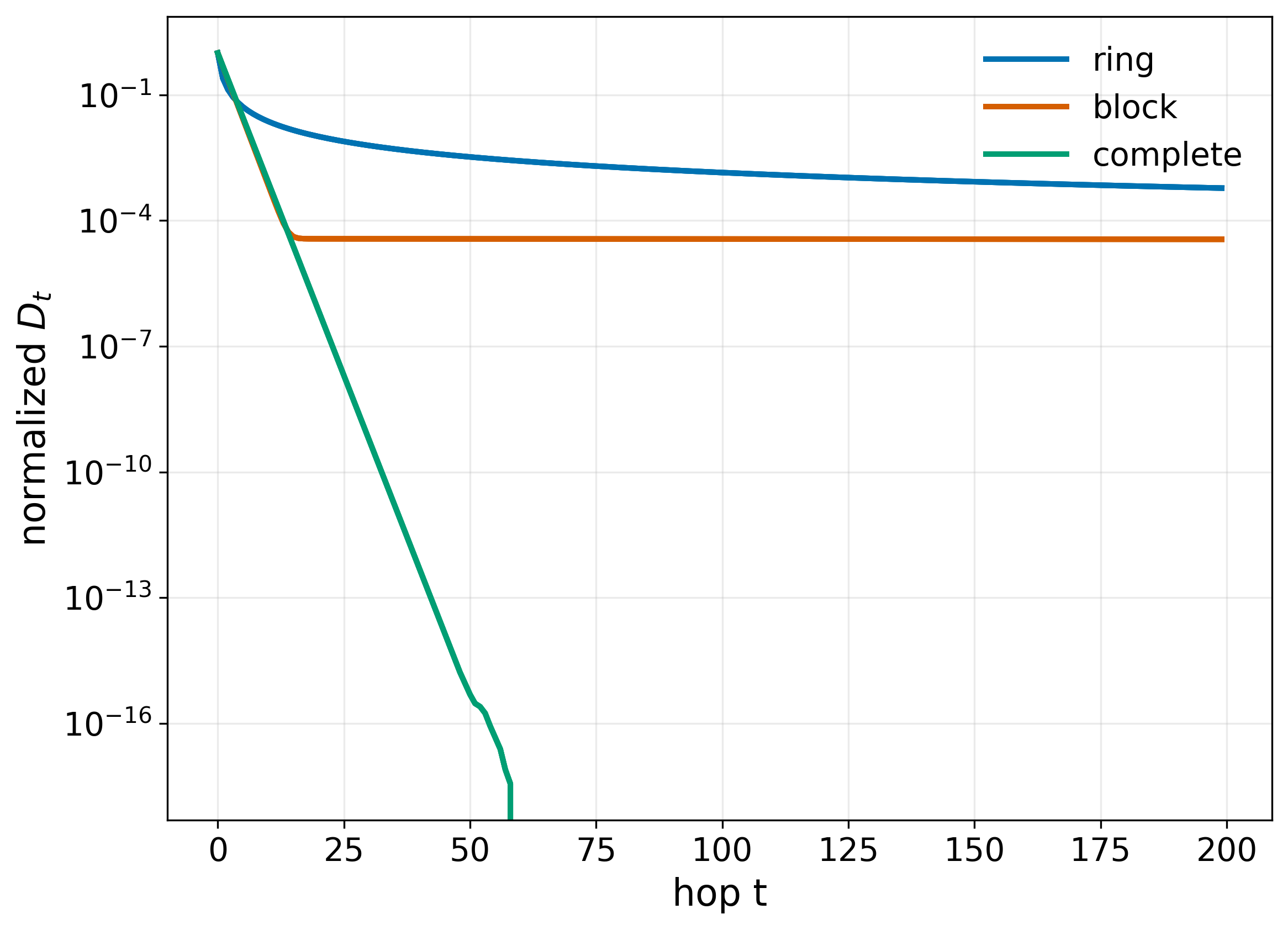}
        \vspace{1pt}
        \textbf{(a)}
    \end{minipage}
    \hfill
    \begin{minipage}[t]{0.485\linewidth}
        \centering
        \includegraphics[width=\linewidth]
        {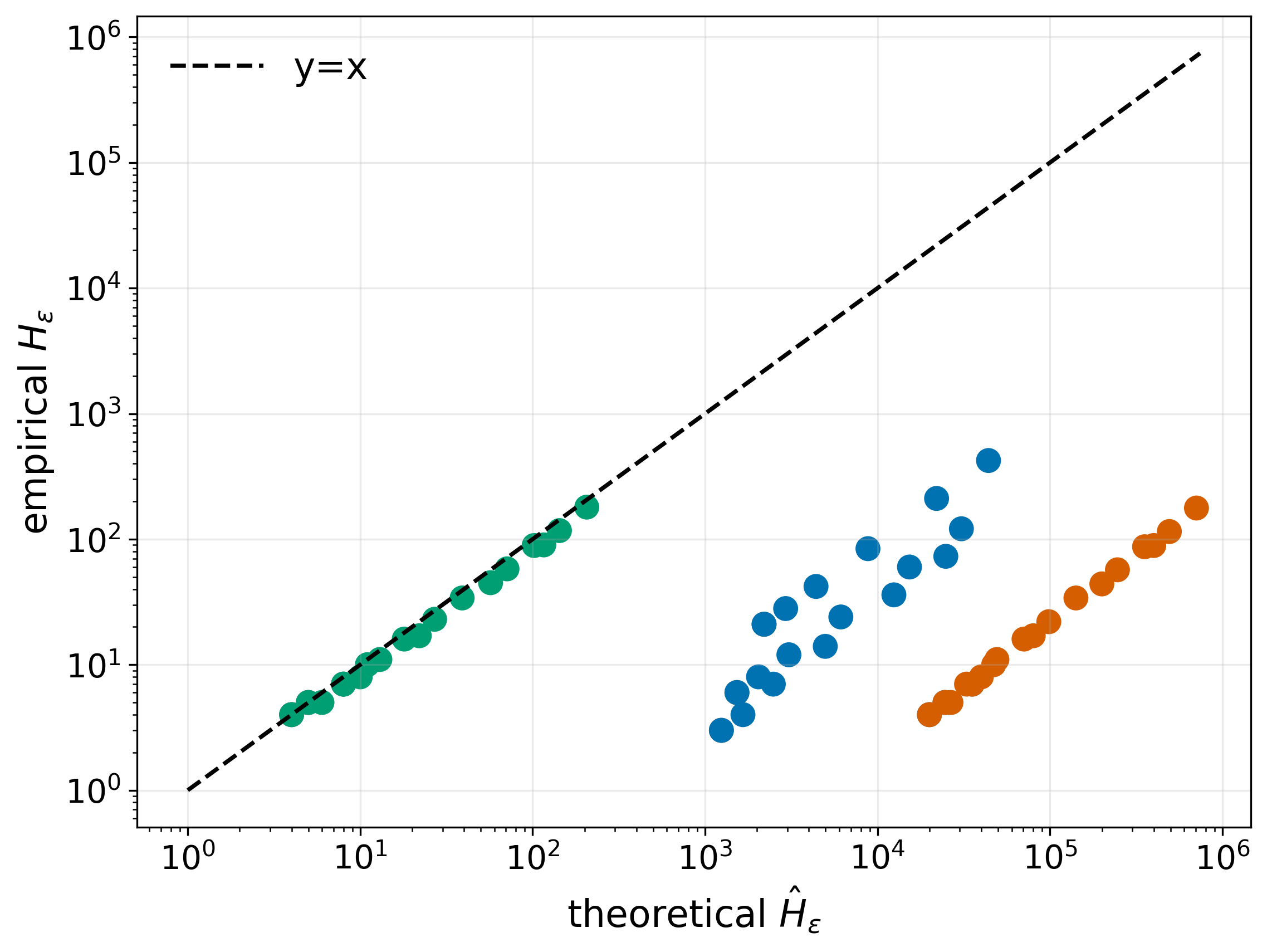}
        \vspace{1pt}
        \textbf{(b)}
    \end{minipage}
    \caption{
    Empirical validation of Theorem~\ref{thm:structural_horizon}.
    \textbf{(a)} Consecutive-hop difference $D_t$ across graph families with
    different mixing regimes.
    \textbf{(b)} Empirical horizon $H_\epsilon$ versus the spectral upper
    bound $\widehat{H}_\epsilon$; all settings remain below the identity line.
    }
    \label{fig:h5_horizon}
\end{figure}gradually through local
neighbors. A \emph{block-structured} graph introduces community structure,
so propagation mixes readily within communities but more slowly across them.
A \emph{complete} graph provides a rapidly mixing reference, since every node
is directly connected to every other node. For all graph families, we use
the lazy random-walk operator
\[
M
=
\frac{1}{2}\left(I+D^{-1}A\right),
\]
which satisfies the spectral assumptions of Theorem~\ref{thm:structural_horizon}.

The first claim of Theorem~\ref{thm:structural_horizon} concerns how quickly neighboring propagation
depths lose their distinction. Let
\[
\rho=\max_{r\geq2}|\lambda_r|<1
\]
denote the largest non-stationary eigenvalue magnitude, and define
\[
D_t
=
\left\|M^{t+1}-M^t\right\|_{2,\pi}.
\]
Theorem~\ref{thm:structural_horizon} gives
\[
D_t
=
\max_{r\geq2}|\lambda_r|^t|1-\lambda_r|
\leq
(1+\rho)\rho^t.
\]

Thus, consecutive walk lengths should become progressively less
distinguishable as $t$ increases. Importantly, this decay is graph-dependent:
a smaller $\rho$ implies faster mixing and faster loss of hop-specific
structure, whereas $\rho$ closer to one allows distinct propagation scales to
persist for longer. This behavior is exactly what is observed in
Figure~\ref{fig:h5_horizon}(a): the complete graph rapidly loses
consecutive-hop distinction, the ring preserves it over much larger depths,
and the block-structured graph exhibits a distinct intermediate regime.
Hence, the same numerical truncation $T$ can correspond to very different
amounts of usable structural resolution across graphs.

The second claim turns this decay into a graph-dependent structural horizon.
For a chosen resolution $\epsilon>0$, we define
\[
H_\epsilon
=
\min
\left\{
t\geq0:
D_s\leq\epsilon
\text{ for all }s\geq t
\right\}.
\]
Beyond $H_\epsilon$, taking one additional random-walk step changes the
propagation operator by at most the prescribed resolution. Theorem~\ref{thm:structural_horizon} further
gives
\[
H_\epsilon
\leq
\widehat{H}_\epsilon
:=
\left\lceil
\frac{\log((1+\rho)/\epsilon)}
{-\log\rho}
\right\rceil.
\]
The dependence on $\rho$ therefore predicts that rapidly mixing graphs should
have short effective horizons, whereas slowly mixing graphs can preserve
distinguishable propagation scales for substantially longer. In
Figure~\ref{fig:h5_horizon}(b), every empirically measured $H_\epsilon$
remains below its corresponding spectral bound $\widehat{H}_\epsilon$.
The theoretical expression is therefore conservative rather than an exact
predictor, but it provides a graph-dependent upper estimate of the range of
walk lengths that remain structurally distinguishable. In this sense, $T$
specifies how many propagation depths are exposed to RRWP, whereas
$H_\epsilon$ characterizes how many of those depths remain distinguishable at
the chosen structural resolution.

\subsection{RRWP Normalization and Linear--Nonlinear Ablations}
\label{subsec:app_exp_rrwp_ablation}
The linear RRWP topology function is interpretable and scalable, but it can
only assign additive weights to individual random-walk channels. To improve
expressiveness, we also consider a DNN-based topology function that directly
maps the RRWP vector $\mathbf{p}_{i,j}$ to feature-specific topology scores. Specifically, we use an MLP $g_{\phi}:\mathbb{R}^{T}\rightarrow\mathbb{R}^{d}$ ,
where the $k$-th output dimension gives the topology score for feature $k$:
\begin{equation}
    \omega_k^{\sharp}(\mathbf{p}_{i,j})
    =
    [g_{\phi}(\mathbf{p}_{i,j})]_k.
\end{equation}

Substituting this topology function into
Eq.~\eqref{eq:problem_representation}, we obtain
\begin{equation}
    [\mathbf{h}_i]_k
    =
    \sum_{j=1}^{|\mathcal{V}|}
    [g_{\phi}(\mathbf{p}_{i,j})]_k
    f_k([\mathbf{x}_j]_k)
\end{equation}

This formulation allows \HARMONIA to learn nonlinear relationships among RRWP channels, making the topology function more expressive than the linear variant. However, this comes at the cost of scalability and interpretability. Since $g_{\phi}(\cdot)$ nonlinearly mixes the RRWP channels, the
resulting structural contribution can no longer be decomposed into the explicit walk-length terms
$\theta_{t,k}z^{(t)}_{i,k}$. Thus, the DNN-based topology function
offers greater expressiveness, whereas the linear variant supports
sparse computation and explicit attribution to individual
$t$-hop channels

\begin{table}[t]
\centering
\small
\renewcommand{\arraystretch}{1.15}
\setlength{\tabcolsep}{4pt}

\caption{Comparison between \HARMONIA (Linear RRWP) and 
\HARMONIA (DNN RRWP) across graph and node classification benchmarks. 
OOM indicates Out-Of-Memory ($>24$\,GB VRAM).}

\label{tab:dnn_vs_linear_rrwp}

\resizebox{\linewidth}{!}{%
\begin{tabular}{lcccccc}
\toprule

\multirow{2}{*}{\textbf{Model Variant}} 
& \multicolumn{3}{c}{\textbf{Graph Classification (Acc. \%)}} 
& \multicolumn{3}{c}{\textbf{Node Classification (Acc. \%)}} \\

\cmidrule(lr){2-4}
\cmidrule(lr){5-7}

& \textbf{Mutag} 
& \textbf{Proteins} 
& \textbf{AIDS} 
& \textbf{Cora} 
& \textbf{Citeseer} 
& \textbf{ogbn-arxiv} \\

\midrule

\HARMONIA (DNN RRWP) 
& \textbf{71.25 $\pm$ 1.8} 
& \textbf{72.10 $\pm$ 1.4} 
& \textbf{98.15 $\pm$ 0.9} 
& \textbf{80.52 $\pm$ 0.9} 
& \textbf{73.64 $\pm$ 0.8} 
& OOM \\

\HARMONIA (Linear RRWP) 
& 69.50 $\pm$ 1.5 
& 70.76 $\pm$ 1.2 
& 97.75 $\pm$ 1.1 
& 79.60 $\pm$ 1.3 
& 72.80 $\pm$ 1.1 
& \textbf{78.14 $\pm$ 2.3} \\

\bottomrule
\end{tabular}%
}

\end{table}

\subsection{Explanation Stability Across Random Seeds}
\label{subsec:app_explanation_stability}
\subsubsection{Remove-and-Retrain Evaluation}

The feature-removal experiment follows the rationale of
remove-and-retrain evaluation~\citep{hooker2019benchmark}.
Let $R_{\mathrm{top}}$, $R_{\mathrm{bottom}}$, and
$R_{\mathrm{random}}$ contain the same fraction ($20\%$) of input
features, chosen from the top, bottom, or a random part of the ranking.
The chosen columns are removed consistently from the training,
validation, and test inputs, and a new model is trained from scratch.
The diagnostic is
\begin{equation}
\Delta\operatorname{Acc}(R)
=\operatorname{Acc}_{\mathrm{test}}(F_{-R})
-\operatorname{Acc}_{\mathrm{test}}(F_{\mathrm{full}}).
\end{equation}
A more negative value indicates a larger loss of predictive information
under removal and retraining.
Feature selection must not use test labels; the available importance
plot identifies validation-margin importance.

\begin{figure}[t]
\centering
\includegraphics[width=0.50\linewidth]{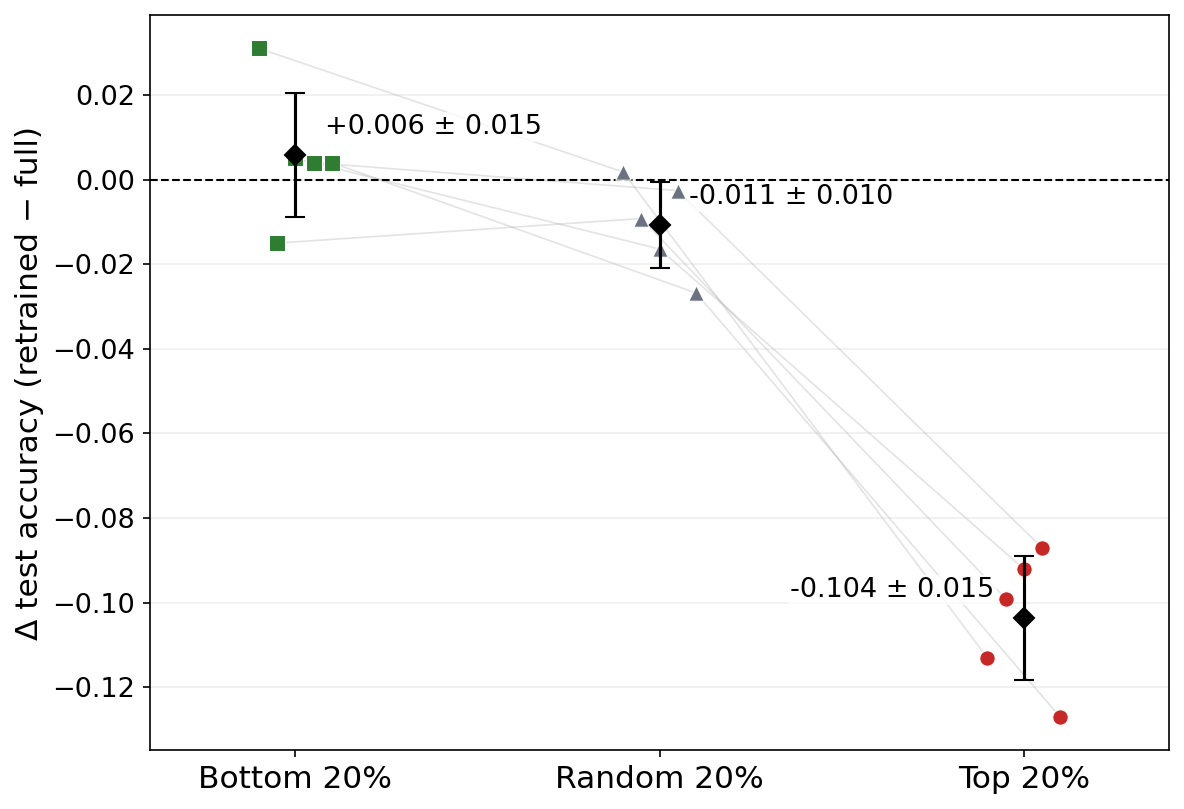}
\caption{Change in test accuracy after removal and retraining.
The plot uses accuracy fractions; $-0.104$ corresponds to a decrease
of $10.4$ percentage points. Printed summaries are reproduced from
the experiment.}
\label{fig:remove_retrain}
\end{figure}

Removing the top-ranked features produces a much larger mean
performance decrease than the two controls
(Figure~\ref{fig:remove_retrain}).
The small positive mean after bottom-feature removal does not by
itself establish a statistically significant improvement.
This diagnostic assesses predictive relevance under retraining,
not a causal effect in the data-generating system.

\subsubsection{Importance Stability Across Seeds}

For seed $s$, let $\mathbf I^{(s)}$ be the importance vector measured
on the same reference data and let
$\bar I_k=S^{-1}\sum_{s=1}^{S}I_k^{(s)}$.
The scatter plot~\ref{fig:seed_stability} compares
$(\bar I_k,I_k^{(s)})$ using logarithmic axes.
Its diagonal is a reference for agreement in magnitude.
Agreement with a mean that contains the plotted seed is a descriptive
visualization, not an independent reliability estimate.

A rank-based summary, when computed from the run outputs, is
\begin{equation}
\operatorname{Stability}_{\mathrm{rank}}
=\frac{2}{S(S-1)}
\sum_{a<b}\operatorname{Spearman}
       \bigl(\mathbf I^{(a)},\mathbf I^{(b)}\bigr).
\end{equation}
Spearman correlation is unchanged by positive normalization of each
importance vector. Constant vectors require explicit handling.
No numerical Spearman value is inferred from the scatter plot.

\begin{figure*}[t]
\centering
\includegraphics[width=\textwidth]{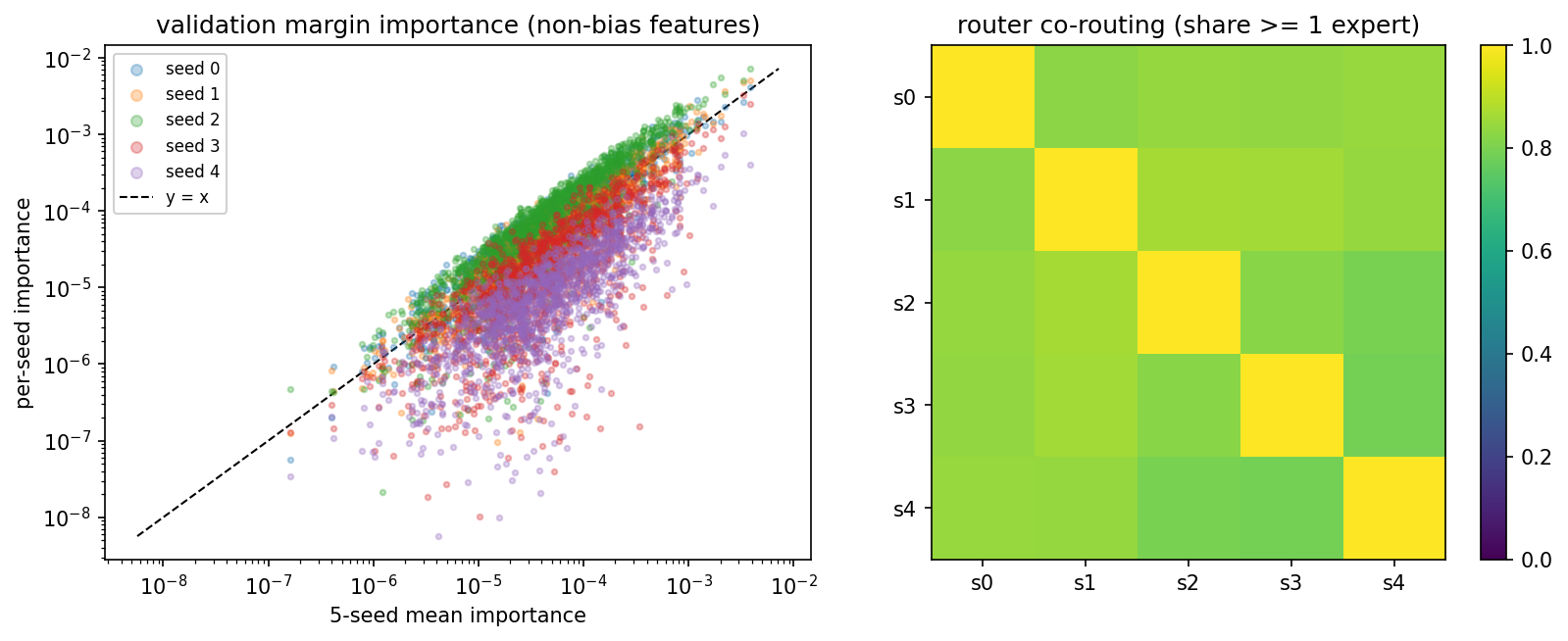}
\caption{Five-seed visualization of feature importance.
The left panel displays validation-margin importance for non-bias
features
}
\label{fig:seed_stability}
\end{figure*}

\subsection{Parameter Complexity}
\label{subsec:app_parameter_complexity}

To evaluate parameter scaling under a controlled setting, we construct a
synthetic multi-class setup in which the input feature dimension \(D\) is varied
while the number of output classes is fixed to \(K=47\). All other
architecture-specific hyperparameters are kept fixed throughout the comparison,
so that the experiment isolates the effect of increasing feature dimensionality
on the number of learnable parameters.

For GNAN, each input feature is modeled by a separate MLP, together with an
additional network for topology. For an MLP with \(L\) hidden layers of width
\(H_u\), extended to \(K\) output classes, the parameter count is

\[
P_{\mathrm{GNAN}}
=
(D+1)
\left[
H_u^2(L-1)
+
(L+K+1)H_u
+
K
\right].
\]

For G-NAMRFF, each feature learns an \(M\)-dimensional RFF representation for
each output class, while the FIR graph filter contributes \(R+1\) shared
parameters, giving

\[
P_{\mathrm{G\text{-}NAMRFF}}
=
DMK + R + 1.
\]

\[
P_{\text{HARMONIA}}
=
E P_{\psi}
+
2qE
+
D(B+q+T+K)
+
K.
\]

The key difference is therefore in the coefficient multiplying \(D\).
GNAN replicates an entire neural network for every feature, while G-NAMRFF
requires \(MK\) learnable weights per feature. In contrast, \HARMONIA shares
the nonlinear expert parameters across features, so increasing \(D\) introduces
only the feature-specific basis coefficients, routing embeddings, RRWP
coefficients, and linear output weights.

Figure~\ref{fig:parameter_complexity} compares the number of learnable parameters as the input feature dimension \(D\) increases. Although the parameter count grows with \(D\) for all methods, \HARMONIA exhibits substantially lower parameter complexity than GNAN and G-NAMRFF across the entire range. At \(D=5{,}000\), \HARMONIA requires only on the order of \(10^{5}\) learnable parameters, whereas GNAN and G-NAMRFF reach the order of \(10^{7}\), resulting in a gap of approximately one to two orders of magnitude.

This difference becomes increasingly important for high-dimensional inputs, where feature-wise parameterization can lead to a large model size. \HARMONIA mitigates this growth by sharing a compact collection of neural basis experts across features while learning only lightweight feature-specific coefficients and routing parameters. These results show that the proposed MoNB parameterization substantially reduces the parameter overhead associated with increasing feature dimensionality while preserving feature-specific functional specialization.

\begin{figure}[h]
    \centering
    \vspace{-5pt}
    \includegraphics[width=0.5\linewidth]{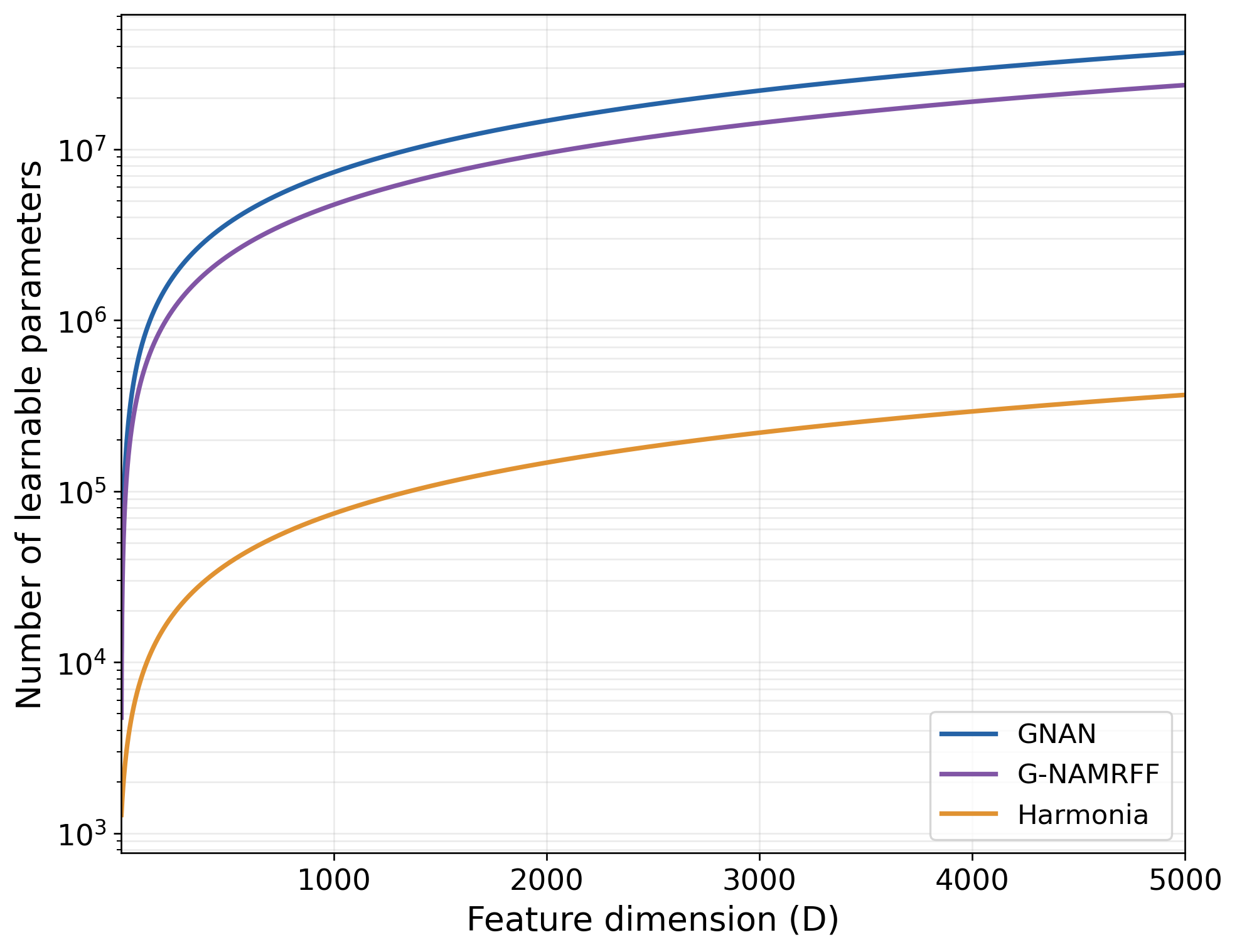}
    \vspace{-5pt}
    \caption{Parameter complexity in the synthetic multi-class setting with
    \(K=47\) output classes as the input feature dimension \(D\) increases.}
    \label{fig:parameter_complexity}
    \vspace{-5pt}
\end{figure}

\subsection{Learned Feature Shape Functions}
\label{subsec:app_feature_shapes}

We first inspect the feature-response functions learned by the additive
component of \HARMONIA on Mutagenicity. Because the prediction decomposes into
feature-wise contributions, each learned function can be evaluated directly
over the observed feature range without fitting a separate post-hoc explainer.
Figure~\ref{fig:feature_shape_functions} shows representative response curves
for the atom-related input dimensions used by the model.

\begin{figure}[t]
    \centering
    \includegraphics[width=0.5\textwidth]
    {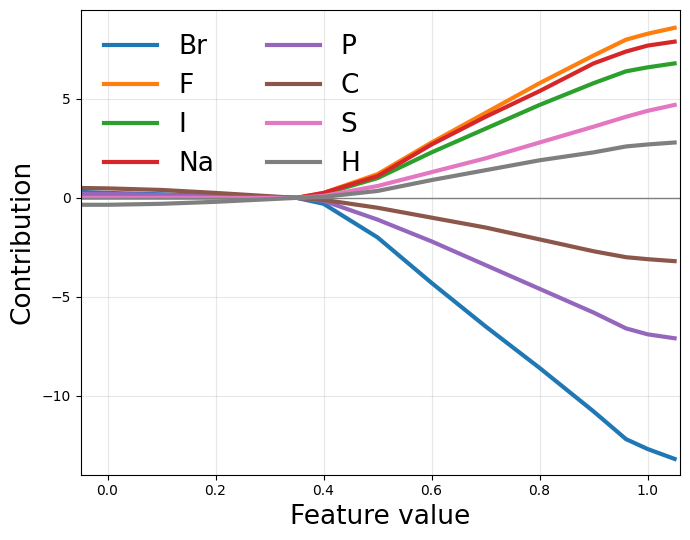}
    \caption{
    Selected feature-response functions learned by \HARMONIA on the
    Mutagenicity dataset. Each curve shows the contribution associated with
    one atom-related feature as its normalized feature value varies.
    }
    \label{fig:feature_shape_functions}
\end{figure}

The curves demonstrate that the shared neural bases do not collapse the
features to a common response shape. Instead, the model produces distinct
feature-specific functions with different signs, magnitudes, and nonlinear
trends. 

Importantly, these functions should be interpreted as properties of the
trained predictive model. They describe how a feature contributes conditional
on the learned representation and observed data support; they do not by
themselves establish a causal chemical effect.

\subsection{Ablation study}
\label{subsec:app_ablation_study}

The normalized RRWP parameterization in \HARMONIA is introduced to make
the structural component comparable across features and runs.
For feature $k$, the structural propagation operator is
\begin{equation}
    \mathbf P_k
    =
    \sum_{t=0}^{T-1}
    \theta_{t,k}\mathbf M^t .
\end{equation}
Without an explicit normalization constraint, the decomposition between the
feature response and structural influence admits a multiplicative scale
ambiguity. For any $a>0$, replacing $f_k$ by $af_k$ and the structural
coefficients by $\theta_{t,k}/a$ leaves their product, and hence the prediction,
unchanged. \HARMONIA removes this arbitrary scale by parameterizing
\begin{equation}
\theta_{t,k}
=
\frac{\alpha_{t,k}^{2}}
{\sum_{s=0}^{T-1}\alpha_{s,k}^{2}},
\qquad
\theta_{t,k}\ge 0,
\qquad
\sum_{t=0}^{T-1}\theta_{t,k}=1.
\label{eq:app_rrwp_simplex_ablation}
\end{equation}
In implementation, $10^{-8}$ is added to each squared coefficient for numerical
safety. Since every $\mathbf M^t$ is row-stochastic, the resulting
$\mathbf P_k$ is a convex combination of random-walk operators. The exposed
hop coefficients therefore lie on a common reference scale, and the structural
operator retains the non-expansive property discussed in
Section~\ref{subsec:normalized_rrwp}.

We evaluate the empirical consequence of this simplex constraint by comparing
the normalized model in Eq.~\eqref{eq:app_rrwp_simplex_ablation} with an
otherwise identical model using unconstrained structural coefficients,
\begin{equation}
    \theta_{t,k}=\alpha_{t,k}.
\end{equation}
All other components, including the MoNB feature model, routing mechanism,
number of walk orders, optimization procedure, and data splits, are held fixed.
Because the unconstrained variant removes both the unit-sum and nonnegativity
constraints, this experiment should be interpreted as an ablation of the
simplex-constrained structural parameterization rather than as an isolation of
normalization alone.

\paragraph{Synthetic setup.}
We use a controlled synthetic setting in which both the feature mechanism and
the structural mechanism are known exactly. Each data seed contains a fixed
graph with $64$ nodes and $20$ continuous raw features, of which four are
informative and sixteen are null. The four informative features use polynomial
($f(x)=1.5(x^2-1)$), periodic
($f(x)=\sin(\pi x)$), saturating
($f(x)=\tanh(2x)$), and localized
($f(x)=\exp[-8(x-0.4)^2]$) response functions, respectively. Their
ground-truth structural profiles over $T=4$ random-walk orders are
\begin{align}
\boldsymbol{\theta}^{\star}_{\mathrm{poly}}
    &= [0.70,\,0.20,\,0.07,\,0.03],\\
\boldsymbol{\theta}^{\star}_{\mathrm{periodic}}
    &= [0.10,\,0.70,\,0.15,\,0.05],\\
\boldsymbol{\theta}^{\star}_{\mathrm{saturating}}
    &= [0.05,\,0.15,\,0.70,\,0.10],\\
\boldsymbol{\theta}^{\star}_{\mathrm{localized}}
    &= [0.03,\,0.07,\,0.20,\,0.70].
\end{align}
Thus, the four causal features are dominated respectively by $0$-, $1$-,
$2$-, and $3$-hop propagation. Synthetic logits are generated from the known
feature functions and the row-normalized random-walk powers
$\mathbf I,\mathbf M,\mathbf M^2,\mathbf M^3$, and binary labels are sampled
from the corresponding Bernoulli probabilities.

We use three fixed graph/data seeds and ten optimization seeds for each data
seed and each parameterization. Within every matched pair, the graph, features,
labels, train/validation/test split, non-topology initialization, minibatch
order, optimizer, scheduler, and early stopping protocol are identical.
Results are first averaged across optimization seeds within each data seed and
are then reported as mean $\pm$ sample standard deviation across the three
data-seed means.

\paragraph{Faithfulness metrics.}
We evaluate explanation faithfulness at three complementary levels.

At the feature/hop level, Effective NRMSE measures how closely the
model reproduces the oracle effective response to an intervention on a causal
feature. For each intervention, the predicted and oracle propagated logit
response curves are compared, and the RMSE is normalized by the standard
deviation of the oracle curve. Structural recovery is evaluated using
intervention-based hop probes. The induced node-wise logit changes are fitted
with the walk-power basis by least squares; retained profiles are oriented and
$\ell_1$-normalized before comparison with the known
$\boldsymbol{\theta}^{\star}_k$. Hop L1 is the resulting
$\ell_1$ distance between recovered and oracle hop profiles.

At the source-node level, all features of one source node are replaced
by their training-set means and the resulting signed change in target logit is
compared with the same intervention under the known data-generating process.
Signed NRMSE is the RMSE between predicted and oracle signed node-contribution
vectors, normalized by the oracle standard deviation. Precision@4 measures the
overlap between the four largest-magnitude predicted source-node contributions
and the four largest-magnitude oracle contributions.

At the edge level, each candidate edge in the target node's three-hop
neighborhood is removed, after which the row normalized transition matrix and
the powers $\mathbf M^0,\ldots,\mathbf M^3$ are recomputed. Edge importance is
the absolute change in the target logit. Spearman correlation measures rank
agreement between predicted and oracle edge importance over all candidate
edges. For
\begin{equation}
K=
\left\lceil
0.2\,|\mathcal E_{\mathrm{cand}}|
\right\rceil,
\end{equation}
Precision@20\% measures the overlap between the predicted and oracle top-$K$
edges.

\begin{figure}[t]
    \centering
    \begin{subfigure}[t]{0.62\linewidth}
        \centering
        \includegraphics[width=\linewidth]
        {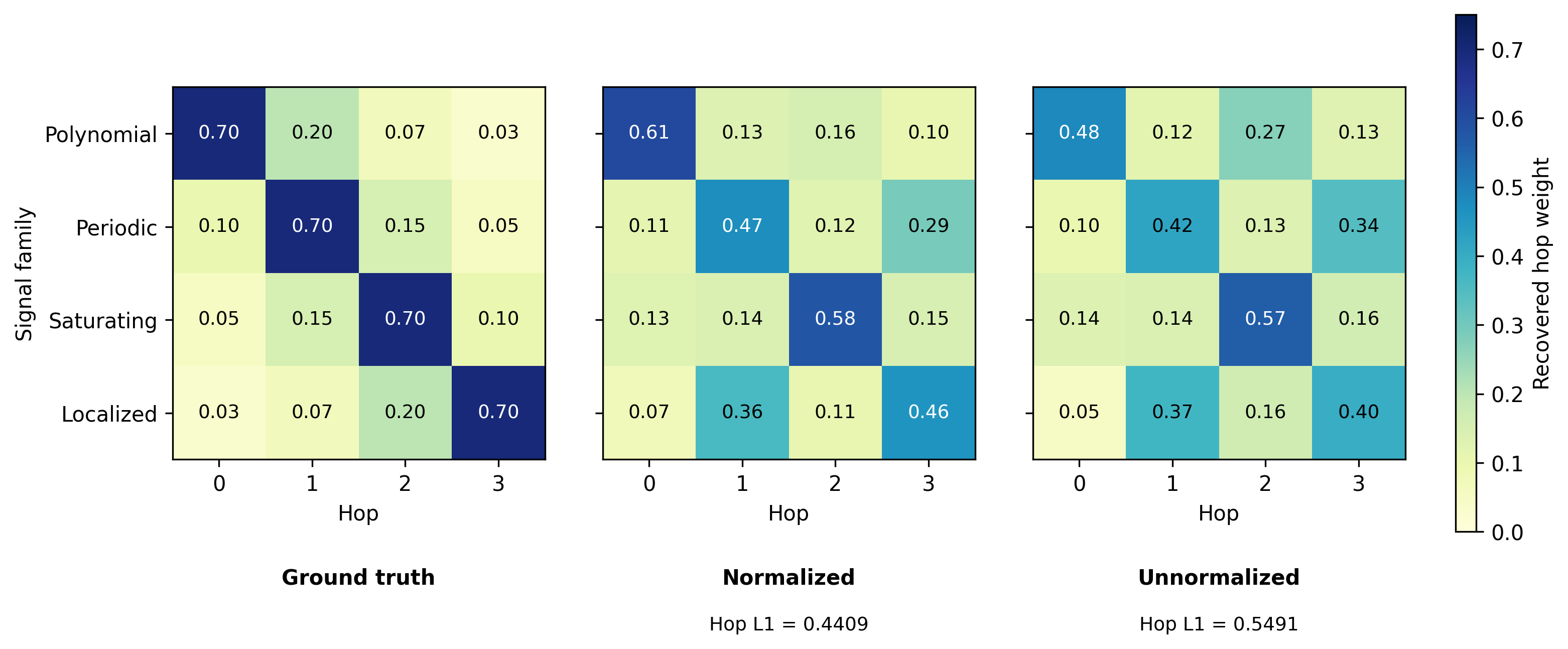}
        \caption{Recovery of feature-specific hop profiles.}
        \label{fig:rrwp_norm_hop}
    \end{subfigure}
    \hfill
    \begin{subfigure}[t]{0.36\linewidth}
        \centering
        \includegraphics[width=\linewidth]
        {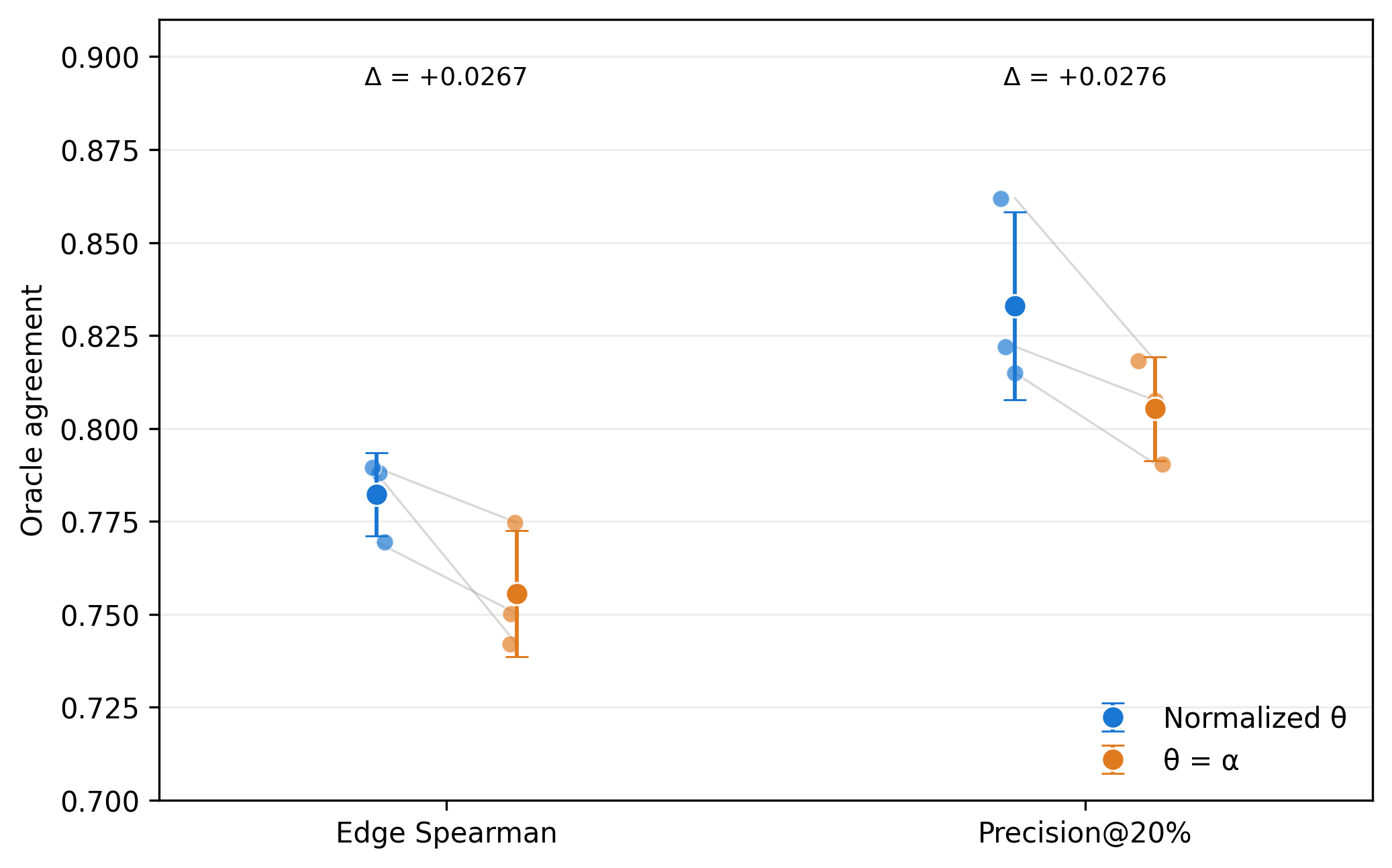}
        \caption{Agreement with oracle edge importance.}
        \label{fig:rrwp_norm_edge}
    \end{subfigure}

    \caption{
    Effect of the RRWP simplex constraint on synthetic explanation
    faithfulness.
    \textbf{(a)} Ground-truth and recovered hop profiles for the four
    informative features. Recovered profiles use the same orientation and
    $\ell_1$ normalization as the intervention-based hop evaluation.
    The simplex-normalized model reduces mean hop-profile $\ell_1$ error
    from $0.5491$ to $0.4409$, corresponding to a $19.7\%$ relative
    reduction.
    \textbf{(b)} Agreement between predicted and oracle edge importance.
    The normalized model increases edge-ranking Spearman correlation from
    $0.7556$ to $0.7823$ and Precision@20\% from $0.8053$ to $0.8329$.
    Points denote individual data-seed means, lines connect matched seeds,
    and error bars show sample standard deviation across the three data seeds.
    }
    \label{fig:rrwp_normalization_faithfulness}
\end{figure}

\begin{table}[t]
\centering
\caption{
Representative explanation-faithfulness metrics for simplex-normalized and
unconstrained RRWP coefficients. For error metrics, Change reports the relative
reduction from the unconstrained variant; for precision metrics it reports the
percentage point improvement; and for Spearman correlation it reports the
absolute improvement. Results are mean $\pm$ sample standard deviation across
three fixed synthetic data seeds.
}
\label{tab:rrwp_normalization_faithfulness}
\small
\setlength{\tabcolsep}{6pt}
\begin{tabular}{llccc}
\toprule
Level
& Metric
& Normalized $\theta$
& Unnormalized $\theta=\alpha$
& Change \\
\midrule

\multirow{2}{*}{Feature / Hop}
& Effective NRMSE $\downarrow$
& \textbf{0.2049 $\pm$ 0.0139}
& 0.2279 $\pm$ 0.0072
& \textbf{$-10.1\%$} \\

& Hop L1 $\downarrow$
& \textbf{0.4409 $\pm$ 0.0221}
& 0.5491 $\pm$ 0.0188
& \textbf{$-19.7\%$} \\

\midrule

\multirow{2}{*}{Node}
& Signed NRMSE $\downarrow$
& \textbf{0.3125 $\pm$ 0.0302}
& 0.3252 $\pm$ 0.0281
& \textbf{$-3.9\%$} \\

& Precision@4 $\uparrow$
& \textbf{0.8509 $\pm$ 0.0195}
& 0.8409 $\pm$ 0.0179
& \textbf{$+1.0$ pp} \\

\midrule

\multirow{2}{*}{Edge}
& Spearman $\uparrow$
& \textbf{0.7823 $\pm$ 0.0112}
& 0.7556 $\pm$ 0.0170
& \textbf{$+0.0267$} \\

& Precision@20\% $\uparrow$
& \textbf{0.8329 $\pm$ 0.0253}
& 0.8053 $\pm$ 0.0140
& \textbf{$+2.76$ pp} \\

\bottomrule
\end{tabular}
\end{table}

\paragraph{Results.}
The main effect of the simplex constraint appears in the faithfulness of the
recovered mechanism rather than in feature discovery. At the feature level,
normalization reduces Effective NRMSE from $0.2279$ to $0.2049$, a $10.1\%$
relative reduction. The largest gain occurs in the structural decomposition:
Hop L1 decreases from $0.5491$ to $0.4409$, corresponding to a $19.7\%$
relative reduction. Figure~\ref{fig:rrwp_norm_hop} shows the same effect
visually. The normalized model preserves the intended dominant propagation
scale more closely and places less recovered mass on spurious walk orders.

The improvement also extends to explanations at finer graph resolutions.
Signed source-node contribution NRMSE decreases from $0.3252$ to $0.3125$,
while source node Precision@4 increases from $0.8409$ to $0.8509$. At the edge
level, Spearman agreement with oracle edge importance increases from $0.7556$
to $0.7823$, and Precision@20\% increases from $0.8053$ to $0.8329$.
Figure~\ref{fig:rrwp_norm_edge} shows that the edge-level improvement is
consistent across the three matched data seeds.

\newpage

\end{document}